\documentclass[11pt]{article}

\usepackage{setspace}
\usepackage{cprotect}
\usepackage{fullpage}
\usepackage{color}
\usepackage{cite}
\usepackage{fixltx2e}
\usepackage[cmex10]{amsmath}
\usepackage{amssymb}
\interdisplaylinepenalty=2500
\usepackage{array}
\usepackage{wasysym}
\usepackage{dsfont}
\usepackage[latin1]{inputenc}                       
\usepackage[english]{babel}                         
\usepackage[T1]{fontenc}
\usepackage{mathtools}
\usepackage{amssymb} 
\usepackage{enumerate}
\usepackage{bbm}
\usepackage{epsfig,syntonly}
\usepackage{verbatim,times}
\usepackage{epstopdf}
\usepackage{graphicx}
\usepackage{latexsym,fancyhdr,bm}
\usepackage{subfig}
\usepackage{caption}
\usepackage{url}
\usepackage{bbm}
\usepackage{dsfont}
\usepackage{amsthm}
\usepackage{bigints}

\newcommand{\norm}[1]{\left\lVert#1\right\rVert}

%\pdfminorversion=5

\newtheoremstyle{custom}
{} % Space above
{} % Space below
{} % Body font
{} % Indent amount
{\bfseries} % Theorem head font
{:} % Punctuation after theorem head
{.25em} % Space after theorem head
{} % Theorem head spec (can be left empty, meaning `normal')
\theoremstyle{plain}

\newtheorem{theorem}{Theorem}
\newtheorem{lemma}{Lemma}

\newtheorem{remark}{Remark}
\newtheorem{corollary}{Corollary}

\newtheorem*{theorem*}{Theorem}
\newtheorem*{lemma*}{Lemma}
\newtheorem*{proposition*}{Proposition}
\newtheorem*{definition*}{Definition}
\newtheorem*{example*}{Example}
\newtheorem*{remark*}{Remark}
\newtheorem*{corollary*}{Corollary}

\def\bz{{\boldsymbol z}}
\def\sT{{\mathsf T}}

\def\det{{\rm det}}

\def\complex{{\mathbb C}}
\def\reals{{\mathbb R}}

\def\<{\langle}
\def\>{\rangle}

\def\reals{{\mathbb R}}

\def\baq{\bar{q}}
\def\ba{{\boldsymbol a}}
\def\bq{{\boldsymbol q}}
\def\be{{\boldsymbol e}}
\def\bv{{\boldsymbol v}}
\def\bx{{\boldsymbol x}}
\def\bw{{\boldsymbol w}}
\def\by{{\boldsymbol y}}
\def\bd{{\boldsymbol d}}
\def\bs{{\boldsymbol s}}
\def\br{{\boldsymbol r}}
\def\hbs{\hat{\boldsymbol s}}
\def\hbr{\hat{\boldsymbol r}}
\def\cF{{\mathcal F}}
\def\cH{{\mathcal H}}
\def\hh{\hat{h}}
\def\sB{{\sf B}}
\def\hsB{\hat{\sf B}}
\def\bM{{\boldsymbol M}}

\def\bH{{\boldsymbol H}}
\def\bA{{\boldsymbol A}}
\def\bB{{\boldsymbol B}}
\def\bC{{\boldsymbol C}}
\def\bY{{\boldsymbol Y}}
\def\bX{{\boldsymbol X}}

\def\bZ{{\boldsymbol Z}}
\def\bI{{\boldsymbol I}}
\def\bU{{\boldsymbol U}}
\def\bSigma{{\boldsymbol \Sigma}}
\def\bV{{\boldsymbol V}}
\def\bD{{\boldsymbol D}}
\def\bP{{\boldsymbol P}}

\def\bJ{{\boldsymbol J}}
\def\bL{{\boldsymbol L}}
\def\b0{{\boldsymbol 0}}
\def\bzero{{\boldsymbol 0}}
\def\bS{{\boldsymbol S}}

\def\bu{{\boldsymbol u}}
\def\bg{{\boldsymbol g}}
\def\bv{{\boldsymbol v}}

\def\sF{{\sf F}}
\def\sG{{\sf G}}
\def\bz{{\boldsymbol z}}
\def\hbz{\hat{\boldsymbol z}}

\def\hz{\hat{z}}

\def\sb{{\sf b}}

\def\normal{{\sf N}}
\def\cnormal{{\sf CN}}
\def\id{{\boldsymbol I}}
\def\E{{\mathbb E}}

\def\de{{\rm d}}

\def\Sphere{{\sf S}}
\def\MMSE{{\sf MMSE}}
\def\Unif{{\rm Unif}}

\def\complex{{\mathbb C}}
\def\cT{{\mathcal T}}

\makeatletter
\let\l@ENGLISH\l@english
\makeatother

\title{Fundamental Limits of Weak Recovery\\ with Applications to Phase Retrieval}

\author{Marco~Mondelli\thanks{Department of Electrical Engineering,
    Stanford University}
 \;\;\; and\;\;\; Andrea Montanari\thanks{Department of Electrical Engineering and Department of Statistics, Stanford University}}

%------------------------------------------------------------------------------------------------------------------
\begin{document}

\maketitle
%------------------------------------------------------------------------------------------------------------------
\begin{abstract}
\noindent In phase retrieval we want to recover an unknown signal $\bx\in\complex^d$ from $n$ quadratic measurements of the form 
$y_i = |\<\ba_i,\bx\>|^2+w_i$ where $\ba_i\in \complex^d$ are known sensing vectors and  $w_i$ is measurement noise. We ask the following
\emph{weak recovery} question: what is the minimum number of measurements $n$ needed to produce an estimator $\hat{\bx}(\by)$ that is positively correlated with 
the signal $\bx$?
We consider the case of Gaussian  vectors $\ba_i$. We prove that -- in the high-dimensional limit -- a sharp phase transition takes place, and we locate the threshold in the regime of vanishingly small noise.
For $n\le d-o(d)$ no estimator can do significantly better than random and achieve a strictly positive correlation.
For $n\ge d+o(d)$ a simple spectral estimator achieves a positive correlation. 
Surprisingly, numerical simulations with the same spectral estimator  demonstrate promising performance with realistic sensing matrices. 
Spectral methods are used to initialize non-convex optimization algorithms in phase retrieval, and our approach can boost the performance in this setting as well.

Our impossibility result is based on classical information-theory arguments. The spectral algorithm computes the leading eigenvector
of a weighted empirical covariance matrix. We obtain a sharp
characterization of the spectral properties of this random matrix  using tools from free probability and generalizing
a recent result by Lu and Li. 
Both the upper and lower bound generalize beyond phase retrieval to measurements $y_i$ produced according to a generalized linear model. 
As a byproduct of our analysis, we compare the threshold of the proposed spectral method with that of a message passing algorithm. 
\end{abstract}

%\begin{IEEEkeywords}
%Spectral initialization, phase transition, mutual information, second moment method, phase retrieval
%\end{IEEEkeywords}

%----------------------------------------------------
\section{Introduction} \label{sec:intro}
%----------------------------------------------------

% thresholds for high-dimensional statistical estimation problems

In this work, we consider the problem of recovering a signal $\bx$ of dimension $d$, given $n$ \emph{generalized linear measurements}. More specifically, the measurements are drawn independently according to the conditional distribution
\begin{equation}\label{eq:model}
y_i\sim p(y\mid \langle\bx, \ba_i \rangle), \qquad i\in\{1, \ldots, n\},
\end{equation}
where $\langle\cdot, \cdot\rangle$ denotes the inner product, $\{\ba_i\}_{1\le i\le n}$ is a set of known sensing vector, and $p(\cdot \mid \langle\bx, \ba_i \rangle)$ is a known probability density function. This model appears in many problems in signal processing and statistical estimation, e.g., photon-limited imaging \cite{photonUnser, photonVetterli}, signal recovery from quantized measurements \cite{noiseRangan}, and phase retrieval \cite{phFienup, shechtman2015phase}. For the problem of \emph{phase retrieval}, the model \eqref{eq:model} is specialized to 
\begin{equation}\label{eq:phretr}
y_i = |\langle\bx, \ba_i\rangle|^2+w_i, \qquad i\in\{1, \ldots, n\}\, ,
\end{equation}
where $w_i$ is noise. 
Applications of phase retrieval arise in several areas of science and engineering, including X-ray crystallography \cite{M90, H93}, microscopy \cite{miao2008extending}, astronomy \cite{fienup1987phase}, optics \cite{walther1963question}, acoustics\cite{balan2006signal}, interferometry \cite{demanet2017convex}, and quantum mechanics \cite{corbett2006pauli}.
 
Popular methods to solve the phase retrieval problem are based on semi-definite programming relaxations \cite{candes2015phase,
  candes2015phase2, candes2013phaselift,  waldspurger2015phase}. However, these algorithms rapidly become prohibitive from a computational point of view when the dimension $d$
of the signal increases, which makes them impractical in most of the real-world applications. For this reason, several algorithms have been
developed in order to solve directly the non-convex least-squares problem, including the error reduction schemes dating back to
Gerchberg-Saxton and Fienup \cite{gerchberg1972practical, phFienup}, alternating minimization \cite{netrapalli2013phase}, 
approximate message passing (AMP) \cite{schniter2015compressive}, Wirtinger Flow \cite{candes2015wirt}, iterative projections \cite{li2015phase}, the Kaczmarz method
\cite{wei2015solving}, and a number of other approaches \cite{chen2017solving, zhang2016reshaped, cai2016optimal, wang2016solving, wang2016solvingnips, soltanolkotabi2017structured,duchi2017solving, wang2017solving}. Furthermore, recently a convex relaxation that operates in the natural domain of the signal was independently proposed by two groups of authors \cite{goldstein2016phasemax, bahmani17a}. All these techniques require an initialization step, whose goal is to provide a solution $\hat{\bx}$ that is positively correlated with the unknown signal $\bx$. To do so, spectral methods are widely employed: the estimate $\hat{\bx}$ is given by the principal eigenvector of a suitable matrix constructed from the data. 
A similar stategy (initialization step followed by an iterative algorithm) has proved successful for many other estimation problems, e.g., matrix completion \cite{keshavan2010matrix, jain2013low}, blind deconvolution \cite{lee2017blind, li2016rapid}, sparse coding \cite{arora2015simple} and joint alignment from pairwise noisy observations \cite{chen2016projected}. 

We focus on a regime in which both the number of measurement $n$ and the dimension of the signal $d$ tend to infinity, but their ratio $n/d$ 
tends to a positive constant $\delta$. The \emph{weak recovery} problem requires to provide an estimate $\hat{\bx}(\by)$ that has a positive correlation with the unknown vector $\bx$:
\begin{equation}\label{eq:poscorr}
\liminf_{n\to\infty}\E\bigg\{\frac{|\langle\hat{\bx}(\by), \bx\rangle|}{\norm{\hat{\bx}(\by)}_2 \norm{\bx}_2} \bigg\}> \epsilon,
\end{equation}
for some $\epsilon >0$. 

In this paper, we consider either $\bx\in \reals^d$ or $\bx\in\complex^d$ and assume that the measurement vectors
$\ba_i$ are standard Gaussian (either real or complex). In the general setting of model (\ref{eq:model}), we present two types of results:
\begin{enumerate}
\item We develop an \emph{information-theoretic lower bound} $\delta_{\ell}$: for $\delta < \delta_{\ell}$, no estimator can output non-trivial estimates.
In other words, the weak recovery problem cannot be solved. 
\item We establish an  \emph{upper bound} $\delta_{\rm u}$ based on a \emph{spectral algorithm}:
for $\delta > \delta_{\rm u}$, we can achieve weak recovery (see~\eqref{eq:poscorr}) by letting $\hat{\bx}$ be the principal
eigenvector of a matrix suitably constructed from the data. We also show that $\delta_{\rm u}$ is the optimal threshold for spectral methods.
\end{enumerate}
The values of the thresholds $\delta_{\ell}$ and $\delta_{\rm u}$ depend on the conditional distribution $p(\cdot \mid \langle\bx, \ba_i \rangle)$. 
For the special case of phase retrieval (see \eqref{eq:phretr}), we evaluate these bounds and we show that they coincide
in the limit of vanishing noise.
\begin{theorem*} 
Let $\bx$ be uniformly distributed on the $d$-dimensional complex sphere with radius $\sqrt{d}$ and assume that $\{\ba_i\}_{1\le i\le n}\sim_{i.i.d.}\cnormal(\b0_d,\id_d/d)$.
Let $\by\in \mathbb R^n$ be given by \eqref{eq:phretr}, with $\{w_i\}_{1\le i\le n}\sim\normal(0,\sigma^2)$, and $n, d\to \infty$ with $n/d\to \delta \in (0, +\infty)$. Then,
\begin{itemize}
\item For $\delta <1$, no algorithm can provide non-trivial estimates on $\bx$;

\item For $\delta > 1$, there exists $\sigma_0(\delta)>0$ and  a spectral algorithm that returns an estimate $\hat{\bx}$ satisfying \eqref{eq:poscorr}, for any $\sigma\in [0,\sigma_0(\delta)]$. 
\end{itemize}
\end{theorem*}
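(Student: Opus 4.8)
The statement combines two independent arguments --- an information-theoretic impossibility bound for $\delta<1$ and a constructive spectral guarantee for $\delta>1$ --- and in each case the plan is to invoke the general threshold already available for arbitrary generalized linear models, specialize it to the channel $p(y\mid g)=\normal(y;|g|^2,\sigma^2)$, and then control the $\sigma\downarrow0$ limit.

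For the lower bound, I would first pass to the equivalent problem of estimating the rank-one matrix $\bX=\bx\bx^*$ from the linear observations $y_i=\langle\ba_i\ba_i^*,\bX\rangle+w_i$, and reduce \eqref{eq:poscorr} to the statement that the Bayes-optimal estimator is asymptotically uninformative: its squared correlation is controlled by $\tfrac1d\,\E[\lambda_{\max}(\E[\bX\mid\by,\{\ba_i\}])]$, and it is an upper bound for the squared correlation of every estimator, so it suffices to show that the posterior second-moment matrix $\E[\bX\mid\by,\{\ba_i\}]$ is asymptotically isotropic (leading eigenvalue $o(d)$) whenever $\delta<1$. I would obtain this from the general information-theoretic bound --- equivalently, from a conditional second-moment computation comparing the planted law of $(\by,\{\ba_i\})$ to the signal-free law: for two independent draws $\bx,\bx'$ from the prior, the relevant likelihood-ratio moment factorizes over $i$ into a function of the overlap $|\langle\bx,\bx'\rangle|^2/d^2$, which concentrates at scale $1/d$, so that the moment stays bounded precisely when $\delta$ times the second-order information coefficient of the channel is below one. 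The vanishing-noise behaviour is the delicate point: phase retrieval has $\E[yg]=0$ and the $\sigma=0$ channel is degenerate, so one cannot simply set $\sigma=0$; instead I would show that this coefficient is non-increasing in $\sigma$ with limiting value $\mathrm{Var}(|g|^2)=1$ for complex Gaussian $g$, which keeps the threshold at $\delta_\ell(\sigma)\ge1$ uniformly in $\sigma\ge0$.

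For the upper bound, I would take $\hat\bx$ to be the leading eigenvector of $\bM=\tfrac1n\sum_i\mathcal{T}(y_i)\,\ba_i\ba_i^*$ with the preprocessing $\mathcal{T}$ singled out as optimal by the earlier free-probability analysis (the generalization of Lu--Li), which yields a deterministic bulk with edge $\lambda^\star$ together with an isolated informative outlier --- whose eigenvector satisfies $|\langle\hat\bx,\bx\rangle|^2/d\to c(\delta,\sigma)>0$ --- appearing exactly when a scalar quantity $F_\delta(\mathcal{T},p)$ exceeds $\lambda^\star$. The remaining work is to evaluate this condition for $p(y\mid g)=\normal(y;|g|^2,\sigma^2)$, show that it reduces to $\delta>\delta_{\rm u}(\sigma)$ with $\delta_{\rm u}(\sigma)\to1$ as $\sigma\to0$ (so that it meets the lower bound), and then, given $\delta>1$, pick $\sigma_0(\delta)$ with $\delta_{\rm u}(\sigma)<\delta$ for all $\sigma\le\sigma_0(\delta)$, at which point \eqref{eq:poscorr} follows from $c(\delta,\sigma)>0$. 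The main obstacle is precisely this $\sigma\to0$ limit: the optimal $\mathcal{T}$ for phase retrieval is unbounded near $y=0$ (it behaves like $1-1/y$), while $y=|g|^2$ has strictly positive density at the origin, so the random-matrix results --- stated for sufficiently regular preprocessing --- do not apply verbatim. I would handle this by truncating $\mathcal{T}$ at a slowly diverging level, verifying that the truncated matrix still exhibits the outlier with correlation bounded away from zero, and checking that the induced change in $\delta_{\rm u}$ vanishes as the truncation is relaxed, so that the limiting threshold stays equal to $1$.
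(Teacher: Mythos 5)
Your overall plan --- a conditional second--moment lower bound and a free--probability spectral upper bound, each specialized to the Gaussian phase--retrieval channel with the $\sigma\downarrow 0$ limit taken last --- is the paper's plan, and controlling $\lambda_{\max}(\E[\bX\bX^*\mid\by,\bA])/d$ is a legitimate reformulation of the $\MMSE$ criterion used in Corollary~\ref{cor:lower}. Two of the specific steps you sketch, however, would not go through as described.

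On the lower bound, you assert that the second moment ``stays bounded precisely when $\delta$ times the second--order information coefficient of the channel is below one.'' That is a local condition at overlap $m=0$, but the quantity in question is the Laplace--type integral $\int_0^1 \exp\{n\,F_{\delta}(m)\}\,{\rm d}m$ with $F_\delta(m)=\delta\log f(m)+\log(1-m)$ (cf.\ \eqref{eq:integral}), and boundedness requires the \emph{global} condition $F_\delta(m)<0$ for every $m\in(0,1]$, which is how $\delta_\ell$ in \eqref{eq:defdelta} is defined. Concentration of the overlap at scale $1/d$ does not dispose of the tail: the threshold can be strictly below $1/f'(0)$ if $F_\delta$ turns positive somewhere away from $0$. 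For phase retrieval the two thresholds happen to coincide, because the explicit computation (Lemma~\ref{lemma:deltalnoiseless}) gives $f(m)\to1/(1-m)$ as $\sigma\to0$ and hence $F_\delta(m)=(1-\delta)\log(1-m)$ has a single sign --- but that is a computed fact, not a consequence of concentration. Note also that the paper does not prove monotonicity of $\delta_\ell(\sigma^2)$ in $\sigma$; Corollary~\ref{cor:lower} is a double limit, established by direct evaluation of $\lim_{\sigma\to0}\delta_\ell(\sigma^2)$.

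On the upper bound, you identify the difficulty as the singularity of $\mathcal T^*(y)=1-1/y$ at $y=0$ and propose cutting it off at a slowly divergent level. This is aimed at the wrong object: the spectral estimator never uses $\mathcal T^*$ directly, but the M\"obius reparametrization $\mathcal T_\delta^*$ of \eqref{eq:deftydelta}, which for noiseless phase retrieval evaluates to $\mathcal T^*_\delta(y)=(y_+-1)/(y_++\sqrt{\delta}-1)$ --- a bounded function taking values in $[-1/(\sqrt{\delta}-1),\,1)$ for any $\delta>1$. The passage from $\mathcal T^*$ to $\mathcal T_\delta^*$ already compactifies the $-\infty$ tail, so there is nothing to truncate near $y=0$. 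The genuine regularity hurdle is at the \emph{opposite} end of the support of $Z=\mathcal T_\delta^*(Y)$: hypothesis \eqref{eq:hplemmaub1} of Lemma~\ref{lemma:condub} requires $\E\{Z/(\lambda-Z)^2\}\to\infty$ as $\lambda\downarrow\tau$ at the upper edge $\tau$, which can fail when $Z$ has no atom there. The paper handles this by clipping $\mathcal T_\delta^*$ from above to manufacture a point mass, applying Lemma~\ref{lemma:condub}, and then removing the clipping with a Davis--Kahan perturbation argument. Your proposed truncation at $y=0$ leaves that hypothesis unverified, and so the free--probability outlier statement would not be available in the regime you need.
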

The assumption that $\bx$ is uniform on the sphere can be dropped for the upper bound part. We also show that $\sigma_0(\delta)$ scales as $\sqrt{\delta-1}$ when $\delta$ is close to $1$. In the `real case' $\bx \in \mathbb R^d$ with $\norm{\bx}_2^2=d$ and $\{\ba_i\}_{1\le i\le n}\sim_{i.i.d.}\normal(\b0_d,\id_d/d)$, we prove that an analogous result holds and that the threshold moves from $1$ to $1/2$. This is reminiscent of how the injectivity thresholds
are $\delta=4$ and $\delta=2$ in the complex and the real case, respectively \cite{balan2006signal, bandeira2014saving, conca2015algebraic}. A possible intuition for this halving phenomenon comes from the fact that the complex problem has twice as many variables but the same amount of equations of the real problem. Hence, it is reasonable that the complex case requires twice the amount of data with respect to the real case.

Let us emphasize that we are considering the problem of weak recovery. Therefore, we may need less than $n$ samples in order to obtain positive correlation on $n$ unknowns.
For instance, in the linear case $y_i = \<\ba_i,\bx\>+w_i$, weak recovery is possible for any $\delta >0$. Consequently, it is not surprising that for phase retrieval in the real case 
weak recovery can be achieved for $\delta$ below one.

Our information-theoretic lower bound is proved by estimating the conditional entropy via the second moment method. 
In general, this might not match the spectral upper bound. We provide an example in which there is a strictly positive gap between $\delta_\ell$ and $\delta_{\rm u}$ in Remark \ref{rmk:gap} at the end of Section \ref{sec:main_real}.

As in earlier work (see Section \ref{sec:Related}), we consider spectral algorithms that computesthe eigenvector  corresponding to the largest eigenvalue of a matrix of the form:
\begin{equation}
\bD_n = \frac{1}{n}\sum_{i=1}^n \mathcal T (y_i) \ba_i \ba_i^*\, ,\label{eq:D_Matrix}
\end{equation}
where $\mathcal T: \mathbb R\to \mathbb R$ is a pre-processing function. 
For $\delta$ large  enough (and a suitable choice of $\mathcal T$), we expect the resulting eigenvector $\hat{\bx}(\by)$ to be positively 
correlated with the true signal $\bx$. 
The recent paper \cite{lulispectral_arxiv}  computed exactly the threshold value $\delta_{\rm u}$, 
under the assumption that the measurement vectors are real Gaussian, and $\cT$ is non-negative.

\begin{figure}[p]
    \centering
    \subfloat[Original image.\label{fig:ven}]{\includegraphics[width=.4\columnwidth]{./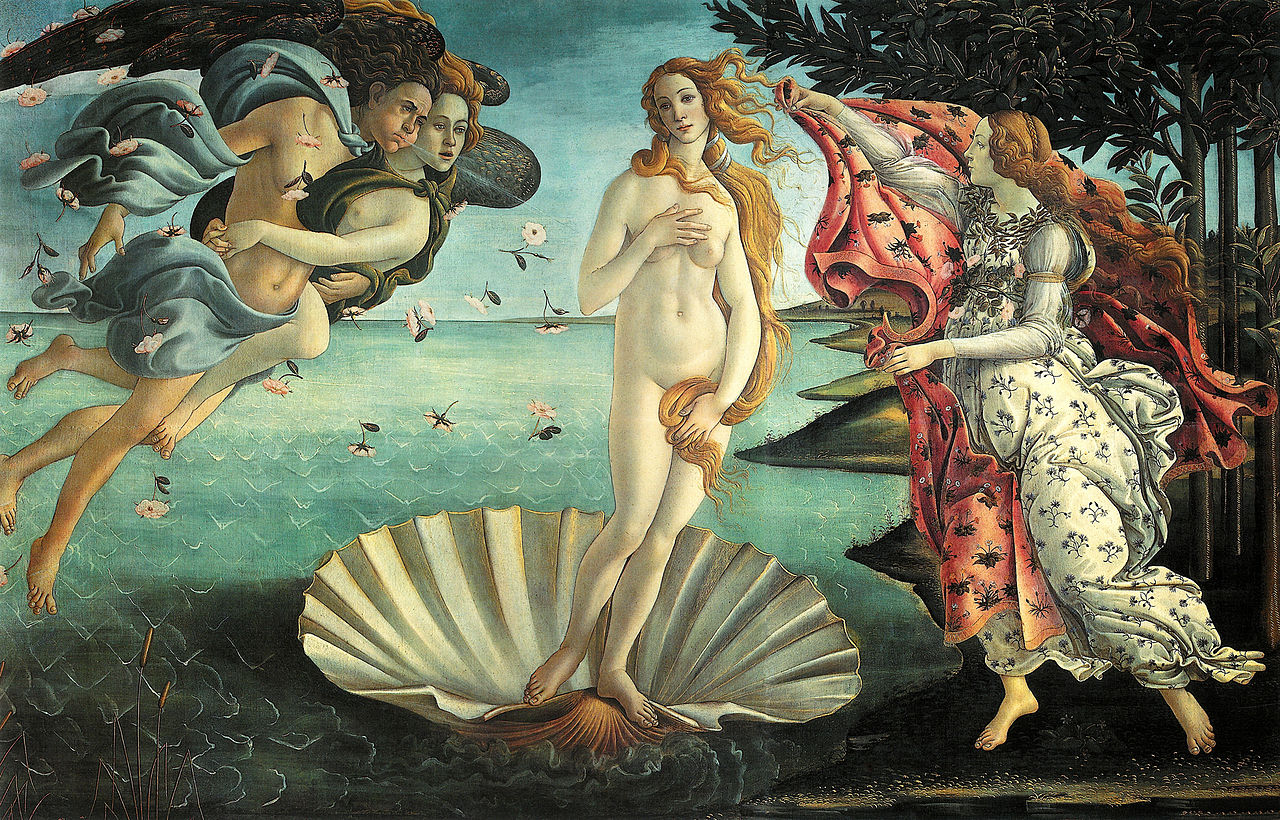}}\\
    \subfloat[proposed -- $\delta=4$.\label{fig:venopt4}]{\includegraphics[width=.4\columnwidth]{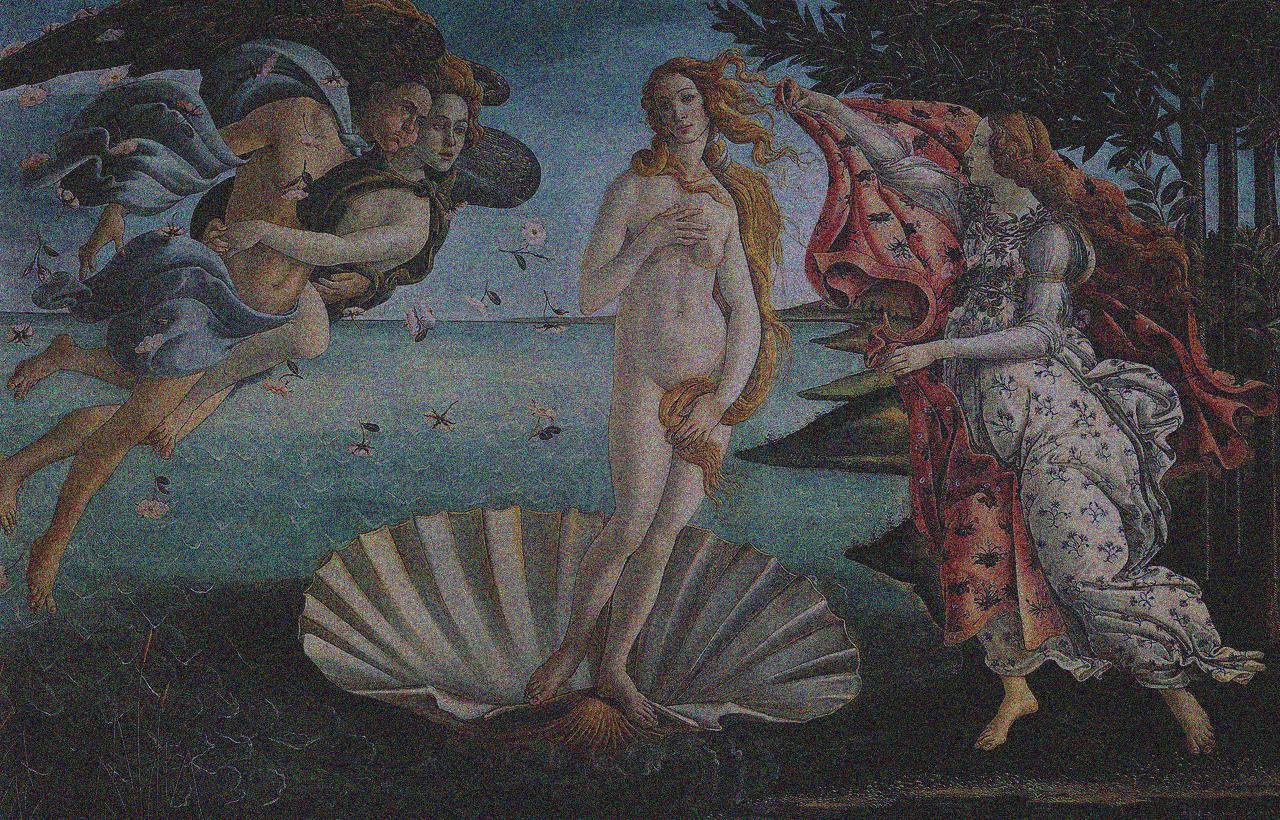}} \hspace{2em}
    \subfloat[truncated -- $\delta=4$.\label{fig:venmahdi4}]{\includegraphics[width=.4\columnwidth]{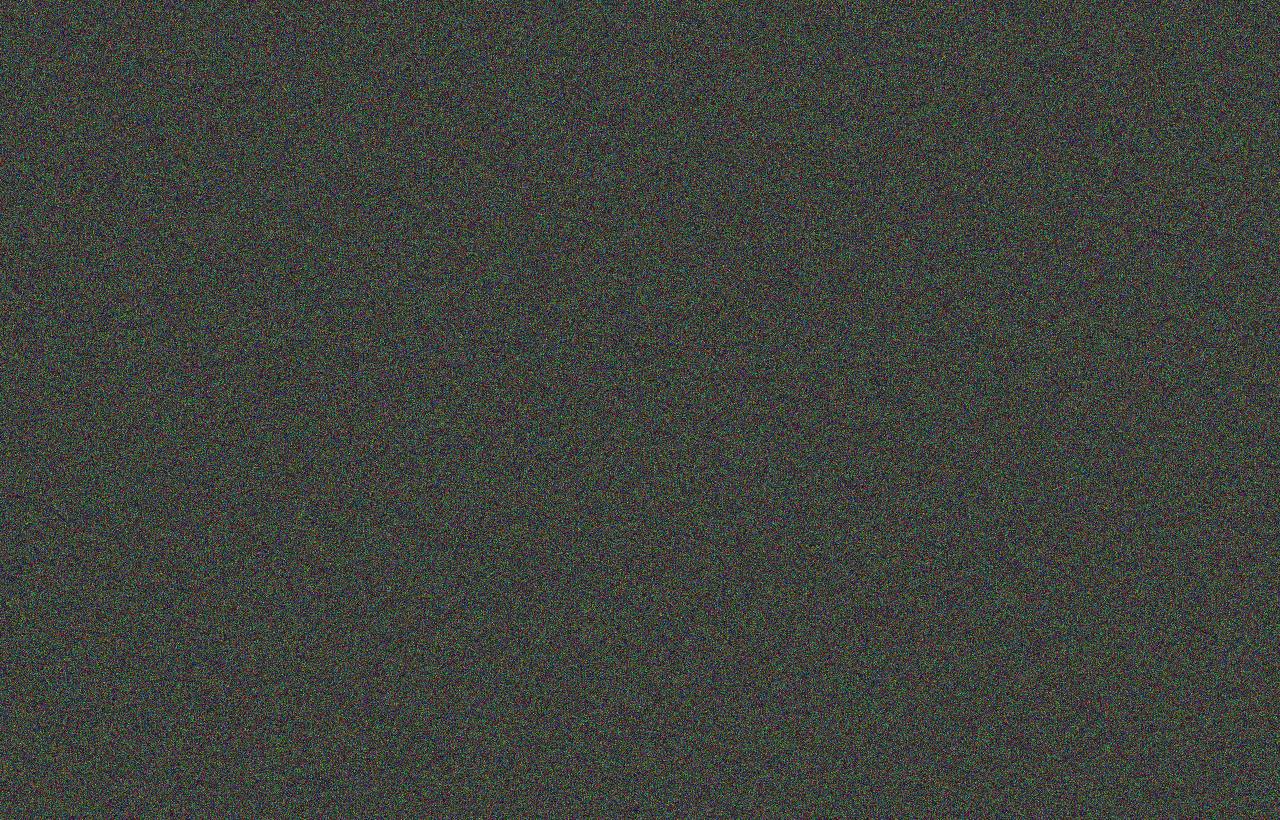}}\\
    \subfloat[proposed -- $\delta=6$.\label{fig:venopt6}]{\includegraphics[width=.4\columnwidth]{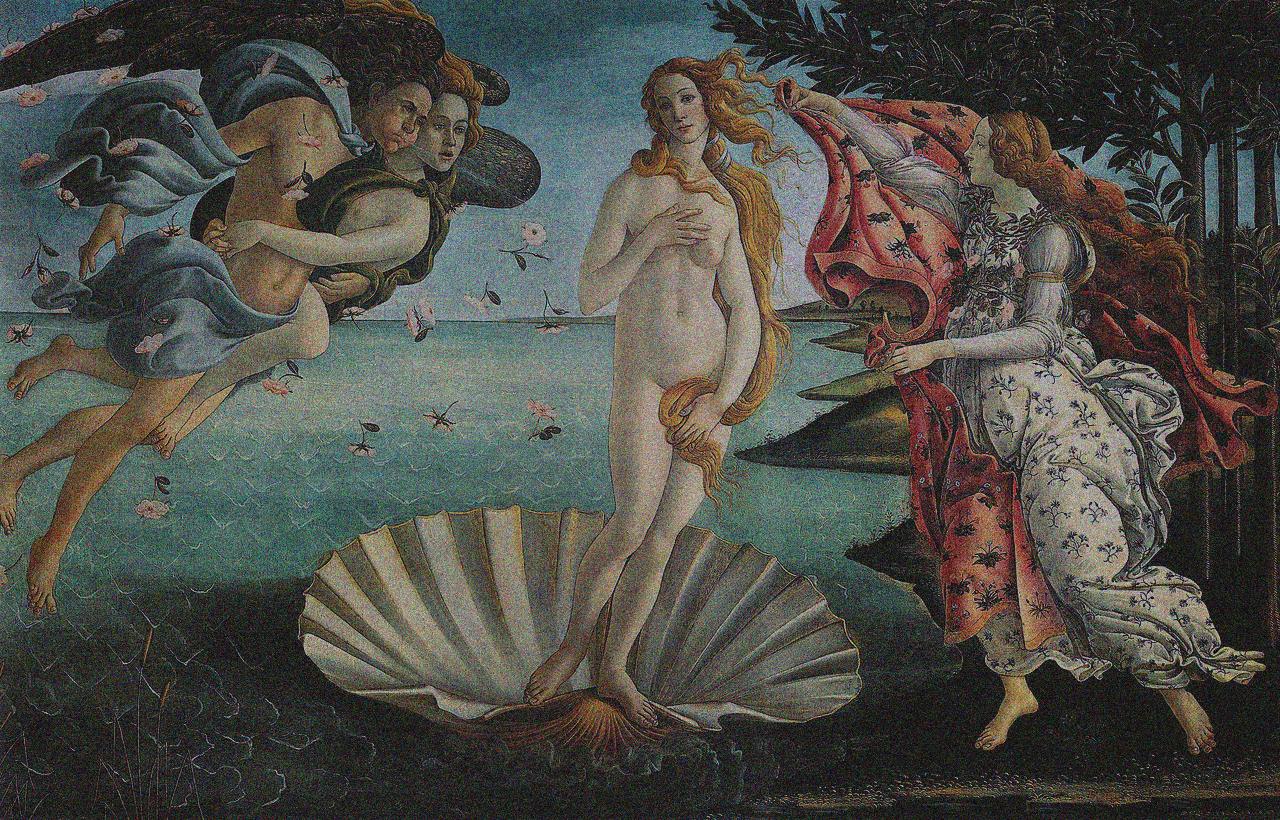}} \hspace{2em}
    \subfloat[truncated -- $\delta=6$.\label{fig:venmahdi6}]{\includegraphics[width=.4\columnwidth]{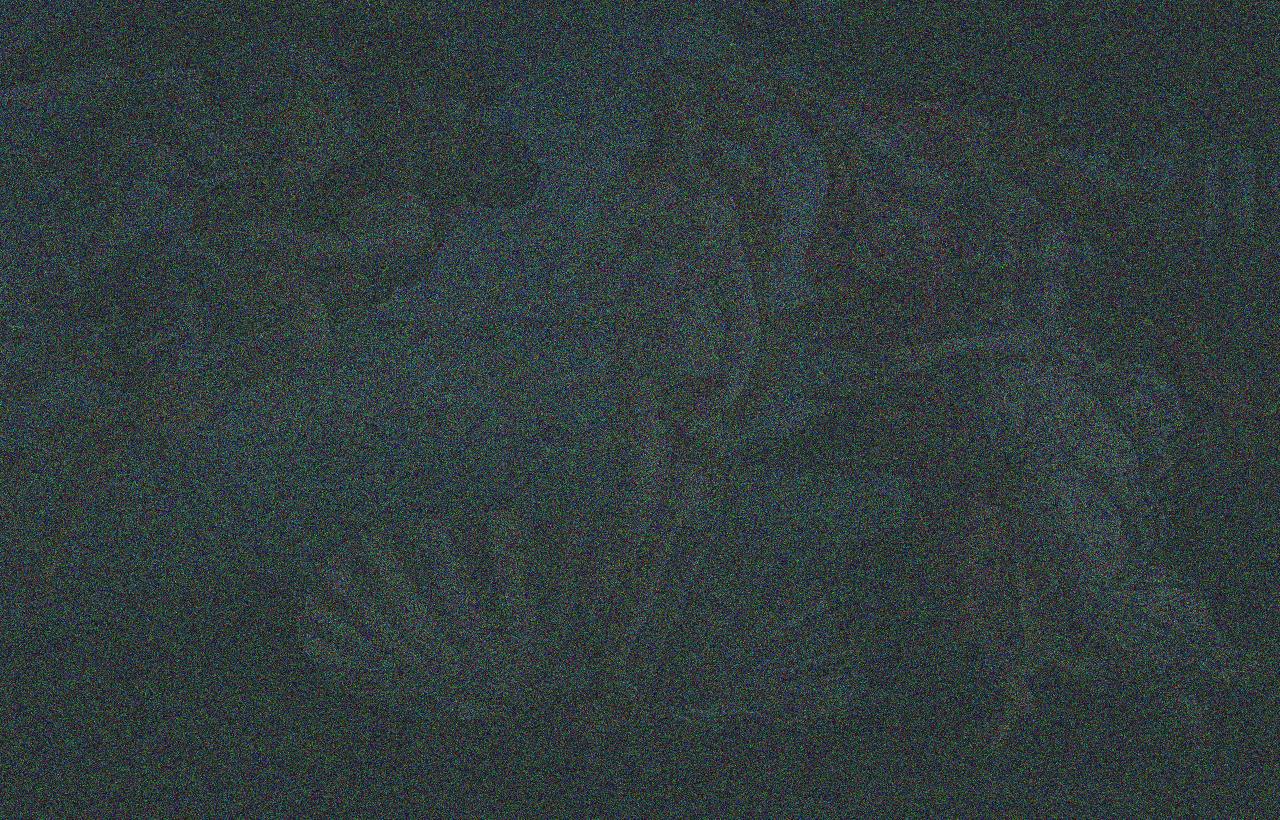}}\\
    \subfloat[proposed -- $\delta=12$.\label{fig:venopt12}]{\includegraphics[width=.4\columnwidth]{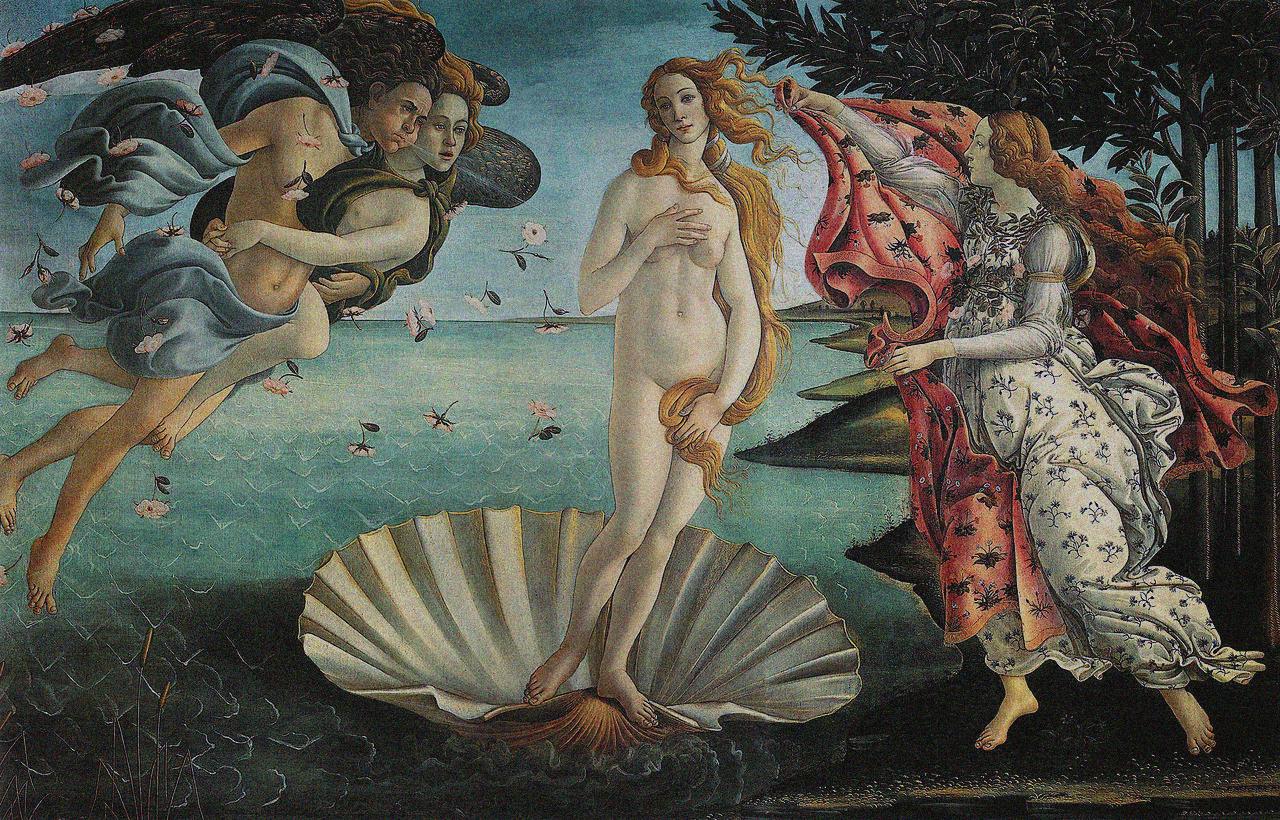}} \hspace{2em}
    \subfloat[truncated -- $\delta=12$.\label{fig:venmahdi12}]{\includegraphics[width=.4\columnwidth]{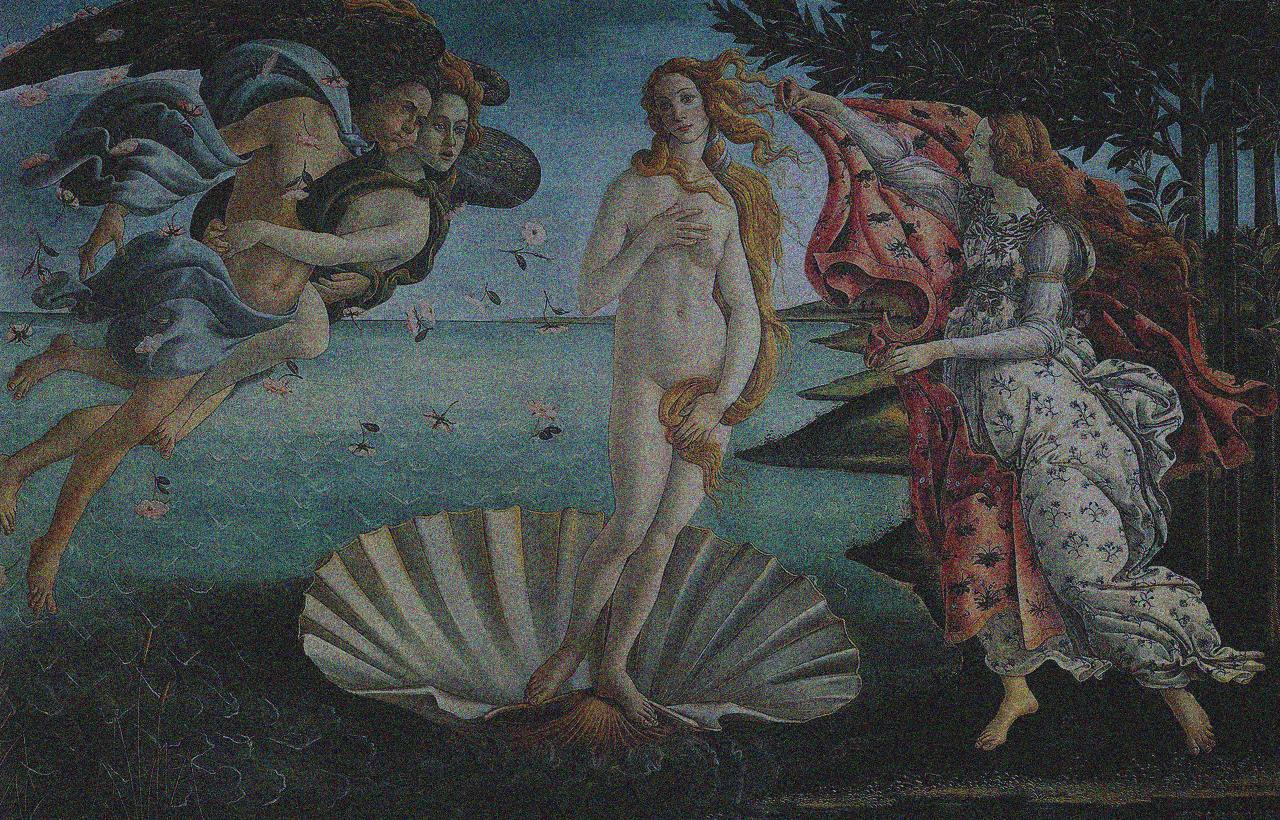}}
\caption{Performance comparison between the proposed spectral method and the truncated spectral initialization of \cite{chen2017solving} for the recovery of a digital photograph from coded diffraction patterns.}
\label{fig:venrec}
\end{figure}

Here, we generalize the result of \cite{lulispectral_arxiv} by removing the assumption that  $\mathcal T(y)\ge 0$ and by considering the complex case. The main technical lemma of this generalization consists in the computation of the largest eigenvalue of a matrix of the form $\bU \bM_n \bU^*$, where the entries of $\bU$ are $\sim_{i.i.d.}\cnormal (0, 1)$ and $\bM_n$ is independent of $\bU$ and has known empirical spectral distribution. The case in which $\bM_n$ is PSD is handled in \cite{baiyaospike2012}. In this paper, by using tools from free probability, we solve the case in which $\bM_n$ is not necessarily PSD. To do so, it is not sufficient to compute the weak limit of the empirical spectral distribution of $\bU \bM_n \bU^*$, but we also need to compute the almost sure limit of its principal eigenvalue.
Armed with this result, we compute the optimal 
pre-processing function $\cT^*_{\delta}(y)$ for the general model (\ref{eq:model}). 
This pre-processing function is optimal in the sense that it provides the smallest possible weak recovery threshold for the spectral method. Our upper bound
$\delta_u$ is the phase transition location for this optimal spectral method.  
In the  case of phase retrieval (as $\sigma\to 0$),  the optimal pre-processing function is given by
\begin{equation}
\mathcal T^*_{\delta}(y) = \frac{y-1}{y+\sqrt{\delta}-1}, \label{eq:T-Phase-Retrieval}
\end{equation}
and achieves weak recovery for any $\delta>\delta_u=1$. 
In the limit $\delta\downarrow 1$, this converges to the limiting function $\cT^*(y) = 1-(1/y)$.

While the expression (\ref{eq:T-Phase-Retrieval}) is remarkably simple, it is somewhat counter-intuitive. Earlier methods 
\cite{candes2015wirt,chen2015solving,lulispectral_arxiv}  use $\cT(y)\ge 0$ and try to extract information from the large values of $y_i$. The function (\ref{eq:T-Phase-Retrieval})
has a large negative part for small $y$, in particular when $\delta$ is close to $1$. Furthermore, it extracts useful information from data
points with $y_i$ small. One possible interpretation is that the points in which the measurement vector is basically orthogonal to the unknown signal are not informative, hence we penalize them. 

%A popular method to solve statistical estimation problems consists in employing approximate message-passing (AMP) algorithms \cite{DMM09} that apply ideas from graphicals models (belief propagation \cite{pearl2014probabilistic}) and statistical physics (mean field or TAP equations \cite{mezard1987spin, MezardMontanari}). In particular, when the data is sampled according to \eqref{eq:model}, a generalized AMP (GAMP) algorithm was proposed in \cite{RanganGAMP}. We show that the threshold $\delta_{\rm u}$ of the proposed spectral method is intimately related to the performance of GAMP, when the signal and the measurement vectors are real. We consider generalized linear models in which the state evolution (SE) that tracks the performance of GAMP has a fixed point at $0$. Then, we show the following results: 
%\begin{enumerate}

%\item For $\delta < \delta_{\rm u}$, the GAMP algorithm converges to the trivial fixed point at $0$ even if the initial condition given to the algorithm has a positive correlation with the unknown vector $\bx$;

%\item For $\delta > \delta_{\rm u}$, the linear system obtained from the GAMP equations in a neighborhood of $0$ is unstable, which means that we escape from the fixed point at $0$. 
%\end{enumerate}

Our analysis applies to Gaussian measurement matrices. However, the proposed spectral method works well also on real images and realistic
measurement matrices. To illustrate this fact, in Figure \ref{fig:venrec} we test our algorithm on a digital photograph of the painting ``The birth of Venus'' by Sandro Botticelli. We consider a type of measurements that falls under the category of coded diffraction patterns (CDP) \cite{candes2015phase2, chen2017solving}: the measurement matrix is given by the product of $\delta$ copies of a Fourier matrix and a diagonal matrix with entries i.i.d. and uniform in $\{1, -1, i, -i\}$, where $i$ denotes the imaginary unit. We compare our method with the truncated spectral initialization proposed in \cite{chen2017solving}, which consists in discarding the measurements larger than an assigned threshold and leaving the others untouched. The proposed choice of the pre-processing function allows to recover a good estimate of the original image already when $\delta = 4$, while the truncated spectral initialization of \cite{chen2017solving} requires $\delta=12$ to obtain similar results. 

In general, our proposed spectral method can be thought of as a first step of the following two-round algorithm: first, use spectral initialization to perform weak recovery and then improve the solution with an iterative algorithm, e.g., AMP or Wirtinger Flow. By using optimal truncation methods, the weak recovery threshold is smaller, which means that less measurements are required in order to successfully complete the first step of the algorithm. If a different truncation
is used, the resulting performances are limited by the corresponding weak recovery threshold.

Note that the pre-processing function \eqref{eq:T-Phase-Retrieval} is optimal in the sense that it minimizes the weak recovery threshold associated to the spectral method. Hence, for a given correlation $\bar{\epsilon}\in (0, 1)$, the exact expression of the optimal pre-processing function that allows to obtain a correlation $\bar{\epsilon}$ between $\hat{\bx}(\by)$ and $\bx$ might be different and it might depend on $\bar{\epsilon}$. However, we observe that \eqref{eq:T-Phase-Retrieval} provides excellent empirical performance and outperforms state-of-the-art methods for a wide range of target correlations (see the simulation results of Section \ref{sec:num}).

The rest of the paper is organized as follows. In Section \ref{sec:main_complex}, after introducing the necessary notation, we define formally the problem. We then state our general 
information-theoretic lower bound and our spectral upper bound for the case of complex signal $\bx$ and complex measurement vectors $\ba_i$. The main results for the real case are stated in Section \ref{sec:main_real}.
 In Sections \ref{sec:mainproof} and \ref{sec:spectralproof}, we present the proof of the information-theoretic lower bound and of the spectral upper bound, respectively. In Section \ref{sec:amp}, 
we compare the spectral approach to a message passing algorithm. In particular we show that the latter cannot have a better threshold than $\delta_{\rm u}$ and that $\delta_{\rm u}$ is the threshold for a linearized version of message passing. In Section \ref{sec:num}, we present some numerical simulations that illustrate the behavior of the proposed spectral method for the phase retrieval problem. The proofs of several results are deferred to the various appendices.

\subsection{Related Work}\label{sec:Related}

Precise asymptotic information on high-dimensional regression problems has been obtained by several groups in recent years \cite{DMM-NSPT-11, BayatiMontanariLASSO, squaredOymak, polytopeBLM, highdimDonoho, asyKar, slopeSu, LK16, plan2016generalized, thrampoulidis2015lasso, neykov2016agnostic}. In particular, information-theoretically optimal estimation was considered for compressed sensing \cite{spatialcomprDonoho}, and random linear estimation 
\cite{replicaReeves, MIBarbier}. Minimax optimal estimation is considered, among others, in \cite{DMM-NSPT-11, slopeSu, ramjilower}.

The performance of the spectral methods for phase retrieval was first considered in \cite{netrapalli2013phase}. In the present notation,  \cite{netrapalli2013phase} uses
$\cT(y) = y$ and proves that there exists a constant $c_1$ such that weak recovery  can be achieved  for $n > c_1 \cdot d \cdot\log^3 d$. The same paper also gives an iterative procedure to improve over 
the spectral method, but the bottleneck is in the spectral step.
The sample complexity of weak recovery using  spectral methods was improved to $n > c_2\cdot d \cdot\log d$ in \cite{candes2015wirt} and then to $n > c_3\cdot d$ in \cite{chen2017solving}, 
for some constants $c_2$ and $c_3$. Both of these papers also prove guarantees for exact recovery by suitable descent algorithms. 
The guarantees on the spectral initialization are proved by matrix concentration inequalities, a technique that typically does not return exact threshold values.

In \cite{goldstein2016phasemax}, the authors introduce the PhaseMax relaxation and prove an exact recovery result for phase retrieval, which depends on the correlation between the true signal and the initial estimate given to the algorithm. The same idea was independently proposed in \cite{bahmani17a}. Furthermore, the analysis in \cite{bahmani17a} allows to use the same set of measurements for both initialization and convex programming, whereas the analysis in \cite{goldstein2016phasemax} requires fresh extra measurements for convex programming. By using our spectral method to obtain the initial estimate, it should be possible to improve the existing upper bounds on the number of samples needed for exact recovery. 

 As previously mentioned, our analysis of spectral methods builds on the recent work of Lu and Li \cite{lulispectral_arxiv}
that compute the exact spectral threshold for a matrix of the form (\ref{eq:D_Matrix}) with $\cT(y)\ge 0$.  Here we generalize this result to signed pre-processing functions 
$\cT(y)$, and construct a function of this type that achieves the information-theoretic threshold for phase retrieval. Our proof indeed implies that non-negative 
pre-processing functions lead to an unavoidable gap with respect to the ideal threshold.

Finally, while this paper was under completion, two works appeared that address related problems. In \cite{barbier2017phase}, the authors characterize the information-theoretically optimal estimation error for
a broad class of models of the form (\ref{eq:model}). However, note that this analysis does not prove -- in general -- the existence of an efficient estimation algorithm (for instance in the 
case of phase retrieval). The paper \cite{dhifallah2017fundamental} studies the PhaseMax approach \cite{goldstein2016phasemax, bahmani17a} to phase retrieval and uses the non-rigorous replica method from statistical physics to derive exact thresholds 
for this algorithm. The rigorous performance analysis of PhaseMax under Gaussian measurements in the large system limit is provided in \cite{dhifallah2017phasebis}.

%----------------------------------------------------
\section{Main Results: Complex Case} \label{sec:main_complex}
%----------------------------------------------------

%----------------------------------------------------
\subsection{Notation and System Model} \label{subsec:sys}
%----------------------------------------------------

We use $[n]$ as a shortcut for $\{1, \ldots, n\}$. We use upper-case letters (e.g., $X, Y, Z, \ldots$) to denote random variables when we are taking operators such as expectation, variance or mutual information. We denote by $\b0_n$ the vector consisting of $n$ 0s. Given a vector $\bx$, we denote by $\norm{\bx}_2$ its $\ell_2$ norm. Given a matrix $\bA$, we denote by $\norm{\bA}_F$ its Frobenius norm, by $\norm{\bA}_{\rm op}$ its operator norm, by $\bA^{\sT}$ its transpose, and by $\bA^*$ its conjugate transpose. Given two vectors $\bx, \by \in {\mathbb C}^d$, we denote by $\langle \bx, \by \rangle = \sum_{i=1}^d x_i y_i^*$ their scalar product. We take logarithms in the natural basis and we measure entropies in nats. Given $c\in \mathbb C$, we denote by $\Re{(c)}$ and $\Im{(c)}$ its real and imaginary part, respectively. We use $\stackrel{\mathclap{\mathcal P}}{\longrightarrow}$ and $\stackrel{\mathclap{\mbox{\footnotesize a.s.}}}{\longrightarrow}$ to denote the convergence in probability and the almost sure convergence, respectively. 

Let $\bx\in {\mathbb C}^d$ be chosen uniformly at random on the $d$-dimensional complex sphere with radius $\sqrt{d}$, i.e.,
\begin{equation}\label{eq:defx}
\bx \sim \Unif(\sqrt{d}\Sphere_{\mathbb C}^{d-1}).
\end{equation}

Let the sensing vectors $\{\ba_i\}_{1\le i\le n}$, with $\ba_i \in {\mathbb C}^d$, be independent and identically distributed according to a circularly-symmetric complex normal distribution with variance $1/d$, i.e.,
\begin{equation}\label{eq:defa}
\{\ba_i\}_{1\le i\le n}\sim_{i.i.d.}\cnormal({\bm 0}_d,\id_d/d).
\end{equation}

Given $g_i = \langle \bx, \ba_i\rangle$, the vector of measurements $\by\in {\mathbb R}^n$ is obtained by drawing each component independently according to the following distribution:
\begin{equation}\label{eq:defy}
y_i\sim p(y\mid |g_i|), \qquad i\in [n].
\end{equation}
For the special case of \emph{phase retrieval}, the measurements are given by the squared scalar product corrupted by additive Gaussian noise with variance $\sigma^2$, i.e.,
\begin{equation}\label{eq:defypr}
p_{\rm PR}(y\mid |g_i|)  =\frac{1}{\sigma\sqrt{2\pi}}\exp\left(-\frac{(y-|g_i|^2)^2}{2\sigma^2}\right).
\end{equation}

Let $\delta_n = n/d$ and assume that, as $n\to \infty$, $\delta_n\to \delta$ for some $\delta \in(0, \infty)$.

%----------------------------------------------------
\subsection{Information-Theoretic Lower Bound} \label{subsec:lower}
%----------------------------------------------------

The main result of this section establishes the following: there is a critical value $\delta_{\ell}$ such that, for any $\delta<\delta_{\ell}$, the optimal estimator has the same performance as a trivial estimator that does not have access to any measurement. The value of $\delta_{\ell}$ depends on the distribution \eqref{eq:defy} of the measurements and we provide an expression to compute it. 

In order to state formally the result, we need to introduce a few definitions. Consider the function $f : [0, 1]\to {\mathbb R}$, given by 
\begin{equation}\label{eq:fnorm}
f(m) = \bigintssss_{\mathbb R}\frac{{\mathbb E}_{G_1, G_2}\left\{p(y\mid |G_1|)p(y\mid |G_2|)\right\}}{{\mathbb E}_{G}\left\{p(y\mid |G|)\right\}} \,{\rm d}y,
\end{equation}
with 
\begin{equation}
G\sim \cnormal(0, 1),\qquad (G_1, G_2)\sim\cnormal\left(\b0_2, \left[\begin{array}{ll}
1 & c \\ c^* & 1\\ 
\end{array}\right]\right),
\end{equation}
and $m=|c|^2$. Note that the RHS of \eqref{eq:fnorm} depends only on $m=|c|^2$. Indeed, by applying the transformation $(G_1, G_2)\to (e^{i\theta_1}G_1, e^{i\theta_2}G_2)$, $f(m)$ does not change, but the correlation coefficient $c$ is mapped into $ce^{i(\theta_1-\theta_2)}$. A more explicit formula for $f(m)$ is provided by Lemma \ref{lemma:expl} in Appendix \ref{app:noiseless}. The function $f(m)$ is related to the conditional entropy $H(Y_1, \ldots, Y_n \mid \bA_1, \ldots, \bA_n)$, as clarified in the proof of Lemma \ref{lemma:ent} in Section \ref{subsec:proofcompl}. Furthermore, set
\begin{equation}\label{eq:defF}
F_{\delta}(m) = \delta \log f(m) + \log (1-m).
\end{equation}
Note that, when $m=0$, $G_1$ and $G_2$ are independent. Hence, $f(0)=1$, which implies that $F_{\delta}(0)=0$ for any $\delta >0$. We define the information-theoretic threshold $\delta_{\ell}$ as the largest value of $\delta$ such that the maximum of $F_{\delta}(m)$ is attained at $m=0$, i.e., 
\begin{equation}\label{eq:defdelta}
\delta_{\ell} = \sup \{\delta \mid F_{\delta}(m) < 0 \mbox{ for }m\in (0, 1]\}.
\end{equation}

Let us now define the error metric. The setting is the following: we observe the vector of $n$ measurements $\by$ and, given a new sensing vector $\ba_{n+1}$, we want to estimate some function $\phi(|\langle \bx, \ba_{n+1}\rangle|)$ given by
\begin{equation}\label{eq:defphi}
\phi(|\langle \bx, \ba_{n+1}\rangle|) = \int_{\mathbb R} \varphi(y) p(y\mid |\langle \bx, \ba_{n+1}\rangle|)\,{\rm d}y.
\end{equation}
Then, the minimum mean square error is defined as
\begin{equation}
\MMSE(\delta_n) = {\mathbb E}\biggl\{\Bigl(\phi(|\langle \bX, \bA_{n+1}\rangle|) - {\mathbb E}\bigl\{\phi(|\langle \bX, \bA_{n+1}\rangle|)\,\bigl\rvert \,\bY, \{\bA_i\}_{1\le i\le n}\bigr\}\Bigr)^2\biggr\},
\end{equation}
where ${\mathbb E}\left\{\phi(|\langle \bX, \bA_{n+1}\rangle|)\mid \bY, \{\bA_i\}_{1\le i\le n}\right\}$ represents the optimal estimator of the quantity $\phi(|\langle \bx, \ba_{n+1}\rangle|)$ and the expectation of the square error is to be intended over all the randomness of the system, i.e., over $\bX$, $\bA_{n+1}$, $\bY$, and $\{\bA_i\}_{1\le i\le n}$. Note that this error metric depends on the choice of the function $\phi$. Furthermore, observe that, if we do not have access to the vector of measurements $\bY$, the trivial estimator ${\mathbb E}\left\{\phi(|\langle \bX, \bA_{n+1}\rangle|)\right\}$ has a mean square error given by
\begin{equation}
{\mathbb E}\biggl\{\Bigl(\phi(|\langle \bX, \bA_{n+1}\rangle|) - {\mathbb E}\bigl\{\phi(|\langle \bX, \bA_{n+1}\rangle|)\bigr\}\Bigr)^2\biggr\} = {\rm Var}\bigl\{\phi(|\langle \bX, \bA_{n+1}\rangle|)\bigr\}.
\end{equation}

At this point we are ready to state our main result, which is proved in Section \ref{subsec:proofcompl}.

\begin{theorem}[Information-Theoretic Lower Bound for General Complex Sensing Model]\label{th:lower}
Let $\bx$, $\{\ba_i\}_{1\le i\le n+1}$, and $\by$ be distributed according to \eqref{eq:defx}, \eqref{eq:defa}, and \eqref{eq:defy}, respectively. Let $n/d\to \delta$ and define $\delta_{\ell}$ as in \eqref{eq:defdelta}. Furthermore, assume that the function $\varphi$ that appears in \eqref{eq:defphi} is bounded. Then, for any $\delta < \delta_{\ell}$, we have that
\begin{equation}
\lim_{n\to \infty}\MMSE(\delta_n) = {\rm Var}\bigl\{\phi(|\langle \bX, \bA_{n+1}\rangle|)\bigr\}. 
\end{equation}
\end{theorem}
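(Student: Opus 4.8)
The plan is to prove that, for $\delta<\delta_\ell$, the posterior distribution of $|\langle\bX,\bA_{n+1}\rangle|$ given $(\bY,\{\bA_i\}_{1\le i\le n})$ becomes asymptotically independent of the data, so that the optimal estimator of $\phi(|\langle\bX,\bA_{n+1}\rangle|)$ degenerates to the prior mean. The cleanest route is through mutual information: it suffices to show that
\begin{equation}
\lim_{n\to\infty}\frac{1}{d}\,I\bigl(\bX;\bY\,\bigl\lvert\,\{\bA_i\}_{1\le i\le n}\bigr)=0,\label{eq:mi-goal}
\end{equation}
or equivalently that $H(\bY\mid\{\bA_i\}) - H(\bY\mid\bX,\{\bA_i\})=o(d)$; one can in fact aim for the stronger $o(1)$ bound. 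Since $H(\bY\mid\bX,\{\bA_i\})=n\,h(Y\mid |G|)$ for $G\sim\cnormal(0,1)$ by the product structure of \eqref{eq:defy}, the content is an upper bound on $H(\bY\mid\{\bA_i\})$, and the natural tool is the second moment method applied to the likelihood $p(\bY\mid\{\bA_i\})$.

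First I would compute $\E\{p(\bY\mid\{\bA_i\})^2\}/\bigl(\E\{p(\bY\mid\{\bA_i\})\}\bigr)^2$, i.e. a normalized second moment. Introducing an independent replica $\bX'$ of the signal, this ratio equals $\E_{\bX,\bX'}\prod_{i=1}^n f_i$ where $f_i$ depends on the pair $(\langle\bX,\bA_i\rangle,\langle\bX',\bA_i\rangle)$; conditioning on the overlap $M=|\langle\bX,\bX'\rangle|^2/d^2$, the sensing vectors $\bA_i$ make the pairs $(\langle\bX,\bA_i\rangle,\langle\bX',\bA_i\rangle)$ i.i.d. complex Gaussian with correlation determined by $M$, so the product becomes $f(M)^n$ with $f$ exactly the function in \eqref{eq:fnorm}. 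Hence the normalized second moment is $\E_M\{f(M)^n\}$ where $M$ is the squared normalized overlap of two uniform points on $\sqrt{d}\,\Sphere_\mathbb{C}^{d-1}$. The density of $M$ concentrates near $0$ with the large-deviations rate $\tfrac1d\log\prob(M\approx m)\to\log(1-m)$, so by Laplace/Varadhan the exponential growth rate of the normalized second moment is $\sup_{m\in[0,1]}\bigl(\delta\log f(m)+\log(1-m)\bigr)=\sup_m F_\delta(m)$. By the definition \eqref{eq:defdelta} of $\delta_\ell$, this supremum is $0$ (attained only at $m=0$) for all $\delta<\delta_\ell$, so the normalized second moment is $e^{o(d)}$, and with a little more care one extracts a bounded (indeed $\to1$) limit. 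Standard arguments then give that $p(\bY\mid\{\bA_i\})$ concentrates around its mean, hence $\tfrac1d$ times the entropy gap vanishes, yielding \eqref{eq:mi-goal} — this is the content of Lemma~\ref{lemma:ent}.

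Second, I would convert the vanishing of conditional mutual information into the MMSE statement. Note $\MMSE(\delta_n)\le\mathrm{Var}\{\phi(|\langle\bX,\bA_{n+1}\rangle|)\}$ always, so only the matching lower bound is needed. Because $\varphi$ is bounded, $\phi$ is bounded, and the difference $\mathrm{Var}\{\phi\}-\MMSE(\delta_n)$ is controlled by how much the conditional law of $|\langle\bX,\bA_{n+1}\rangle|$ given $(\bY,\{\bA_i\})$ differs — in total variation, or in an $L^1$-Wasserstein/$I$-divergence sense — from its unconditional law; a Pinsker-type inequality bounds this by (a constant times the square root of) the relevant mutual information $I(|\langle\bX,\bA_{n+1}\rangle|;\bY\mid\{\bA_i\}_{1\le i\le n+1})$, which is itself at most $I(\bX;\bY\mid\{\bA_i\}_{1\le i\le n})=o(d)$. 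A mild additional subtlety is that I need the \emph{per-sample} mutual information, not just the normalized one, to go to zero; this follows because the sequence $\frac1d I(\bX;\bY\mid\{\bA_i\})$ is essentially monotone in $\delta$ (adding measurements cannot decrease information) and bounded, so its increments — which control the single extra measurement $\bA_{n+1}$ — vanish along the relevant subsequence. Assembling the pieces gives $\lim_n\MMSE(\delta_n)=\mathrm{Var}\{\phi(|\langle\bX,\bA_{n+1}\rangle|)\}$.

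The main obstacle is the second-moment computation and its Laplace asymptotics: one must (i) justify interchanging the Gaussian integral over $\bA_i$ with the product, (ii) show the overlap density of two uniform complex-sphere points has the claimed rate function $\log(1-m)$ with uniform control near the endpoints $m=0$ and $m=1$ (where $f$ could blow up), and (iii) upgrade the ``$e^{o(d)}$'' bound on the normalized second moment to a genuinely bounded limit, since a second moment that is merely subexponential does not by itself force concentration without, e.g., a truncation argument restricting to $M\le m_0$ for small $m_0$ and handling the rare large-overlap event separately. Controlling the behavior of $f(m)$ as $m\uparrow1$ (using the explicit formula of Lemma~\ref{lemma:expl}) is where the boundedness/regularity of $p(\cdot\mid\cdot)$ enters, and is the delicate analytic point; everything else is the now-standard second-moment-method-to-MMSE pipeline.
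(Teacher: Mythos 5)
Your plan tracks the paper's proof quite closely: a second-moment ($\chi^2$-type) bound on the conditional entropy $H(\bY\mid\bA)$ via a replica computation over the overlap $M$, Laplace asymptotics driven by the ${\rm Beta}(1,d-1)$ density of $M$, a monotonicity/Ces\`aro step to pass from the normalized entropy gap to the per-sample mutual information $I(Y_{n+1};\bY_{1:n},\bA_{1:n}\mid\bA_{n+1})\to 0$, and Pinsker's inequality together with boundedness of $\varphi$ to convert that into the MMSE claim. The one genuinely avoidable complication is your point (iii): you propose to upgrade the second moment $S_n := \int \E_\bA\{p(\by\mid\bA)^2\}/\E_\bA\{p(\by\mid\bA)\}\,{\rm d}\by$ to an $O(1)$ limit (via a truncation $M\le m_0$) so that a concentration-of-likelihood argument closes the entropy gap; this is more than is needed. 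The paper instead bounds the gap pointwise in $\by$ using the elementary scalar inequality $-z\log z + z - 1 \ge -(z-1)^2$ applied to $Z=p(\by\mid\bA)/\E_\bA\{p(\by\mid\bA)\}$, yielding $\frac{1}{n}H(\bY\mid\bA) - H(Y_1) \ge -\frac{1}{n}(S_n-1)$ directly, so $S_n = o(n)$ suffices. The replica plus dominated-convergence (Laplace) step gives exactly $S_n/n\to 0$ with no truncation and no delicate control of $f(m)$ as $m\uparrow 1$ beyond what $\delta<\delta_\ell$ already encodes. Likewise, the ``along a subsequence'' caveat in your per-sample step is unnecessary: by exchangeability and ``conditioning reduces entropy,'' $s_n = H(Y_n\mid\bY_{1:n-1},\bA_{1:n})$ is nonincreasing, hence converges, and Ces\`aro identifies its limit with $H(Y_1)$, so the per-sample mutual information converges to zero along the full sequence.
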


Let us point out that the requirement that the function $\varphi$ is bounded can be relaxed when the tails of the distribution of $\bY$ are sufficiently light (e.g., sub-Gaussian). Indeed, this is what happens for the special case of phase retrieval, which is considered immediately below. 

For the special case of \emph{phase retrieval}, a more explicit error metric is given by the matrix minimum mean square error, defined as 
\begin{equation}\label{eq:defMMSEphretr}
\MMSE_{\rm PR}(\delta_n) = \frac{1}{d^2} {\mathbb E}\left\{\Bigl\lVert\bX \bX^* - {\mathbb E}\bigl\{\bX \bX^*\mid \bY, \{\bA_i\}_{1\le i\le n}\bigr\}\Bigl\lVert_{F}^2\right\}.
\end{equation}
Indeed, the vector $\bx$ can be recovered only up to a sign change, since we observe a function of the scalar products $|\langle \bx, \ba_i\rangle|$. Clearly, $\MMSE(\delta_n)\in [0, 1]$ and $\MMSE(\delta_n) = 1$ implies that the optimal estimator coincides with the trivial estimator that outputs the all-0 vector.

The corollary below provides the exact value of $\delta_{\ell}$ for the case of phase retrieval and it is proved in Appendix \ref{app:noiseless}. 

\begin{corollary}[Information-Theoretic Lower Bound for Phase Retrieval]\label{cor:lower}
Let $\bx$, $\{\ba_i\}_{1\le i\le n}$, and $\by$ be distributed according to \eqref{eq:defx}, \eqref{eq:defa}, and \eqref{eq:defypr}, respectively. Let $n/d\to \delta$. Then, for any $\delta < 1$, we have that
\begin{equation}
\lim_{\sigma\to 0}\lim_{n\to \infty}\MMSE_{\rm PR}(\delta_n) = 1.
\end{equation}
\end{corollary}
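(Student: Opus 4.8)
The plan is to deduce Corollary~\ref{cor:lower} from Theorem~\ref{th:lower} in two steps: evaluate the threshold $\delta_{\ell}$ of \eqref{eq:defdelta} for the phase-retrieval channel \eqref{eq:defypr} in the limit $\sigma\to 0$ and show it equals $1$; and translate the conclusion of Theorem~\ref{th:lower} into the matrix metric \eqref{eq:defMMSEphretr}. For the translation, the key observation is the identity
\[
\MMSE_{\rm PR}(\delta_n)=1-\frac{1}{d^2}\,\E\bigl\{|\langle\bX,\bX'\rangle|^2\bigr\},
\]
where $\bX'$ is an independent draw from the posterior of $\bX$ given $(\bY,\{\bA_i\}_{i\le n})$. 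This follows from $\|\bX\bX^*\|_F^2=\|\bX\|_2^4=d^2$, the orthogonality property of conditional expectation, and the Nishimori/tower identity $\E\|\E\{\bX\bX^*\mid\bY,\bA\}\|_F^2=\E\langle\bX\bX^*,\E\{\bX\bX^*\mid\bY,\bA\}\rangle=\E\{|\langle\bX,\bX'\rangle|^2\}$. Consequently $\MMSE_{\rm PR}(\delta_n)\to1$ is equivalent to the overlap $m_n:=|\langle\bX,\bX'\rangle|^2/d^2$ going to $0$ in probability, which is exactly what the second-moment estimate of $H(\bY\mid\{\bA_i\})$ underlying Theorem~\ref{th:lower} (see the proof of Lemma~\ref{lemma:ent}) produces whenever $\delta<\delta_{\ell}$. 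Alternatively, one may apply Theorem~\ref{th:lower} directly: the boundedness assumption on $\varphi$ is dispensable here since, as noted after the theorem, $Y$ has sub-exponential tails in this model.

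Next I would compute $f(m)$ from \eqref{eq:fnorm} in the noiseless limit. Writing $p_{\rm PR}(y\mid|g|)$ as an $N(0,\sigma^2)$ density centred at $|g|^2$ and letting $\sigma\downarrow0$, the denominator $\E_G\{p(y\mid|G|)\}$ converges to $e^{-y}\mathbbm{1}\{y\ge0\}$, the density of $|G|^2$, while the numerator $\E_{G_1,G_2}\{p(y\mid|G_1|)p(y\mid|G_2|)\}$ converges to $q_m(y,y)$, the value on the diagonal of the joint density of $(|G_1|^2,|G_2|^2)$, which a standard computation gives as $q_m(u,v)=(1-m)^{-1}\exp(-\tfrac{u+v}{1-m})\,I_0\!\bigl(\tfrac{2\sqrt{muv}}{1-m}\bigr)$ for $u,v\ge0$. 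Hence, in the limit,
\[
f(m)=\int_0^{\infty}e^{y}q_m(y,y)\,{\rm d}y=\frac{1}{1-m}\int_0^{\infty}e^{-py}I_0(ay)\,{\rm d}y,\qquad p=\frac{1+m}{1-m},\ \ a=\frac{2\sqrt m}{1-m}.
\]
Since $\int_0^{\infty}e^{-py}I_0(ay)\,{\rm d}y=(p^2-a^2)^{-1/2}$ and $p^2-a^2=\bigl((1+m)^2-4m\bigr)/(1-m)^2=1$, one gets $f(m)=1/(1-m)$ (the same value also drops out of the closed form in Lemma~\ref{lemma:expl}). Plugging into \eqref{eq:defF} gives $F_{\delta}(m)=\delta\log\tfrac{1}{1-m}+\log(1-m)=(1-\delta)\log(1-m)$, which on $(0,1]$ is negative iff $\delta<1$ and positive on $(0,1)$ for $\delta>1$; therefore $\delta_{\ell}=1$ in the noiseless limit.

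To obtain the iterated limit as stated I would establish $\liminf_{\sigma\downarrow0}\delta_{\ell}(\sigma)\ge1$: for fixed $\delta<1$ we have $F_{\delta,\sigma}(m)\to(1-\delta)\log(1-m)$ pointwise, and this can be upgraded to $\sup_{m\in(0,1]}F_{\delta,\sigma}(m)<0$ for all small $\sigma$ by a compactness argument on $[0,1]$ — controlling $m$ near $0$ via the slope at the origin, $\delta\, f_{\sigma}'(0)-1\to\delta-1<0$, and $m$ near $1$ via $\log(1-m)\to-\infty$ while $\log f_{\sigma}(m)$ stays bounded for each fixed $\sigma>0$. Then for all sufficiently small $\sigma$ we have $\delta<\delta_{\ell}(\sigma)$, so the first paragraph gives $\lim_n\MMSE_{\rm PR}(\delta_n)=1$, and letting $\sigma\to0$ concludes.

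The main obstacle is this last transfer: the limiting profile $F_{\delta,0}(m)=(1-\delta)\log(1-m)$ touches $0$ at $m=0$ and diverges at $m=1$, so pointwise negativity on $(0,1]$ is not enough — one needs \emph{uniform}, quantitative control of $f_{\sigma}(m)$ near both endpoints as $\sigma\downarrow0$, which is where the explicit formula of Lemma~\ref{lemma:expl} does the real work. A secondary technical point is justifying the $\sigma\downarrow0$ limit under the $y$-integral in \eqref{eq:fnorm} by dominated convergence, using that the integrand's ratio is controlled uniformly in $\sigma$ on bounded $y$-ranges and decays at infinity.
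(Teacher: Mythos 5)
Your reduction to the posterior overlap via Nishimori is correct: one does have $\MMSE_{\rm PR}(\delta_n)=1-\tfrac{1}{d^2}\E\{|\langle\bX,\bX'\rangle|^2\}$ with $\bX'$ a posterior replica, and your noiseless evaluation $f(m)=1/(1-m)$ agrees with the paper's Lemma~\ref{lemma:deltalnoiseless}. However, the step that makes the argument go through --- ``the overlap going to $0$ is exactly what the second-moment estimate of $H(\bY\mid\bA)$ produces'' --- does not hold as stated, and it is precisely here that the paper's proof does real work. The second-moment bound in Lemma~\ref{lemma:ent} controls the \emph{annealed} quantity $\int \E_\bA\{(p(\by\mid\bA))^2\}\big/\E_\bA\{p(\by\mid\bA)\}\,\de\by$, whereas the posterior overlap involves the \emph{quenched} likelihood $p(\by\mid\bA)$, not $\E_\bA\{p(\by\mid\bA)\}$, in the denominator; these two ratios are genuinely different random quantities, and the first being $1+o(n)$ does not imply the second concentrates. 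Your alternative (``apply Theorem~\ref{th:lower} directly, with $\varphi$ unbounded'') also leaves a gap: Theorem~\ref{th:lower} is a statement about the scalar MMSE of the single functional $\phi(|\langle\bX,\bA_{n+1}\rangle|)$ for a \emph{fresh} sensing vector $\bA_{n+1}$, not about the matrix MMSE of $\bX\bX^*$.

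What the paper actually does to bridge these is the following (Appendix~\ref{app:noiseless}): take $\varphi_{\rm PR}(y)=y$ truncated at level $K$, so $\phi(|\langle\bX,\bA_{n+1}\rangle|)\approx \langle\bA_{n+1},\bX\bX^*\bA_{n+1}\rangle$; route step~1 of Theorem~\ref{th:lower}'s proof through Pinsker to bound $I(Y_{n+1};\bY_{1:n},\bA_{1:n}\mid\bA_{n+1})$ below by $\tfrac{1}{2K^2}\E\big\{|\langle\bA_{n+1},\bM\bA_{n+1}\rangle+E_1-E_2|^2\big\}$ with $\bM=\E\{\bX\bX^*\mid\bY,\bA\}-\E\{\bX\bX^*\}$; and then use the Gaussian fourth-moment identity $\E_{\bA_{n+1}}\{|\langle\bA_{n+1},\bM\bA_{n+1}\rangle|^2\}=\tfrac{1}{d^2}(|\mathrm{trace}(\bM)|^2+\|\bM\|_F^2)$, together with $\mathrm{trace}(\bM)=0$, to turn a scalar MMSE estimate into the Frobenius-norm bound $\tfrac{1}{d^2}\E\|\bM\|_F^2=o(1)$. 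Combined with the orthogonal decomposition $\E\|\bX\bX^*-\E\{\bX\bX^*\}\|_F^2 = \E\|\bX\bX^*-\E\{\bX\bX^*\mid\bY,\bA\}\|_F^2 + \E\|\bM\|_F^2$ and $\E\|\bX\bX^*-\E\{\bX\bX^*\}\|_F^2=d^2-d$, this gives $\MMSE_{\rm PR}(\delta_n)\to1$. That chain --- Pinsker plus the quadratic-form isotropy identity and the vanishing of the trace --- is the ingredient your plan is missing; everything else (the $\delta_\ell$ computation, the $\sigma\downarrow 0$ uniformity that you correctly flag as needing quantitative control of $f_\sigma$ near the endpoints) matches the paper's Lemma~\ref{lemma:deltalnoiseless}.
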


%----------------------------------------------------
\subsection{Upper Bound via Spectral Method} \label{subsec:upper}
%----------------------------------------------------

The main result of this section establishes the following: there is a critical value $\delta_{\rm u}$ such that, for any $\delta>\delta_{\rm u}$, the principal eigenvector of a suitably constructed matrix, call it $\bD_n$, provides an estimate $\hat{\bx}$ that satisfies \eqref{eq:poscorr}. 
3
The threshold $\delta_{\rm u}$ is defined as
\begin{equation}\label{eq:defdeltau}
\delta_{\rm u} = \frac{1}{\displaystyle\bigintssss_{\mathbb R}\frac{\left({\mathbb E}_{G}\left\{p(y\mid |G|)(|G|^2-1)\right\}\right)^2}{{\mathbb E}_{G}\left\{p(y\mid |G|)\right\}} \,{\rm d}y},
\end{equation}
with $G\sim \cnormal(0, 1)$. Given the measurements $\{y_i\}_{1\le i\le n}$, we construct the matrix $\bD_n$ as  
\begin{equation}\label{eq:defD}
\bD_n = \frac{1}{n}\sum_{i=1}^n \mathcal T (y_i) \ba_i \ba_i^*,
\end{equation}
where $\mathcal T: \mathbb R\to \mathbb R$ is a pre-processing function.

At this point we are ready to state our main result, which is proved in Section \ref{sec:spectralproof}. 

\begin{theorem}[Spectral Upper Bound for Complex General Sensing Model]\label{th:upper}
Let $\bx$, $\{\ba_i\}_{1\le i\le n}$, and $\by$ be distributed according to \eqref{eq:defx}, \eqref{eq:defa}, and \eqref{eq:defy}, respectively. Let $n/d\to \delta$ and define $\delta_{\rm u}$ as in \eqref{eq:defdeltau}. Let $\hat{\bx}$ be the principal eigenvector of the matrix $\bD_n$ defined in \eqref{eq:defD}. For any $\delta > \delta_{\rm u}$, set the pre-processing function $\mathcal T$ to the function $\mathcal T_\delta^*$ given by
\begin{equation}\label{eq:deftydelta}
\mathcal T_\delta^*(y) = \frac{\sqrt{\delta_{\rm u}}\cdot\mathcal T^*(y)}{\sqrt{\delta}-(\sqrt{\delta}-\sqrt{\delta_{\rm u}})\mathcal T^*(y)},
\end{equation}
where
\begin{equation}\label{eq:defty}
\mathcal T^*(y) = 1-\frac{{\mathbb E}_{G}\left\{p(y\mid |G|)\right\}}{{\mathbb E}_{G}\left\{p(y\mid |G|)\cdot |G|^2\right\}}.
\end{equation}
Then, we have that, almost surely,
\begin{equation}\label{eq:corrub}
\lim_{n\to \infty}\frac{|\langle\hat{\bx}, \bx\rangle|}{\norm{\hat{\bx}}_2 \, \norm{\bx}_2} > \epsilon,
\end{equation}
for some $\epsilon >0$. Furthermore, for any $\delta \leq \delta_{\rm u}$, there is no pre-processing function $\mathcal T$ such that, almost surely, \eqref{eq:corrub} holds. 
\end{theorem}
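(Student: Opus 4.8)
Because eigenvectors are unchanged under rescaling, we work with $d\,\bD_n$, whose eigenvalues are $\Theta(1)$. By rotational invariance of the Gaussian design \eqref{eq:defa} — which uses only $\norm{\bx}_2=\sqrt d$, so that the uniform-sphere hypothesis is not needed for this direction — we may take $\bx=\sqrt d\,\be_1$. Set $\bg_i:=\sqrt d\,\ba_i\sim\cnormal(\b0_d,\bI_d)$ and split $\bg_i=(g_i,\widetilde\bg_i)$ with $g_i\in\complex$ and $\widetilde\bg_i\in\complex^{d-1}$. Since $y_i$ depends on $\bg_i$ only through $|g_i|$, the scalars $\bz:=(g_1,\dots,g_n)^{\sT}$ carry all the coupling between $\by$ and the design, whereas $\widetilde\bZ:=[\widetilde\bg_1|\cdots|\widetilde\bg_n]\in\complex^{(d-1)\times n}$ has i.i.d.\ $\cnormal(0,1)$ entries and is independent of the diagonal weight matrix $\bD:=\mathrm{diag}(\mathcal T(y_1),\dots,\mathcal T(y_n))$. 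Writing the unit principal eigenvector of $d\,\bD_n$ as $\bv=(v_1,\bv')$, the left-hand side of \eqref{eq:corrub} equals $|v_1|$, so the goal is to decide whether $\liminf_n|v_1|>0$ almost surely. Throughout, let $T:=\mathcal T(Y)$ with $Y\sim p(\cdot\mid|G|)$, $G\sim\cnormal(0,1)$, and $a(\mathcal T):=\E[T\,|G|^2]$.

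\noindent\textbf{Bulk, self-consistent equation, and the dichotomy at fixed $\mathcal T$.}
The bottom-right $(d-1)\times(d-1)$ block of $d\,\bD_n$ is $\bB_n:=\tfrac1n\widetilde\bZ\bD\widetilde\bZ^{*}=\tfrac1n\sum_i\mathcal T(y_i)\,\widetilde\bg_i\widetilde\bg_i^{*}$, a compound-Wishart-type matrix with weights distributed as $T$, independent of $\widetilde\bZ$, whose empirical weight distribution converges to the law of $T$. This is exactly the ensemble analyzed by the main technical lemma proved in Section~\ref{sec:spectralproof} (the extension of \cite{baiyaospike2012} to non-PSD $\bD$): it yields that the spectral distribution of $\bB_n$ converges a.s.\ to a deterministic, compactly supported $\mu=\mu_{\delta,\mathcal T}$ with right edge $\lambda_+$; that $\lambda_{\max}(\bB_n)\to\lambda_+$ a.s.\ (using that $\mathcal T_\delta^{*}$ is bounded, so the weights produce no outlier of their own); and quantitative control of bilinear forms $\bu^{*}(\bB_n-z\bI)^{-1}\bu$ for $z\notin\mathrm{supp}\,\mu$, in terms of the Stieltjes transform of $\mu$ and the companion function from the lemma. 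Eliminating $\bv'$ from the eigenvalue equation of $d\,\bD_n$ then shows that $\lambda\notin\mathrm{spec}(\bB_n)$ is an eigenvalue with $v_1\neq0$ if and only if
\[
\lambda \;=\; \frac1n\sum_{i=1}^{n}\mathcal T(y_i)\,|g_i|^2 \;-\; \frac1{n^2}\,\bu_n^{*}(\bB_n-\lambda\bI)^{-1}\bu_n,\qquad \bu_n:=\widetilde\bZ\,\bD\,\overline{\bz}.
\]
Conditionally on $(\bz,\by)$ the vector $\bu_n$ is Gaussian, and the last term is a bilinear form in $\widetilde\bZ$; by the resolvent estimates above together with a leave-one-column-out decoupling it concentrates a.s.\ to a deterministic $\zeta(\lambda)$, while the first term tends to $a(\mathcal T)$ by the law of large numbers. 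So in the limit an outlier solves $\Psi(\lambda):=\lambda+\zeta(\lambda)=a(\mathcal T)$, with $\Psi$ continuous, strictly increasing, and $\Psi'>1$ on $(\lambda_+,\infty)$; let $\psi_+(\mathcal T):=\lim_{\lambda\downarrow\lambda_+}\Psi(\lambda)$. If $a(\mathcal T)>\psi_+(\mathcal T)$ there is a unique $\bar\lambda>\lambda_+$ with $\Psi(\bar\lambda)=a(\mathcal T)$; an upper bound from the scalar identity (combined with $\lambda_{\max}(\bB_n)\to\lambda_+$) and a matching test-vector lower bound give $\lambda_{\max}(d\,\bD_n)\to\bar\lambda$ a.s., and differentiating the identity (using $\bv'=-\tfrac{v_1}{n}(\bB_n-\bar\lambda\bI)^{-1}\bu_n$ and $\norm{\bv}_2=1$) yields $|v_1|^2\to1/\Psi'(\bar\lambda)\in(0,1)$, which establishes \eqref{eq:corrub}. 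If instead $a(\mathcal T)\le\psi_+(\mathcal T)$, the same analysis gives $\lambda_{\max}(d\,\bD_n)\to\lambda_+$ with $|v_1|\to0$, so no positive correlation survives. In other words, \emph{the spectral method with pre-processing $\mathcal T$ achieves weak recovery if and only if $a(\mathcal T)>\psi_+(\mathcal T)$.}

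\noindent\textbf{Optimizing over $\mathcal T$ and locating $\delta_{\rm u}$.}
It remains to show that $\sup_{\mathcal T}\big(a(\mathcal T)-\psi_+(\mathcal T)\big)>0$ precisely when $\delta>\delta_{\rm u}$, with the function \eqref{eq:deftydelta} realizing this for every $\delta>\delta_{\rm u}$. The gap $a(\mathcal T)-\psi_+(\mathcal T)$ is homogeneous of degree one in $\mathcal T$, so its sign depends only on the shape of $\mathcal T$; substituting the self-consistent equations for $\lambda_+$ and $\zeta$ and optimizing reduces, by a Cauchy--Schwarz argument, to the condition $\delta>\delta_{\rm u}$ with $\delta_{\rm u}$ as in \eqref{eq:defdeltau}, the extremal (``matched-filter'') shape being $\mathcal T^{*}$ of \eqref{eq:defty}. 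To turn this optimal shape into a pre-processing function that works for every $\delta>\delta_{\rm u}$ one must fix a remaining scalar parameter; the choice \eqref{eq:deftydelta} is the one that keeps $\mathcal T_\delta^{*}$ bounded and calibrates the positions of $\lambda_+$ and $\bar\lambda$, so that the spike is present with strictly positive correlation throughout $\delta>\delta_{\rm u}$ — giving \eqref{eq:corrub}. Conversely, for $\delta\le\delta_{\rm u}$ the gap $a(\mathcal T)-\psi_+(\mathcal T)$ is nonpositive for every admissible $\mathcal T$ (unbounded $\mathcal T$ being reduced to the bounded case by truncation, using continuity of the limiting functionals), so by the dichotomy no pre-processing function can satisfy \eqref{eq:corrub} — which is the second assertion.

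\noindent\textbf{Main obstacle.}
The hard part is the non-PSD random-matrix lemma invoked above: the almost sure limit of $\lambda_{\max}(\bB_n)$ and the location of the right edge $\lambda_+$ when $\bD$ is not positive semidefinite. The $\bD^{1/2}$ reduction of \cite{baiyaospike2012} is unavailable, so the limiting spectrum must be obtained from a subordination / self-consistent equation for $\tfrac1n\widetilde\bZ\bD\widetilde\bZ^{*}$ via free probability, and one must prove a ``no eigenvalue outside the support'' estimate at this level of generality (the analogue of Bai--Silverstein), ruling out spurious edges or outliers generated by the negative part of $\mathrm{spec}(\bD)$. Secondary difficulties are making precise the role of the reparametrization \eqref{eq:deftydelta} in the optimization step, and handling unbounded $\mathcal T$ in the impossibility direction.
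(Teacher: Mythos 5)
Your proposal follows the same overall strategy as the paper: reduce to $\bx\propto\be_1$ by rotational invariance, write $\bD_n$ in block form with the scalar $a_n$, the vector $\bq_n$, and the $(d-1)\times(d-1)$ compound-Wishart block, characterize the outlier via a self-consistent equation, identify the non-PSD Bai--Yao generalization (Lemma~\ref{lemma:baiyaonopsd}, via free additive convolution and subordination) as the crux, and use Cauchy--Schwarz in $u(y)=\mathcal T(y)/(1-\mathcal T(y))$ to extremize over $\mathcal T$ and locate $\delta_{\rm u}$. The one place where your route genuinely diverges is the step you label ``leave-one-column-out decoupling'': you want the outlier eigenvalue directly from the scalar equation $\lambda=a_n - \tfrac1{n^2}\bu_n^*(\bB_n-\lambda)^{-1}\bu_n$, which requires an a.s.\ concentration result for the resolvent quadratic form in which $\bu_n=\widetilde\bZ\bD\overline{\bz}$ and $\bB_n=\tfrac1n\widetilde\bZ\bD\widetilde\bZ^*$ are built from the same $\widetilde\bZ$, so the quadratic form is emphatically not a deterministic vector sandwiched in a random resolvent. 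The paper (following Lu and Li) deliberately avoids proving this concentration: it instead considers the parametric family $\bP_n+\mu\bq_n\bq_n^*$ and rewrites it as $\tfrac1n\bU\bM_n\bU^*$ with $\bM_n=\bZ+\tfrac{\mu}{n}\bv\bv^*$, which is \emph{independent} of $\bU$; this converts the outlier problem into the top eigenvalue of a clean compound Wishart matrix to which Lemma~\ref{lemma:baiyaonopsd} applies directly, and then a fixed-point equation $\mu=(L(\mu)-a)^{-1}$ recovers both $\lambda_1^{\bD_n}$ and $|\langle\hat\bx,\be_1\rangle|^2$. Your route can be made to work (it is in the spirit of Benaych-Georges--Nadakuditi type arguments with a decoupling step), but the paper's reparametrization buys a substantial simplification: it removes the correlation at the source rather than having to estimate it. A second small remark: for the impossibility direction you mention reducing unbounded $\mathcal T$ to the bounded case by truncation; the paper does not discuss this explicitly (it works with the necessary-and-sufficient conditions from Lemma~\ref{lemma:condub} which presuppose bounded support of $Z=\mathcal T(Y)$), so your comment is a sensible supplement rather than a divergence. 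On the achievability side you should also mirror the paper's point-mass / Davis--Kahan step: the hypothesis \eqref{eq:hplemmaub1} of Lemma~\ref{lemma:condub} is not automatic for $\mathcal T_\delta^*$, and the paper introduces the $\epsilon_1$-capped $\mathcal T_\delta^*(\cdot,\epsilon_1)$ precisely to enforce it, then lets $\epsilon_1\downarrow 0$ via the eigenvector perturbation bound.
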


Let us highlight that the pre-processing function \eqref{eq:deftydelta} provides the optimal threshold among spectral methods thatuse matrices of the form (\ref{eq:D_Matrix})  in the sense that it achieves weak recovery for $\delta > \delta_{\rm u}$ and no function achieves  weak recovery for $\delta \leq \delta_{\rm u}$. Note also that the assumption that $\bx$ is uniform on the sphere can be dropped (see the beginning of the proof of Lemma \ref{lemma:condub} in Section \ref{sec:spectralproof}).

As a byproduct of our analysis, we also give guarantees on the value of $\delta$ sufficient to achieve an assigned correlation with the ground truth, using the spectral method, see \eqref{eq:numpred} in the statement of Lemma \ref{lemma:condub} in Section \ref{sec:spectralproof}. Hence, we can combine our upper bound with existing nonconvex optimization algorithms, in order to obtain provable performance guarantees.

The corollary below provides the exact value of $\delta_{\rm u}$ and an explicit expression for $\mathcal T_\delta^*(y)$ for the case of phase retrieval. Its proof is contained in Appendix \ref{app:noiseless2}. Note that, for phase retrieval, $\delta_{\rm u}=\delta_{\ell}=1$, i.e., the spectral upper bound matches the information-theoretic lower bound. 
\begin{corollary}[Spectral Upper Bound for Phase Retrieval]\label{cor:upper}
Let $\bx$, $\{\ba_i\}_{1\le i\le n}$, and $\by$ be distributed according to \eqref{eq:defx}, \eqref{eq:defa}, and \eqref{eq:defypr}, respectively. Let $n/d\to \delta$. Let $\hat{\bx}$ be the principal eigenvector of the matrix $\bD_n$ defined in \eqref{eq:defD}. For any $\delta > 1$, set the pre-processing function $\mathcal T$ to the function $\mathcal T_\delta^*$ given by (with $y_+\equiv \max(0,y)$):
\begin{equation}\label{eq:Tstardeltapr}
\mathcal T^*_{\delta}(y) = \frac{y_+-1}{y_++\sqrt{\delta}-1}.
\end{equation}
Then, we have that, almost surely,
\begin{equation}\label{eq:limscaldelta}
\lim_{\sigma\to 0}\lim_{n\to \infty}\frac{|\langle\hat{\bx}, \bx\rangle|}{\norm{\hat{\bx}}_2 \, \norm{\bx}_2} > \epsilon,
\end{equation}
for some $\epsilon >0$.
\end{corollary}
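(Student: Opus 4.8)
The plan is to deduce Corollary~\ref{cor:upper} by specializing Theorem~\ref{th:upper} --- more precisely, the quantitative statement of Lemma~\ref{lemma:condub} --- to the phase retrieval likelihood $p_{\rm PR}$ of \eqref{eq:defypr}, and then letting the noise level $\sigma$ go to $0$. For every fixed $\sigma>0$ the likelihood $p_{\rm PR}(\cdot\mid|g|)$ is smooth with Gaussian tails, so the regularity hypotheses needed for Theorem~\ref{th:upper}/Lemma~\ref{lemma:condub} are satisfied; hence, for each $\sigma>0$, there is a threshold $\delta_{\rm u}(\sigma)$ given by \eqref{eq:defdeltau}, and for $\delta>\delta_{\rm u}(\sigma)$ the principal eigenvector of $\bD_n$ built with the fixed pre-processing function $\mathcal T^*_\delta$ of \eqref{eq:Tstardeltapr} achieves, almost surely as $n\to\infty$, a correlation at least equal to an explicit quantity $\rho(\delta,\sigma)$ (the bound \eqref{eq:numpred}). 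The corollary will follow once I check that (i) $\delta_{\rm u}(\sigma)\to1$, (ii) the exactly optimal pre-processing function \eqref{eq:deftydelta}--\eqref{eq:defty} for $p_{\rm PR}$ converges, as $\sigma\downarrow0$, to the function \eqref{eq:Tstardeltapr}, and (iii) $\liminf_{\sigma\downarrow0}\rho(\delta,\sigma)>0$ for every $\delta>1$.

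Items (i)--(ii) reduce to one computation. If $G\sim\cnormal(0,1)$ then $|G|^2$ is $\mathrm{Exp}(1)$, so for any measurable $h$ of at most polynomial growth
\[
\E_G\bigl\{p_{\rm PR}(y\mid|G|)\,h(|G|^2)\bigr\}=\frac{1}{\sigma\sqrt{2\pi}}\int_0^\infty e^{-(y-t)^2/(2\sigma^2)}\,h(t)\,e^{-t}\,\de t\, ,
\]
which, by the approximate-identity property of the Gaussian kernel, converges as $\sigma\downarrow0$ to $h(y)\,e^{-y}$ for every $y>0$ (and to $0$ for $y<0$). Taking $h\equiv1$, $h(t)=t$, and $h(t)=t-1$ gives, on $(0,\infty)$, $\E_G\{p_{\rm PR}(y\mid|G|)\}\to e^{-y}$, $\E_G\{p_{\rm PR}(y\mid|G|)\,|G|^2\}\to y\,e^{-y}$, and $\E_G\{p_{\rm PR}(y\mid|G|)(|G|^2-1)\}\to(y-1)e^{-y}$. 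Consequently $\mathcal T^*(y)$ of \eqref{eq:defty} converges to $1-1/y$ on $(0,\infty)$, the integral in the denominator of \eqref{eq:defdeltau} converges to $\int_0^\infty(y-1)^2e^{-y}\,\de y=\Gamma(3)-2\Gamma(2)+\Gamma(1)=1$ so that $\delta_{\rm u}(\sigma)\to1$, and substituting $\delta_{\rm u}=1$ and $\mathcal T^*(y)=1-1/y$ into \eqref{eq:deftydelta} gives
\[
\frac{1-1/y}{\sqrt{\delta}-(\sqrt{\delta}-1)(1-1/y)}=\frac{(y-1)/y}{1+(\sqrt{\delta}-1)/y}=\frac{y-1}{y+\sqrt{\delta}-1}\, ,
\]
which is \eqref{eq:Tstardeltapr}; the passage to $y_+$ reflects that, as $\sigma\downarrow0$, the law of the measurements concentrates on $[0,\infty)$, so only the values of $\mathcal T$ on $y\ge0$ are relevant.

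For item (iii) I would plug the above $\sigma\downarrow0$ limits into the explicit correlation formula \eqref{eq:numpred} of Lemma~\ref{lemma:condub} and verify that the resulting limit is a strictly positive, continuous function of $\delta$ on $(1,\infty)$ --- it equals the correlation delivered by the optimal noiseless spectral method. Then, for a fixed $\delta>1$, I would pick $\sigma_0(\delta)>0$ small enough that simultaneously $\delta_{\rm u}(\sigma)<\delta$ and $\rho(\delta,\sigma)\ge\tfrac12\liminf_{\sigma'\downarrow0}\rho(\delta,\sigma')$ for all $\sigma\in(0,\sigma_0(\delta)]$, and take $\epsilon$ equal to that half-liminf. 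Tracking the rate at which this limit vanishes as $\delta\downarrow1$ would also recover the claimed scaling $\sigma_0(\delta)\asymp\sqrt{\delta-1}$.

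The main obstacle is to make these $\sigma\downarrow0$ limit interchanges rigorous: $\delta_{\rm u}(\sigma)$, $\mathcal T^*$, and the free-probability quantities entering \eqref{eq:numpred} are all $y$-integrals of ratios of the smoothed likelihood above, and one must produce $\sigma$-uniform dominating functions, with the delicate regions being $y\approx0$, where $\mathcal T^*(y)=1-1/y$ is unbounded (though the composite pre-processing function \eqref{eq:Tstardeltapr} stays bounded on $y\ge0$, by $\max\{1,(\sqrt{\delta}-1)^{-1}\}$), and the right tail, where the Gaussian factor must dominate the growth of $\mathcal T$. A secondary point is that Corollary~\ref{cor:upper} prescribes the \emph{limiting} pre-processing function \eqref{eq:Tstardeltapr} at every $\sigma>0$ rather than the $\sigma$-dependent optimal one of \eqref{eq:deftydelta}; one therefore applies Lemma~\ref{lemma:condub} with this fixed $\mathcal T$ and uses that its asymptotic correlation converges, as $\sigma\downarrow0$, to the noiseless-optimal value, which is positive for $\delta>1$. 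Verifying the hypotheses of Theorem~\ref{th:upper}/Lemma~\ref{lemma:condub} for $p_{\rm PR}(\cdot\mid\cdot)$ at each fixed $\sigma>0$ is routine, since the Gaussian noise makes the likelihood smooth with light tails.
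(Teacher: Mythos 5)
Your high-level strategy coincides with the paper's: specialize Lemma~\ref{lemma:condub} to $p_{\rm PR}$, compute the $\sigma\downarrow0$ limits of $\delta_{\rm u}$, $\mathcal T^*$, and $\mathcal T^*_\delta$, and show the asymptotic correlation stays bounded away from zero. Your algebra checking $\delta_{\rm u}\to1$, $\mathcal T^*(y)\to1-1/y$, and the reduction of \eqref{eq:deftydelta} to \eqref{eq:Tstardeltapr} is correct. However, there is one genuine gap and one organizational difference worth flagging.

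The gap is that you dismiss the verification of hypothesis \eqref{eq:hplemmaub1} as ``routine.'' It is not: for the fixed pre-processing function $\mathcal T^*_\delta$ and any $\sigma>0$, the random variable $Z=\mathcal T^*_\delta(Y)$ has $\tau=\sup\mathrm{supp}(Z)=1$ but $\mathbb P(Z=1)=0$, and since $1/(1-\mathcal T^*_\delta(y))$ grows only linearly in $y$ while $Y$ has light (sub-Gaussian/exponential) tails, one has $\E\{Z/(1-Z)^2\}<\infty$ and $\E\{Z|G|^2/(1-Z)\}<\infty$. So the two limits in \eqref{eq:hplemmaub1} are finite, the hypothesis \emph{fails}, and Lemma~\ref{lemma:condub} does not apply directly. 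The paper repairs this by truncating: it replaces $\mathcal T^*_\delta$ by $\mathcal T^*_\delta(\cdot,\epsilon_1)$ as in \eqref{eq:defTyfin}, which introduces a point mass at $\tau-\Delta_1(\epsilon_1)$ and makes \eqref{eq:hplemmaub1} trivial, applies Lemma~\ref{lemma:condub} to the truncated matrix, bounds $\norm{\bD_n(\epsilon_1)-\bD_n}_{\rm op}$, and then sends $\epsilon_1\to0$ via Davis--Kahan. Your write-up omits this step entirely.

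The organizational difference: rather than computing the exact $\sigma\to0$ limit of the correlation formula \eqref{eq:numpred} (which requires tracking the solution $\lambda^*_\delta$ of the fixed-point equation $\zeta_\delta(\lambda)=\phi(\lambda)$, a moving target as $\sigma$ varies), the paper first works with a slightly different $\sigma$-dependent function $\mathcal T^*_\delta(y,\sigma)=(y_+-1)/(y_++\sqrt{\delta c(\sigma)}-1)$, where $c(\sigma)=\E\{(Y_+-1)^2\}$ is chosen so that $\bar\lambda_\delta=1$ exactly. This normalization makes \eqref{eq:condub1}--\eqref{eq:condub2} explicit, and combined with a mean-value-theorem argument (\eqref{eq:scalprodfull2}) yields a lower bound $\rho\ge c_4\bigl(\phi(1)-\psi_\delta(1)\bigr)$ with a $\sigma$-uniform constant $c_4$ --- provided one verifies that $\tau$, $\phi'$, $\psi_\delta''$ are bounded uniformly in $\sigma$ near the relevant points (the paper's conditions (4)--(6)). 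Only then does the paper transfer the bound from $\mathcal T^*_\delta(\cdot,\sigma)$ to the $\sigma$-free $\mathcal T^*_\delta$ of the corollary, again via an operator-norm perturbation and Davis--Kahan, using the fact that $c(\sigma)\to1$. Your more direct route through \eqref{eq:numpred} can be made to work, but you would still have to supply the uniform domination you mention, plus the continuity and strict positivity of the fixed-point solution in $\sigma$; the paper's normalize-then-perturb route sidesteps most of that bookkeeping.
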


Notice that this statement is stronger than the claim that $\delta_{{\rm u}}(\sigma^2)\to 1$ as $\sigma^2\to 0$, where $\delta_{{\rm u}}(\sigma^2)$ is the spectral threshold
at noise level $\sigma^2$.  Indeed it requires proving that the scalar product $|\langle\hat{\bx}, \bx\rangle|$ stays bounded away from $0$, as $\sigma^2\to 0$.
Furthermore, this is achieved with the pre-processing function (\ref{eq:Tstardeltapr}) that does not require to estimate $\sigma$, which can be challenging with real data.

We also characterize the scaling between $\delta_{{\rm u}}$ and $\sigma^2$ when $\sigma^2$ is close to $0$: $\delta_{{\rm u}}(\sigma^2)= 1+\sigma^2+o(\sigma^2)$ (see Lemma \ref{lemma:deltaulnoiseless} in Appendix \ref{app:noiseless2}).

%----------------------------------------------------
\section{Main Results: Real Case} \label{sec:main_real}
%----------------------------------------------------

Let us now briefly discuss what happens in the real case. Let $\bx\in {\mathbb R}^d$ be chosen uniformly at random on the $d$-dimensional real sphere with radius $\sqrt{d}$, i.e.,
\begin{equation}\label{eq:defxr}
\bx \sim \Unif(\sqrt{d}\Sphere_{\mathbb R}^{d-1}).
\end{equation}
Let the sensing vectors $\{\ba_i\}_{1\le i\le n}$, with $\ba_i \in {\mathbb R}^d$ being independent and identically distributed according to a normal distribution with zero mean and variance $1/d$, i.e.,
\begin{equation}\label{eq:defar}
\{\ba_i\}_{1\le i\le n}\sim_{i.i.d.}\normal({\bm 0}_d,\id_d/d).
\end{equation}
Given $g_i = \langle \bx, \ba_i\rangle$, the vector of measurements $\by\in {\mathbb R}^n$ is obtained by drawing each component independently according to the following distribution:
\begin{equation}\label{eq:defyr}
y_i\sim p(y\mid g_i), \qquad i\in [n].
\end{equation}
We can define the \emph{``real'' phase retrieval} model, whereby the measurements are given by the squared scalar product corrupted by additive Gaussian noise with variance $\sigma^2$, i.e.,
\begin{equation}\label{eq:defyprr}
p_{\rm PR}(y\mid g_i)  =\frac{1}{\sigma\sqrt{2\pi}}\exp\left(-\frac{(y-g_i^2)^2}{2\sigma^2}\right).
\end{equation}

We first present the information-theoretic lower bound. Consider the function $f : [-1, 1]\to {\mathbb R}$, given by 
\begin{equation}\label{eq:fnormr}
f(m) = \bigintssss_{\mathbb R}\frac{{\mathbb E}_{G_1, G_2}\left\{p(y\mid G_1)p(y\mid G_2)\right\}}{{\mathbb E}_{G}\left\{p(y\mid G)\right\}} \,{\rm d}y,
\end{equation}
with 
\begin{equation}
G\sim \normal(0, 1),\qquad (G_1, G_2)\sim\normal\left(\b0_2, \left[\begin{array}{ll}
1 & m \\ m & 1\\ 
\end{array}\right]\right).
\end{equation}
Furthermore, set
\begin{equation}\label{eq:defFr}
F_{\delta}(m) = \delta \log f(m) + \frac{1}{2}\log (1-m^2).
\end{equation}
Again, $F_{\delta}(0)=0$ for any $\delta >0$. We define the information-theoretic threshold $\delta_{\ell}$ as the largest value of $\delta$ such that the maximum of $F_{\delta}(m)$ is attained at $m=0$, i.e., 
\begin{equation}\label{eq:defdeltar}
\delta_{\ell} = \sup \{\delta \mid F_{\delta}(m) < 0 \mbox{ for }m\in [-1, 1]\setminus\{0\}\}.
\end{equation}

As for the error metric, we observe the vector of $n$ measurements $\by$ and, given a new sensing vector $\ba_{n+1}$, we want to estimate some function $\phi(\langle \bx, \ba_{n+1}\rangle)$ given by
\begin{equation}\label{eq:defphir}
\phi(\langle \bx, \ba_{n+1}\rangle) = \int_{\mathbb R} \varphi(y) p(y\mid \langle \bx, \ba_{n+1}\rangle){\rm d}y.
\end{equation}
Then, the minimum mean square error is defined as
\begin{equation}
\MMSE(\delta_n) = {\mathbb E}\biggl\{\Bigl(\phi(\langle \bX, \bA_{n+1}\rangle) - {\mathbb E}\bigl\{\phi(\langle \bX, \bA_{n+1}\rangle)\, \bigl\rvert\, \bY, \{\bA_i\}_{1\le i\le n}\bigr\}\Bigr)^2\biggr\}.
\end{equation}
Recall that, if we do not have access to the vector of measurements $\by$, the trivial estimator ${\mathbb E}\left\{\phi(\langle \bX, \bA_{n+1}\rangle)\right\}$ has a mean square error given by
\begin{equation}
{\mathbb E}\biggl\{\Bigl(\phi(\langle \bX, \bA_{n+1}\rangle) - {\mathbb E}\bigl\{\phi(\langle \bX, \bA_{n+1}\rangle)\bigr\}\Bigr)^2\biggr\} = {\rm Var}\bigl\{\phi(\langle \bX, \bA_{n+1}\rangle)\bigr\}.
\end{equation}

At this point we are ready to state the information-theoretic lower bound, which is proved in Section \ref{subsec:proofreal}.

\begin{theorem}[Information-Theoretic Lower Bound for Real General Sensing Model]\label{th:lowerr}
Let $\bx$, $\{\ba_i\}_{1\le i\le n+1}$, and $\by$ be distributed according to \eqref{eq:defxr}, \eqref{eq:defar}, and \eqref{eq:defyr}, respectively. Let $n/d\to \delta$ and define $\delta_{\ell}$ as in \eqref{eq:defdeltar}. Furthermore, assume that the function $\varphi$ that appears in \eqref{eq:defphir} is bounded. Then, for any $\delta < \delta_{\ell}$, we have that
\begin{equation}
\lim_{n\to \infty}\MMSE(\delta_n) = {\rm Var}\bigl\{\phi(\langle \bX, \bA_{n+1}\rangle)\bigr\}.
\end{equation}
\end{theorem}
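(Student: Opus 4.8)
The plan is to mirror the complex argument of Theorem~\ref{th:lower} (proved in Section~\ref{subsec:proofcompl}), replacing the complex Gaussian ensemble by the real one and tracking the sphere harmonic/overlap in $[-1,1]$ rather than $[0,1]$. The core quantity to control is the conditional entropy $H(\bY\mid \{\bA_i\}_{1\le i\le n})$, or equivalently the mutual information $I(\bX;\bY\mid \{\bA_i\})$: we want to show it is $o(d)$ when $\delta<\delta_\ell$. Equivalently, the posterior of $\bX$ given $(\bY,\{\bA_i\})$ is asymptotically indistinguishable from the prior, so any bounded functional of a fresh observation cannot be estimated better than by its unconditional mean, which yields $\MMSE(\delta_n)\to \mathrm{Var}\{\phi(\langle\bX,\bA_{n+1}\rangle)\}$.

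First I would set up the second moment method: consider the ``overlap'' random variable $M = \langle\bX,\bX'\rangle/d$ for two independent draws $\bX,\bX'\sim\Unif(\sqrt d\,\Sphere_\mathbb{R}^{d-1})$ from the posterior, and compute the partition-function-type second moment $\E\{Z^2\}/(\E\{Z\})^2$ where $Z = p(\bY\mid\{\bA_i\},\bX)/p(\bY\mid\{\bA_i\})$ is the likelihood ratio. A Gaussian computation (conditioning on $\bA_i$, using that $(\langle\bX,\bA_i\rangle,\langle\bX',\bA_i\rangle)$ is jointly Gaussian with correlation $M$) shows that, conditionally on $M=m$, each measurement contributes a factor $f(m)$ from \eqref{eq:fnormr}, so $\E\{Z^2\mid M=m\}\approx f(m)^n$. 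The prior density of $M=m$ on the real sphere behaves like $\exp\bigl(\tfrac d2\log(1-m^2)+o(d)\bigr)$ (this is the standard large-deviation rate for the overlap of two uniform points on $\Sphere_\mathbb{R}^{d-1}$, and is exactly why the real case has the extra factor $\tfrac12\log(1-m^2)$ in \eqref{eq:defFr} versus $\log(1-m)$ in the complex case). Putting these together, the dominant exponential rate is $\sup_{m\in[-1,1]} F_\delta(m)$, which equals $0$ and is attained only at $m=0$ precisely when $\delta<\delta_\ell$ by the definition \eqref{eq:defdeltar}. Hence $\E\{Z^2\}\to 1$, so $Z\to 1$ in $L^2$, which forces the conditional law of $\bX$ to concentrate on the prior and gives the MMSE claim via boundedness of $\varphi$ (Cauchy--Schwarz turns the $L^2$-closeness of $Z$ to $1$ into closeness of conditional and unconditional expectations of the bounded statistic).

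The steps in order: (i) reduce the MMSE statement to showing $I(\bX;\bY\mid\{\bA_i\}) = o(d)$, or directly to $\E\{Z^2\}\to 1$, using that $\varphi$ is bounded so $\phi$ is bounded; (ii) express $\E\{Z^2\}$ as $\E_M\{f(M)^n\}$ and bound it by a Laplace/large-deviation estimate $\int \exp(d\,F_{\delta_n}(m)/1 + o(d))\,\mathrm dm$ using the explicit density of $M$; (iii) split the integral into a neighborhood of $m=0$, where a local expansion of $F_\delta$ gives a $\Theta(1/\sqrt d)$ Gaussian-type contribution summing to $1+o(1)$, and the complement, where $F_\delta(m)\le -c<0$ uniformly (here is where $\delta<\delta_\ell$ and continuity/compactness of $F_\delta$ on $[-1,1]$ are used) so the contribution is exponentially small; (iv) conclude $Z\to 1$ in $L^2$ and translate back to the MMSE.

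The main obstacle is step (ii)--(iii): making the Laplace argument fully rigorous requires (a) uniform-in-$m$ control of the measurement factor, i.e. that $\E\{Z^2\mid M=m\}$ equals $f(m)^n$ up to subexponential corrections that are uniform on $[-1,1]$ — one must handle the behavior of $f(m)$ as $m\to\pm 1$, where the two Gaussians degenerate and $f$ may blow up, and check it is still dominated (for phase retrieval the relevant integrability and the $\sigma\to 0$ behavior are exactly the content of the real analogue of Corollary~\ref{cor:lower} and of Lemma~\ref{lemma:expl}); and (b) verifying that $f$ is finite and continuous on the open interval and that $F_\delta$ has its unique interior maximizer at $0$ for $\delta<\delta_\ell$, which is definitional but must be combined with a compactness argument near the endpoints. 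A secondary technical point is that $n/d\to\delta$ rather than being exactly $\delta$, so one should phrase the large-deviation bound with $\delta_n$ and use continuity of the threshold, exactly as in the complex proof in Section~\ref{subsec:proofcompl}; I would import that machinery essentially verbatim, changing only the overlap domain and the entropy rate of the sphere.
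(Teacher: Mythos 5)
Your template --- second moment over the replica overlap $M$, a Laplace estimate driven by $F_\delta$, boundedness of $\varphi$ to close --- is the same one the paper uses, and step~(ii) and your obstacles discussion are accurate. However, the reductions in your steps~(i), (iii), (iv) contain a genuine error, not just a technicality.

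You claim $\E\{Z^2\}\to 1$ (equivalently, $\E_M\{f(M)^n\}\to 1$), hence $Z\to 1$ in $L^2$, hence the posterior is essentially the prior. This does not hold for $0<\delta<\delta_\ell$. Concretely, for noiseless complex phase retrieval $f(m)=1/(1-m)$ and $M\sim{\rm Beta}(1,d-1)$, so
\[
\E_M\{f(M)^n\}=(d-1)\int_0^1(1-m)^{d-2-n}\,{\rm d}m=\frac{d-1}{d-1-n}\longrightarrow\frac{1}{1-\delta}>1\, ,
\]
for every $\delta\in(0,\delta_\ell)=(0,1)$; the real case is analogous (the $\Theta(\sqrt d)$ prefactor from the overlap density and the width of the Laplace peak cancel to a finite constant strictly larger than $1$, not to $1$). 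This is exactly where your step~(iii) (``the local contribution sums to $1+o(1)$'') goes wrong. Your alternative reduction $I(\bX;\bY\mid\{\bA_i\})=o(d)$ is also false: the Remark following Lemma~\ref{lemma:ent} gives $\frac1n I(\bX;\bY,\bA)\to H(\E_G\{p(\cdot\mid G)\})-\E_G\{H(p(\cdot\mid G))\}>0$, so the observations carry $\Theta(d)$ nats about $\bX$ even below $\delta_\ell$. As a result, the Cauchy--Schwarz-from-$L^2$-closeness argument in step~(iv) has no correct statement to apply to.

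What makes the paper's proof work is that it only uses a much weaker consequence of the second moment. The pointwise inequality $-z\log z+z-1\ge-(z-1)^2$ yields
\[
-\frac1n\bigl(\E_M\{f(M)^n\}-1\bigr)\;\le\;\frac1n H(\bY\mid\bA)-H(Y_1)\;\le\;0\, ,
\]
and all that is needed is $\frac1n\E_M\{f(M)^n\}\to0$, which follows from Lemma~\ref{lemma:laplace} together with the $(1-m^2)^{(d-3)/2}$ overlap density of Lemma~\ref{lemma:betar} (the real-case replacement you correctly flag). This gives the per-measurement statement $\frac1n H(\bY\mid\bA)\to H(Y_1)$, not posterior concentration. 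The MMSE claim then follows via the chain rule and ``conditioning reduces entropy'' (the decreasing sequence $H(Y_t\mid\bY_{1:t-1},\bA_{1:t})$ has Ces\`aro mean tending to $H(Y_1)$ and hence converges to it), which gives $I(Y_{n+1};\bY_{1:n},\bA_{1:n}\mid\bA_{n+1})\to0$, and then Pinsker's inequality together with boundedness of $\varphi$. Replace your steps~(i), (iii), (iv) with this per-new-observation reduction; the rest of your outline then goes through verbatim.
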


\begin{remark}[Information-Theoretic Lower Bound for Real Phase Retrieval]
For the special case of \emph{phase retrieval}, a more explicit error metric is given by the matrix minimum mean square error, defined as
\begin{equation}
\MMSE_{\rm PR}(\delta_n) = \frac{1}{d^2} {\mathbb E}\left\{\Bigl\lVert\bX \bX^{\sT} - {\mathbb E}\bigl\{\bX \bX^{\sT}\mid \bY, \{\bA_i\}_{1\le i\le n}\bigr\}\Bigl\lVert_{F}^2\right\}.
\end{equation}
By calculations similar to those in Lemma \ref{lemma:deltalnoiseless} contained in Appendix  \ref{app:noiseless}, one can prove that, if the distribution $p(\cdot\mid G)$ appearing in \eqref{eq:fnormr} is given by \eqref{eq:defyprr}, then
\begin{equation}
\lim_{\sigma\to 0}\delta_{\ell}(\sigma^2)=1/2.
\end{equation}
Consequently, by following a proof analogous to that of Corollary \ref{cor:lower} in Appendix \ref{app:noiseless}, we conclude that, for any $\delta < 1/2$,
\begin{equation}
\lim_{\sigma\to 0}\lim_{n\to \infty}\MMSE_{\rm PR}(\delta_n) = 1. 
\end{equation}
\end{remark}

Let us now move to the spectral upper bound. The threshold $\delta_{\rm u}$ is defined as
\begin{equation}\label{eq:defdeltaur}
\delta_{\rm u} = \frac{1}{\displaystyle\bigintssss_{\mathbb R}\frac{\left({\mathbb E}_{G}\left\{p(y\mid G)(G^2-1)\right\}\right)^2}{{\mathbb E}_{G}\left\{p(y\mid G)\right\}} \,{\rm d}y},
\end{equation}
with $G\sim \normal(0, 1)$. Given the measurements $\{y_i\}_{1\le i\le n}$, we construct the matrix $\bD_n$ as  
\begin{equation}\label{eq:defDr}
\bD_n = \frac{1}{n}\sum_{i=1}^n \mathcal T (y_i) \ba_i \ba_i^{\sT},
\end{equation}
where $\mathcal T:\mathbb R\to \mathbb R$ is a pre-processing function.

The proof of the following spectral upper bound is discussed in Remark \ref{rmk:realub} at the end of Section \ref{sec:spectralproof}. 

\begin{theorem}[Spectral Upper Bound for Real General Sensing Model]\label{th:upperr}
Let $\bx$, $\{\ba_i\}_{1\le i\le n}$, and $\by$ be distributed according to \eqref{eq:defxr}, \eqref{eq:defar}, and \eqref{eq:defyr}, respectively. Let $n/d\to \delta$ and define $\delta_{\rm u}$ as in \eqref{eq:defdeltaur}. Let $\hat{\bx}$ be the principal eigenvector of the matrix $\bD_n$ defined in \eqref{eq:defDr}. For any $\delta > \delta_{\rm u}$, set the pre-processing function $\mathcal T$ to the function $\mathcal T_\delta^*$ given by
\begin{equation}\label{eq:deftydeltar}
\mathcal T_\delta^*(y) = \frac{\sqrt{\delta_{\rm u}}\cdot\mathcal T^*(y)}{\sqrt{\delta}-(\sqrt{\delta}-\sqrt{\delta_{\rm u}})\mathcal T^*(y)},
\end{equation}
where
\begin{equation}\label{eq:deftyr}
\mathcal T^*(y) = 1-\frac{{\mathbb E}_{G}\left\{p(y\mid G)\right\}}{{\mathbb E}_{G}\left\{p(y\mid G)\cdot G^2\right\}}.
\end{equation}
Then, we have that, almost surely,
\begin{equation}\label{eq:upperreps}
\lim_{n\to \infty}\frac{|\langle\hat{\bx}, \bx\rangle|}{\norm{\hat{\bx}}_2 \, \norm{\bx}_2} > \epsilon,
\end{equation}
for some $\epsilon >0$.  Furthermore, for any $\delta \leq \delta_{\rm u}$, there is no pre-processing function $\mathcal T$ such that, almost surely, \eqref{eq:upperreps} holds. 
\end{theorem}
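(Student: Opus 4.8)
The plan is to follow the blueprint of the complex case (Theorem~\ref{th:upper}, proved in Section~\ref{sec:spectralproof}) and record only the changes forced by real Gaussian sensing vectors; this is the argument referred to in Remark~\ref{rmk:realub}. By rotational invariance of both the prior on $\bx$ and the law of the $\ba_i$, it suffices to prove the statement conditionally on an arbitrary fixed $\bx$ with $\norm{\bx}_2^2=d$, say $\bx=\sqrt d\,\be_1$ (so the uniform-on-the-sphere hypothesis is not needed for the upper bound). Write $\ba_i=(g_i/\sqrt d,\ \tilde{\ba}_i)$ with $g_i=\langle\bx,\ba_i\rangle\sim\normal(0,1)$ independent of $\tilde{\ba}_i\sim\normal(\b0_{d-1},\id_{d-1}/d)$. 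Since $y_i$ is a function of $g_i$ alone, $\bD_n=\tfrac1n\sum_i\mathcal T(y_i)\,\ba_i\ba_i^\sT$, after multiplying by $d$, equals $\tfrac1n\bU\bM_n\bU^\sT$ with $\bU\in\reals^{d\times n}$ having i.i.d.\ $\normal(0,1)$ entries and $\bM_n={\rm diag}(\mathcal T(y_1),\dots,\mathcal T(y_n))$; here $\bM_n$ depends on the data only through the first row of $\bU$, so the restriction of $\tfrac1n\bU\bM_n\bU^\sT$ to $\be_1^\perp$ is a clean ``$\bU\bM_n\bU^\sT$ with $\bM_n$ independent of $\bU$'' ensemble (the bulk), while the $\be_1$ direction carries the signal.

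The core technical ingredient --- the real analogue of the main lemma of Section~\ref{sec:spectralproof}, itself a generalization of Bai--Yao and of the threshold computed by Lu and Li --- is a sharp description of the spectrum of $\tfrac1n\bU\bM_n\bU^\sT$ when $\bM_n$ has a converging empirical spectral distribution but is \emph{not} assumed positive semidefinite. Using free probability (free multiplicative convolution and the associated subordination relations) one gets the weak limit of the bulk and its right edge $\lambda_+(\mathcal T,\delta)$; the key extra point, not reducible to the PSD case, is to pin down the \emph{almost sure} limit of the \emph{largest} eigenvalue, showing it equals $\lambda_+$ unless a single outlier detaches, in which case the outlier's location solves an explicit fixed-point equation in the relevant Cauchy/Stieltjes transform. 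Combining this with the signal/bulk decomposition yields (i) the precise condition on $(\mathcal T,\delta)$ under which $\bD_n$ has an isolated top eigenvalue, and (ii) a closed-form deterministic limit for $|\langle\hbx,\bx\rangle|^2/(\norm{\hbx}_2^2\norm{\bx}_2^2)$ in terms of that transform evaluated at the outlier. Optimizing the resulting threshold over admissible preprocessing functions by a Cauchy--Schwarz argument --- the same manipulation as in the complex case, but with the real Gaussian moments, in particular ${\rm Var}(G^2-1)=2$ for $G\sim\normal(0,1)$ in place of ${\rm Var}(|G|^2-1)=1$ --- produces exactly $\delta_{\rm u}$ of \eqref{eq:defdeltaur}, with the optimal choice given (up to the fractional-linear reparametrization \eqref{eq:deftydeltar}, which only rescales and shifts $\bM_n$ and hence leaves the edge/outlier structure unchanged) by $\mathcal T^*$ of \eqref{eq:deftyr}; the factor of two relative to the complex threshold is precisely this change of variance.

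It remains to assemble the two halves. For $\delta>\delta_{\rm u}$ and $\mathcal T=\mathcal T_\delta^*$, the outlier exists and lies strictly above $\lambda_+$, and the correlation formula evaluates to a strictly positive deterministic constant $\epsilon=\epsilon(\delta)$, giving \eqref{eq:upperreps} almost surely; the same computation yields the quantitative bound on $\delta$ needed to reach a prescribed correlation (the real analogue of the statement inside Lemma~\ref{lemma:condub}). For $\delta\le\delta_{\rm u}$, the Cauchy--Schwarz bound shows that for \emph{every} preprocessing $\mathcal T$ the candidate outlier never escapes $\lambda_+$ (and at the marginal case $\delta=\delta_{\rm u}$ the limiting overlap vanishes continuously), so $\hbx$ is asymptotically a generic bulk eigenvector of an ensemble that is $\SO(d-1)$-invariant on $\be_1^\perp$, hence, by eigenvector delocalization, asymptotically orthogonal to $\bx$; this contradicts \eqref{eq:upperreps}. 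Convergence in probability is upgraded to almost sure convergence, and the limit is interchanged with expectations, by standard concentration together with a truncation of $\mathcal T$, exactly as in the complex proof.

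The step I expect to be the main obstacle is the non-PSD free-probability input: establishing almost sure convergence of the \emph{largest} eigenvalue of $\tfrac1n\bU\bM_n\bU^\sT$ --- rather than merely weak convergence of the bulk --- when $\bM_n$ is indefinite requires ruling out spurious outliers and controlling the spectral edge, and this is the genuinely new content beyond transcribing the complex argument. A secondary technical nuisance is verifying the regularity hypotheses on $\mathcal T_\delta^*$ (integrability against $p(\cdot\mid g)$, absence of pathological atoms in the law of $\mathcal T(Y)$) under which that lemma applies for a general sensing channel.
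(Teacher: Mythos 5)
Your proposal matches the paper's own route: as indicated in Remark \ref{rmk:realub}, one establishes a real analogue of Lemma \ref{lemma:condub} by following Lu--Li's argument together with the non-PSD Bai--Yao extension (Lemma \ref{lemma:baiyaonopsd}, via Remark \ref{rmk:baiyaonopsd}), and then repeats the Cauchy--Schwarz optimization from the complex proof of Theorem \ref{th:upper} to identify $\delta_{\rm u}$ as the sharp spectral threshold. The only minor quibble is that the non-PSD spectral lemma in Appendix \ref{app:baiyaonopsd} splits $\bM_n$ into its positive and negative parts and combines the resulting Marchenko--Pastur-type laws by free \emph{additive} convolution $\boxplus$ (not free multiplicative convolution), but your identification of that non-PSD spectral input as the genuinely new technical ingredient is exactly right.
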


\begin{remark}[Spectral Upper Bound for Real Phase Retrieval]
By calculations similar to those in Lemma \ref{lemma:deltaulnoiseless} contained in Appendix \ref{app:noiseless2}, one can prove that, if the distribution $p(\cdot\mid G)$ appearing in \eqref{eq:fnormr} is given by \eqref{eq:defyprr}, then
\begin{equation}
\lim_{\sigma\to 0}\delta_{\rm u}(\sigma^2)=1/2.
\end{equation}
Furthermore, by following a proof analogous to that of Corollary \ref{cor:upper} in Appendix \ref{app:noiseless2}, one can prove the following result. For any $\delta > 1/2$, set the pre-processing function $\mathcal T$ to the function $\mathcal T_\delta^*$ given by  (with $y_+=\max(y,0)$)
\begin{equation}
\mathcal T^*_{\delta}(y) = \frac{y_+-1}{y_++\sqrt{2\delta}-1}.
\end{equation}
Then, we have that, almost surely,
\begin{equation}
\lim_{\sigma\to 0}\lim_{n\to \infty}\frac{|\langle\hat{\bx}, \bx\rangle|}{\norm{\hat{\bx}}_2 \, \norm{\bx}_2} > \epsilon,
\end{equation}
for some $\epsilon >0$. Note that, for real phase retrieval, the spectral upper bound matches the information-theoretic lower bound.
\end{remark}

In the following remark, we provide an example in which there is a strictly positive gap between $\delta_\ell$ and $\delta_{\rm u}$.

\begin{remark}[Gap between $\delta_\ell$ and $\delta_{\rm u}$]\label{rmk:gap}
Let us define
\begin{equation}
H(a) = {\mathbb E}_{G}\left\{\tanh^2(a\, G)\,(G^2-1)\right\},
\end{equation}
where $G\sim \normal(0, 1)$. Note that $H(0)=0$ and $\lim_{a\to\infty}H(a)=0$. Hence, there exists $a_2>a_1$ such that $H(a_1)=H(a_2)$.

Consider the following distribution for the components of the vector of measurements $\by$:
\begin{equation}
p(y\mid g) = \left\{\begin{array}{ll}
\tanh^2(a_2\, g)-\tanh^2(a_1\, g), & \mbox{ for }y\in [1, 2],\\
1-(\tanh^2(a_2\, g)-\tanh^2(a_1\, g)), & \mbox{ for }y\in [-2, -1].\\
\end{array}\right.
\end{equation}
Then, we have that, for any $y\in \mathbb R$,
\begin{equation}\label{eq:deltauinf}
{\mathbb E}_{G}\left\{p(y\mid G)\,(G^2-1)\right\} = 0,
\end{equation} 
which, by definition \eqref{eq:defdeltaur}, immediately implies that $\delta_{\rm u}=\infty$. Note that this argument works when we substitute $\tanh^2(x)$ with any function which is even, increasing for $x\ge 0$ and bounded between $0$ and $1$.

Let us now show that $\delta_\ell$ is finite. Consider the function $f(m)$ defined in \eqref{eq:fnormr}. As previously mentioned, $f(0)=1$. Furthermore,
\begin{equation}
\begin{split}
f(1) &= \bigintssss_{\mathbb R}\frac{{\mathbb E}_{G}\left\{\left(p(y\mid G)\right)^2\right\}}{{\mathbb E}_{G}\left\{p(y\mid G)\right\}} \,{\rm d}y \\
&= \bigintssss_{\mathbb R}\frac{\left({\mathbb E}_{G}\left\{p(y\mid G)\right\}\right)^2+{\rm Var}\left\{p(y\mid G)\right\}}{{\mathbb E}_{G}\left\{p(y\mid G)\right\}} \,{\rm d}y\\
&= 1+\bigintssss_{\mathbb R}\frac{{\rm Var}\left\{p(y\mid G)\right\}}{{\mathbb E}_{G}\left\{p(y\mid G)\right\}} \,{\rm d}y>1.
\end{split}
\end{equation}  
Consequently, there exists $m_*\in (0, 1)$ such that $f(m_*)>1$. Set 
\begin{equation}
\delta^* = -\frac{\log(1-m_*^2)}{2\log f(m_*)}+1.
\end{equation}
Then, we have that, for any $\delta \ge \delta^*$, 
\begin{equation}
F_{\delta}(m_*) \ge F_{\delta_*}(m_*) = 1 > 0.
\end{equation}
Hence, by definition \eqref{eq:defdeltar}, we conclude that $\delta_{\ell}<\delta^*$, which implies that $\delta_\ell$ is
finite. Note that this upper bound on $\delta_{\ell}$  applies to any $p(y\mid G)$ which is not constant in $G$ on a set of positive measure. 
As a result, there is a strictly positive gap between $\delta_\ell$ and $\delta_{\rm u}$.\footnote{This gap is not due to the looseness of our lower bound. Indeed, by using the result of \cite{barbier2017phase}, one can show that the actual information-theoretic threshold is finite.}
\end{remark}

%----------------------------------------------------
\section{Proof of Theorems \ref{th:lower} and \ref{th:lowerr}: Information-Theoretic Lower Bound} \label{sec:mainproof}
%----------------------------------------------------

%----------------------------------------------------
\subsection{Complex Case} \label{subsec:proofcompl}
%----------------------------------------------------

The crucial point of the proof consists in the computation of the conditional entropy $H(\bY\mid \bA)$, which is contained in Lemma \ref{lemma:ent}. Then, we use this result to compute the mutual information for the considered model. Finally, we provide the proof of Theorem \ref{th:lower}. 

\begin{lemma}[Conditional Entropy]\label{lemma:ent}
Let $\bx \sim \Unif(\sqrt{d}\Sphere_{\mathbb C}^{d-1})$, $\bA = (\ba_1, \ldots, \ba_n)$ with $\{\ba_i\}_{1\le i\le n}\sim_{i.i.d.}\cnormal({\bm 0}_d,\id_d/d)$, and $\by = (y_1, \ldots, y_n)$ with $y_i\sim p(\cdot \mid |g_i|)$ and $g_i=\langle \bx, \ba_i \rangle$. Let $n/d\to \delta$ and define $\delta_{\ell}$ as in \eqref{eq:defdelta}. Then, for any $\delta < \delta_{\ell}$, we have that
\begin{equation}
\lim_{n\to \infty}\frac{1}{n}H(\bY\mid \bA) = H(Y_1).
\end{equation}
\end{lemma}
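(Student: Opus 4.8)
\emph{Reduction to a lower bound.} Since conditioning does not increase differential entropy and differential entropy is subadditive, $H(\bY\mid\bA)\le H(\bY)\le\sum_{i=1}^{n}H(Y_i)=n\,H(Y_1)$, so $\limsup_{n\to\infty}\frac1n H(\bY\mid\bA)\le H(Y_1)$ for free (all entropies in play are finite under the regularity assumed on $p$). Everything reduces to the matching lower bound. Let $p_Y(y)=\E_G\{p(y\mid|G|)\}$, $G\sim\cnormal(0,1)$, which is the true marginal law of a single measurement $Y_1$ (indeed $\langle\bX,\ba_1\rangle\sim\cnormal(0,1)$ conditionally on $\bX$, for $\bX\sim\Unif(\sqrt d\,\Sphere_{\complex}^{d-1})$), and set $q(\by)=\prod_{i=1}^{n}p_Y(y_i)$ and $p(\by\mid\bA)=\E_{\bX}\prod_{i=1}^{n}p(y_i\mid|\langle\bX,\ba_i\rangle|)$. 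Writing $\log p(\by\mid\bA)=\log q(\by)+\log\big(p(\by\mid\bA)/q(\by)\big)$ and taking expectations (using $-\E\log p_Y(Y_1)=H(Y_1)$) yields the identity
\[
H(\bY\mid\bA)=n\,H(Y_1)-\E_{\bA}\Big\{D\big(p(\cdot\mid\bA)\,\big\|\,q\big)\Big\},
\]
with $D(\,\cdot\,\|\,\cdot\,)\ge0$ the relative entropy; this re-derives the upper bound and reduces the lemma to proving $\E_{\bA}\{D(p(\cdot\mid\bA)\,\|\,q)\}=o(n)$.

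\emph{Second-moment ($\chi^2$) computation.} Using $D(\mu\|\nu)\le\chi^2(\mu\|\nu)$ pointwise in $\bA$, it suffices to bound $\E_{\bA}\{\chi^2(p(\cdot\mid\bA)\,\|\,q)\}=\E_{\bA}\big\{\int p(\by\mid\bA)^2/q(\by)\,\de\by\big\}-1$. Expanding the square introduces two i.i.d.\ copies $\bX,\bX'\sim\Unif(\sqrt d\,\Sphere_{\complex}^{d-1})$; conditionally on $(\bX,\bX')$ the vector $(\langle\bX,\ba_i\rangle,\langle\bX',\ba_i\rangle)$ is $\cnormal$ with covariance $\big(\begin{smallmatrix}1 & c\\ c^{*} & 1\end{smallmatrix}\big)$, $c=\langle\bX,\bX'\rangle/d$, so integrating out $\bA$ factorizes over $i$ and reproduces exactly the integrand in \eqref{eq:fnorm}:
\[
\E_{\bA}\Big\{\int \frac{p(\by\mid\bA)^2}{q(\by)}\,\de\by\Big\}=\E_{\bX,\bX'}\big\{f(M)^{n}\big\},\qquad M:=\frac{|\langle\bX,\bX'\rangle|^2}{d^2}.
\]
Since $\bX,\bX'$ are uniform on the sphere, $M\sim\mathrm{Beta}(1,d-1)$ has density $(d-1)(1-m)^{d-2}$ on $[0,1]$, so the right-hand side equals $(d-1)\int_0^1\exp\!\big(n\log f(m)+(d-2)\log(1-m)\big)\,\de m$, whose exponent is asymptotically $d\,F_{\delta}(m)$ (uniformly on compacts of $[0,1)$), with $F_\delta$ as in \eqref{eq:defF}.

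\emph{Laplace estimate and conclusion.} For $\delta<\delta_{\ell}$ we have $F_\delta(m)<0$ on $(0,1]$, so the portion of the integral over $[\epsilon,1]$ is $\le(d-1)e^{-cd}\to0$ for some $c>0$ (the $(1-m)^{d-2}$ factor absorbing any blow-up of $f$ near $m=1$). Near $m=0$, $F_\delta'(0)=\delta f'(0)-1$, and a complex Hermite/Laguerre expansion of the numerator of $f$ in powers of $|c|^2$ gives $f'(0)=1/\delta_{\rm u}$ (cf.\ \eqref{eq:defdeltau}); since $f'(0)\ge0$, for $\delta>\delta_{\rm u}$ one would have $F_\delta'(0)>0$, hence $F_\delta>0$ just above $0$, so $\delta_{\ell}\le\delta_{\rm u}$, and therefore $\delta<\delta_\ell$ forces $F_\delta'(0)=\delta/\delta_{\rm u}-1<0$. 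Substituting $m=t/d$ then yields $(d-1)\int_0^{\epsilon}f(m)^n(1-m)^{d-2}\,\de m\to(1-\delta/\delta_{\rm u})^{-1}$, so $\E_{\bX,\bX'}\{f(M)^n\}\to\delta_{\rm u}/(\delta_{\rm u}-\delta)<\infty$. Hence $\E_{\bA}\{\chi^2(p(\cdot\mid\bA)\,\|\,q)\}$ stays bounded, $\E_{\bA}\{D(p(\cdot\mid\bA)\,\|\,q)\}=O(1)=o(n)$, and the identity above gives $\frac1n H(\bY\mid\bA)\to H(Y_1)$.

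\emph{Main obstacle.} The delicate part is the last step: one must establish finiteness and continuity of $f$ on $[0,1)$ together with the identity $f'(0)=1/\delta_{\rm u}$ (the Laguerre-coefficient computation), control the neighbourhood of $m=1$ where $f$ may diverge --- this is precisely where the hypothesis that $F_\delta<0$ on all of $(0,1]$ is genuinely used, possibly through a separate crude estimate on the event $\{M>\epsilon\}$ --- and make the Laplace approximation rigorous uniformly in $n$ rather than only at leading order. The entropy identity and the Gaussian bookkeeping in the $\chi^2$ step are routine by comparison.
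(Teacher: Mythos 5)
Your decomposition and the second-moment computation are exactly the paper's (the identity $H(\bY\mid\bA)=nH(Y_1)-\E_\bA\{D(p(\cdot\mid\bA)\|q)\}$ together with $D\le\chi^2$ is a cleaner phrasing of the pair of inequalities the paper derives from Jensen and $-z\log z+z-1\ge -(z-1)^2$; the Beta reduction and the appearance of $f(m)^n$ are verbatim). The divergence, and the gap, is in the final Laplace step.

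You aim for the much stronger conclusion $\E_\bA\{\chi^2\}=O(1)$, i.e.\ $(d-1)\int_0^1 f(m)^n(1-m)^{d-2}\,\de m\to \delta_{\rm u}/(\delta_{\rm u}-\delta)$. Getting this requires (i) the identity $f'(0)=1/\delta_{\rm u}$, which is a Laguerre-coefficient computation that the paper nowhere carries out and that your proof simply asserts, and (ii) a uniform-in-$n$ Laplace estimate near $m=0$ after the change of variable $m=t/d$, which you yourself flag as ``the main obstacle'' and do not supply; in particular you would need a dominating function valid on the growing interval $[0,\epsilon d]$ to pass to the limit, which requires some quantitative control of $F_\delta$ near $0$, not just the sign of $F_\delta'(0)$. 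Neither of these points is filled in, and both are genuinely delicate, so as written the argument is incomplete.

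The key simplification you miss is that the lemma only needs $\E_\bA\{D\}=o(n)$, i.e.\ $\frac1n\E_\bA\{\chi^2\}\to 0$, not an $O(1)$ $\chi^2$ bound. Since $\frac{d-1}{n}$ is bounded, this follows as soon as $\int_0^1 f(m)^n(1-m)^{d-2}\,\de m\to 0$, which is exactly the content of the paper's Lemma~\ref{lemma:laplace}: one only needs continuity of $F_\delta$, $F_\delta(0)=0$, and $F_\delta(m)<0$ on $(0,1]$ (handled for varying $\delta_n'$ via the auxiliary $\tilde F_{\delta}$). This sidesteps both $f'(0)=1/\delta_{\rm u}$ and any refined Laplace analysis near $m=0$; by targeting the strictly stronger $O(1)$ statement you created work the lemma doesn't ask for and then left it undone.
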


\begin{proof}
We divide the proof into two steps. The \emph{first step} consists in showing that  
\begin{equation}\label{eq:firststep}
-\frac{1}{n}\left(\bigintsss_{\!\mathbb R^d}\frac{{\mathbb E}_{\bA}\left\{\left(p(\by\mid \bA)\right)^2\right\}}{{\mathbb E}_{\bA}\left\{p(\by\mid \bA)\right\}} \,{\rm d}\by-1\right)\le \frac{1}{n}H(\bY\mid \bA)-H(Y_1) \le 0,
\end{equation}
which holds for all $n\in \mathbb N$ and for all $\delta > 0$. The proof of \eqref{eq:firststep} does not require any assumption on the distribution of $\bx$ and on the distribution of $\{\ba_i\}_{1\le i\le n}$ (as long as the vectors $\{\ba_i\}_{1\le i\le n}$ are independent).

The \emph{second step} consists in showing that 
\begin{equation}\label{eq:secondstep}
\lim_{n\to +\infty}\frac{1}{n}\left(\bigintsss_{\!\mathbb R^d}\frac{{\mathbb E}_{\bA}\left\{\left(p(\by\mid \bA)\right)^2\right\}}{{\mathbb E}_{\bA}\left\{p(\by\mid \bA)\right\}} \,{\rm d}\by-1\right)=0.
\end{equation}
It is clear that \eqref{eq:firststep} and \eqref{eq:secondstep} imply the thesis. 

\paragraph*{First step.}
By definition of conditional entropy, we have that
\begin{equation}\label{eq:condentr}
\frac{1}{n}H(\bY\mid \bA) = \frac{1}{n} \int {\mathbb E}_{\bA}\left\{-p(\by\mid \bA)\log p(\by\mid \bA)\right\}\,{\rm d}\by.
\end{equation}
By using the definition of $y_i$ and the fact that they are independent, we can rewrite ${\mathbb E}_{\bA}\left\{p(\by\mid \bA)\right\}$ as follows:
\begin{equation}\label{eq:Ea}
\begin{split}
{\mathbb E}_{\bA}\left\{p(\by\mid \bA)\right\} &= {\mathbb E}_{\bA, \bX}\left\{p(\by\mid \bA, \bX)\right\} \\
&= {\mathbb E}_{\bA, \bX}\left\{\prod_{i=1}^n p(y_i\mid |\langle\bX, \bA_i\rangle|)\right\}\\
&=\prod_{i=1}^n {\mathbb E}_{G_i}\left\{ p(y_i\mid |G_i|)\right\},
\end{split}
\end{equation}
where we set $G_{i} = \langle\bX, \bA_i\rangle$.

Let us now give an upper bound on the RHS of \eqref{eq:condentr}:
\begin{equation*}
\begin{split}
\frac{1}{n}\int_{\mathbb R^d} {\mathbb E}_{\bA}\left\{-p(\by\mid \bA)\log p(\by\mid \bA)\right\}\,{\rm d}\by
& \stackrel{\mathclap{\mbox{\footnotesize(a)}}}{\le} \frac{1}{n}\int_{\mathbb R^d} -{\mathbb E}_{\bA}\left\{p(\by\mid \bA)\right\} \log{\mathbb E}_{\bA}\left\{p(\by\mid \bA)\right\}\,{\rm d}\by\\
&\stackrel{\mathclap{\mbox{\footnotesize(b)}}}{=} \frac{1}{n}\int_{\mathbb R^d} -\prod_{i=1}^n {\mathbb E}_{G_i}\left\{ p(y_i\mid |G_i|)\right\} \sum_{j=1}^n\log{\mathbb E}_{G_j}\left\{ p(y_j\mid G_j)\right\}\,{\rm d}\by\\
&= \frac{1}{n}\sum_{i=1}^n\int_{\mathbb R}  -{\mathbb E}_{G_i}\left\{ p(y_i\mid |G_i|)\right\} \log{\mathbb E}_{G_i}\left\{ p(y_i\mid |G_i|)\right\}\,{\rm d}y_i\\
&=H(Y_1),\\
\end{split}
\end{equation*}
where in (a) we apply Jensen's inequality as the function $g(x)=-x\log x$ is concave, and in (b) we use \eqref{eq:Ea}. This immediately implies that 
\begin{equation}\label{eq:ub}
\frac{1}{n}H(\bY\mid \bA) - H(Y_1)\le 0.
\end{equation}

Note that the upper bound \eqref{eq:ub} is based on the inequality $${\mathbb E}_{\bA}\left\{-p(\by\mid \bA)\log p(\by\mid \bA)\right\} - \left( -{\mathbb E}_{\bA}\left\{p(\by\mid \bA)\right\} \log{\mathbb E}_{\bA}\left\{p(\by\mid \bA)\right\}\right)\le 0.$$Let us now find a lower bound to this quantity:
\begin{equation}\label{eq:aux1}
\begin{split}
{\mathbb E}_{\bA}  \left\{-p(\by\mid \bA)\log p(\by\mid \bA)\right\} & - \left( -{\mathbb E}_{\bA}\left\{p(\by\mid \bA)\right\} \log{\mathbb E}_{\bA}\left\{p(\by\mid \bA)\right\}\right) \\
&\hspace{7em}={ \mathbb E}_{\bA}\left\{-p(\by\mid \bA)\log \frac{p(\by\mid \bA)}{{\mathbb E}_{\bA}\left\{p(\by\mid \bA)\right\}}\right\}\\
&\hspace{7em}\stackrel{\mathclap{\mbox{\footnotesize(a)}}}{=}{\mathbb E}_{\bA}\left\{p(\by\mid \bA)\right\} {\mathbb E}_{\bZ} \left\{-\bZ\log \bZ\right\}\\
&\hspace{7em}\stackrel{\mathclap{\mbox{\footnotesize(b)}}}{=}{\mathbb E}_{\bA}\left\{p(\by\mid \bA)\right\} {\mathbb E}_{\bZ} \left\{-\bZ\log \bZ+\bZ-1\right\}\\
&\hspace{7em}\stackrel{\mathclap{\mbox{\footnotesize(c)}}}{\ge}-{\mathbb E}_{\bA}\left\{p(\by\mid \bA)\right\} {\mathbb E}_{\bZ} \left\{(\bZ-1)^2\right\}\\
&\hspace{7em}\stackrel{\mathclap{\mbox{\footnotesize(d)}}}{=}-{\mathbb E}_{\bA}\left\{p(\by\mid \bA)\right\} \left({\mathbb E}_{\bZ} \left\{\bZ^2\right\}-1\right)\\
&\hspace{7em}=-\left(\frac{{\mathbb E}_{\bA}\left\{\left(p(\by\mid \bA)\right)^2\right\}}{{\mathbb E}_{\bA}\left\{p(\by\mid \bA)\right\}}-{\mathbb E}_{\bA}\left\{p(\by\mid \bA)\right\} \right),\\
\end{split}
\end{equation}
where in (a) we set $\bZ = p(\by\mid \bA)/{\mathbb E}_{\bA}\left\{p(\by\mid \bA)\right\}$, in (b) we use that ${\mathbb E}_{\bZ} \left\{\bZ\right\}=1$, in (c) we use that $-z\log z+z -1\ge -(z-1)^2$ for any $z\ge 0$, and in (d) we use again that ${\mathbb E}_{\bZ} \left\{\bZ\right\}=1$. Therefore,
\begin{equation*}
\begin{split}
\frac{1}{n}H(\bY\mid \bA) - H(Y_1) &= \frac{1}{n}\int ( {\mathbb E}_{\bA}\left\{-p(\by\mid \bA)\log p(\by\mid \bA)\right\} - \left( -{\mathbb E}_{\bA}\left\{p(\by\mid \bA)\right\} \log{\mathbb E}_{\bA}\left\{p(\by\mid \bA)\right\}\right))\,{\rm d}\by\\
&\stackrel{\mathclap{\mbox{\footnotesize(a)}}}{\ge} -\frac{1}{n}\bigintsss_{\!\mathbb R^d} \left(\frac{{\mathbb E}_{\bA}\left\{\left(p(\by\mid \bA)\right)^2\right\}}{{\mathbb E}_{\bA}\left\{p(\by\mid \bA)\right\}}-{\mathbb E}_{\bA}\left\{p(\by\mid \bA)\right\} \right) \,{\rm d}\by,\\
&\stackrel{\mathclap{\mbox{\footnotesize(b)}}}{=} -\frac{1}{n}\left(\bigintsss_{\!\mathbb R^d}\frac{{\mathbb E}_{\bA}\left\{\left(p(\by\mid \bA)\right)^2\right\}}{{\mathbb E}_{\bA}\left\{p(\by\mid \bA)\right\}} \,{\rm d}\by-1\right),
\end{split}
\end{equation*}
where in (a) we use \eqref{eq:aux1} and in (b) we use that the integral of $p(\by\mid \bA)$ is $1$. This concludes the proof of \eqref{eq:firststep}.

\paragraph*{Second step.}
As $\bX\sim\Unif(\sqrt{d}\Sphere_{\mathbb C}^{d-1})$ and $\bA_i\sim\cnormal({\bm 0}_d,\id_d/d)$, we have that
\begin{equation*}
\{G_i\}_{1\le i\le n}\sim_{i.i.d.} \cnormal\left(0, 1\right).
\end{equation*}

Let us rewrite the quantity ${\mathbb E}_{\bA}\left\{\left(p(\by\mid \bA)\right)^2\right\}$ as follows:
\begin{equation}\label{eq:expAsquared}
\begin{split}
{\mathbb E}_{\bA}\left\{\left(p(\by\mid \bA)\right)^2\right\} &= {\mathbb E}_{\bA}\left\{\left({\mathbb E}_{\bX}\left\{\prod_{i=1}^n p(y_i\mid |\langle\bX, \bA_i\rangle|)\right\}\right)^2\right\}\\
&\stackrel{\mathclap{\mbox{\footnotesize(a)}}}{=}  {\mathbb E}_{\bA}\left\{{\mathbb E}_{\bX_1, \bX_2}\left\{\prod_{i=1}^n p(y_i\mid |\langle\bX_1, \bA_i\rangle|)\cdot p(y_i\mid |\langle\bX_2, \bA_i\rangle|)\right\}\right\}\\
&\stackrel{\mathclap{\mbox{\footnotesize(b)}}}{=} {\mathbb E}_{C}\left\{\prod_{i=1}^n {\mathbb E}_{G_{i, 1}, G_{i, 2}}\left\{ p(y_i\mid |G_{i, 1}|)\cdot p(y_i\mid |G_{i, 2}|)\right\}\right\},\\
\end{split}
\end{equation}
where in (a) $\bX_1$ and $\bX_2$ are independent, and in (b) we set $G_{i, 1} = \langle\bX_1, \bA_i\rangle$, $G_{i, 2} = \langle\bX_2, \bA_i\rangle$, and  
\begin{equation*}
C = \frac{\langle \bX_1, \bX_2\rangle}{\norm{\bX_1}_2\norm{\bX_2}_2}.
\end{equation*}
Then, given $C=c$, as $\bX_1, \bX_2\sim_{i.i.d.}\Unif(\sqrt{d}\Sphere_{\mathbb C}^{d-1})$ and $\bA_i\sim\cnormal(\b0_d,\id_d/d)$, we have that
\begin{equation*}
\{(G_{i, 1}, G_{i, 2})\}_{1\le i\le n}\sim_{i.i.d.} \cnormal\left(\b0_2, \left[\begin{array}{ll}
1 & c \\ c^* & 1\\ 
\end{array}\right]\right).
\end{equation*}
Hence,
\begin{equation*}
\begin{split}
\frac{1}{n}\bigintsss_{\!\mathbb R^d}\frac{{\mathbb E}_{\bA}\left\{\left(p(\by\mid \bA)\right)^2\right\}}{{\mathbb E}_{\bA}\left\{p(\by\mid \bA)\right\}} \,{\rm d}\by& \stackrel{\mathclap{\mbox{\footnotesize(a)}}}{=} \frac{1}{n}\bigintssss_{\mathbb R^d}{\mathbb E}_{C}\left\{\prod_{i=1}^n\frac{{\mathbb E}_{G_{i, 1}, G_{i, 2}}\left\{p(y\mid |G_{i, 1}|)p(y\mid |G_{i, 2}|)\right\}}{{\mathbb E}_{G_i}\left\{p(y\mid |G_i|)\right\}}\right\} \,{\rm d}\by\\
&=\frac{1}{n}{\mathbb E}_C\left\{\prod_{i=1}^n\bigintssss_{\mathbb R}\frac{{\mathbb E}_{G_{i, 1}, G_{i, 2}}\left\{p(y\mid |G_{i, 1}|)p(y\mid |G_{i, 2}|)\right\}}{{\mathbb E}_{G_i}\left\{p(y\mid |G_i|)\right\}} \,{\rm d}y_i\right\}\\
&\stackrel{\mathclap{\mbox{\footnotesize(b)}}}{=}\frac{1}{n}{\mathbb E}_M \left\{(f(M))^n\right\}\\
&\stackrel{\mathclap{\mbox{\footnotesize(c)}}}{=}\frac{d-1}{n}\displaystyle\int_0^1 (f(m))^n(1-m)^{d-2}\,{\rm d}m,\\
\end{split}
\end{equation*} 
where in (a) we use \eqref{eq:Ea} and \eqref{eq:expAsquared}, in (b) we use the fact that $f$ depends only on $m=|c|^2$, which is clear from the explicit expression provided by Lemma \ref{lemma:expl} contained in Appendix \ref{app:noiseless}, and in (c) we use that $M\sim {\rm Beta}(1, d-1)$ by Lemma \ref{lemma:beta} contained in Appendix \ref{app:distribution}. 

Set $d'=d-2$ and $\delta_n'=n/d'$. Thus,
\begin{equation}\label{eq:integral}
\int_0^1 (f(m))^n(1-m)^{d-2}\,{\rm d}m = \int_0^1 \exp\left(n\cdot F_{\delta_n'}(m)\right)\,{\rm d}m,
\end{equation}
where $F_{\delta_n'}(m)$ is given by \eqref{eq:defF}. Define
\begin{equation}
\tilde{F}_{\delta}(m) = \delta \max(\log f(m), 0) + \log(1-m).
\end{equation}
As $\delta < \delta_{\ell}$ and $n/d'\to \delta$, there exists $\delta_*\in(\delta,\delta_{\ell})$ such that $\delta_n'<\delta_*$ for $n$ sufficiently large. As $F_{\delta_n'}(m)\le \tilde{F}_{\delta_n'}(m)$ and $\tilde{F}_{\delta}(m)$ is non-decreasing in $\delta$, we have that 
\begin{equation}\label{eq:integral2}
\int_0^1 \exp\left(n\cdot F_{\delta_n'}(m)\right)\,{\rm d}m \le \int_0^1 \exp\left(n\cdot \tilde{F}_{\delta_*}(m)\right)\,{\rm d}m.
\end{equation}
Note that $\tilde{F}_{\delta_*}(m)<0$ if and only if $F_{\delta_*}(m)<0$. Thus, by definition of $\delta_{\ell}$, we have that $\tilde{F}_{\delta_*}(m)<0$ for $m\in (0, 1]$ when $n$ is sufficiently large. Furthermore, $\tilde{F}_{\delta_*}(0)=0$ and $\tilde{F}_{\delta_*}$ is a continuous function. As a result, by Lemma \ref{lemma:laplace}, the integral in \eqref{eq:integral2} tends to $0$ as $n\to\infty$ and the claim immediately follows. 
\end{proof}

\begin{remark}[Mutual Information]
An immediate consequence of Lemma \ref{lemma:ent} is that one can compute the mutual information $I(\bX; \bY, \bA)$ for any $\delta < \delta_{\ell}$:
\begin{equation}
\lim_{n\to +\infty}\frac{1}{n}I(\bX; \bY, \bA) = H\left({\mathbb E}_G\left\{p(\cdot\mid |G|)\right\}\right)-{\mathbb E}_G \left\{H(p(\cdot\mid |G|))\right\},
\end{equation}
where $G\sim \cnormal(0, 1)$.
\end{remark}

\begin{proof}[Proof of Theorem \ref{th:lower}]
Define $\by_{1:n}=(y_1, \ldots, y_n)$ and $\ba_{1:n} = (\ba_1, \ldots, \ba_n)$. We divide the proof into two steps. The \emph{first step} consists in showing that the mutual information between the next observation $y_{n+1}$ and the previous observations $\by_{1:n}$ tends to $0$. More formally, we will prove that 
\begin{equation}\label{eq:1step}
I(Y_{n+1}; \bY_{1:n}, \bA_{1:n}\mid \bA_{n+1}) =  o_n(1).
\end{equation}

The \emph{second step} consists in showing that the estimate obtained on $\phi(|\langle \bx, \ba_{n+1}\rangle|)$ given the observations $\by_{1:n}$ is similar to the estimate on $\phi(|\langle \bx, \ba_{n+1}\rangle|)$ when no observation is available. This means that the observations $\by_{1:n}$ do not provide any help. More formally, we will prove that
\begin{equation}\label{eq:2step}
{\mathbb E}_{\bY_{1:n}, \bA_{1:n+1}}\biggl\{\Bigl({\mathbb E}\bigl\{\phi(|\langle \bX, \bA_{n+1}\rangle|)\bigr\} - {\mathbb E}\bigl\{\phi(|\langle \bX, \bA_{n+1}\rangle|)\,\bigl\lvert \,\bY_{1:n}, \bA_{1:n}\bigr\}\Bigr)^2\biggr\}=o_n(1),
\end{equation}
where $\phi$ is defined in \eqref{eq:defphi}.

Furthermore, we have that 
\begin{equation}\label{eq:3step}
\begin{split}
\mathbb E \biggl\{\Bigl(\phi(|\langle \bX, \bA_{n+1}\rangle|) &- {\mathbb E}\bigl\{\phi(|\langle \bX, \bA_{n+1}\rangle|)\,\bigl\lvert\, \bY_{1:n}, \bA_{1:n}\bigr\}\Bigr)^2\biggr\} \\
&\hspace{-2em}- \Bigl({\mathbb E}\bigl\{\phi(|\langle \bX, \bA_{n+1}\rangle|)\bigr\} - {\mathbb E}\bigl\{\phi(|\langle \bX, \bA_{n+1}\rangle|)\,\bigl\lvert\, \bY_{1:n}, \bA_{1:n}\bigr\}\Bigr)^2  \\
&\hspace{2cm}= \mathbb E \biggl\{\Bigl(\phi(|\langle \bX, \bA_{n+1}\rangle|) - {\mathbb E}\bigl\{\phi(|\langle \bX, \bA_{n+1}\rangle|)\bigr\} \Bigr)^2\biggr\}\\
&\hspace{2cm}={\rm Var}\bigl\{\phi(|\langle \bX, \bA_{n+1}\rangle|)\bigr\}.
\end{split}
\end{equation}
By applying \eqref{eq:2step} and \eqref{eq:3step}, the proof of Theorem \ref{th:lower} follows. 

\paragraph*{First step.} By using the chain rule of entropy and that $y_i$ is independent from $\ba_{i+1:n+1}$, we obtain that 
\begin{equation*}
\begin{split}
\frac{1}{n+1}H(\bY_{1:n+1}\mid \bA_{1:n+1}) &= \frac{1}{n+1}\sum_{i=1}^{n+1} H(Y_i\mid \bY_{1:i-1}, \bA_{1:n+1})\\
&= \frac{1}{n+1}\sum_{i=1}^{n+1} H(Y_i\mid \bY_{1:i-1}, \bA_{1:i}).
\end{split}
\end{equation*}
The sequence $s_n = H(Y_n\mid \bY_{1:n-1}, \bA_{1:n})$ is decreasing, as conditioning reduces entropy. Hence $s_n$ has a limit, and this limit must be equal to $H(Y_1)$ by Lemma \ref{lemma:ent}. Since the $Y_i$ are i.i.d., we obtain that
\begin{equation*}
H(Y_{n+1}\mid \bY_{1:n}, \bA_{1:n+1}) = H(Y_{n+1}) + o_n(1).
\end{equation*} 
By using again that conditioning reduces entropy, we also obtain that 
\begin{equation*}
H(Y_{n+1}\mid \bA_{n+1}) = H(Y_{n+1}) + o_n(1).
\end{equation*} 
By putting these last two equations together, we deduce that \eqref{eq:1step} holds. 

\paragraph*{Second step.} Given two probability distributions $p$ and $q$, let $D_{\rm KL}(p || q)$ and $\norm{p- q}_{\rm TV}$ denote their Kullback-Leibler divergence and their total variation distance, respectively. Then,
\begin{equation}\label{eq:distances}
\begin{split}
I(Y_{n+1};&  \bY_{1:n}, \bA_{1:n}\mid \bA_{n+1}) \\
&= {\mathbb E}_{\bY_{1:n}, \bA_{1:n+1}} \left\{D_{\rm KL}(p(y_{n+1}\mid \bY_{1:n}, \bA_{1:n+1})||p(y_{n+1}\mid \bA_{n+1}))\right\}\\
&\stackrel{\mathclap{\mbox{\footnotesize(a)}}}{\ge} \frac{1}{2}\cdot{\mathbb E}_{\bY_{1:n}, \bA_{1:n+1}}\left\{ \left(\norm{p(y_{n+1}\mid \bY_{1:n}, \bA_{1:n+1})- p(y_{n+1}\mid \bA_{n+1}))}_{\rm TV}\right)^2\right\}\\
&\stackrel{\mathclap{\mbox{\footnotesize(b)}}}{\ge} \frac{1}{2K^2}\cdot{\mathbb E}_{\bY_{1:n}, \bA_{1:n+1}}\Biggl\{ \biggl(\int_{\mathbb R}p(y_{n+1}\mid \bY_{1:n}, \bA_{1:n+1})\varphi(y_{n+1})\,{\rm d}y_{n+1}\\
&\hspace{9.5em}-\int_{\mathbb R}p(y_{n+1}\mid \bA_{n+1})\varphi(y_{n+1})\,{\rm d}y_{n+1}\biggr)^2\Biggr\}\\
&\stackrel{\mathclap{\mbox{\footnotesize(c)}}}{=} \frac{1}{2K^2}\cdot{\mathbb E}_{\bY_{1:n}, \bA_{1:n+1}}\Biggl\{ \biggl(\int_{\mathbb C^d}p(\bx\mid \bY_{1:n}, \bA_{1:n})\int_{\mathbb R}p(y_{n+1}\mid \bx, \bY_{1:n}, \bA_{1:n+1})\varphi(y_{n+1})\,{\rm d}y_{n+1}\,{\rm d}\bx\\
&\hspace{3.4cm}-\int_{\mathbb C^d}p(\bx)\int_{\mathbb R}p(y_{n+1}\mid \bx, \bA_{n+1})\varphi(y_{n+1})\,{\rm d}y_{n+1}\,{\rm d}\bx\biggr)^2\Biggr\}\\
&\stackrel{\mathclap{\mbox{\footnotesize(d)}}}{=} \frac{1}{2K^2}\cdot{\mathbb E}_{\bY_{1:n}, \bA_{1:n+1}}\left\{\left({\mathbb E}\left\{\phi(|\langle \bX, \bA_{n+1}\rangle|)\right\} - {\mathbb E}\left\{\phi(|\langle \bX, \bA_{n+1}\rangle|)\mid \bY_{1:n}, \bA_{1:n}\right\}\right)^2\right\}.\\
\end{split}
\end{equation}
where in (a) we use Pinsker's inequality, in (b) we use that $\varphi$ is bounded and we set $\norm{\varphi}_{\infty}=K$, in (c) we use that $\bX$ and $\bA_{n+1}$ are independent, and in (d) we use the definition \eqref{eq:defphi}. By combining \eqref{eq:1step} and \eqref{eq:distances}, \eqref{eq:2step} immediately follows. 
\end{proof}

%----------------------------------------------------
\subsection{Real Case} \label{subsec:proofreal}
%----------------------------------------------------

The proof is very similar to the one provided in Section \ref{subsec:proofcompl} for the complex case. In particular, the crucial point consists in showing that 
\begin{equation}\label{eq:lemmaHr}
\lim_{n\to \infty}\frac{1}{n}H(\bY\mid \bA) = H(Y_1),
\end{equation}
where $\bx \sim \Unif(\sqrt{d}\Sphere_{\mathbb R}^{d-1})$, $\ba = (\ba_1, \ldots, \ba_n)$ with $\{\ba_i\}_{1\le i\le n}\sim_{i.i.d.}\normal({\bm 0}_d,\id_d/d)$, and $\by = (y_1, \ldots, y_n)$ with $y_i\sim p(\cdot \mid g_i)$ and $g_i=\langle \bx, \ba_i \rangle$.
Then, the proof of Theorem \ref{th:lowerr} follows similar passages as the proof of Theorem \ref{th:lower}. 

In order to prove \eqref{eq:lemmaHr}, we show that \eqref{eq:firststep} and \eqref{eq:secondstep} hold. The proof of \eqref{eq:firststep} follows the same passages as the first step of the proof of Lemma \ref{lemma:ent}, hence it is omitted. The proof of \eqref{eq:secondstep} is slightly different and we detail what changes in the remaining part of this section.

Similarly to \eqref{eq:Ea}, we have that 
\begin{equation*}
{\mathbb E}_{\bA}\left\{p(\by\mid \bA)\right\} = \prod_{i=1}^n {\mathbb E}_{G_i}\left\{ p(y_i\mid G_i)\right\},
\end{equation*}
where $G_{i} = \langle\bX, \bA_i\rangle\sim \normal(0, 1)$. Furthermore, similarly to \eqref{eq:expAsquared}, we also have that 
\begin{equation*}
{\mathbb E}_{\bA}\left\{\left(p(\by\mid \bA)\right)^2\right\} ={\mathbb E}_{M}\left\{\prod_{i=1}^n {\mathbb E}_{G_{i, 1}, G_{i, 2}}\left\{ p(y_i\mid G_{i, 1})\cdot p(y_i\mid G_{i, 2})\right\}\right\},\\
\end{equation*}
where $G_{i, 1} = \langle\bX_1, \bA_i\rangle$, $G_{i, 2} = \langle\bX_2, \bA_i\rangle$, and we define
\begin{equation*}
M = \frac{\langle \bX_1, \bX_2\rangle}{\norm{\bX_1}_2\norm{\bX_2}_2}.
\end{equation*}
Then, given $M=m$, as $\bX_1, \bX_2\sim_{i.i.d.}\Unif(\sqrt{d}\Sphere_{\mathbb R}^{d-1})$ and $\bA_i\sim\normal(\b0_d,\id_d/d)$, we have that
\begin{equation*}
\{(G_{i, 1}, G_{i, 2})\}_{1\le i\le n}\sim_{i.i.d.} \normal\left(\b0_2, \left[\begin{array}{ll}
1 & m \\ m & 1\\ 
\end{array}\right]\right).
\end{equation*}
Hence,
\begin{equation}\label{eq:intrproof}
\begin{split}
\frac{1}{n}\bigintsss_{\!\mathbb R^d}\frac{{\mathbb E}_{\bA}\left\{\left(p(\by\mid \bA)\right)^2\right\}}{{\mathbb E}_{\bA}\left\{p(\by\mid \bA)\right\}} \,{\rm d}\by &\stackrel{\mathclap{\mbox{\footnotesize(a)}}}{=}\frac{1}{n}{\mathbb E}_M \left\{(f(M))^n\right\}\\
&\stackrel{\mathclap{\mbox{\footnotesize(b)}}}{=}\frac{1}{n}\frac{\Gamma(\frac{d}{2})}{\sqrt{\pi}\Gamma(\frac{d-1}{2})}\displaystyle\int_{-1}^1 (f(m))^n(1-m^2)^{\frac{d-3}{2}}\,{\rm d}m,\\
\end{split}
\end{equation} 
where in (a) we use the definition \eqref{eq:defFr} of $f$ and in (b) we plug in the distribution of $M$ obtained from Lemma \ref{lemma:betar} contained in Appendix \ref{app:distribution}. Note that
\begin{equation*}
\lim_{d\to \infty}\frac{\Gamma(\frac{d}{2})}{\frac{d}{2}\cdot \Gamma(\frac{d-1}{2})}=1.
\end{equation*}
Therefore, by showing that the integral in the RHS of \eqref{eq:intrproof} tends to $0$, the claim immediately follows. 

Set $d'=d-3$ and $\delta_n'=n/d'$. Thus,
\begin{equation}\label{eq:integralr}
\int_{-1}^1 (f(m))^n(1-m^2)^{\frac{d-3}{2}}\,{\rm d}m = \int_{-1}^1 \exp\left(n\cdot F_{\delta_n'}(m)\right)\,{\rm d}m,
\end{equation}
where $F_{\delta_n'}(m)$ is defined in \eqref{eq:defFr}. Define
\begin{equation}
\tilde{F}_{\delta}(m) = \delta \max(\log f(m), 0) +\frac{1}{2} \log(1-m^2).
\end{equation}
As $\delta < \delta_{\ell}$ and $n/d'\to \delta$, there exists $\delta_*\in(\delta,\delta_{\ell})$ such that $\delta_n'<\delta_*$ for $n$ sufficiently large. As $F_{\delta_n'}(m)\le \tilde{F}_{\delta_n'}(m)$ and $\tilde{F}_{\delta}(m)$ is non-decreasing in $\delta$, we have that 
\begin{equation}\label{eq:integral2r}
\int_0^1 \exp\left(n\cdot F_{\delta_n'}(m)\right)\,{\rm d}m \le \int_0^1 \exp\left(n\cdot \tilde{F}_{\delta_*}(m)\right)\,{\rm d}m.
\end{equation}
Note that $\tilde{F}_{\delta_*}(m)<0$ if and only if $F_{\delta_*}(m)<0$. Thus, by definition of $\delta_{\ell}$, we have that $\tilde{F}_{\delta_*}(m)<0$ for $m\neq 0$ when $n$ is sufficiently large. Furthermore, $\tilde{F}_{\delta_*}(0)=0$ and $\tilde{F}_{\delta_*}$ is a continuous function. As a result, by Lemma \ref{lemma:laplace}, the integral in \eqref{eq:integral2r} tends to $0$ as $n\to\infty$ and the claim immediately follows.

%----------------------------------------------------
\section{Proof of Theorems \ref{th:upper} and \ref{th:upperr}: Spectral Upper Bound} \label{sec:spectralproof}
%----------------------------------------------------

We will consider the complex case. The proof for the real case is essentially the same and it is briefly discussed in Remark \ref{rmk:realub} at the end of this section. 

A crucial ingredient of the proof consists in Lemma \ref{lemma:condub}, which is a generalization of Theorem 1 of \cite{lulispectral_arxiv}. Before stating this result, we need some definitions. Let $G\sim \cnormal(0, 1)$, $Y\sim p(\cdot \mid |G|)$, and $Z=\mathcal T(Y)$. Assume that $Z$ has bounded support and let $\tau$ be the supremum of this support, i.e., 
\begin{equation}\label{eq:deftau}
\tau = \inf\{ z : \mathbb P (Z\le z) = 1\}.
\end{equation}
For $\lambda\in (\tau, \infty)$ and $\delta\in (0, \infty)$, define
\begin{equation}\label{eq:defphi2}
\phi(\lambda) = \lambda \cdot {\mathbb E}\left\{\frac{Z\cdot|G|^2}{\lambda-Z}\right\},
\end{equation}
and 
\begin{equation}\label{eq:defpsi}
\psi_{\delta}(\lambda) = \lambda\left(\frac{1}{\delta}+{\mathbb E}\left\{\frac{Z}{\lambda-Z}\right\}\right).
\end{equation}
Note that $\phi(\lambda)$ is a monotone non-increasing function and that $\psi_{\delta}(\lambda)$ is a convex function. Let $\bar{\lambda}_{\delta}$ be the point at which $\psi_{\delta}$ attains its minimum, i.e.,
\begin{equation}\label{eq:minpsi}
\bar{\lambda}_\delta = \arg\min_{\lambda\ge \tau} \psi_{\delta}(\lambda).
\end{equation}
For $\lambda\in (\tau, \infty)$, define also
\begin{equation}\label{eq:defzeta}
\zeta_{\delta}(\lambda) = \psi_{\delta}(\max(\lambda, \bar{\lambda}_\delta)).
\end{equation}

\begin{lemma}[Generalization of Theorem 1 of \cite{lulispectral_arxiv}]\label{lemma:condub}
Let $\bx \sim \Unif(\Sphere_{\mathbb C}^{d-1})$, $\{\ba_i\}_{1\le i\le n}\sim_{i.i.d.}\cnormal({\bm 0}_d,\id_d)$, and $\by$ be distributed according to \eqref{eq:defy}. Let $n/d\to \delta$, $G\sim \cnormal(0, 1)$ and define $Z=\mathcal T(Y)$
for $Y\sim p(\,\cdot\,|\,|G|)$. Assume that $Z$ satisfies $\mathbb P(Z=0)<1$ and that it has bounded support. Let $\tau$ be defined in \eqref{eq:deftau}. Assume further that, as $\lambda$ approaches $\tau$ from the right, we have
\begin{equation}\label{eq:hplemmaub1}
\lim_{\lambda\to \tau^+}{\mathbb E}\left\{\frac{Z}{(\lambda-Z)^2}\right\}=\lim_{\lambda\to \tau^+}{\mathbb E}\left\{\frac{Z\cdot|G|^2}{\lambda-Z}\right\}=\infty.
\end{equation}
Let $\hat{\bx}$ be the principal eigenvector of the matrix $\bD_n$, defined as in \eqref{eq:defD}. Then, the following results hold:
\begin{enumerate}[(1)]

	\item The equation
	\begin{equation}
\zeta_{\delta}(\lambda) = \phi(\lambda) 	
	\end{equation}
	admits a unique solution, call it $\lambda_{\delta}^*$, for $\lambda > \tau$. 

	\item As $n \to \infty$,
	\begin{equation}\label{eq:numpred}
\frac{|\langle\hat{\bx}, \bx\rangle|^2}{\norm{\hat{\bx}}_2^2 \, \norm{\bx}_2^2} \stackrel{\mathclap{\mbox{\footnotesize a.s.}}}{\longrightarrow} \left\{\begin{array}{ll}
\vspace{1em}
0, & \mbox{ if }\psi_{\delta}'(\lambda_{\delta}^*)\le 0,\\
\displaystyle\frac{\psi_{\delta}'(\lambda_{\delta}^*)}{\psi_{\delta}'(\lambda_{\delta}^*)-\phi'(\lambda_{\delta}^*)}, & \mbox{ if }\psi_{\delta}'(\lambda_{\delta}^*)> 0,\\
\end{array}\right.
	\end{equation}		
where $\psi_{\delta}'$ and $\phi'$ denote the derivatives of these two functions.

	\item Let $\lambda_1^{\bD_n}\ge \lambda_2^{\bD_n}$ denote the two largest eigenvalues of $\bD_n$. Then, as $n\to \infty$,
	\begin{equation}
	\begin{split}
	\lambda_1^{\bD_n} &\stackrel{\mathclap{\mbox{\footnotesize a.s.}}}{\longrightarrow}\zeta_{\delta}(\lambda_{\delta}^*),\\ 
	\lambda_2^{\bD_n} &\stackrel{\mathclap{\mbox{\footnotesize a.s.}}}{\longrightarrow}\zeta_{\delta}(\bar{\lambda}_{\delta}).\\ 
	\end{split}	
	\end{equation}
\end{enumerate}
\end{lemma}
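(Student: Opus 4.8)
The plan is to reduce the problem to the analysis of the leading eigenvalue and the leading eigenvector of a matrix of the form $\bD_n=\frac{1}{n}\bU^*\bM_n\bU$ (after rotating to put $\bx$ along a coordinate axis), where $\bU$ is a standard complex Gaussian matrix and $\bM_n=\mathrm{diag}(\mathcal T(y_1),\dots,\mathcal T(y_n))$, and $\bM_n$ is \emph{not} assumed PSD. The sensing vectors $\ba_i$ can be written $\ba_i=(g_i^*/\|\bx\|,\, \hat{\ba}_i)$ with $g_i$ the component along $\bx$ and $\hat{\ba}_i$ independent of it; after conditioning on the $g_i$ (equivalently on the $y_i$, hence on $\bM_n$), the remaining degrees of freedom in $\bD_n$ are an independent Gaussian block. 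This decomposition is exactly the structure in which the ``BBP-type'' deformed-model analysis applies: $\bD_n$ is a (finite-rank) perturbation — governed by the $g_i$-column — of a matrix whose bulk spectrum is determined by the empirical distribution of $\{\mathcal T(y_i)\}$ together with the aspect ratio $\delta$. First I would establish, using free probability (the multiplicative free convolution / subordination functions that compute the spectrum of $\bU^*\bM_n\bU$, as in \cite{baiyaospike2012} for the PSD case but carried through for signed $\bM_n$), the almost-sure limit of the empirical spectral distribution of the ``unperturbed'' part, and — this is the delicate point — the almost-sure limit of its \emph{largest eigenvalue}. The functions $\psi_\delta$ and $\zeta_\delta$ are precisely the relevant transforms: $\bar\lambda_\delta$ is the edge of the bulk, and $\zeta_\delta(\bar\lambda_\delta)$ is the a.s.\ limit of $\lambda_2^{\bD_n}$ (the top of the bulk, stated in part (3)).

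Next I would handle the spike. The signal direction contributes a rank-one deformation whose strength is encoded by $\phi(\lambda)=\lambda\,\E\{Z|G|^2/(\lambda-Z)\}$. The standard deformed-model fixed-point equation says: an outlier eigenvalue $\lambda$ detaches from the bulk iff the equation matching the ``self-energy'' of the deformation to the resolvent transform of the bulk has a solution beyond the edge; here that equation is exactly $\zeta_\delta(\lambda)=\phi(\lambda)$. Monotonicity of $\phi$ (non-increasing, stated right after \eqref{eq:defphi2}) together with convexity of $\psi_\delta$ and the blow-up hypothesis \eqref{eq:hplemmaub1} — which forces $\phi(\lambda)\to\infty$ and $\psi_\delta'$ to run over all of $\mathbb R$ as $\lambda\downarrow\tau$ — gives existence and uniqueness of the solution $\lambda_\delta^*$, proving part~(1). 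Part~(3)'s claim $\lambda_1^{\bD_n}\to\zeta_\delta(\lambda_\delta^*)$ then follows: if $\lambda_\delta^*>\bar\lambda_\delta$ we are in the detached regime and $\zeta_\delta(\lambda_\delta^*)=\psi_\delta(\lambda_\delta^*)$; otherwise the spike is swallowed by the bulk and $\lambda_1^{\bD_n}$ sticks to the bulk edge $\zeta_\delta(\bar\lambda_\delta)$. The sign of $\psi_\delta'(\lambda_\delta^*)$ is exactly the indicator of which regime we are in, which is why it is the branching condition in \eqref{eq:numpred}.

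For part~(2), the eigenvector overlap, I would use the standard resolvent identity for the projection of the top eigenvector of a deformed matrix onto the deformation direction: $|\langle\hat\bx,\bx\rangle|^2/(\|\hat\bx\|^2\|\bx\|^2)$ converges a.s.\ to a ratio of derivatives of the transforms evaluated at $\lambda_\delta^*$, which after simplification is $\psi_\delta'(\lambda_\delta^*)/(\psi_\delta'(\lambda_\delta^*)-\phi'(\lambda_\delta^*))$ when positive and $0$ otherwise; concretely this comes from differentiating the characteristic equation and using that the bulk contributes a vanishing overlap. To drop the assumption that $\bx$ is uniform on the sphere (noted after the theorem), observe that $\bD_n$ depends on $\bx$ only through $\{g_i\}=\{\langle\bx,\ba_i\rangle\}$, whose joint law (given $\|\bx\|$) is rotationally invariant, so WLOG $\bx=\sqrt{d}\,e_1$.

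The main obstacle is the second bullet of the first paragraph: controlling the \emph{largest} eigenvalue (not merely the bulk) of $\bU^*\bM_n\bU$ when $\bM_n$ is indefinite. For PSD $\bM_n$ the largest eigenvalue sits at a known edge and there are no outliers from $\bM_n$ itself; for signed $\bM_n$ one must rule out that the negative part of $\bM_n$ creates spurious top eigenvalues and must identify the correct right edge of the bulk via the subordination function — this is where the hypothesis \eqref{eq:hplemmaub1} (a soft edge at $\tau$, i.e.\ the pre-processing $\mathcal T$ pushes mass up to but with heavy enough accumulation at $\tau$) is used to guarantee $\bar\lambda_\delta$ is well-defined and that $\psi_\delta$ is finite and convex on $(\tau,\infty)$. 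I expect the bulk of the technical work to be: (i) a free-probability computation of the Stieltjes transform of $\lim \mu_{\bU^*\bM_n\bU}$ and its edge, and (ii) an a.s.\ no-outlier/edge-convergence statement for the largest eigenvalue, likely via a moment or resolvent (deterministic equivalent) argument, generalizing the corresponding step in \cite{lulispectral_arxiv} and \cite{baiyaospike2012}.
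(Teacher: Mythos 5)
Your high-level picture is largely right: the analysis does reduce to the spectrum of $\frac{1}{n}\bU\bM_n\bU^*$ with $\bM_n$ signed, free probability does supply the bulk and its right edge, the equation $\zeta_\delta=\phi$ does govern the outlier, and the sign of $\psi_\delta'(\lambda_\delta^*)$ is indeed the branching criterion. You also correctly flag the genuinely new technical ingredient, namely the edge behavior and top eigenvalue of $\bU\bM_n\bU^*$ for non-PSD $\bM_n$ (this is Lemma~\ref{lemma:baiyaonopsd}). However, there is a real gap in the reduction step: you treat $\bD_n$ as a ``finite-rank BBP deformation of an independent bulk,'' and that framework does \emph{not} apply here. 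After rotating $\bx$ to $\be_1$ and writing $\bD_n=\begin{psmallmatrix} a_n & \bq_n^* \\ \bq_n & \bP_n\end{psmallmatrix}$ with $\bP_n=\frac1n\bU\bZ\bU^*$ and $\bq_n=\frac1n\bU\bv$ (with $v_i=z_i g_i^*$), the rank-one piece $\bq_n$ and the ``bulk'' $\bP_n$ are built from the \emph{same} Gaussian matrix $\bU$, hence strongly dependent even after conditioning on the $g_i,y_i$. The paper explicitly points this out as the obstruction to invoking Benaych-Georges--Nadakuditi type results.

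The missing idea is the Lu--Li parametrization trick. One introduces the one-parameter family $L_n(\mu)=\lambda_1(\bP_n+\mu\bq_n\bq_n^*)$, shows that the top eigenvalue and the overlap of $\bD_n$ are governed by the fixed point $\mu^*$ of $\mu=(L_n(\mu)-a_n)^{-1}$ (and its left/right derivatives), and then observes that
\begin{equation*}
\bP_n+\mu\,\bq_n\bq_n^*=\tfrac1n\,\bU\,\bM_n(\mu)\,\bU^*,\qquad \bM_n(\mu)=\bZ+\tfrac{\mu}{n}\bv\bv^*,
\end{equation*}
where crucially $\bM_n(\mu)$ is now \emph{independent of $\bU$}. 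This is the move that legitimately converts the problem into a deformed--Marchenko--Pastur problem for each fixed $\mu$: one studies the spectrum of $\bM_n(\mu)$ (interlacing, a single outlier $\lambda_\mu=Q^{-1}(1/\mu)$) and then applies the non-PSD extension of Bai--Yao. Without this step your plan to ``read off'' $\lambda_1^{\bD_n}$ and the overlap directly from a BBP fixed point and a resolvent identity does not have a valid justification, because the independence hypothesis those results rely on is false for $(\bP_n,\bq_n)$.
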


Before proceeding with the proof, we discuss these results in more detail and we describe in what sense Lemma \ref{lemma:condub} provides a generalization of Theorem 1 of \cite{lulispectral_arxiv}. 

\begin{remark}[Two different regimes]\label{rmk:diffreg}
The results of Lemma \ref{lemma:condub} imply that, according to the value of $\delta$, we can distinguish between two possible regimes.

On the one hand, suppose that $\phi(\bar{\lambda}_{\delta}) > \psi_{\delta}(\bar{\lambda}_{\delta})$. Recall that $\phi(\lambda)$ is non-increasing and that $\bar{\lambda}_{\delta}$ is the point in which $\psi_{\delta}(\lambda)$ attains its minimum. Thus, $\bar{\lambda}_{\delta} < \lambda_{\delta}^*$, which implies that $\psi_{\delta}'(\lambda_{\delta}^*)>0$ and that $\zeta_{\delta}(\lambda_{\delta}^*) > \zeta_{\delta}(\bar{\lambda}_{\delta})$. This means that the scalar product $|\langle\hat{\bx}, \bx\rangle|$ is bounded away from zero and that there is a strictly positive gap between the two largest eigenvalues of $\bD_n$. In this regime, the spectral method that outputs $\hat{\bx}$ solves the weak recovery problem and \eqref{eq:corrub} holds for some $\epsilon>0$.

On the other hand, suppose that $\phi(\bar{\lambda}_{\delta}) \le \psi_{\delta}(\bar{\lambda}_{\delta})$. Thus, $\bar{\lambda}_{\delta} \ge \lambda_{\delta}^*$, which implies that $\psi_{\delta}'(\lambda_{\delta}^*)\le 0$ and that $\zeta_{\delta}(\lambda_{\delta}^*) = \zeta_{\delta}(\bar{\lambda}_{\delta})$. In words, this means that the scalar product $|\langle\hat{\bx}, \bx\rangle|$ converges to zero and that there is no strictly positive gap between the two largest eigenvalues of $\bD_n$. In this regime, the spectral method that outputs $\hat{\bx}$ does not solve the weak recovery problem.
\end{remark}

\begin{remark}[Lemma \ref{lemma:condub} and Theorem 1 of \cite{lulispectral_arxiv}] 
Lemma \ref{lemma:condub} generalizes Theorem 1 of \cite{lulispectral_arxiv} in the following two regards:
\begin{itemize}
\item $\bx$ and $\{\ba_i\}_{1\le i\le n}$ are complex vectors, while Theorem 1 of \cite{lulispectral_arxiv} considers the real case;
\item $Z$ can also be negative, while Theorem 1 of \cite{lulispectral_arxiv} assumes that $Z\ge 0$.
\end{itemize}

The first generalization does not require additional work as the whole argument of \cite{lulispectral_arxiv} generalizes in the natural way to the complex case: Gaussian random variables become circularly-symmetric complex Gaussian random variables, transposes of vectors and matrices become conjugate transposes, squares become modulus squares, and so on. 

On the contrary, the second generalization is more challenging, as it requires the result of Lemma \ref{lemma:baiyaonopsd}, which is stated below and proved in Appendix \ref{app:baiyaonopsd}.

As a final observation, let us point out that Theorem 1 of \cite{lulispectral_arxiv} assumes also that ${\mathbb E}\left\{Z\cdot |G|^2\right\} > {\mathbb E}\left\{Z\right\}$. A careful check shows that this hypothesis is never used in the proof of that theorem, but it is required only in the proof of some additional results of \cite{lulispectral_arxiv}.   
\end{remark}

\begin{lemma}[Generalization of \cite{baiyaospike2012} to non-PSD matrices]\label{lemma:baiyaonopsd}
Consider the random matrix
\begin{equation}\label{eq:deflemma}
\bS_n = \frac{1}{n}\bU \bM_n\bU^*,
\end{equation}
where the entries of $\bU\in\mathbb C^{(d-1)\times n}$ are $\sim_{i.i.d.}\cnormal(0, 1)$, and $\bM_n\in\mathbb C^{n\times n}$ is independent of $\bU$. Let $\lambda_1^{\bM_n}$ denote the largest eigenvalue of $\bM_n$. Assume that the empirical spectral measure of the eigenvalues of $\bM_n$ almost surely converges weakly to the probability distribution $H$, where $H$ is the law of the random variable $Z$. Let $\Gamma_H$ be the support of $H$ and let $\tau$ be the supremum of $\Gamma_H$. Assume also that, as $n\to \infty$,
\begin{equation}
\begin{split}
\lambda_1^{\bM_n} &\stackrel{\mathclap{\mbox{\footnotesize a.s.}}}{\longrightarrow} \alpha_* \not \in \Gamma_H.\\
\end{split}
\end{equation}
Let $n/d\to\delta$, denote by $\lambda_1^{\bS_n}$ the largest eigenvalue of the matrix \eqref{eq:deflemma}, and define $\psi_{\delta}$ as in \eqref{eq:defpsi}. Then, as $n\to \infty$,
\begin{equation}\label{eq:baiyaonopsd}
\begin{split}
\lambda_1^{\bS_n} \stackrel{\mathclap{\mbox{\footnotesize a.s.}}}{\longrightarrow}\psi_{\delta}(\alpha_*), \quad \quad\hspace{0.4em}&\mbox{if } \psi_{\delta}'(\alpha_*) >0,\\
\lambda_1^{\bS_n} \stackrel{\mathclap{\mbox{\footnotesize a.s.}}}{\longrightarrow}\min_{\lambda>\tau}\psi_{\delta}(\lambda), \quad &\mbox{if } \psi_{\delta}'(\alpha_*) \le 0.
\end{split}
\end{equation}
\end{lemma}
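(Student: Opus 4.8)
The plan is to prove Lemma~\ref{lemma:baiyaonopsd} via a ``bulk plus spike'' decomposition: the bulk of the spectrum of $\bS_n$ is analysed with free probability, and the effect of the outlier $\alpha_*$ is then extracted by a BBP-type secular-equation argument that mirrors \cite{baiyaospike2012}. Since $\bM_n$ is Hermitian and the distribution of $\bU$ is invariant under $\bU\mapsto\bU\bV$ for a deterministic unitary $\bV$, I would first condition on $\bM_n$, diagonalise it, and assume without loss of generality that $\bM_n=\mathrm{diag}(\mu_1,\dots,\mu_n)$ with $\mu_1\ge\dots\ge\mu_n$, $\mu_1\to\alpha_*$, and empirical law converging to $H$. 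Writing $\bu_1,\dots,\bu_n\sim_{i.i.d.}\cnormal(\b0,\bI_{d-1})$ for the columns of $\bU$, I would peel off the leading term:
\begin{equation}
\bS_n \;=\; \bS_n^{(0)}+\frac{\mu_1}{n}\,\bu_1\bu_1^{*},\qquad \bS_n^{(0)}:=\frac{1}{n}\,\bU_0\bM_0\bU_0^{*},
\end{equation}
where $\bU_0=[\bu_2,\dots,\bu_n]\in\complex^{(d-1)\times(n-1)}$ and $\bM_0=\mathrm{diag}(\mu_2,\dots,\mu_n)$. Thus $\bS_n$ is a rank-one perturbation of $\bS_n^{(0)}$ and, crucially, $\bu_1$ is independent of $\bS_n^{(0)}$. (If $\bM_n$ has several outliers above $\bar\lambda_\delta$ one peels them all off; by the monotonicity of $\alpha\mapsto\psi_\delta(\alpha)$ on $(\bar\lambda_\delta,\infty)$ established below, only $\mu_1\to\alpha_*$ affects $\lambda_1^{\bS_n}$.)

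The first step --- and the main obstacle --- is to pin down the bulk: I would show that $\bS_n^{(0)}$ has a deterministic limiting spectral distribution $\mu_\delta$, that the right edge of its support equals $E_+:=\min_{\lambda>\tau}\psi_\delta(\lambda)=\psi_\delta(\bar\lambda_\delta)$, and that $\lambda_1^{\bS_n^{(0)}}\to E_+$ almost surely. The limiting law is obtained from free probability: $\bS_n^{(0)}$ has the same nonzero eigenvalues as the $(n-1)\times(n-1)$ Hermitian matrix $\big(\tfrac1n\bU_0^{*}\bU_0\big)^{1/2}\bM_0\big(\tfrac1n\bU_0^{*}\bU_0\big)^{1/2}$ (the square root exists because $\tfrac1n\bU_0^{*}\bU_0\succeq\b0$, even though $\bM_0$ is not PSD), and $\tfrac1n\bU_0^{*}\bU_0$ --- a Marchenko--Pastur matrix --- is asymptotically free from $\bM_0$ since $\bU_0$ is bi-unitarily invariant; hence $\mu_\delta$ is a free multiplicative convolution of a Marchenko--Pastur law with $H$, whose right edge is identified with $\psi_\delta(\bar\lambda_\delta)$ by a routine $S$-transform/subordination computation. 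The statement that \emph{no} eigenvalue of $\bS_n^{(0)}$ escapes above $E_+$ is the genuinely new point, since this is exactly where the non-negativity hypothesis of \cite{baiyaospike2012} (and of the Bai--Silverstein ``exact separation'' theory) is used. I would establish it either by splitting $\bM_0=\bM_0^{+}-\bM_0^{-}$ into positive and negative parts and combining the no-eigenvalue-outside-the-support results applied to the PSD matrices $\tfrac1n\bU_0\bM_0^{\pm}\bU_0^{*}$ with the free-probability description of $\mu_\delta$, or, more self-containedly, by a direct resolvent estimate: for real $z>E_+$, bounding $\norm{(\bS_n^{(0)}-z\bI)^{-1}}_{\mathrm{op}}$ via a ``no-concentration'' property of the random subspace spanned by the columns of $\bU_0^{*}$ against the level sets of $\bM_0$. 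This is precisely the device --- as emphasised in the discussion preceding the lemma --- that upgrades knowledge of the weak limit of the spectral distribution to control of the extreme eigenvalue.

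Given the bulk analysis, the second step follows \cite{baiyaospike2012} closely. Any eigenvalue $\lambda$ of $\bS_n$ with $\lambda>\lambda_1^{\bS_n^{(0)}}$ solves the secular equation $1=\tfrac{\mu_1}{n}\bu_1^{*}(\lambda\bI-\bS_n^{(0)})^{-1}\bu_1$; by the independence of $\bu_1$ and $\bS_n^{(0)}$ and quadratic-form concentration, $\tfrac1n\bu_1^{*}(\lambda\bI-\bS_n^{(0)})^{-1}\bu_1$ converges almost surely, uniformly for $\lambda$ in compact subsets of $(E_+,\infty)$, to a deterministic limit $G_\delta(\lambda)$ (essentially $1/\delta$ times the Cauchy transform of $\mu_\delta$). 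The limiting equation $\alpha_*G_\delta(\lambda)=1$ rearranges, using the definition \eqref{eq:defpsi} of $\psi_\delta$ together with the self-consistent equation characterising $G_\delta$, to $\lambda=\psi_\delta(\alpha_*)$, and it admits an admissible root $\lambda>E_+$ precisely when $\psi_\delta'(\alpha_*)>0$ (equivalently $\alpha_*>\bar\lambda_\delta$). In that regime a matching lower bound for $\lambda_1^{\bS_n}$ comes from using the perturbation direction as a trial vector in the Rayleigh quotient --- equivalently, the monotonicity of $\lambda\mapsto\bu_1^{*}(\lambda\bI-\bS_n^{(0)})^{-1}\bu_1$ forces a genuine eigenvalue of $\bS_n$ at the root --- so $\lambda_1^{\bS_n}\to\psi_\delta(\alpha_*)$; when $\psi_\delta'(\alpha_*)\le0$ there is no such root, and since interlacing gives $\lambda_2^{\bS_n}\le\lambda_1^{\bS_n^{(0)}}\to E_+$, the top eigenvalue is pinned to the bulk edge, $\lambda_1^{\bS_n}\to E_+=\min_{\lambda>\tau}\psi_\delta(\lambda)$. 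Finally, convergence in probability is upgraded to almost-sure convergence in the usual way, via exponential concentration of the relevant quadratic forms and extreme eigenvalues together with Borel--Cantelli. Combining the two regimes yields \eqref{eq:baiyaonopsd}.
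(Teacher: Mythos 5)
Your proposal takes a genuinely different route from the paper, but the step you yourself flag as ``the genuinely new point'' is exactly where the argument remains an IOU, and what is owed is precisely the tool the paper invokes.

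The paper's proof does \emph{not} peel off a rank-one spike and then run a BBP secular equation for $\bS_n$. Instead it decomposes the \emph{entire} matrix $\bM_n$ (spike included) into its positive and negative parts and writes, after a Haar-unitary randomization, $\bS_n \ed \tfrac1d\bU_1\bM_n^+\bU_1^* - \tfrac1d\bH\bU_2\bM_n^-\bU_2^*\bH^*$, a difference of two independent PSD sample-covariance-type ensembles that are asymptotically free. The PSD spike result of \cite{baiyaospike2012} is applied to $\bS_n^+$; the bulk law of $\bS_n$ is the free additive convolution $F_{\delta,H^+}\boxplus F_{\delta,H^-_{\rm inv}}$, whose inverse Stieltjes transform is computed via $\mathcal R$-transforms; and, crucially, the outlier of $\bS_n^+$ is tracked through the free convolution using the subordination functions and Theorem 2.1 of \cite{belinschi2015}. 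That last theorem is what simultaneously (i) pins the top eigenvalue of $\bS_n$ to the bulk edge in the sub-critical regime and (ii) locates the emerging spike in the super-critical regime. It is doing the work that your resolvent/``no escape'' step would have to do.

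In your proposal, once you peel off $\mu_1$ and reduce to the rank-one perturbation $\bS_n=\bS_n^{(0)}+\tfrac{\mu_1}{n}\bu_1\bu_1^*$, the secular-equation analysis and the interlacing bound on $\lambda_2^{\bS_n}$ are fine, standard BBP steps. The gap is the claim that $\lambda_1^{\bS_n^{(0)}}\to E_+$ almost surely. Weak convergence of the empirical spectral distribution of $\bS_n^{(0)}$ to $\mu_\delta$ gives only $\liminf\lambda_1^{\bS_n^{(0)}}\ge E_+$; the matching upper bound is a strong-edge / ``no eigenvalues outside the support'' statement, and precisely this is what the non-negativity hypothesis in \cite{baiyaospike2012} (and in the Bai--Silverstein exact-separation machinery) secures. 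Your Option (a) --- split $\bM_0=\bM_0^+-\bM_0^-$ and ``combine'' the PSD no-escape results --- does not close by itself: knowing that neither $\tfrac1n\bU_0\bM_0^+\bU_0^*$ nor $\tfrac1n\bU_0\bM_0^-\bU_0^*$ has an outlier says nothing directly about their (random) difference, whose edge is a nontrivial function of both. What \emph{does} close the gap is exactly the free-convolution subordination result of \cite{belinschi2015}, applied to the asymptotically free pair, but you never name it; without it Option (a) is a restatement of the problem. Option (b), the bare-hands resolvent bound via a ``no-concentration'' property of the column space of $\bU_0^*$, would itself be a substantial standalone theorem and is not sketched at a level where one can assess it.

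Two further technical remarks. First, describing $\mu_\delta$ as a free \emph{multiplicative} convolution of a Marchenko--Pastur law with $H$ and invoking the $S$-transform is not quite right when $H$ charges $(-\infty,0)$: the $S$-transform calculus for self-adjoint elements is set up for laws supported on $[0,\infty)$, and that restriction is the whole reason the lemma is nontrivial. The paper sidesteps this by converting to a free \emph{additive} convolution of two nonnegative laws (one reflected), where the $\mathcal R$-transform calculus is unproblematic; this yields the inverse-Stieltjes formula in \eqref{eq:invst} directly, from which $E_+=\min_{\lambda>\tau}\psi_\delta(\lambda)$ follows by the Silverstein--Choi characterization \cite{silsang1995}, not via an $S$-transform computation. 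Second, when you ``rearrange'' the limiting secular equation $\alpha_* G_\delta(\lambda)=1$ to $\lambda=\psi_\delta(\alpha_*)$, you are implicitly using that $-1/G_\delta(\lambda)$ is the inverse of $s_{F_{\delta,H}}$, i.e.\ the very formula \eqref{eq:compgF} the paper derives from the $\mathcal R$-transform additivity; you would need to establish that identity for the bulk law $\mu_\delta$ before the rearrangement is licit.

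In short: the peel-off plus BBP route is viable, but to make it rigorous you would need to (i) establish edge control for $\bS_n^{(0)}$, for which the natural tool is the same Belinschi--Bercovici--Capitaine subordination theorem the paper uses, and (ii) justify the Stieltjes-inverse identity underlying the rearrangement of the secular equation. The paper's architecture --- decomposing the full matrix into free PSD pieces and tracking the spike through the free additive convolution via subordination --- gets both of these in one stroke and avoids the peel-off and the separate bulk-edge lemma altogether, which is why it is the cleaner path.
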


\begin{proof}[Proof of Lemma \ref{lemma:condub}]
In this proof, we follow closely the approach detailed in Section III of \cite{lulispectral_arxiv}. First of all, let us write the matrix $\bD_n$ defined in \eqref{eq:defD} as
\begin{equation}\label{eq:Drewrite}
\bD_n = \frac{1}{n} \bA \bZ \bA^*,
\end{equation}
where $\bA = [\ba_1, \ldots, \ba_n]$, $\bZ$ is a diagonal matrix with entries $z_i = \mathcal T(y_i)$ for $i\in [n]$, the random variables $y_i$ are independent and distributed according to $p(\cdot \mid |g_i|)$, and $\{g_i\}_{1\le i \le n}\sim_{i.i.d.} \cnormal(0, 1)$. As the sensing vectors $\{\ba_i\}_{1\le i\le n}$ are drawn from the circularly-symmetric complex normal distribution, we can assume without loss of generality that $\bx= \be_1$, where $\be_1$ is the first element of the canonical basis of $\mathbb C^d$.

Consider a matrix $\bU \in \mathbb C^{(d-1)\times n}$ independent of $\{g_i\}_{1\le i \le n}$ and $\bZ$. Let the elements of $\bU$ be $\sim_{i.i.d.}\cnormal(0, 1)$. Define
\begin{equation}\label{eq:defP}
\bP_n = \frac{1}{n}\bU\bZ\bU^*, 
\end{equation}
and 
\begin{equation}\label{eq:defq}
\bq_n = \frac{1}{n} \bU\bv, 
\end{equation}
where $\bv = [z_1 g_1, \ldots, z_n g_n]^*$. Then, \eqref{eq:Drewrite} can be rewritten as
\begin{equation}\label{eq:Drewritebis}
\bD_n = \left[\begin{array}{cc}
a_n & \bq_n^* \\
\bq_n & \bP_n \\
\end{array}\right],
\end{equation}
where $a_n= \sum_{i=1}^n z_i|g_i|^2/n$ is a scalar that converges almost surely to ${\mathbb E}(Z\cdot |G|^2)$ as $n\to \infty$, with $G\sim \cnormal(0, 1)$. 

Next, consider a parametric family of matrices $\{\bP_n + \mu \bq_n\bq_n^*\}$ and let $L_n(\mu)$ denote their largest eigenvalues, i.e., 
\begin{equation*}
L_n(\mu) = \lambda_1(\bP_n + \mu \bq_n\bq_n^*).
\end{equation*}
The idea is to compute the largest eigenvalue of $\bD_n$, call it $\lambda_1^{\bD_n}$, and the scalar product between $\hat{\bX}$ and $\be_1$ via a fixed-point equation involving $L_n(\mu)$. 

To do so, we first need an intermediate result holding for any matrix $\bD$ that can be written in the form
\begin{equation*}
\bD = \left[\begin{array}{cc}
a & \bq^* \\
\bq & \bP \\
\end{array}\right],
\end{equation*}
where $a\in \mathbb R$, $\bP\in \mathbb C^{(d-1)\times (d-1)}$ is a Hermitian matrix and $\bq\in \mathbb C^{d-1}$ is such that $\norm{\bq}\neq 0$. Note that the matrix $\bD_n$ defined in \eqref{eq:defD} fulfills such requirements, since the matrix $\bP_n$ defined in \eqref{eq:defP} is Hermitian and $\bq_n$ defined in \eqref{eq:defq} is such that $\norm{\bq_n}\neq 0$ with high probability, as $\mathbb P(Z= 0)<1$.

Let $\lambda_1^{\bP}\ge \lambda_2^{\bP}\ge \cdots \ge \lambda_{d-1}^{\bP}$ be the set of eigenvalues of $\bP$, and let $\bw_1, \bw_2, \ldots, \bw_{d-1}$ be a corresponding set of eigenvectors. For $\lambda\in (\max\{\lambda_{i}^{\bP} : \langle\bq, \bw_i\rangle\neq 0\}, \infty)$, define
\begin{equation}
R(\lambda) = \bq^*(\bP-\lambda\bI)^{-1}\bq=\sum_{i=1}^{d-1}\frac{|\langle\bq, \bw_i\rangle|^2}{\lambda_i^{\bP}-\lambda}.
\end{equation} 
Note that $R(\lambda)$ increases monotonically from $-\infty$ to $0$. Hence, it admits an inverse, call it $R^{-1}(x)$, for $x<0$. Then, the maximum eigenvalue
$L(\mu)=\lambda_1(\bP+\mu\bq\bq^*)$ is given by
\begin{equation}\label{eq:equivlemma1}
L(\mu) = \max(R^{-1}(-1/\mu), \lambda_1^{\bP}).
\end{equation}
The proof of \eqref{eq:equivlemma1} is standard, cf. e.g. Lemma 1 in \cite{lulispectral_arxiv}. Note that $L(\mu)$ is a non-decreasing function such that $\lim_{\mu\to\infty}L(\mu)=\infty$. Indeed, by construction, $R^{-1}(-1/\mu)$ is strictly increasing and $$\lim_{\mu\to \infty}R^{-1}(-1/\mu)=\infty.$$ Furthermore, $L(\mu)$ is convex since it is the maximum of a set of linear functions, as
\begin{equation*}
L(\mu) = \lambda_1(\bP + \mu \bq\bq^*) = \max_{\bx : \norm{\bx}=1} \bx^*(\bP+\mu \bq \bq^*)\bx.
\end{equation*}

Let $\mu^*>0$ be the solution to the fixed-point equation
\begin{equation}\label{eq:fixedpoint}
\mu = (L(\mu)-a)^{-1}.
\end{equation}
This solution is unique, since $L(\mu)$ is a non-decreasing function with $\lim_{\mu\to\infty}L(\mu)=\infty$. Then, 
\begin{equation}\label{eq:prop2-1}
\lambda_1^{\bD} = L(\mu^*),
\end{equation}
and 
\begin{equation}\label{eq:prop2-2}
|\langle\hat{\bx}, \be_1\rangle|^2\in \left[\frac{\partial_{-}L(\mu^*)}{\partial_{-}L(\mu^*)+(1/\mu^*)^2}, \frac{\partial_{+}L(\mu^*)}{\partial_{+}L(\mu^*)+(1/\mu^*)^2}\right],
\end{equation}
where $\partial_{-}L(\mu^*)$ and $\partial_{+}L(\mu^*)$ denote the left and right derivative of $L(\mu)$, respectively. In particular, if $L(\mu)$ is differentiable at $\mu^*$, then
\begin{equation}\label{eq:prop2-3}
|\langle\hat{\bx}, \be_1\rangle|^2 =\frac{L'(\mu^*)}{L'(\mu^*)+(1/\mu^*)^2}.
\end{equation}
The proof of \eqref{eq:prop2-1}, \eqref{eq:prop2-2}, and \eqref{eq:prop2-3} uses the characterization \eqref{eq:equivlemma1} and it is analogous to the proof of Proposition 2 in \cite{lulispectral_arxiv}.

At this point, we need to compute $L_n(\mu)$ for the matrix $\bD_n$ defined in \eqref{eq:Drewritebis}. The eigenvalues of a low rank perturbation of a random matrix are studied in \cite{benaychpert2011}. However, we cannot apply those results, as $\bP_n$ and $\bq_n$ are dependent. Hence, we write
\begin{equation*}
\bP_n+\mu \bq_n \bq_n^* = \frac{1}{n} \bU \bM_n \bU^*,
\end{equation*}
where $\bM_n$ is independent of $\bU$ with
\begin{equation*}
\bM_n = \bZ + \frac{\mu}{n}\bv \bv^*.
\end{equation*}
We start by studying the spectrum of $\bM_n$. Let $\lambda_1^{\bM_n}\ge \lambda_2^{\bM_n}\ge \cdots \ge \lambda_{n}^{\bM_n}$ be the set of eigenvalues of $\bM_n$ and let 
\begin{equation*}
f^{\bM_n}=\frac{1}{n-1}\sum_{i=2}^{n} \delta_{\lambda_i^{\bM_n}}
\end{equation*}
be the empirical spectral measure of the last $n-1$ eigenvalues.

Then, standard interlacing theorems (see \cite[Section 4.3]{horn2012matrix}) yield that $f^{\bM_n}$ almost surely converges weakly to the probability law of $Z$. Furthermore, by using the characterization \eqref{eq:equivlemma1}, we can show that
\begin{equation}
\lambda_1^{\bM_n} \stackrel{\mathclap{\mbox{\footnotesize a.s.}}}{\longrightarrow}\lambda_\mu=Q^{-1}(1/\mu),
\end{equation}
where $Q^{-1}$ is the inverse of the function
\begin{equation*}
Q(\lambda) = {\mathbb E}\left\{\frac{Z^2\cdot |G|^2}{\lambda-Z}\right\}.
\end{equation*}
The proof of these results is the same as the proof of Proposition 3 in \cite{lulispectral_arxiv}. 

Note that $Q(\lambda)$ is defined for $\lambda\in(\tau, \infty)$, it is continuous and strictly decreasing with $Q(\infty)=0$. Furthermore, by hypothesis \eqref{eq:hplemmaub1}, we have that $\lim_{\lambda\to\tau^+}Q(\lambda)=\infty$. Thus, $Q(\lambda)$ admits an inverse and $Q^{-1}(1/\mu)$ is well-defined for all $\mu>0$.

Let us now consider the matrix $\frac{1}{n} \bU \bM_n \bU^*$. First, if $Z\ge 0$, then $\bM_n$ is positive semi-definite (PSD) and we can apply results from \cite{baiyaospike2012} to compute the limit of $L_n(\mu)$. If $\bM_n$ is not necessarily PSD, we use Lemma \ref{lemma:baiyaonopsd} with $\alpha_*=\lambda_{\mu}$ to conclude that 
\begin{equation}\label{eq:gen}
\begin{split}
L_n(\mu) \stackrel{\mathclap{\mbox{\footnotesize a.s.}}}{\longrightarrow}\psi_{\delta}(\lambda_\mu), \quad \quad\hspace{0.4em}&\mbox{if } \psi_{\delta}'(\lambda_\mu) >0,\\
L_n(\mu) \stackrel{\mathclap{\mbox{\footnotesize a.s.}}}{\longrightarrow}\min_{\lambda>\tau}\psi_{\delta}(\lambda), \quad &\mbox{if } \psi_{\delta}'(\lambda_\mu) \le 0.
\end{split}
\end{equation}

The remaining part of the proof follows the argument of Section III-D in \cite{lulispectral_arxiv}. For the sake of readability, we reproduce it below.

We start by proving the first claim of the lemma. For $n\ge 1$, let $\mu_n$ be the unique solution to the fixed-point equation \eqref{eq:fixedpoint}. Then,
\begin{equation*}
L_n(\mu_n)-1/\mu_n=a_n.
\end{equation*}
Now, fix any $\mu>0$. Then, by using the definition \eqref{eq:defzeta} and the fact that $\lambda_\mu=Q^{-1}(1/\mu)$, \eqref{eq:gen} immediately implies that, as $n\to\infty$, 
\begin{equation}\label{eq:asconv}
L_n(\mu) -1/\mu\stackrel{\mathclap{\mbox{\footnotesize a.s.}}}{\longrightarrow}\zeta_{\delta}(Q^{-1}(1/\mu))-1/\mu.
\end{equation}
Note that, as $n\to \infty$, $a_n \stackrel{\mathclap{\mbox{\footnotesize a.s.}}}{\longrightarrow}{\mathbb E}(Z\cdot |G|^2)$. Furthermore, as $L_n(\mu)$ and $\zeta_{\delta}(\mu)$ are non-decreasing, the two functions on both sides of \eqref{eq:asconv} are strictly increasing. Consequently, by Lemma 3 in Appendix E of \cite{lulispectral_arxiv}, we conclude that
\begin{equation}\label{eq:convmustar}
\mu_n \stackrel{\mathclap{\mbox{\footnotesize a.s.}}}{\longrightarrow} \mu^*,
\end{equation}
where $\mu^*$ is the unique fixed point such that
\begin{equation}\label{eq:zetafp}
\zeta_{\delta}(Q^{-1}(1/\mu^*)) = {\mathbb E}(Z\cdot |G|^2) + 1/\mu^*.
\end{equation}
Define
\begin{equation}\label{eq:lambdas}
\lambda^* = Q^{-1}(1/\mu^*).
\end{equation}
Then, \eqref{eq:zetafp} can be rewritten as
\begin{equation}\label{eq:claim1}
\zeta_{\delta}(\lambda^*) = {\mathbb E}(Z\cdot |G|^2) +Q(\lambda^*)=\phi(\lambda^*),
\end{equation}
where $\phi$ is defined in \eqref{eq:defphi2}. By construction, $\zeta_{\delta}(\lambda)$ is a non-decreasing continuous function on $(\tau, \infty)$ and $\phi(\lambda)$ is a strictly decreasing continuous function. Furthermore, by hypothesis \eqref{eq:hplemmaub1}, we have that $\lim_{\lambda\to \tau^+}\phi(\lambda)=\infty$. Hence, the existence and the uniqueness of $\lambda^*$ satisfying \eqref{eq:claim1} is guaranteed. This suffices to prove the first claim of the lemma. 

Let us now move on to the proof of the second claim of the lemma. Suppose that $\zeta_{\delta}(Q^{-1}(1/\mu))$ is differentiable at $\mu=\mu^*$. Then, as $L_n(\mu)$ is convex for any $n\ge 1$, by Lemma 4 in Appendix E of \cite{lulispectral_arxiv}, we have that
\begin{equation*}
\partial_{-}L_n(\mu_n)\stackrel{\mathclap{\mbox{\footnotesize a.s.}}}{\longrightarrow}\frac{{\rm d}\zeta_{\delta}(Q^{-1}(1/\mu))}{{\rm d}\mu}\bigg |_{\mu=\mu^*} = \frac{-\zeta_{\delta}'(Q^{-1}(1/\mu^*))}{Q'(Q^{-1}(1/\mu^*))\cdot (\mu^*)^2}.
\end{equation*}
Similarly, 
\begin{equation*}
\partial_{+}L_n(\mu_n)\stackrel{\mathclap{\mbox{\footnotesize a.s.}}}{\longrightarrow} \frac{-\zeta_{\delta}'(Q^{-1}(1/\mu^*))}{Q'(Q^{-1}(1/\mu^*))\cdot (\mu^*)^2}.
\end{equation*}
By using \eqref{eq:prop2-2}, we obtain that 
\begin{equation*}
|\langle\hat{\bx}, \be_1\rangle|^2\stackrel{\mathclap{\mbox{\footnotesize a.s.}}}{\longrightarrow} \frac{\zeta_{\delta}'(Q^{-1}(1/\mu^*))}{\zeta_{\delta}'(Q^{-1}(1/\mu^*))-Q'(Q^{-1}(1/\mu^*))}=\frac{\zeta_{\delta}'(\lambda^*)}{\zeta_{\delta}'(\lambda^*)-\phi'(\lambda^*)},
\end{equation*} 
where the equality follows from the definition \eqref{eq:lambdas} of $\lambda^*$ and from the fact that $Q'(\lambda)=\phi'(\lambda)$. In order to prove the second claim of the lemma, it suffices to note that, by its definition in \eqref{eq:defzeta}, $\zeta_{\delta}'(\lambda)=\psi_{\delta}'(\lambda)$ if $\psi_{\delta}'(\lambda)>0$, and $\zeta_{\delta}'(\lambda)=0$ if $\psi_{\delta}'(\lambda) < 0$.

Finally, let us prove the third claim of the lemma. By using \eqref{eq:prop2-1}, we immediately obtain that $\lambda_1^{\bD_n} = L_n(\mu_n)$. By applying \eqref{eq:convmustar} and Lemma 3 in Appendix E of \cite{lulispectral_arxiv}, we conclude that
\begin{equation*}
\lambda_1^{\bD_n} \stackrel{\mathclap{\mbox{\footnotesize a.s.}}}{\longrightarrow} \zeta_{\delta}(\lambda^*).
\end{equation*}
As $\bP_n$ is obtained by deleting the first row and column of $\bD_n$, by applying Cauchy interlacing theorem (see, e.g., \cite[Theorem 4.3.17]{horn2012matrix}), we also have that
\begin{equation*}
\lambda_2^{\bP_n}\le \lambda_2^{\bD_n}\le \lambda_1^{\bP_n}.
\end{equation*}
Furthermore, the upper edge of the support of the limiting spectral distribution of $\bP_n$ is given by \cite[Section 4]{silsang1995} and \cite[Lemma 3.1]{baiyaospike2012} $$\min_{\lambda > \tau}\psi_{\delta}(\lambda)=\zeta_{\delta}(\bar{\lambda}_{\delta}),$$ where $\bar{\lambda}_{\delta}$ is defined in \eqref{eq:minpsi}. Therefore, 
\begin{equation*}
\lambda_2^{\bD_n} \stackrel{\mathclap{\mbox{\footnotesize a.s.}}}{\longrightarrow} \zeta_{\delta}(\bar{\lambda}_{\delta}),
\end{equation*}
which concludes the proof. 
\end{proof}

At this point, we are ready to prove our spectral upper bound.
 
\begin{proof}[Proof of Theorem \ref{th:upper}]
Note that the normalization of $\bx$ and $\{\ba_i\}_{1\le i \le n}$ required in Lemma \ref{lemma:condub} is different from the normalization required in Theorem \ref{th:upper}. However, the scalar product $\langle \bx, \ba_i\rangle$ is the same and the data matrix $\bD_n$ changes by a factor $d$. Hence, the principal eigenvector $\hat{\bx}$ is not affected by this change in the normalization. 

Let $G\sim \cnormal(0, 1)$, $Y\sim p(\cdot \mid |G|)$ and  $Z = \mathcal T(Y)$, where $p$ is defined in \eqref{eq:defy} and $\mathcal T$ is some pre-processing function that we will choose later on. 
We will assume that the supremum $\tau$ of the support of $Z$ is strictly positive and that conditions \eqref{eq:hplemmaub1} 
are satified, and  will verify later that our choice of the function $\mathcal T$ satisfies these requirements.
Recall that the function $\psi_{\delta}(\lambda)$ defined in \eqref{eq:defpsi} is convex and that it attains its minimum at the point $\bar{\lambda}_{\delta}$. Since by condition \eqref{eq:hplemmaub1}  
$\psi_{\delta}(\lambda)\uparrow \infty$ as $\lambda\downarrow 0$, we have $\bar{\lambda}_{\delta}\in (\tau,\infty)$. Hence, $\psi'_{\delta}(\bar{\lambda}_{\delta}) =0$. 
By calculating the derivative of $\psi_{\delta}(\lambda)$ and setting it to $0$, we have
\begin{equation}\label{eq:condub1}
{\mathbb E}\left\{\frac{Z^2}{(\bar{\lambda}_\delta-Z)^2}\right\}=\frac{1}{\delta}.
\end{equation}
Furthermore, as pointed out in Remark \ref{rmk:diffreg}, \eqref{eq:corrub} holds for some $\epsilon > 0$ if  and only if 
\begin{equation}\label{eq:condub2}
\phi(\bar{\lambda}_{\delta}) > \psi_{\delta}(\bar{\lambda}_{\delta}). 
\end{equation}

As $\tau>0$, we also have that $\bar{\lambda}_{\delta}>0$. Consider now the matrix $\bD_n'=\bD_n/\alpha$ for some $\alpha>0$. Then, the principal eigenvector of $\bD_n'$ is equal to the principal eigenvector of $\bD_n$. Hence, we can assume without loss of generality that $\bar{\lambda}_{\delta}=1$. Consequently, the conditions \eqref{eq:condub1} and \eqref{eq:condub2} can be respectively rewritten as
\begin{equation}\label{eq:condubnew1}
{\mathbb E}\left\{\frac{Z^2}{(1-Z)^2}\right\}=\frac{1}{\delta},
\end{equation}
\begin{equation}\label{eq:condubnew2}
{\mathbb E}\left\{\frac{Z(|G|^2-1)}{1-Z}\right\} >\frac{1}{\delta}.
\end{equation}
Furthermore, as $Z=\mathcal T(Y)$, we also obtain that
\begin{equation}\label{eq:compint}
\begin{split}
{\mathbb E}\left\{\frac{Z^2}{(1-Z)^2}\right\} &=\int_{\mathbb R} \left(\frac{\mathcal T (y)}{1-\mathcal T (y)}\right)^2 {\mathbb E}_G \left\{p(y\mid |G|)\right\}\,{\rm d}y,\\
{\mathbb E}\left\{\frac{Z(|G|^2-1)}{1-Z}\right\} &= \int_{\mathbb R} \frac{\mathcal T (y)}{1-\mathcal T (y)} {\mathbb E}_G \left\{p(y\mid |G|)\cdot(|G|^2-1)\right\}\,{\rm d}y.\\
\end{split}
\end{equation}
Let $\mathcal T^*(y)$ be defined in \eqref{eq:defty}. Note that, if we substitute $\mathcal T(y)=\mathcal T^*(y)$ into the RHS of \eqref{eq:compint}, then 
\begin{equation*}
{\mathbb E}\left\{\frac{Z^2}{(1-Z)^2}\right\} = {\mathbb E}\left\{\frac{Z(|G|^2-1)}{1-Z}\right\} = \frac{1}{\delta_{\rm u}},
\end{equation*}
where $\delta_{\rm u}$ is defined in \eqref{eq:defdeltau}. Let $\mathcal T_\delta^*(y)$ be defined in \eqref{eq:deftydelta}. Then, 
\begin{equation*}
\frac{\mathcal T_\delta^*(y)}{1-\mathcal T_\delta^*(y)} = \sqrt{\frac{\delta_{\rm u}}{\delta}}\frac{\mathcal T^*(y)}{1-\mathcal T^*(y)},
\end{equation*}
which immediately implies that 
\begin{equation}\label{eq:expvalTdeltastar1}
{\mathbb E}\left\{\frac{(\mathcal T_\delta^*(Y))^2}{(1-\mathcal T_\delta^*(Y))^2}\right\}=\frac{1}{\delta},
\end{equation}
\begin{equation}\label{eq:expvalTdeltastar2}
{\mathbb E}\left\{\frac{\mathcal T_\delta^*(Y)(|G|^2-1)}{1-\mathcal T_\delta^*(Y)}\right\} = \frac{1}{\sqrt{\delta\cdot \delta_{\rm u}}} >\frac{1}{\delta}.
\end{equation}
 As a result, we need to show that the function $\mathcal T_\delta^*(y)$ fulfills the following requirements:
\begin{enumerate}[(1)]
\item $\mathcal  T_\delta^*(y)$ is bounded;
\item $\mathbb P(\mathcal  T_\delta^*(Y)=0)<1$;
\item the supremum $\tau$ of the support of $\mathcal  T_\delta^*(Y)$ is strictly positive;
\item the condition \eqref{eq:hplemmaub1} holds.
\end{enumerate}

Note that $\mathcal T_\delta^*(y)$ is bounded, as $\mathcal T^*(y)\le 1$. Furthermore, if
\begin{equation}\label{eq:badcond}
{\mathbb E}_{G}\left\{p(y\mid |G|)\right\}={\mathbb E}_{G}\left\{p(y\mid |G|)|G|^2\right\},
\end{equation}
identically, then $\delta_{\rm u}=\infty$ and the claim of Theorem \ref{th:upper} trivially holds. Hence, we can assume that \eqref{eq:badcond} does not hold, which implies that the function $\mathcal T^*$ is not equal to the constant value $0$. Consequently, $\mathbb P(\mathcal T_\delta^*(Y)=0)<1$. 
 
By definition \eqref{eq:defty} of $\mathcal T^*$, we have that 
\begin{equation}
{\mathbb E}_Y\left\{\frac{1}{1-\mathcal T^*(Y)}\right\} = \int_{\mathbb R} {\mathbb E}_{G}\left\{p(y\mid |G|)\cdot |G|^2\right\}\,{\rm d}y = {\mathbb E}_G\left\{|G|^2\right\}=1.
\end{equation} 
Hence, $\mathbb P(\mathcal T^*(Y)>0)>0$, which implies that $\mathbb P(\mathcal T_\delta^*(Y)>0)>0$. Consequently, the supremum $\tau$ of the support of $\mathcal T_\delta^*(Y)$ is strictly positive.

If $\mathbb P(\mathcal T_\delta^*(Y)=\tau)>0$, then the condition \eqref{eq:hplemmaub1} is satisfied. Suppose now that $\mathbb P(\mathcal T_\delta^*(Y)=\tau)=0$. Then, for any $\epsilon_1 >0$, there exists $\Delta_1(\epsilon_1)$ such that
\begin{equation}\label{eq:condDelta1}
0 < \mathbb P \bigl(\mathcal T_\delta^*(Y)\in (\tau-\Delta_1(\epsilon_1), \tau)\bigr)\le \epsilon_1.
\end{equation} 
Define
\begin{equation}\label{eq:defTyfin}
\mathcal T_{\delta}^*(y, \epsilon_1)=\left\{\begin{array}{ll}
\mathcal T_\delta^*(y), & \quad\mbox{if } \mathcal T_\delta^*(y)\le \tau-\Delta_1(\epsilon_1),\\
\\
\tau-\Delta_1(\epsilon_1), & \quad\mbox{otherwise}.\\
\end{array}\right.
\end{equation}
Clearly, the random variable $\mathcal T_{\delta}^*(Y, \epsilon_1)$ has a point mass, hence the condition \eqref{eq:hplemmaub1} is satisfied.

As a final step, we show that we can take $\epsilon_1=0$. Define
\begin{equation*}
\bD_n(\epsilon_1) = \frac{1}{n}\sum_{i=1}^n \mathcal T_{\delta}^*(y_i, \epsilon_1) \ba_i \ba_i^*.
\end{equation*}
Define also 
\begin{equation*}
\bD_n = \frac{1}{n}\sum_{i=1}^n \mathcal T_\delta^* (y_i) \ba_i \ba_i^*.
\end{equation*}
Let $\hat{\bx}(\epsilon_1)$ and $\hat{\bx}$ be the principal eigenvectors of $\bD_n(\epsilon_1)$ and of $\bD_n$, respectively. Then,
\begin{equation}
\norm{\bD_n(\epsilon_1)-\bD_n}_{\rm op} \le C_1 \cdot \Delta_1(\epsilon_1),
\end{equation}
where the constant $C_1$ depends only on $n/d$. By Lemma \ref{lemma:condub}, there is a strictly positive gap, call it $\theta$, between the first and the second eigenvalue of $\bD_n(\epsilon_1)$. Consequently, by the Davis-Kahan theorem \cite{davis1970rotation}, we conclude that 
\begin{equation}
\norm{\hat{\bx}(\epsilon_1)-\hat{\bx}}_2\le C_2  \cdot \Delta_1(\epsilon_1),
\end{equation}
where the constant $C_2$ depends only on $n/d$ and on $\theta$. In words, for any $n$, as $\epsilon_1$ tends to $0$, the principal eigenvector of $\bD_n(\epsilon_1)$ tends to the principal eigenvector of $\bD_n$. This means that we can set $\mathcal T = \mathcal T_\delta^*$ and have that, almost surely, \eqref{eq:corrub} holds.

In order to conclude the proof, it remains to show that $\delta_{\rm u}$ is the optimal threshold for the spectral method, namely, for any $\delta < \delta_{\rm u}$, there is no pre-processing function $\mathcal T$ such that, \eqref{eq:corrub} holds almost surely. To do so, note that,  \eqref{eq:corrub} holds almost surely if and only if \eqref{eq:condubnew1} and \eqref{eq:condubnew2} are satisfied. By setting $u(y) = \mathcal T(y)/(1-\mathcal T(y))$ and using \eqref{eq:compint}, we have that these conditions can be rewritten as
\begin{equation}\label{eq:compintbis}
\int_{\mathbb R} (u(y))^2 {\mathbb E}_G \left\{p(y\mid |G|)\right\}\,{\rm d}y=\frac{1}{\delta},
\end{equation}
\begin{equation}\label{eq:compintter}
{\mathbb E}\left\{\frac{Z(|G|^2-1)}{1-Z}\right\} = \int_{\mathbb R} u(y) \sqrt{{\mathbb E}_G \left\{p(y\mid |G|)\right\}}\frac{{\mathbb E}_G \left\{p(y\mid |G|)\cdot(|G|^2-1)\right\}}{\sqrt{{\mathbb E}_G \left\{p(y\mid |G|)\right\}}}\,{\rm d}y>\frac{1}{\delta}.
\end{equation}
By Cauchy-Schwarz inequality, we also have that
\begin{equation}\label{eq:compintter2}
\begin{split}
\int_{\mathbb R} & u(y) \sqrt{{\mathbb E}_G \left\{p(y\mid |G|)\right\}}\frac{{\mathbb E}_G \left\{p(y\mid |G|)\cdot(|G|^2-1)\right\}}{\sqrt{{\mathbb E}_G \left\{p(y\mid |G|)\right\}}}\,{\rm d}y \\
&\le \sqrt{\int_{\mathbb R} (u(y))^2 {\mathbb E}_G \left\{p(y\mid |G|)\right\}\,{\rm d}y}\,\sqrt{\int_{\mathbb R} \frac{\left({\mathbb E}_G \left\{p(y\mid |G|)\cdot(|G|^2-1)\right\}\right)^2}{{\mathbb E}_G \left\{p(y\mid |G|)\right\}}\,{\rm d}y}.
\end{split}
\end{equation}
 By combining \eqref{eq:compintbis}, \eqref{eq:compintter} and \eqref{eq:compintter2} with the definition \eqref{eq:defdeltau} of $\delta_{\rm u}$, we conclude that
\begin{equation}
\frac{1}{\sqrt{\delta_{\rm u}}}\frac{1}{\sqrt{\delta}}>\frac{1}{\delta},
\end{equation}
which implies that $\delta > \delta_{\rm u}$. Consequently, for 
$\delta \leq \delta_{\rm u}$, no pre-processing function achieves weak recovery and the  proof is complete.  
\end{proof}

\begin{remark}[Proof of Spectral Upper Bound for the Real Case]\label{rmk:realub}
First, we need to prove a result analogous to that of Lemma \ref{lemma:condub}, where $\bx \sim \Unif(\Sphere_{\mathbb R}^{d-1})$, $\{\ba_i\}_{1\le i\le n}\sim_{i.i.d.}\normal({\bm 0}_d,\id_d)$, $\by$ is distributed according to \eqref{eq:defyr}, and $G\sim \normal(0, 1)$. To do so, one can follow the proof of Theorem 1 of \cite{lulispectral_arxiv}. The technical difficulty consists in the fact that the matrix $\bM_n$ is not necessarily PSD. In order to solve this issue, we apply the version of Lemma \ref{lemma:baiyaonopsd} for the real case discussed in Remark \ref{rmk:baiyaonopsd} at the end of Appendix \ref{app:baiyaonopsd}. 
At this point, the proof of Theorem \ref{th:upperr} follows from the same argument as the proof of Theorem \ref{th:upper}. 
\end{remark}

%----------------------------------------------------
\section{Comparison with Message Passing Algorithms}\label{sec:amp}
%----------------------------------------------------

\subsection{Motivation and Background}

Message passing algorithms have proved successful in a broad range of statistical estimation problems, including high-dimensional regression \cite{BayatiMontanariLASSO},
robust regression \cite{highdimDonoho}, low-rank matrix estimation \cite{deshpande2013finding,krzakala2013phase,montanari2016non,kabashima2016phase}, 
and network structure estimation \cite{decelle2011asymptotic,mossel2014belief,mossel2016density}. 
A bold conjecture from statistical physics suggests that -- for these and other problems -- message passing approaches achieve optimal statistical performances among polynomial-time
algorithms. In view of this conjecture, it is interesting to compare our spectral approach to message passing algorithms. We will present two types of results
(with $\delta_{\rm u}$ the spectral threshold defined in \eqref{eq:defdeltaur}):
\begin{enumerate}
\item We prove that, for $\delta<\delta_{\rm u}$ (i.e. in the regime in which the spectral approach fails), message passing converges to an un-informative fixed point, even if initialized in a state
that is correlated with the true signal $\bx$.
\item Vice versa, for $\delta>\delta_{\rm u}$ (when the spectral algorithm achieves weak recovery), we consider a linearized message passing algorithm, and prove that the un-informative
fixed point is unstable. The proof of this fact builds on the analysis contained in the previous pages. 
\end{enumerate}
Let us point out that the techniques described in Section \ref{sec:spectralproof} to compute the spectral threshold $\delta_{\rm u}$ are different from those described in this section to analyze message passing algorithms. Hence, we find very interesting the fact that the spectral threshold is closely related to the performance of message passing. In particular, our findings suggest the conjecture that $\delta_{\rm u}$ represents the fundamental limit for all polynomial-time algorithms.

Note also that message passing often allows to further refine the spectral estimate, in order to provide an exact recovery of the signal. Hence, combining the analyses of message passing and of the spectral method to provide a threshold for exact recovery constitutes an interesting direction for future research (see \cite{montanari2017estimation}
for an example in which this program is carried out).

For the sake of simplicity, we will assume that the signal $\bx$ and the measurement matrix $\bA$ are real. Of particular interest for the present setting is approximate message passing (AMP) \cite{DMM09,BM-MPCS-2011}: this is a broad class of iterative methods 
that operates with dense random matrices (as the sensing matrix $\bA$ in the present case). 
In particular, in \cite{RanganGAMP} it was proposed a ``generalized approximate message passing'' (GAMP) scheme, which is an AMP algorithm for Bayesian estimation
in non-linear regression models. This approach was further developed in the context of phase retrieval in \cite{schniter2015compressive}.
 We will follow the same Bayesian formulation here, by considering an AMP algorithm that is equivalent to GAMP although somewhat simpler.

In order to minimize technical overhead, we assume throughout this section that the conditional density $p(y \mid g)$ is bounded and two times differentiable with respect to $g$.  
Denote by $\partial_{g}p(y\mid g)$ and $\partial^2_{g}p(y\mid g)$ the first and the second derivative of $p(y\mid g)$, respectively. Let $G\sim \normal(0, 1)$ and define the function
\begin{equation}\label{eq:FGdef}
\sF(x,y;\baq) = \frac{\mathbb E_G\{\partial_{g}p(y\mid \baq\, x+\sqrt{\baq}G )\}}{\mathbb E_G\{ p(y\mid  \baq\, x+\sqrt{\baq}G )\}}\, .
\end{equation}
We further define the following ``state evolution'' recursion:
\begin{equation}\label{eq:SE}
\begin{split}
\mu_{t+1} & = \delta\cdot h(q_t)\, ,\\
q_{t} & = \frac{\mu_t}{1+\mu_t}\, ,
\end{split}
\end{equation}
where 
\begin{equation}\label{eq:defhfun}
h(q) = \int_{\mathbb R} \mathbb E_{G_0}\left\{\frac{\big(\mathbb E_{G_1} \{\partial_{g} p(y\mid\sqrt{q}G_0+\sqrt{1-q}G_1)\}\big)^2}{\mathbb E_{G_1}\{p(y\mid\sqrt{q}G_0+\sqrt{1-q}G_1)\}}\right\}\, \de y\, ,
\end{equation}
with $G_0, G_1\sim_{i.i.d.} \normal(0, 1)$.

Given the sensing  matrix $\bA = (\ba_1, \ldots, \ba_n)^{\sT}\in \mathbb R^{n\times d}$, and the vector of measurements $\by = (y_1, \ldots, y_n)\in \mathbb R^n$,
the message passing algorithm updates iteratively the estimate $\bz^t\in\reals^d$ of the signal $\bx\in \reals^d$, with $\norm{\bx}_2=\sqrt{d}$, according to the iteration
\begin{equation}\label{eq:GAMP}
\begin{split}
\bz^{t+1} & = \bA^{\sT} f_t(\hbz^t;\by) - \sb_t \bz^t\, ,\\
\hbz^t & = \bA\bz^t- f_{t-1}(\hbz^{t-1};\by)\, .
\end{split}
\end{equation}
Here, the function $f_t(\hbz;\by) = (f_t(\hz_1;y_1),\dots,f_t(\hz_n;y_n))$ is understood to be applied component-wise to 
its arguments and $\sb_t$ it is defined as
\begin{equation}
f_t(\hz;y) =  \sF(\hz,y;1-q_t)\, ,\label{eq:GAMP_Fdef}
\end{equation}
and the ``Onsager coefficient'' $\sb_t$ is defined as
\begin{equation}
\sb_t = \delta \cdot \E\{ f'_t(\mu_t G_0+\sqrt{\mu_t} G_1;Y)\}\, ,\label{eq:OnsDef}
\end{equation}
where $f_t'(\hz;y)$ denotes the derivative of $f_t(\hz;y)$ with respect to $\hz$, and the expectation is with respect to $G_0,G_1\sim_{i.i.d.}\normal(0,1)$ and $Y\sim p(\,\cdot\,|G_0)$. 
The recursion (\ref{eq:GAMP}) is initialized with $\bz^{0}\in\reals^d$ and it is understood
that $f_{-1}(\cdot\, ;\cdot) = \bzero_n$.

State evolution precisely tracks the asymptotics of AMP. The next statement is a consequence of \cite{BM-MPCS-2011,javanmard2013state}.
We refer to Appendix  \ref{app:AMPfail} for its proof.
\begin{lemma}[State Evolution for AMP Iteration \eqref{eq:GAMP}]\label{lemma:SE}
Let $\bx\in \mathbb R^d$ denote the unknown signal such that $\norm{\bx}_2 = \sqrt{d}$, $\bA = (\ba_1, \ldots, \ba_n)^{\sT}\in \mathbb R^{n\times d}$ with $\{\ba_i\}_{1\le i\le n}\sim_{i.i.d.}\normal(\bzero_d,\id_d/d)$, 
and $\by = (y_1, \ldots, y_n)$ with $y_i\sim p(\cdot \mid \langle \bx, \ba_i \rangle)$. 
Consider the AMP iterates $\bz^t, \hbz^t$ defined in \eqref{eq:GAMP}, where $f_t(\hat{z}; y)$ and $\sb_t$ are given by \eqref{eq:GAMP_Fdef} and  \eqref{eq:OnsDef}, respectively. Assume that the initialization
$\bz^0$ is independent of $\bA$ and that, almost surely,
\begin{align}
\lim_{n\to\infty}\frac{1}{d}\<\bx,\bz^0\> = \mu_0\, , \;\;\; \lim_{n\to\infty}\frac{1}{d}\|\bz^0\|^2 = \mu_0^2+\mu_0\, .
\end{align}
Let the state evolution recursion $q_t,\mu_t$ be defined as in~(\ref{eq:SE}) with initialization $\mu_0$. Then, for any $t$, and for any function
$\psi:\reals^2\to\reals$ such that $|\psi(\bu)-\psi(\bv)|\le L(1+\|\bu\|_2+\|\bv\|_2)\|\bu-\bv\|_2$ for some $L\in \mathbb R$, we have that, almost surely,
\begin{align}
\lim_{n\to\infty}\frac{1}{n}\sum_{i=1}^n\psi(x_i,z^t_i) = \E\left\{\psi(X_0, \mu_t X_0+ \sqrt{\mu_t}G)\right\}\, ,
\end{align}
where the expectation is taken with respect to $X_0,G\sim_{i.i.d.}\normal(0,1)$.
\end{lemma}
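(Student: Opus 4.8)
The plan is to obtain the lemma by specializing the general state evolution theorem for approximate message passing with rectangular sensing matrices established in \cite{BM-MPCS-2011} and \cite{javanmard2013state}. The first step is to recognize \eqref{eq:GAMP} as an instance of the abstract AMP recursion covered by that theorem: the iteration maintains a sequence $\bz^t\in\reals^d$ and a sequence $\hbz^t\in\reals^n$, coupled through $\bA$ and $\bA^\sT$, with a componentwise nonlinearity $f_t$ on the $n$-dimensional side (and the identity on the $d$-dimensional side) and the Onsager reaction terms $-\sb_t\bz^t$ and $-f_{t-1}(\hbz^{t-1};\by)$. With this identification the structural hypotheses to check are: that $\bA$ has i.i.d.\ $\normal(0,1/d)$ entries (true by assumption); that $\bz^0$ is independent of $\bA$ with empirical second moments converging to the prescribed limits (exactly the hypothesis of the statement); and that $f_t$ and its derivative are uniformly Lipschitz in their first argument.

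The Lipschitz verification is where the standing assumptions of the section enter. Along \eqref{eq:SE} one has $q_t = \mu_t/(1+\mu_t)\in[0,1)$, so the coefficient $\baq = 1-q_t$ is bounded away from $0$; consequently the argument $\baq x+\sqrt{\baq}\,G$ appearing in \eqref{eq:FGdef} is genuinely Gaussian-smoothed, the denominator $\E_G\{p(y\mid\baq x+\sqrt{\baq}\,G)\}$ is smooth and locally bounded below, and differentiating under the expectation (using that $p(y\mid g)$ is bounded and twice differentiable in $g$) shows that $x\mapsto\sF(x,y;\baq)$ and its derivative are Lipschitz uniformly in $y$ and in the finitely many time indices used.

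The second step is to match the scalar state evolution. The general theorem furnishes, for each $t$, a jointly Gaussian description of the pair $(x_i,z^t_i)$ whose covariance is governed by a one-dimensional recursion, and I would evaluate this recursion explicitly. Representing the effective input to $f_t$ by $\mu_t X_0+\sqrt{\mu_t}\,G$ with $X_0,G$ independent standard Gaussians, a Gaussian integration-by-parts (Stein) identity turns the general update into $\mu_{t+1}=\delta\cdot\E\{f_t(\mu_t X_0+\sqrt{\mu_t}\,G;Y)^2\}$; a change of variables together with the definition \eqref{eq:FGdef} of $\sF$ and the substitution $q_t=\mu_t/(1+\mu_t)$ then identifies the right-hand side with $\delta\cdot h(q_t)$, $h$ as in \eqref{eq:defhfun}. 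The same Stein identity shows that the Onsager coefficient prescribed by the general theorem equals \eqref{eq:OnsDef} and that the ``signal'' and ``noise'' parameters of $z^t_i$ coincide (both equal $\mu_t$), which reflects the Bayes-optimal form of $f_t$; in particular $\mu_t\ge0$ throughout, so $\sqrt{\mu_t}$ is well defined. Feeding this back into the general theorem yields convergence of empirical averages of pseudo-Lipschitz test functions of $(\bx,\bz^t)$ to the stated Gaussian expectation, which is the claim.

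I expect the principal obstacle to be the regularity step: one must control the score-type nonlinearities $f_t=\sF(\,\cdot\,,\cdot\,;1-q_t)$ and their derivatives — the latter enter the Onsager term — with Lipschitz constants uniform over the time horizon, starting only from boundedness and twice-differentiability of $p(y\mid g)$; the Gaussian-smoothing argument sketched above is the heart of this, and some extra care is needed if $p$ is not bounded away from $0$. A secondary, purely bookkeeping, point is to keep the two normalizations ($1/n$ versus $1/d$, related by $n/d\to\delta$) consistent when transcribing the conclusion of the general theorem into the form stated here.
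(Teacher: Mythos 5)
Your high-level plan coincides with the paper's: invoke the general state evolution framework of \cite{BM-MPCS-2011, javanmard2013state}, then observe that the Bayes-optimal choice $f_t=\sF(\cdot,\cdot\,;1-q_t)$ forces the two-parameter state evolution to collapse to the single-parameter recursion \eqref{eq:SE}. Your second step (the identification $\mu_t=\tau_t^2$, i.e.\ ``signal and noise parameters coincide,'' via Stein's identity) is the right idea and matches what the paper does by induction.

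There is, however, a genuine gap in your first step. You treat $\by$ as componentwise side information attached to $f_t$ and then simply assert that \eqref{eq:GAMP} ``is an instance of the abstract AMP recursion covered by that theorem.'' But the general state evolution theorems require that any external randomness entering the factor-side nonlinearity be independent of the sensing matrix $\bA$, and here $y_i\sim p(\cdot\mid\langle\ba_i,\bx\rangle)$ depends on $(\bA\bx)_i$, so $\by$ is \emph{not} independent of $\bA$. A naive application of \cite{javanmard2013state} therefore does not go through. The paper resolves this by first proving a separate general lemma (Lemma~\ref{lemma:SEgen}): it writes $y_i=\cH(w_i;\langle\ba_i,\bx\rangle)$ with $w_i\sim_{i.i.d.}\Unif([0,1])$ independent of $(\bA,\bx)$, then introduces a two-column extended state $\bs^t=[\bz^t\mid\bzero_d]$, $\hbs^t=[\hbz^t\mid\bA\bx]$, with factor-side nonlinearity $\hh_t(\hat{s}_1,\hat{s}_2;w)=[f_t(\hat{s}_1;\cH(w;\hat{s}_2))\mid 0]$, so that $\bA\bx$ is carried along as a coordinate of the AMP state and $\by$ is reconstructed from the tracked state and the genuinely independent randomness $\bw$. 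Only after this reduction are the hypotheses of \cite{javanmard2013state}[Proposition~5] satisfied. Your proposal skips this reduction entirely, which is not mere bookkeeping; without it the cited theorem does not apply.
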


Informally, this lemma states that $\bz^t$ is a noisy version of the signal $\bx$, namely $\bz^t\approx \mu_t \,\bx +\sqrt{\mu_t}\, \bg$, with 
$\bg\sim \normal(\bzero_d,\id _d)$, and that this approximation holds for empirical averages. 

\subsection{Results}

In order to obtain a non-vanishing weak recovery threshold, we assume that the observation model satisfies the condition
\begin{equation}\label{eq:condfixedpt}
\E_G\{\partial_{g}p(y\mid G)\}=0\, ,
\end{equation}
where the expectation is with respect to $G\sim\normal(0,1)$. Notice that this implies $h(0) = 0$, therefore $\mu_t=q_t=0$ is a fixed point of state evolution.
Furthermore, $\sF(0,y;1) = 0$, therefore $q_t=0$, $\bz^t=\bzero_d$ is a fixed point of the message passing algorithm. We will refer to this as to the ``un-informative fixed point''.
Note that the condition \eqref{eq:condfixedpt} holds -- among others -- for the phase retrieval problem.

Vice versa, if $\E_G\{\partial_{g}p(y\mid G)\}\neq 0$, then $\mu_1>0$ even if $\mu_0=0$, for any $\delta>0$. Thanks to Lemma \ref{lemma:SE}, this implies that
weak recovery is possible for all $\delta>0$. Hence, we will assume that the condition \eqref{eq:condfixedpt} holds.

The first result of this section establishes the following: for $\delta < \delta_{\rm u}$, the message passing algorithm fails even if the initial condition has a positive correlation with the unknown signal.
We refer to  Appendix \ref{app:AMPfail} for its proof.
\begin{theorem}[Message Passing Fails for $\delta < \delta_{\rm u}$]\label{th:spectralAMPfail}
Let $\bx\in \mathbb R^d$ denote the unknown signal such that $\norm{\bx}_2 = \sqrt{d}$. Let $\bA = (\ba_1, \ldots, \ba_n)^{\sT}\in \mathbb R^{n\times d}$ with $\{\ba_i\}_{1\le i\le n}\sim_{i.i.d.}\normal(\bzero_d,\id_d/d)$, 
and $\by = (y_1, \ldots, y_n)$ with $y_i\sim p(\cdot \mid \langle \bx, \ba_i \rangle)$. Let $n/d\to \delta$ and define $\delta_{\rm u}$ as in \eqref{eq:defdeltaur}. Let $G\sim \normal(0, 1)$ 
and assume that the condition~(\ref{eq:condfixedpt}) holds for any $y\in \mathbb R$.

Consider the AMP algorithm defined in~(\ref{eq:GAMP}), and assume that the initial condition $\bz^{0}$ is such that
\begin{equation}
\lim_{n\to \infty}\frac{\langle\bz^{0}, \bx\rangle}{\norm{\bz^0}_2 \norm{\bx}_2} = \epsilon.
\end{equation}
Then, for any $\delta < \delta_{\rm u}$, there exists $\epsilon_0(\delta)$ such that for any $\epsilon \in (0, \epsilon_0(\delta))$, almost surely,
\begin{equation}
\begin{split}
\lim_{t\to \infty}\lim_{n\to \infty}\frac{1}{d}\norm{\bz^{t}}_2 &=\b0_d.
\end{split}
\end{equation}
\end{theorem}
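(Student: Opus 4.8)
The plan is to reduce the whole statement to the scalar state evolution recursion \eqref{eq:SE} and to show that the un-informative fixed point $\mu=0$ is locally attracting precisely when $\delta<\delta_{\rm u}$. First I would apply Lemma \ref{lemma:SE} with the test function $\psi(u,v)=v^2$ (which is pseudo-Lipschitz of order two) to obtain, for each fixed $t$, that $\frac1d\norm{\bz^t}_2^2\to\E\{(\mu_t X_0+\sqrt{\mu_t}\,G)^2\}=\mu_t^2+\mu_t$ almost surely. Hence it suffices to control the deterministic sequence $(\mu_t)$. The initialization hypothesis that the normalized correlation of $\bz^0$ with $\bx$ tends to $\epsilon$ translates, via the normalization built into Lemma \ref{lemma:SE}, into $q_0=\epsilon^2$, i.e. $\mu_0=\epsilon^2/(1-\epsilon^2)$. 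Eliminating $q_t$ from \eqref{eq:SE} leaves the one-dimensional map $\mu_{t+1}=\Phi(\mu_t)$ with $\Phi(\mu)=\delta\,h\!\big(\mu/(1+\mu)\big)$ and $h$ defined in \eqref{eq:defhfun}.

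The heart of the argument is the behaviour of $h$ near $0$. Condition \eqref{eq:condfixedpt} gives $h(0)=0$, hence $\Phi(0)=0$. I would then show $\lim_{q\to0^+}h(q)/q=1/\delta_{\rm u}$, that is, the right-derivative of $h$ at $0$ equals $\int_{\mathbb R}\big(\E_G\{p(y\mid G)(G^2-1)\}\big)^2/\E_G\{p(y\mid G)\}\,\de y$, which is exactly $1/\delta_{\rm u}$ by \eqref{eq:defdeltaur}. To do this cleanly I would rewrite $\sqrt q\,G_0+\sqrt{1-q}\,G_1$ as the second coordinate of a centred Gaussian pair whose correlation with $G_0$ is $\sqrt q$, and Taylor-expand the integrand of \eqref{eq:defhfun} in that correlation coefficient; since $h(0)=0$ the leading contribution is quadratic in the correlation, giving $h(q)=q\int_{\mathbb R}\big(\E_{G_1}\{G_1\,\partial_g p(y\mid G_1)\}\big)^2/\E_{G_1}\{p(y\mid G_1)\}\,\de y+o(q)$, and Gaussian integration by parts applied to $g\mapsto g\,p(y\mid g)$ turns $\E\{G_1\,\partial_g p(y\mid G_1)\}$ into $\E\{p(y\mid G_1)(G_1^2-1)\}$, which yields the claimed identity. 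Boundedness and twice-differentiability of $p$ are what make differentiation under the $y$-integral legitimate.

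With this in hand the conclusion is a soft dynamical-systems step. Since the integrand of \eqref{eq:defhfun} is a ratio of a square and a density, $h\ge 0$, so $\Phi\ge 0$; and since $\frac{d}{d\mu}\big(\mu/(1+\mu)\big)\big|_{\mu=0}=1$, the previous paragraph gives $\lim_{\mu\to0^+}\Phi(\mu)/\mu=\delta/\delta_{\rm u}$. For $\delta<\delta_{\rm u}$ this limit is strictly below $1$, so there is $\mu^\star>0$ with $0\le\Phi(\mu)<\mu$ on $(0,\mu^\star]$. Consequently, whenever $\mu_0\le\mu^\star$, the sequence $(\mu_t)$ is non-increasing, bounded below by $0$, and thus converges to a fixed point of $\Phi$ in $[0,\mu^\star)$, which must be $0$. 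Taking $\epsilon_0(\delta)=\sqrt{\mu^\star/(1+\mu^\star)}$, any $\epsilon\in(0,\epsilon_0(\delta))$ forces $\mu_0\le\mu^\star$, hence $\mu_t\to0$, so that $\lim_{t\to\infty}\lim_{n\to\infty}\frac1d\norm{\bz^t}_2^2=\lim_{t\to\infty}(\mu_t^2+\mu_t)=0$, which is the assertion.

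The step I expect to be the main obstacle is the evaluation of $h'(0^+)$: the dependence of \eqref{eq:defhfun} on $q$ is not smooth at $q=0$ when written through $\sqrt q$, so one must reparametrize via the correlation coefficient (which does depend smoothly on it near $0$) and carefully justify exchanging the small-parameter expansion with the $y$-integration — this is exactly where the regularity assumptions on $p$ are used. Everything downstream of that identity is routine.
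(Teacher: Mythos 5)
Your proposal is correct and follows essentially the same route as the paper's proof: reduce to the scalar state evolution map via Lemma \ref{lemma:SE}, note $h(0)=0$ from \eqref{eq:condfixedpt}, compute the right-derivative $h'(0^+)=1/\delta_{\rm u}$ by Taylor-expanding the numerator in $x=\sqrt q$ (your ``correlation coefficient'' is precisely this $x$) and Gaussian integration by parts, then conclude local attractivity from $\delta/\delta_{\rm u}<1$. The paper writes the leading coefficient as $\big(\E_{G_1}\{\partial_g^2 p(y\mid G_1)\}\big)^2$ rather than $\big(\E_{G_1}\{G_1\partial_g p(y\mid G_1)\}\big)^2$, but these coincide by one application of Stein's lemma, so the two derivations agree; your explicit monotone-sequence argument for convergence from a small $\mu_0$ merely spells out what the paper leaves at ``the fixed point is stable.''
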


Next, we consider the case $\delta>\delta_{\rm u}$ and we linearize the iteration (\ref{eq:GAMP}) around the non-informative fixed point. 
\begin{lemma}[Linearized AMP Equations]\label{lemma:linGAMP}
Consider the one-step map defined in~(\ref{eq:GAMP}) and assume that condition \eqref{eq:condfixedpt} holds. Define $R_t = \|\bz^t\|_2+\|\hbz^{t-1}\|_2$.
Then, as $R_t\to 0$ and $q_t\to 0$, we obtain
\begin{equation}
\left(\begin{array}{c}
\bz^{t+1}\\
\hbz^t\\
\end{array}\right) =\bL_n \left(\begin{array}{c}
\bz^{t}\\
\hbz^{t-1}\\
\end{array}\right) +o(R_t)+ R_t\, o_{q_t}(1)\, ,
\end{equation}
where $\bL_n\in \mathbb R^{(n+d)\times (n+d)}$ is defined as
\begin{equation}\label{eq:Mlin}
\bL_n = \left(\begin{array}{cc}
\bA^{\sT} \bJ \bA & -\bA^{\sT} \bJ^2\\
\bA & -\bJ\\
\end{array}\right),
\end{equation}
and $\bJ\in \mathbb R^{n\times n}$ is a diagonal matrix with entries $j_{i} =\sF'(0,y_i;1)$ for $i\in [n]$, with $\sF'$ denoting the derivative of $\sF$ with respect to the first argument.
\end{lemma}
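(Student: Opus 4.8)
The plan is to substitute $f_t(\hz;y)=\sF(\hz,y;1-q_t)$ into the two lines of~\eqref{eq:GAMP} and Taylor-expand $\sF$ in its first argument around $\hz=0$, using that condition~\eqref{eq:condfixedpt} forces $\sF(0,y;1)=0$. Introducing the vectors $\bc_t=\big(\sF(0,y_i;1-q_t)\big)_{i\in[n]}$, the diagonal matrices $\bJ_t=\mathrm{diag}\big(\sF'(0,y_i;1-q_t)\big)_{i\in[n]}$, and the second-order remainder $\bm{\rho}_t=f_t(\hbz^t;\by)-\bc_t-\bJ_t\hbz^t$, the second line of~\eqref{eq:GAMP} reads $\hbz^t=\bA\bz^t-\bJ_{t-1}\hbz^{t-1}-\bc_{t-1}-\bm{\rho}_{t-1}$, and inserting this into $\bz^{t+1}=\bA^{\sT}\big(\bc_t+\bJ_t\hbz^t+\bm{\rho}_t\big)-\sb_t\bz^t$ gives
\begin{equation*}
\bz^{t+1}=\bA^{\sT}\bJ_t\bA\,\bz^t-\bA^{\sT}\bJ_t\bJ_{t-1}\hbz^{t-1}-\sb_t\bz^t+\bA^{\sT}(\bc_t-\bJ_t\bc_{t-1})+\bA^{\sT}(\bm{\rho}_t-\bJ_t\bm{\rho}_{t-1})\, .
\end{equation*}
Comparing with $\bL_n\,(\bz^t,\hbz^{t-1})^{\sT}$ from~\eqref{eq:Mlin}, I would then reduce the lemma to four claims: (a) $\bJ_t$ and $\bJ_{t-1}$ can be replaced by $\bJ=\mathrm{diag}(\sF'(0,y_i;1))$ up to an $R_t\,o_{q_t}(1)$ error; (b) the Onsager correction $\sb_t\bz^t$ is $R_t\,o_{q_t}(1)$; (c) the constant terms $\bA^{\sT}(\bc_t-\bJ_t\bc_{t-1})$ are $o(R_t)$; (d) the remainder terms $\bA^{\sT}(\bm{\rho}_t-\bJ_t\bm{\rho}_{t-1})$ are $o(R_t)$. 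Throughout I would use that, on a high-probability event, $\|\bA\|_{\mathrm{op}}=O(1)$, and that $\|\bz^t\|_2,\|\hbz^{t-1}\|_2,\|\hbz^t\|_2$ are all $O(R_t)$.

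For (a), the standing boundedness and differentiability hypotheses on $p(y\mid g)$, together with dominated convergence, give $\sF'(0,\cdot\,;1-q_t)\to\sF'(0,\cdot\,;1)$ in $L^2$ of the law of $Y$ as $q_t\to0$, hence $\tfrac1n\sum_i\big(\sF'(0,y_i;1-q_t)-j_i\big)^2=o_{q_t}(1)$; writing $\bA^{\sT}\bJ_t\bA\,\bz^t=\bA^{\sT}\bJ\bA\,\bz^t+\bA^{\sT}\big((\bJ_t-\bJ)(\bA\bz^t)\big)$ and bounding the last term by Cauchy--Schwarz against the empirical distribution of the coordinates of $\bA\bz^t$ (which has bounded second moment by Lemma~\ref{lemma:SE}) yields the claimed error, and similarly for the $\bJ_t\bJ_{t-1}$ block and for $(\bJ_{t-1}-\bJ)\hbz^{t-1}$. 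For (b), differentiating twice under the integral and using~\eqref{eq:condfixedpt} identifies $\sF'(0,y;1)=\E_G\{\partial_g^2 p(y\mid G)\}/\E_G\{p(y\mid G)\}$, so by~\eqref{eq:OnsDef}, as $\mu_t\to0$ and $q_t\to0$, $\sb_t\to\delta\int_{\mathbb R}\E_G\{\partial_g^2 p(y\mid G)\}\,\de y$, which vanishes because $\int_{\mathbb R}\partial_g^2 p(y\mid g)\,\de y=\partial_g^2\!\int_{\mathbb R}p(y\mid g)\,\de y=0$ for every $g$; hence $\|\sb_t\bz^t\|_2\le|\sb_t|\,R_t=R_t\,o_{q_t}(1)$.

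For (d), the first-order Taylor remainder obeys $\rho_t(\hz;y)=o(\hz)$ as $\hz\to0$ under the twice-differentiability hypothesis, so $\|\bm{\rho}_t\|_2=o(\|\hbz^t\|_2)=o(R_t)$ (and likewise for $\bm{\rho}_{t-1}$), and $\bA^{\sT}(\bm{\rho}_t-\bJ_t\bm{\rho}_{t-1})=o(R_t)$. For (c), a first-order expansion of $\baq\mapsto\sF(0,y;\baq)$ at $\baq=1$, again using $\sF(0,y;1)=0$, shows $\sF(0,y;1-q_t)=O(q_t)$, so $\|\bc_t\|_2=\sqrt n\cdot O(q_t)$; crucially, near the un-informative fixed point the state-evolution recursion of Lemma~\ref{lemma:SE} slaves $q_t$ to the iterates, $q_t\le\mu_t=O(\|\bz^t\|_2^2/n)=O(R_t^2/n)$, so that $\|\bc_t\|_2=O(R_t^2/\sqrt n)$ and $\bA^{\sT}(\bc_t-\bJ_t\bc_{t-1})=o(R_t)$. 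Substituting (a)--(d) into the displayed identity above turns it into $\bz^{t+1}=\bA^{\sT}\bJ\bA\,\bz^t-\bA^{\sT}\bJ^2\hbz^{t-1}+o(R_t)+R_t\,o_{q_t}(1)$, and the second line becomes $\hbz^t=\bA\bz^t-\bJ\hbz^{t-1}+o(R_t)+R_t\,o_{q_t}(1)$ in the same way; together these are exactly $(\bz^{t+1},\hbz^t)^{\sT}=\bL_n(\bz^t,\hbz^{t-1})^{\sT}$ up to the stated error.

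The main obstacle will be step (c): the raw constant term $\bA^{\sT}\bc_t$ has norm of order $\sqrt n\cdot o_{q_t}(1)$, which does not by itself scale with $R_t$, so the estimate only closes by invoking Lemma~\ref{lemma:SE} to control $q_t$ in terms of $\|\bz^t\|_2$ along the trajectory --- making precise that $q_t=O(R_t^2/n)$ near the fixed point is the delicate point. A secondary nuisance is that bounding $(\bJ_t-\bJ)$ against the iterates in step (a) cannot go through the operator norm $\max_i|\sF'(0,y_i;1-q_t)-j_i|$ alone; one has to exploit the $L^2$-convergence together with the limiting empirical distributions of the iterate coordinates supplied by Lemma~\ref{lemma:SE}, and make the $o_{q_t}(1)$ bounds uniform over the high-probability event for $\bA$. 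These steps are standard in the AMP literature but require care; everything else is routine Taylor expansion plus Gaussian-matrix concentration.
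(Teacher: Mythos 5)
Your argument follows essentially the same route as the paper's proof: Taylor-expand $\sF(\hz,y;1-q_t)$ at $\hz=0$, use $\sF(0,y;1)=0$ from~\eqref{eq:condfixedpt} together with the integral identity $\int_{\mathbb R}\E_G\{\partial_g^2 p(y\mid G)\}\,\de y=0$ to get $\sb_t=o_{q_t}(1)$, and then substitute the second line of~\eqref{eq:GAMP} into the first. What you add --- correctly --- is an explicit treatment of the zeroth-order term $\bA^{\sT}\boldsymbol{c}_t$ with $\boldsymbol{c}_t=\big(\sF(0,y_i;1-q_t)\big)_{i\in[n]}$: since $\sF(0,y;1-q_t)$ is not identically zero when $q_t>0$, this vector has norm of order $\sqrt{n}\cdot o_{q_t}(1)$ and does \emph{not} by itself fit the bound $o(R_t)+R_t\,o_{q_t}(1)$; the estimate closes only because along the state-evolution trajectory of Lemma~\ref{lemma:SE} one has $q_t\asymp\mu_t\asymp\|\bz^t\|_2^2/d$, which makes $\bA^{\sT}\boldsymbol{c}_t=O(R_t^2/\sqrt d)=o(R_t)$. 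The paper's displayed linearizations~\eqref{eq:GAMPlin1}--\eqref{eq:GAMPlin2} simply omit this term, so your step (c) genuinely fills a gap in the exposition rather than inventing work. One minor simplification: in your step (a), since $n$ and $\by$ are held fixed while $R_t,q_t\to 0$, the crude sup bound $\max_{i\in[n]}\big|\sF'(0,y_i;1-q_t)-\sF'(0,y_i;1)\big|=o_{q_t}(1)$ (by continuity of $\baq\mapsto\sF'(0,y;\baq)$) already gives $\|(\bJ_t-\bJ)\bA\bz^t\|_2\le o_{q_t}(1)\,\|\bA\|_{\rm op}\|\bz^t\|_2 = R_t\,o_{q_t}(1)$, so the $L^2$/Cauchy--Schwarz machinery against the empirical distribution is not needed at this finite-$n$ level.
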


The second result of this section establishes the following: for $\delta > \delta_{\rm u}$, the un-informative fixed point is unstable for the iteration (\ref{eq:GAMP}), i.e., the matrix $\bL_n$ has an eigenvalue that is larger than 1 in modulus. To do so, we will relate the matrix $\bJ$ appearing in \eqref{eq:Mlin} to the optimal pre-processing function defined in \eqref{eq:deftyr} (see equation \eqref{eq:JGrel} in Appendix \ref{app:AMPsucc}). We refer to Appendix \ref{app:AMPsucc} for the complete proof.
\begin{theorem}[Message Passing Escapes from Un-informative Fixed Point for $\delta > \delta_{\rm u}$]\label{th:spectralAMPsucc}
Let $\bx\in \mathbb R^d$ denote the unknown signal such that $\norm{\bx}_2 = \sqrt{d}$. Let $\bA = (\ba_1, \ldots, \ba_n)^{\sT}\in \mathbb R^{n\times d}$ with $\{\ba_i\}_{1\le i\le n}\sim_{i.i.d.}\normal({\bm 0}_d,\id_d/d)$, and $\by= (y_1, \ldots, y_n)$ with $y_i\sim p(\cdot \mid \langle \bx, \ba_i \rangle)$. Let $n/d\to \delta$ and define $\delta_{\rm u}$ as in \eqref{eq:defdeltaur}. Furthermore, assume that \eqref{eq:condfixedpt} holds for any $y\in \mathbb R$. Define $\bL_n\in \mathbb R^{(n+d)\times (n+d)}$ as in \eqref{eq:Mlin}, where $\bJ\in \mathbb R^{n\times n}$ is a diagonal matrix with entries $j_i =\sG(0,y_i;1)$ for $i\in [n]$. Then, the eigenvalues of $\bL_n$ are real and the largest of them, call it $\lambda_1^{\bL_n}$, is such that, for any $\delta > \delta_{\rm u}$, 
\begin{equation}
\lim_{n\to\infty}\lambda_1^{\bL_n} > 1.
\end{equation}
\end{theorem}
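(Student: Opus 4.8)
\emph{Plan.} The strategy is to eliminate the $n$-dimensional block of $\bL_n$, turning ``$\bL_n$ has an eigenvalue larger than $1$'' into a fixed-point problem for a one-parameter family of real symmetric $d\times d$ matrices, and then to recognize that family as a deformation of the \emph{optimal} spectral matrix of Theorem~\ref{th:upperr}, so that Lemma~\ref{lemma:condub} does the heavy lifting.

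First I would establish the relation \eqref{eq:JGrel} between $\bJ$ and the optimal pre-processing function. By \eqref{eq:condfixedpt} the numerator of $\sF$ in \eqref{eq:FGdef} vanishes at $(0,y;1)$, so differentiating in the first argument gives $j_i=\sG(0,y_i;1)=\E_G\{\partial^2_g p(y_i\mid G)\}/\E_G\{p(y_i\mid G)\}$; two Gaussian integrations by parts rewrite $\E_G\{\partial^2_g p(y\mid G)\}=\E_G\{p(y\mid G)(G^2-1)\}$, which by \eqref{eq:deftyr} equals $\mathcal T^*(y)/(1-\mathcal T^*(y))$. Writing $u(y):=\mathcal T^*(y)/(1-\mathcal T^*(y))$, we have $j_i=u(y_i)$. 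Next, for real $\lambda\neq 0$ with $\lambda\notin\{-j_1,\dots,-j_n\}$, I would solve the eigenvalue equation for $\bL_n$ in \eqref{eq:Mlin}: the bottom block gives $\hbz=(\bJ+\lambda\bI_n)^{-1}\bA\bz$, and substituting this into the top block and using $\bJ-\bJ^2(\bJ+\lambda\bI_n)^{-1}=\lambda\bJ(\bJ+\lambda\bI_n)^{-1}$ reduces it to $\bB_n(\lambda)\,\bz=\bz$, where $\bB_n(\lambda):=\bA^\sT\bD(\lambda)\bA$ and $\bD(\lambda)$ is the diagonal matrix with entries $u(y_i)/(u(y_i)+\lambda)$. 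Hence $\lambda\in\mathrm{spec}(\bL_n)$ iff $1\in\mathrm{spec}(\bB_n(\lambda))$; since $\bB_n(\lambda)$ is real symmetric, the eigenvalues of $\bL_n$ obtained this way are real, which (together with a direct check that $\lambda=0$ and $\lambda=-j_i$ cannot be the largest eigenvalue) gives the reality statement.

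The key step is the choice of $\lambda$. Set $\lambda_\star:=\sqrt{\delta/\delta_{\rm u}}$, which exceeds $1$ precisely because $\delta>\delta_{\rm u}$. A short manipulation of \eqref{eq:deftydeltar} shows $u(y)/(u(y)+\lambda_\star)=\mathcal T_\delta^*(y)$, so $\bB_n(\lambda_\star)=\sum_{i=1}^n\mathcal T_\delta^*(y_i)\ba_i\ba_i^\sT=(n/d)\,\widetilde{\bD}_n$, where $\widetilde{\bD}_n$ is exactly the matrix of the real version of Lemma~\ref{lemma:condub} built with pre-processing $\mathcal T_\delta^*$ (the factor $n/d$ coming from the normalization $\ba_i\sim\normal(\b0_d,\bI_d/d)$). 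For this pre-processing, the real analogues of the identities from the proof of Theorem~\ref{th:upper} give, with $Z=\mathcal T_\delta^*(Y)$: $\E\{Z^2/(1-Z)^2\}=1/\delta$, so $\bar{\lambda}_{\delta}=1$; $\E\{Z/(1-Z)\}=0$, using $\E\{1/(1-\mathcal T^*(Y))\}=1$; and $\E\{Z(G^2-1)/(1-Z)\}=1/\sqrt{\delta\delta_{\rm u}}$. Therefore $\psi_\delta(1)=1/\delta$ while $\phi(1)=\E\{ZG^2/(1-Z)\}=1/\sqrt{\delta\delta_{\rm u}}>1/\delta=\psi_\delta(\bar{\lambda}_{\delta})$, i.e. we are in the ``good'' regime of Remark~\ref{rmk:diffreg}: $\lambda_\delta^*>\bar{\lambda}_{\delta}=1$ and hence $\zeta_\delta(\lambda_\delta^*)=\psi_\delta(\lambda_\delta^*)>\psi_\delta(1)=1/\delta$. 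By part~(3) of Lemma~\ref{lemma:condub}, $\lambda_1(\bB_n(\lambda_\star))=(n/d)\,\lambda_1(\widetilde{\bD}_n)$ converges almost surely to $\delta\,\zeta_\delta(\lambda_\delta^*)>1$.

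To finish I would run an intermediate-value argument in $\lambda$: on $[\lambda_\star,\infty)$ the map $\lambda\mapsto\bB_n(\lambda)$ is continuous, since every pole $-j_i$ with $j_i<0$ obeys $-j_i=|\mathcal T^*(y_i)|/(1+|\mathcal T^*(y_i)|)<1<\lambda_\star$, and $\lambda_1(\bB_n(\lambda))\to 0$ as $\lambda\to\infty$; combined with $\lambda_1(\bB_n(\lambda_\star))>1$ for all large $n$ (almost surely), this produces a point $\hat\lambda_n>\lambda_\star$ with $\lambda_1(\bB_n(\hat\lambda_n))=1$, hence $\hat\lambda_n\in\mathrm{spec}(\bL_n)$ and $\lambda_1^{\bL_n}\ge\hat\lambda_n>\lambda_\star>1$; letting $n\to\infty$ gives the claim. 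The main obstacle, and the conceptual heart of the proof, is the identification in the previous paragraph: that the block-reduced linearized AMP operator is the spectral matrix $\sum_i\frac{u(y_i)}{u(y_i)+\lambda}\ba_i\ba_i^\sT$, which coincides with the optimal spectral matrix exactly at $\lambda=\sqrt{\delta/\delta_{\rm u}}$; this is what ties the message-passing threshold to $\delta_{\rm u}$. The remaining care is routine: the point-mass hypothesis \eqref{eq:hplemmaub1} for $\mathcal T_\delta^*(Y)$ is handled by the truncation-plus-Davis--Kahan device used in the proof of Theorem~\ref{th:upper}, and one must confirm that at finite $n$ no spurious complex eigenvalue, nor an eigenvalue from $\{0,-j_1,\dots,-j_n\}$, overtakes $\hat\lambda_n$.
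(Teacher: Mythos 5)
Your proposal follows essentially the same route as the paper's proof: eliminate the $\hbz$-block to reduce ``$\lambda\in\mathrm{spec}(\bL_n)$'' to ``$1\in\mathrm{spec}(\bA^\sT\bJ(\bJ+\lambda\bI_n)^{-1}\bA)$,'' use Stein's lemma to identify $\bJ$ with $\mathcal T^*/(1-\mathcal T^*)$, plug in $\lambda_\star=\sqrt{\delta/\delta_{\rm u}}$ to recover the optimal spectral matrix $\widetilde{\bD}_n$ with pre-processing $\mathcal T^*_\delta$, invoke (the real version of) Lemma~\ref{lemma:condub} with the same $\bar\lambda_\delta=1$, $\psi_\delta(1)=1/\delta$, $\phi(1)=1/\sqrt{\delta\delta_{\rm u}}$ computations, and finish by monotonicity/intermediate value. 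The paper phrases the reduction via a Schur-complement determinant identity and the reality claim via a generalized-eigenvalue argument for a symmetric pencil, but these are cosmetic differences from what you propose; your remark that the reality statement ``must be confirmed'' is where you would want to reproduce the paper's generalized-eigenvalue step rather than rely solely on $\bB_n(\lambda)$ being symmetric for real $\lambda$, since that alone does not rule out complex solutions of $\det(\bI_d-\bB_n(\lambda))=0$.
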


%----------------------------------------------------
\section{Numerical Experiments}\label{sec:num}
%----------------------------------------------------

We focus on the phase retrieval problem and present some numerical results to illustrate the performance achieved by the proposed spectral method. First, we consider the case in which the unknown vector is chosen uniformly at random and the sensing vectors are Gaussian. Then, we consider the more practical scenario in which the unknown vector is an image and the sensing vectors come from a coded diffraction model.

%----------------------------------------------------
\subsection{Gaussian Sensing Vectors for Synthetic Data} \label{subsec:gauss}
%----------------------------------------------------

Let us consider the complex case. In our experiments, the vector $\bx$ is chosen uniformly at random on the $d$-dimensional complex sphere with radius $\sqrt{d}$, the sensing vectors $\{\ba_i\}_{1\le i\le n}$ are i.i.d. circularly-symmetric normal with variance $1/d$ and, for $i\in [n]$, the measurement $y_i$ is equal to $|\langle \bx, \ba_i\rangle|^2$. We take $d=4096$ and the numerical simulations are averaged over $n_{\rm sample} = 40$ independent trials. The results are plotted in Figure \ref{fig:compl}.

The red curve corresponds to the proposed pre-processing function given by 
\begin{equation}\label{eq:numpre}
\mathcal T(y) = \frac{y-1}{y+\sqrt{\tilde{\delta}}-1}.
\end{equation}
We pick $\tilde{\delta} = 1.001$ and, as shown by the figure, weak recovery is possible for values of $\delta$ very close to $1$.

The green curve corresponds to the pre-processing function given by 
\begin{equation}\label{eq:numprepos}
\mathcal T(y) = \max\left(\frac{y-1}{y+\sqrt{\tilde{\delta}}-1}, 0\right).
\end{equation}
We add this plot in order to show that, by enforcing non-negativity of the pre-processing function, we incur in a degradation of the performance of the spectral method. 

The black curve corresponds to the pre-processing function given by
\begin{equation}\label{eq:subset}
\mathcal T(y)= \left\{\begin{array}{ll}
\vspace{0.5em}
1, &  \quad\mbox{for }y>t,\\
0, &  \quad\mbox{otherwise}.\\
\end{array}\right.
\end{equation}
This choice was proposed in \cite{wang2016solving} and it is also considered in \cite{lulispectral_arxiv}, where the authors refer to it as the ``subset algorithm''. For each value of $t$, we can compute the smallest value of $\delta$, call it $\delta^*(t)$, that yields a strictly positive scalar product according to the result of Lemma \ref{lemma:condub}. Hence, we pick $t=2$ that corresponds to the smallest value of $\delta^*(t)$ over $t\in \{0.25, 0.5, 0.75, \ldots, 10\}$.

The blue curve corresponds to the pre-processing function given by
\begin{equation}\label{eq:trimming}
\mathcal T(y)= \left\{\begin{array}{ll}
\vspace{0.5em}
y, &  \quad\mbox{for }y\le t,\\
0, &  \quad\mbox{otherwise}.\\
\end{array}\right.
\end{equation}
This choice corresponds to the truncated spectral initialization proposed in \cite{chen2017solving} and it is also considered in \cite{lulispectral_arxiv}, where the authors refer to it as the ``trimming algorithm''. For each value of $t$, we can compute the smallest value of $\delta$, call it $\delta^*(t)$, that yields a strictly positive scalar product according to the result of Lemma \ref{lemma:condub}. Hence, we pick $t=5.25$ that corresponds to the smallest value of $\delta^*(t)$ over $t\in \{0.25, 0.5, 0.75, \ldots, 10\}$.

Note that the numerical simulations follow closely the theoretical prediction given by \eqref{eq:numpred}. Furthermore, the choice of the pre-processing function \eqref{eq:numpre} yields a remarkable performance gain with respect to both the subset algorithm and the trimming algorithm. 

Similar considerations apply to the real case. Here, the vector $\bx$ is chosen uniformly at random on the $d$-dimensional real sphere with radius $\sqrt{d}$ and the sensing vectors $\{\ba_i\}_{1\le i\le n}$ are i.i.d. normal with zero mean and variance $1/d$. We pick $d=4096$ and $n_{\rm sample}=40$. The results are plotted in Figure \ref{fig:real}. Again, the numerical simulations follow closely the theoretical prediction. The red curve corresponds to the pre-processing function given by \eqref{eq:numpre}, where we pick $\tilde{\delta} = 1.001$. Note that weak recovery is possible for values of $\delta$ very close to $1/2$. The green curve corresponds to the pre-processing function given by \eqref{eq:numprepos}. The blue curve corresponds to the pre-processing function given by \eqref{eq:subset}, where we pick $t=2$ which yields the smallest value of $\delta^*(t)$ over $t\in \{0.25, 0.5, 0.75, \ldots, 10\}$. The black curve corresponds to the pre-processing function given by \eqref{eq:trimming}, where we pick $t=7$ which yields the smallest value of $\delta^*(t)$ over $t\in \{0.25, 0.5, 0.75, \ldots, 10\}$.

\begin{figure}[p]
    \centering
    \subfloat[Complex case.\label{fig:compl}]{\includegraphics[width=.65\columnwidth]{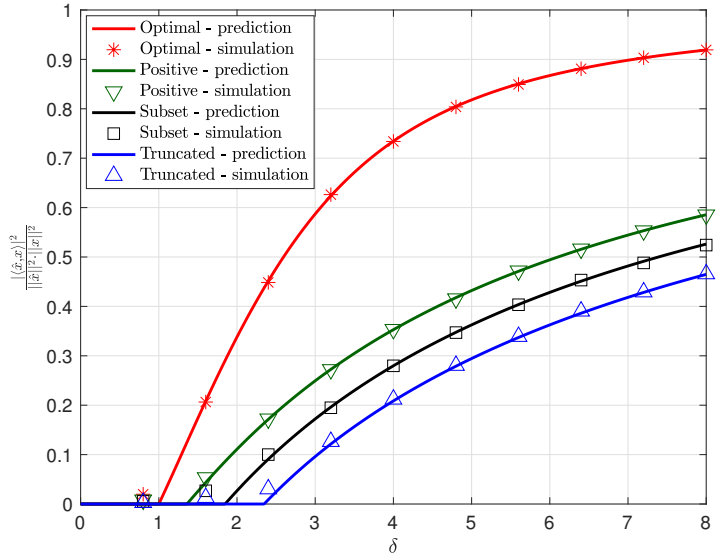}}\\
    \subfloat[Real case.\label{fig:real}]{\includegraphics[width=.65\columnwidth]{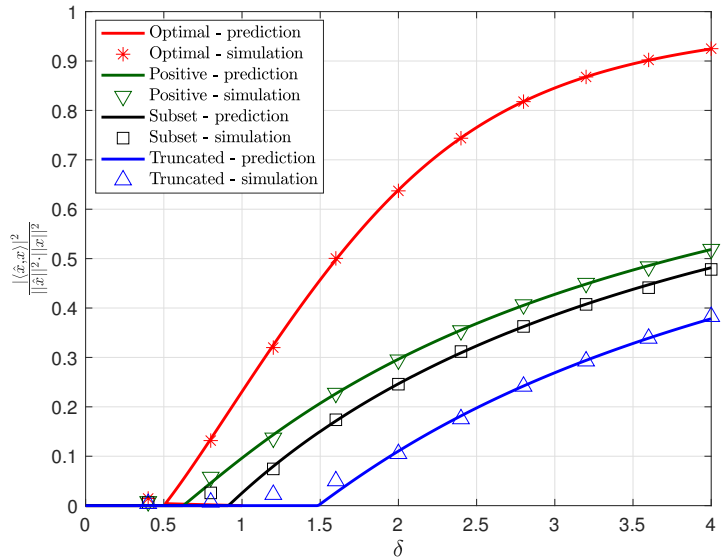}}
\caption{Performance of the spectral method for the phase retrieval problem where the unknown vector is uniformly random on the sphere and the sensing vectors are Gaussian. On the $x$-axis, we have the ratio $\delta$ between the number of samples and the dimension of the signal; on the $y$-axis, we have the square of the normalized scalar product between the unknown signal $\bx$ and the estimate $\hat{\bx}$. Note that the proposed choice of the pre-processing function (red curve) provides a significant performance improvement with respect to the subset algorithm considered in \cite{wang2016solving, lulispectral_arxiv} (black curve) and the truncated spectral initialization considered in \cite{chen2017solving, lulispectral_arxiv} (blue curve).}
\label{fig:gauss}
\end{figure}

%----------------------------------------------------
\subsection{Coded Diffraction Model for Natural Images} \label{subsec:natimg}
%----------------------------------------------------

\begin{figure}[t]
    \centering
\includegraphics[width=.8\columnwidth]{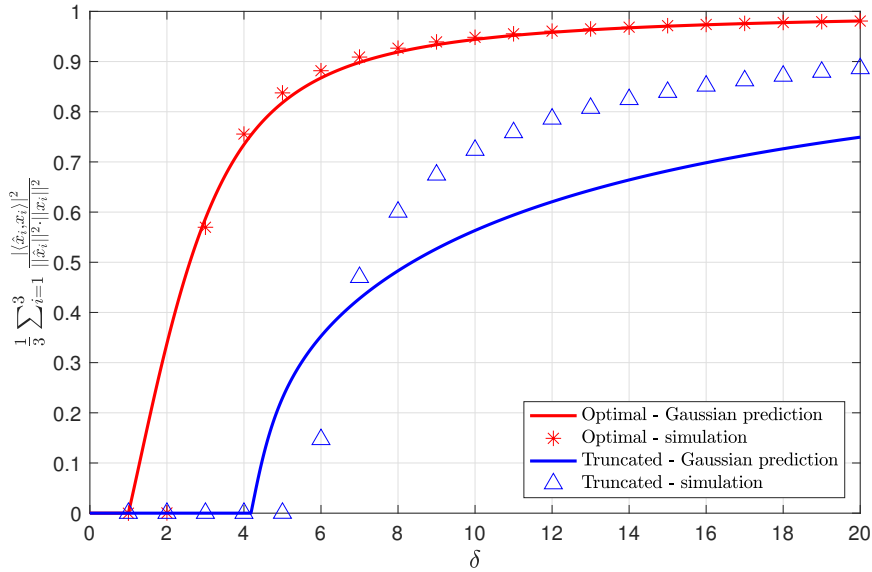}\\
\caption{Performance of the spectral method for the phase retrieval problem where the unknown vector is a digital photograph and the sensing vectors are obtained from a coded diffraction model. On the $x$-axis, we have the ratio $\delta$ between the number of samples and the dimension of the signal; on the $y$-axis, we have the square of the normalized scalar product between the unknown signal $\bx$ and the estimate $\hat{\bx}$ (averaged on the three RGB components of the image). Note that the proposed choice of the pre-processing function (red curve) provides a significant performance improvement with respect to the truncated spectral initialization considered in \cite{chen2017solving} (blue curve).}
\label{fig:venscal}
\end{figure}

We consider a model of coded diffraction patterns in which the sensing vectors $\{\ba_r\}_{1\le r \le n}$ are obtained as follows. For $t_1\in [d_1]$ and $t_2\in [d_2]$, denote by $a_r(t_1, t_2)$ the $(t_1, t_2)$-th component of the vector $\ba_r\in \mathbb C^{d}$, with $d=d_1\cdot d_2$. Then,
\begin{equation}
a_r(t_1, t_2) = d_{\ell}(t_1, t_2)\cdot e^{i2\pi k_1 t_1/d_1}\cdot e^{i2\pi k_2 t_2/d_2},
\end{equation}
where $i$ denotes the imaginary unit. The index $r\in [n]$ is associated to a pair $(\ell, k)$, with $\ell\in [L]$; the index $k\in [d]$ is associated to a pair $(k_1, k_2)$ with $k_1\in [d_1]$ and $k_2\in [d_2]$. As usual, the measurement $y_r$ of an unknown $d$-dimensional vector $\bx$ is equal to $|\langle \bx, \ba_r\rangle|^2$. As an immediate consequence, the number of measurements $n$ is equal to $L\cdot d$, therefore $\delta=L\in \mathbb N$. In words, for a fixed $\ell$, we collect the magnitude of the diffraction pattern of $\bx$ modulated by $\bd_{\ell}$. By varying $\ell$ and changing the modulation pattern $\bd_\ell$, we generate $L$ distinct views. The vectors $\{\bd_{\ell}\}_{1\le \ell\le L}$ are i.i.d. and their entries are also i.i.d. drawn uniformly from the set $\{1, -1, i, -i\}$. 

We test the spectral method on the digital photograph represented in Figure \ref{fig:ven}. Each color image can be viewed as a $d_1 \times d_2 \times 3$ array. We run the spectral algorithm separately on the vectors $\bx_j\in \mathbb R^{d}$, where $j\in\{1, 2, 3\}$. In our example, $d_1 = 820$ and $d_2=1280$. Let $\hat{\bx}_j$ be the estimate of $\bx_j$ provided by the spectral method. Then, we employ as a performance metric the average squared normalized scalar product
\begin{equation}
\frac{1}{3}\sum_{j=1}^3 \frac{|\langle\hat{\bx}_j, \bx_j\rangle|^2}{\norm{\hat{\bx}_j}^2 \norm{\bx_j}^2}.
\end{equation}
Note that the scalar product between the input and the measurement vectors can be interpreted as a 2-dimensional Fourier transform, hence it can be computed with an FFT algorithm. In order to evaluate the principal eigenvector of the data matrix, we use the power method with a random initialization, as described in Appendix B of \cite{candes2015wirt}. As a stopping criterion, we require that one of the following two conditions is fulfilled: either the number of iterations reaches the maximum value of $10000$, or the modulus of the scalar product between the estimate at the current iteration $T$ and at the iteration $T-10$ is larger than $1-10^{-7}$.  

The results are summarized in Figure \ref{fig:venscal}. The red curve corresponds to the proposed pre-processing function. In this case, the eigenvalues of the data matrix can be negative. Recall that the power method outputs the eigenvector associated to the largest eigenvalue \emph{in modulus}, while we are interested in the eigenvector associated to the largest eigenvalue. To address this issue, we add to the data matrix a multiple $\alpha$ of the identity matrix. However, as $\alpha$ grows, the convergence of the power method becomes slower and slower. In order to reduce the negative tail of the distribution of eigenvalues and, consequently, the value of $\alpha$, we pick the pre-processing function given by
\begin{equation}
\mathcal T_1(y) = \max(\mathcal T(y), -M),
\end{equation}
where $\mathcal T(y)$ is defined in \eqref{eq:numpre}, $\tilde{\delta} = 1.001$, and $M=40$. In this way, by taking $\alpha = 100$, the largest eigenvalue in modulus has positive sign.

The blue curve corresponds to the truncated spectral initialization in \cite{chen2017solving}, i.e., the pre-processing function is given by \eqref{eq:trimming} with $t=9$.

The numerical simulations for the optimal pre-processing function follow closely the theoretical predictions \eqref{eq:numpred} obtained for a Gaussian measurement matrix, with the exception of the point $\delta=2$. On the contrary, the numerical simulations for the truncated spectral initialization show a different behavior with respect to the Gaussian model. Our algorithm provides weak recovery of the original image for $\delta\ge 3$, while the truncated spectral initialization requires $\delta\ge 6$. Furthermore, for any value of $\delta$, the proposed choice of the pre-processing function yields a better performance than the choice in \cite{chen2017solving}. For a visual representation of these results, see Figure \ref{fig:venrec}. 

%----------------------------------------------------
\section*{Acknowledgement}
%----------------------------------------------------

M.~M. was supported by an Early Postdoc.Mobility fellowship from the Swiss National Science Foundation. A. M. was partially supported by grants NSF DMS-1613091 and NSF CCF-1714305. 

\appendix

%----------------------------------------------------
\section{Proof of Corollary \ref{cor:lower}}\label{app:noiseless}
%----------------------------------------------------

We start by providing in Lemma \ref{lemma:expl} a less compact, but more explicit form of the expression \eqref{eq:fnorm}. This more explicit expression is employed to prove Lemma \ref{lemma:deltalnoiseless}, which yields the value of $\delta_{\ell}$ for the case of phase retrieval. Finally, we provide the proof of Corollary \ref{cor:lower}.

\begin{lemma}[Explicit Formula for $f(m)$ - Complex Case]\label{lemma:expl}
Consider the function $f:[0, 1]\to \mathbb R$ defined in \eqref{eq:fnorm}. Then, $f(m)$ is given by the following expression:
\begin{equation}\label{eq:f}
f(m) = \bigints_{\!\!\mathbb R}\frac{\displaystyle\frac{1}{1-m}\int_0^{+\infty}\displaystyle\int_0^{+\infty}4r_1 r_2 \cdot p(y\mid r_1) p(y\mid r_2) \cdot  \exp\left(-\frac{r_1^2+r_2^2}{1-m}\right)\cdot I_0\left(\frac{2r_1r_2\sqrt{m}}{1-m}\right){\rm d}r_1{\rm d}r_2}{\displaystyle\int_0^{+\infty}2r \cdot p(y\mid r) \cdot  \exp\left(-r^2\right){\rm d}r} \,{\rm d}y,
\end{equation}
where $I_0$ denotes the modified Bessel function of the first kind, given by
\begin{equation}\label{eq:bessel}
I_0(x) = \frac{1}{\pi}\int_0^\pi \exp\left(x\cos \theta\right){\rm d}\theta.
\end{equation}
\end{lemma}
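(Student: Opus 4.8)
The plan is to start from the definition \eqref{eq:fnorm} and compute separately the numerator and the denominator of the integrand by passing from the complex Gaussian variables $G, G_1, G_2$ to their moduli. For the denominator, $G\sim\cnormal(0,1)$ means $G = \Re(G) + i\,\Im(G)$ with $\Re(G),\Im(G)\sim_{\rm i.i.d.}\normal(0,1/2)$, so $|G|$ has a Rayleigh-type law: writing $r = |G|$, the density of $r$ is $2r\,e^{-r^2}$ on $[0,\infty)$. Hence ${\mathbb E}_G\{p(y\mid|G|)\} = \int_0^\infty 2r\,p(y\mid r)\,e^{-r^2}\,{\rm d}r$, which is exactly the denominator in \eqref{eq:f}.

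For the numerator I would first reduce to the case of real correlation coefficient: as already noted in the paper right after \eqref{eq:fnorm}, the phase rotation $(G_1,G_2)\mapsto(e^{i\theta_1}G_1,e^{i\theta_2}G_2)$ leaves the integrand invariant and maps $c\mapsto ce^{i(\theta_1-\theta_2)}$, so we may take $c = \sqrt m \ge 0$. Then $(G_1,G_2)$ is jointly complex Gaussian with covariance $\left[\begin{smallmatrix}1 & \sqrt m\\ \sqrt m & 1\end{smallmatrix}\right]$; equivalently one can represent $G_2 = \sqrt m\,G_1 + \sqrt{1-m}\,G'$ with $G'\sim\cnormal(0,1)$ independent of $G_1$. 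The next step is to write the joint density of the pair of moduli $(r_1,r_2) = (|G_1|,|G_2|)$. Passing to polar coordinates in $\complex^2$ and integrating out the two angular variables produces the modified Bessel function $I_0$ through the standard identity $\frac1{2\pi}\int_0^{2\pi} e^{x\cos\phi}\,{\rm d}\phi = I_0(x)$ (equivalently \eqref{eq:bessel}). Concretely, the joint density of $(r_1,r_2)$ is
\begin{equation*}
\frac{4 r_1 r_2}{1-m}\exp\!\left(-\frac{r_1^2+r_2^2}{1-m}\right) I_0\!\left(\frac{2 r_1 r_2\sqrt m}{1-m}\right),
\end{equation*}
which is exactly the Rician-type bivariate law appearing in the numerator of \eqref{eq:f}. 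Substituting this density and the Rayleigh density for $|G|$ back into \eqref{eq:fnorm}, using Fubini to interchange the $y$-integral with the $(r_1,r_2)$-integrals, and noting that $p(y\mid|G_j|)$ depends on $G_j$ only through $r_j = |G_j|$, yields the claimed formula \eqref{eq:f}.

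The main technical point — the only step that is more than bookkeeping — is the derivation of the bivariate modulus density with the $I_0$ factor. I would get it by writing the density of $(G_1,G_2)$ on $\complex^2\cong\reals^4$ explicitly as $\frac{1}{\pi^2\det\Sigma}\exp(-\bv^*\Sigma^{-1}\bv)$ with $\Sigma = \left[\begin{smallmatrix}1 & \sqrt m\\ \sqrt m & 1\end{smallmatrix}\right]$ and $\det\Sigma = 1-m$, substituting $G_j = r_j e^{i\phi_j}$ so that the Jacobian contributes $r_1 r_2$, expanding the quadratic form, and integrating out $\phi_1,\phi_2$. The cross term in $\bv^*\Sigma^{-1}\bv$ is $-\frac{2\sqrt m}{1-m}\Re(G_1\bar G_2) = -\frac{2\sqrt m}{1-m} r_1 r_2\cos(\phi_1-\phi_2)$; integrating $e$ to this quantity over the angles gives $4\pi^2 I_0\!\big(\tfrac{2 r_1 r_2\sqrt m}{1-m}\big)$ (one angle free, one giving $I_0$), and the $\pi^2$ in the denominator of the Gaussian density cancels, leaving the stated expression. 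The remaining manipulations (Fubini, cancellation of normalizing constants against the denominator) are routine, so once the Bessel identity is in place the lemma follows.
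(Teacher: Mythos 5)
Your proposal is correct and follows essentially the same route as the paper's proof: compute the Rayleigh-type density of $|G|$ for the denominator, pass to polar coordinates for $(G_1,G_2)$, and integrate out the angular variables via the Bessel identity to obtain the Rician-type bivariate density of $(|G_1|,|G_2|)$. The only cosmetic difference is that you first reduce to real $c=\sqrt m$ and work directly with the $2\times 2$ complex Gaussian density $\frac{1}{\pi^2\det\Sigma}e^{-\bv^*\Sigma^{-1}\bv}$, whereas the paper carries a general complex $c$ through the $4\times 4$ real covariance and only sets $(\Re c,\Im c)=(|c|\cos\theta_c,|c|\sin\theta_c)$ at the end — the computations are equivalent.
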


\begin{proof}
Let us rewrite $G$ as
\begin{equation*}
G = G^{(\rm R)}+ j G^{(\rm I)}, \quad \mbox{with }(G^{(\rm R)}, G^{(\rm I)})\sim \normal\left({\bm 0}_d, \frac{1}{2}\bI_2\right),
\end{equation*}
i.e., $G^{(\rm R)}$ and $G^{(\rm I)}$ are i.i.d. Gaussian random variables with mean 0 and variance $1/2$. Set
\begin{equation*}
R = \sqrt{\left(G^{(\rm R)}\right)^2+\left(G^{(\rm I)}\right)^2}.
\end{equation*}
Then, $R$ follows a Rayleigh distribution with scale parameter $1/\sqrt{2}$, hence 
\begin{equation}\label{eq:expA}
{\mathbb E}_{G}\left\{ p(y\mid |G|)\right\} ={\mathbb E}_{R}\left\{ p(y\mid R)\right\} = \int_0^{+\infty}2r \cdot p(y\mid r) \cdot  \exp\left(-r^2\right){\rm d}r.
\end{equation}

Let us rewrite $(G_{1}, G_{2})$ as
\begin{equation*}
(G_{1}, G_{2}) = (G_{1}^{(\rm R)}+ j G_{1}^{(\rm I)}, G_{2}^{(\rm R)}+ j G_{2}^{(\rm I)}),
\end{equation*}
with
\begin{equation*}
(G_{1}^{(\rm R)}, G_{2}^{(\rm R)}, G_{1}^{(\rm I)}, G_{2}^{(\rm I)})\sim \normal\left({\bm 0}_d, \frac{1}{2}\left[\begin{array}{llll}
1 & \Re{(c)} & 0 & -\Im{(c)} \\
\Re{(c)} & 1 & \Im{(c)} & 0 \\
0 & \Im{(c)} & 1 & \Re{(c)} \\
-\Im{(c)} & 0 & \Re{(c)} & 1 \\
\end{array}\right]\right),
\end{equation*}
and consider the following change of variables:
\begin{equation*}
\left\{\begin{array}{l}
G_{1}^{(\rm R)} = R_1 \cos \theta_1 \\
G_{2}^{(\rm R)} = R_2 \cos \theta_2 \\
G_{1}^{(\rm I)} = R_1 \sin \theta_1 \\
G_{2}^{(\rm I)} = R_2 \sin \theta_2 \\
\end{array} \right..
\end{equation*}
Then, after some algebra, we have that
\begin{equation}\label{eq:int}
\begin{split}
&{\mathbb E}_{G_{1}, G_{2}}\left\{ p(y\mid |G_{1}|)\cdot p(y\mid |G_{2}|)\right\} = \frac{1}{\pi^2(1-|c|^2)}\int_{0}^{+\infty}\int_{0}^{+\infty}\int_{0}^{2\pi}\int_{0}^{2\pi}r_1 r_2 \cdot p(y\mid r_1) p(y\mid r_2) \cdot  \\
&\hspace{7em}\exp\left(-\frac{r_1^2+r_2^2-2r_1r_2\left(\Re{(c)}\cos(\theta_2-\theta_1)-\Im{(c)}\sin(\theta_2-\theta_1)\right)}{1-|c|^2}\right)\,{\rm d}r_1\,{\rm d}r_2\,{\rm d}\theta_1\,{\rm d}\theta_2.
\end{split}
\end{equation}
By writing $(\Re{(c)}, \Im{(c)})=(|c|\cos\theta_c, |c|\sin\theta_c)$ and by using the definition \eqref{eq:bessel}, we can further simplify the RHS of \eqref{eq:int} as 
\begin{equation}\label{eq:int2}
\frac{1}{1-|c|^2}\int_0^{+\infty}\int_0^{+\infty}4r_1 r_2 \cdot p(y\mid r_1) p(y\mid r_2) \cdot  \exp\left(-\frac{r_1^2+r_2^2}{1-|c|^2}\right)\cdot I_0\left(\frac{2r_1r_2|c|}{1-|c|^2}\right){\rm d}r_1{\rm d}r_2.
\end{equation} 
From \eqref{eq:expA} and \eqref{eq:int2}, the claim easily follows.

\end{proof}

\begin{lemma}[Computation of $\delta_{\ell}$ for Phase Retrieval]\label{lemma:deltalnoiseless}
Let $\delta_{\ell}(\sigma^2)$ be defined as in \eqref{eq:defdelta} and assume that the distribution $p(\cdot\mid |G|)$ appearing in \eqref{eq:fnorm} is given by \eqref{eq:defypr}. Then,
\begin{equation}
\lim_{\sigma\to 0}\delta_{\ell}(\sigma^2)=1.
\end{equation}
\end{lemma}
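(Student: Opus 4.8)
The plan is to compute the small-noise limit of $\delta_{\ell}(\sigma^2)$ by analyzing the behavior of $f(m)$ and hence $F_{\delta}(m) = \delta \log f(m) + \log(1-m)$ near $m = 0$, and then arguing that this local analysis determines the threshold in the limit $\sigma \to 0$. Recall from \eqref{eq:defdelta} that $\delta_{\ell}$ is the supremum of $\delta$ for which $F_{\delta}(m) < 0$ on $(0,1]$. Since $F_{\delta}(0) = 0$, a necessary condition is $F_{\delta}'(0^+) \le 0$, i.e.\ $\delta f'(0) \le 1$, so the ``local'' threshold is $\delta_{\rm loc}(\sigma^2) = 1/f'(0)$. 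First I would compute $f'(0)$ in the noiseless limit: using the explicit formula \eqref{eq:f} from Lemma \ref{lemma:expl}, differentiate under the integral sign in $m$ at $m=0$, using $I_0(x) = 1 + x^2/4 + O(x^4)$ and the expansion of $\exp(-(r_1^2+r_2^2)/(1-m))$. This should yield $f'(0) = \int_{\mathbb R} \big({\mathbb E}_G\{p(y\mid|G|)(|G|^2-1)\}\big)^2 / {\mathbb E}_G\{p(y\mid|G|)\}\, {\rm d}y$, which is exactly $1/\delta_{\rm u}$ as defined in \eqref{eq:defdeltau}; hence one should verify that as $\sigma \to 0$ this integral tends to $1$ for the phase retrieval density \eqref{eq:defypr}, making $\delta_{\rm loc}(\sigma^2) \to 1$.

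Second, I would establish the matching lower bound $\liminf_{\sigma\to 0}\delta_{\ell}(\sigma^2) \ge 1$, which requires showing that for $\delta < 1$ (fixed) and $\sigma$ small enough, $F_{\delta}(m) < 0$ on all of $(0,1]$, not just locally near $0$. The idea is to control $f(m)$ globally. In the noiseless limit $p(y\mid r) = \delta_{\rm Dirac}(y - r^2)$, one can compute $f(m)$ in closed form: substituting into \eqref{eq:f}, the $y$-integral collapses, and after a change of variables $u = r^2$ one gets an expression like $f(m) = \frac{1}{1-m}\int_0^\infty e^{-u(1+m)/(1-m)} I_0\big(\frac{2u\sqrt m}{1-m}\big)\,{\rm d}u$, which evaluates via the standard Laplace transform of $I_0$ to $f(m) = 1/(1 - m)$ \emph{up to checking the precise constants}. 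Wait --- one must be careful: the Dirac limit is singular in \eqref{eq:f} since $p(y\mid r_1)p(y\mid r_2)$ forces $r_1 = r_2$; a cleaner route is to keep $\sigma > 0$, write the $y$-integral of the product of two Gaussians centered at $r_1^2$ and $r_2^2$ as a single Gaussian in $(r_1^2 - r_2^2)$ with variance $2\sigma^2$, and take $\sigma \to 0$ to recover $f(m) \to 1/(1-m)$ pointwise on $[0,1)$, with appropriate uniformity. Granting $f(m) = 1/(1-m)$ in the limit, we get $F_{\delta}(m) \to \delta\log\frac{1}{1-m} + \log(1-m) = (\delta - 1)\log\frac{1}{1-m}$, which is $< 0$ for all $m \in (0,1)$ exactly when $\delta < 1$; the endpoint $m=1$ needs a separate finite-$\sigma$ argument since $f(1) = 1 + \int \mathrm{Var}\{p(y\mid|G|)\}/{\mathbb E}\{p(y\mid|G|)\}\,{\rm d}y$ blows up as $\sigma\to 0$, but $\log(1-m)\to-\infty$ there too, so one checks the rates or simply works on $(0, 1-\eta]$ and handles a neighborhood of $1$ by the explicit blow-up of $\log(1-m)$.

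Third, for the upper bound $\limsup_{\sigma\to 0}\delta_{\ell}(\sigma^2) \le 1$, I would exhibit, for any fixed $\delta > 1$, a point $m_* \in (0,1)$ with $F_{\delta}(m_*) \ge 0$ once $\sigma$ is small: again using $f(m) \to 1/(1-m)$, for $\delta > 1$ the function $(\delta-1)\log\frac{1}{1-m}$ is strictly positive on $(0,1)$, so by uniform convergence of $f$ on compact subsets of $[0,1)$ we get $F_{\delta}(m_*) > 0$ for small $\sigma$ at any fixed $m_* \in (0,1)$; hence $\delta_{\ell}(\sigma^2) \le \delta$ for small $\sigma$, and letting $\delta \downarrow 1$ gives the claim. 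Combining the two bounds yields $\lim_{\sigma\to 0}\delta_{\ell}(\sigma^2) = 1$.

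The main obstacle I anticipate is making the convergence $f_\sigma(m) \to 1/(1-m)$ sufficiently uniform, and in particular handling the neighborhood of the singular endpoint $m = 1$ where both $f_\sigma(m)$ and $\log(1-m)$ misbehave, as well as justifying differentiation under the integral sign to pin down $f'(0)$ in the limit. Concretely, I would (i) use the Gaussian convolution structure to write $f_\sigma(m) = \int_{\mathbb R} \frac{\nu_m(y)}{\nu_0(y)}\,{\rm d}y$ where $\nu_m$ and $\nu_0$ are explicit mixtures, (ii) dominate the integrand uniformly in $\sigma$ on $m \in [0, 1-\eta]$ to get locally uniform convergence and convergence of $f_\sigma'(0)$, and (iii) for $m$ near $1$, bound $f_\sigma(m)$ crudely (e.g.\ $f_\sigma(m) \le f_\sigma(1) \le C/\sigma$ or similar) and observe $\delta \log f_\sigma(m) + \log(1-m) \le \delta \log(C/\sigma) + \log(1-m) < 0$ provided $1 - m$ decays faster than any power of $\sigma$, reducing to the compact-$m$ case. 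This mirrors the structure of the argument in Lemma \ref{lemma:deltaulnoiseless} for $\delta_{\rm u}$, so the required estimates should be routine once the convolution representation is in hand.
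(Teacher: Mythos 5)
Your core computation coincides with the paper's: both show that the integrand in the explicit formula \eqref{eq:f} admits a $\sigma\to 0$ limit (via dominated convergence / the Gaussian-convolution structure), and that the resulting noiseless $f(m)$ equals $1/(1-m)$ via the Bessel identity, giving $F_\delta(m)\to(1-\delta)\log(1-m)$. Where you differ is in the final passage from pointwise convergence of $f_\sigma(m)$ to convergence of the threshold $\delta_\ell(\sigma^2)$: the paper closes with ``which implies the desired result'' after the pointwise limit, whereas you correctly observe that the upper bound $\limsup\delta_\ell\le 1$ follows directly from pointwise convergence at a fixed $m_*$, but that the lower bound $\liminf\delta_\ell\ge 1$ needs uniform control on $(0,1]$, including the singular endpoint $m=1$ where $f_\sigma(1)$ blows up. Your plan for the endpoint is sound (one can even note, via the Laguerre expansion of $E_{G_1,G_2}[p(y\mid|G_1|)p(y\mid|G_2|)]$ in powers of $m$, that $f_\sigma(m)$ is increasing in $m$, so $f_\sigma(m)\le f_\sigma(1)$, and $f_\sigma(1)$ grows only polynomially in $1/\sigma$ while $\log(1-m)\to-\infty$ dominates), and the intermediate claim $f'(0)=1/\delta_{\rm u}$ is indeed correct for precisely that Laguerre-expansion reason; it is a genuine bonus observation linking the local second-moment threshold to the spectral one, though not strictly needed once you have the global $f(m)\to 1/(1-m)$ in hand. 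So: same route, but your attention to uniformity and the $m\uparrow 1$ boundary fills in a step the paper leaves implicit.
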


\begin{proof}
For the special case of phase retrieval, it is possible to compute explicitly the function $f(m)$ defined in \eqref{eq:fnorm} and simplified in Lemma \ref{lemma:expl}. Indeed,  
\begin{equation}\label{eq:deltal1}
\begin{split}
\displaystyle\int_0^{+\infty}2r \cdot p_{\rm PR}(y\mid r) \cdot  \exp\left(-r^2\right)\,{\rm d}r & \stackrel{\mathclap{\mbox{\footnotesize(a)}}}{=} \displaystyle\int_0^{+\infty} p_{\rm PR}(y\mid \sqrt{z}) \cdot  \exp\left(-z\right)\,{\rm d}z\\
&\stackrel{\mathclap{\mbox{\footnotesize(b)}}}{=} \displaystyle\int_{-\infty}^{+\infty}\frac{1}{\sigma\sqrt{2\pi}}\exp\left(-\frac{(y-z)^2}{2\sigma^2}\right) \cdot  \exp\left(-z\right)\cdot H(z)\,{\rm d}z\\
&\stackrel{\mathclap{\mbox{\footnotesize(c)}}}{=} {\mathbb E}_Z \left\{\exp(-Z)H(Z)\right\},
\end{split}
\end{equation}
where in (a) we do the change of variables $z= r^2$; in (b) we use the definition \eqref{eq:defypr} and we define $H(x)=1$ if $x\ge 0$ and $H(x)=0$ otherwise; and in (c) we define $Z\sim \normal(y, \sigma^2)$. In the limit $\sigma^2\to 0$, we have that 
\begin{equation}\label{eq:deltal2}
{\mathbb E}_Z \left\{\exp(-Z)H(Z)\right\} \to \exp(-y)\cdot H(y),
\end{equation} 
by Lebesgue's dominated convergence theorem. Similarly,
\begin{equation*}
\begin{split}
\int_0^{+\infty}\displaystyle\int_0^{+\infty} & 4r_1 r_2 \cdot p_{\rm PR}(y\mid r_1) p_{\rm PR}(y\mid r_2)  \cdot  \exp\left(-\frac{r_1^2+r_2^2}{1-m}\right)\cdot I_0\left(\frac{2r_1r_2\sqrt{m}}{1-m}\right){\rm d}r_1{\rm d}r_2 \\
&\stackrel{\mathclap{\mbox{\footnotesize(a)}}}{=} \int_0^{+\infty}\displaystyle\int_0^{+\infty} p_{\rm PR}(y\mid \sqrt{z_1}) p_{\rm PR}(y\mid \sqrt{z_2}) \cdot  \exp\left(-\frac{z_1+z_2}{1-m}\right)\cdot I_0\left(\frac{2\sqrt{z_1 z_2}\sqrt{m}}{1-m}\right){\rm d}z_1{\rm d}z_2 \\
&\stackrel{\mathclap{\mbox{\footnotesize(b)}}}{=} \int_{-\infty}^{+\infty}\displaystyle\int_{-\infty}^{+\infty} \left(\frac{1}{\sigma\sqrt{2\pi}}\right)^2\exp\left(-\frac{(y-z_1)^2+(y-z_2)^2}{2\sigma^2}\right) \\
&\hspace{4em}\cdot  \exp\left(-\frac{z_1+z_2}{1-m}\right)\cdot I_0\left(\frac{2\sqrt{z_1 z_2}\sqrt{m}}{1-m}\right)\cdot H(z_1)H(z_2){\rm d}z_1{\rm d}z_2 \\
&\stackrel{\mathclap{\mbox{\footnotesize(c)}}}{=} {\mathbb E}_{Z_1, Z_2} \left\{\exp\left(-\frac{Z_1+Z_2}{1-m}\right)\cdot I_0\left(\frac{2\sqrt{Z_1 Z_2}\sqrt{m}}{1-m}\right)\cdot H(Z_1)H(Z_2)\right\},
\end{split}
\end{equation*}
where in (a) we do the change of variables $z_1= r_1^2$ and $z_2= r_2^2$; in (b) we use the definition \eqref{eq:defypr} and we define $H(x)=1$ if $x\ge 0$ and $H(x)=0$ otherwise; and in (c) we define $(Z_1, Z_2)\sim_{i.i.d.} \normal(y, \sigma^2)$. In the limit $\sigma^2\to 0$, we have that 
\begin{equation*}
\begin{split}
{\mathbb E}_{Z_1, Z_2} &\left\{\exp\left(-\frac{Z_1+Z_2}{1-m}\right)\cdot I_0\left(\frac{2\sqrt{Z_1 Z_2}\sqrt{m}}{1-m}\right)\cdot H(Z_1)H(Z_2)\right\}\\
&\hspace{12em}\to \exp\left(-\frac{2y}{1-m}\right)\cdot I_0\left(\frac{2y\sqrt{m}}{1-m}\right)\cdot H(y),
\end{split}
\end{equation*}
by Lebesgue's dominated convergence theorem.
As a result, by using \eqref{eq:bessel}, we obtain that 
\begin{equation*}
\begin{split}
f(m)  &\stackrel{\mathclap{\sigma^2\to 0}}{\longrightarrow} \frac{1}{1-m}\int_{0}^{+\infty} \exp\left(-\frac{2y}{1-m}\right)\cdot I_0\left(\frac{2y\sqrt{m}}{1-m}\right)\cdot \exp(y)\,{\rm d}y\\
& = \frac{1}{\pi(1-m)}\int_{0}^{\pi}\int_{0}^{+\infty}\exp\left(-y\left(\frac{1+m-2\sqrt{m}\cos\theta}{1-m}\right)\right)\,{\rm d}y\,{\rm d}\theta\\
&=\frac{1}{\pi}\int_{0}^{\pi}\frac{1}{1+m-2\sqrt{m}\cos\theta}\,{\rm d}\theta = \frac{1}{1-m}.
\end{split}
\end{equation*}
Consequently, 
\begin{equation*}
F_{\delta}(m) \stackrel{\mathclap{\sigma^2\to 0}}{\longrightarrow} (1-\delta)\log(1-m),
\end{equation*}
which implies the desired result.
\end{proof}

\begin{proof}[Proof of Corollary \ref{cor:lower}]
We follow the proof of Theorem \ref{th:lower} presented in Section \ref{sec:mainproof}. The first step is exactly the same, i.e., by applying Lemma \ref{lemma:ent}, we show that \eqref{eq:1step} holds. On the contrary, the second step requires some modifications, since the definition of the error metric is different. In particular, we will prove that 
\begin{equation}\label{eq:2stepnew}
\frac{1}{d^2} {\mathbb E}_{\bY_{1:n}, \bA_{1:n}}\left\{\norm{{\mathbb E}\left\{\bX \bX^*\right\} - {\mathbb E}\left\{\bX \bX^*\mid \bY_{1:n}, \bA_{1:n}\right\}}_{F}^2\right\}=o_n(1).
\end{equation}

Furthermore, we have that 
\begin{equation}\label{eq:3stepnew}
\begin{split}
 \norm{{\mathbb E}\left\{\bX \bX^*\right\} - {\mathbb E}\left\{\bX \bX^*\mid \bY_{1:n}, \bA_{1:n}\right\}}_{F}^2 &+ {\mathbb E}\left\{\norm{\bX \bX^* - {\mathbb E}\left\{\bX \bX^*\mid \bY_{1:n}, \bA_{1:n}\right\}}_{F}^2 \right\}\\
 &\hspace{3em}\stackrel{\mathclap{\mbox{\footnotesize(a)}}}{\ge} {\mathbb E}\left\{\norm{{\mathbb E}\left\{\bX \bX^*\right\} - \bX \bX^*}_{F}^2\right\}\\
&\hspace{3em}\stackrel{\mathclap{\mbox{\footnotesize(b)}}}{=}{\mathbb E}\left\{\norm{\bI_d - \bX \bX^*}_{F}^2\right\}\\
&\hspace{3em}\stackrel{\mathclap{\mbox{\footnotesize(c)}}}{=}{\mathbb E}\left\{{\rm trace}\left(\bI_d - 2\bX \bX^* + \bX \bX^*\bX \bX^*\right)\right\}\\
&\hspace{3em}\stackrel{\mathclap{\mbox{\footnotesize(d)}}}{=}d-2d+d^2=d^2-d,
\end{split}
\end{equation}
where in (a) we use the triangle inequality, in (b) we use that $\mathbb E \left\{\bX \bX^*\right\}=\bI_d$ by Lemma \ref{lemma:unit}, in (c) we use that, for any matrix $\bA$, $\norm{\bA}_F=\sqrt{{\rm trace}(\bA \bA^*)}$, and in (d) we use that ${\mathbb E}\left\{{\rm trace}\left(\bX \bX^*\bX \bX^*\right)\right\}={\mathbb E}\left\{\norm{\bX}^4\right\}=d^2$. By applying \eqref{eq:2stepnew} and \eqref{eq:3stepnew}, the proof of Corollary \ref{cor:lower} is complete. 

Let us now give the proof of \eqref{eq:2stepnew}. Similarly to \eqref{eq:distances}, we have that
\begin{equation}\label{eq:distancesnew}
\begin{split}
&I(Y_{n+1}; \bY_{1:n}, \bA_{1:n}| \bA_{n+1}) \\
&\ge \frac{1}{2K^2}\cdot{\mathbb E}_{\bY_{1:n}, \bA_{1:n+1}}\Biggl\{ \biggl\lvert\int_{\mathbb C^d}p(\bx\mid \bY_{1:n}, \bA_{1:n})\int_{\mathbb R}p_{\rm PR}(y_{n+1}\mid \bx, \bY_{1:n}, \bA_{1:n+1})\varphi_{\rm PR}(y_{n+1}){\rm d}y_{n+1}{\rm d}\bx\\
&\hspace{9em}-\int_{\mathbb C^d}p(\bx)\int_{\mathbb R}p_{\rm PR}(y_{n+1}\mid \bx, \bA_{n+1})\varphi_{\rm PR}(y_{n+1})\,{\rm d}y_{n+1}\,{\rm d}\bx\biggr\rvert^2\Biggr\},\\
\end{split}
\end{equation}
where we define $\varphi_{\rm PR}(x)=x$ for $|x|\le M$, and $\varphi_{\rm PR}(x)=M\cdot {\rm sign}(x)$ otherwise. Then,
\begin{equation}\label{eq:integ1}
\begin{split}
\int_{\mathbb C^d}  p(\bx\mid \bY_{1:n}, \bA_{1:n})&\int_{\mathbb R}p_{\rm PR}(y_{n+1}\mid \bx, \bY_{1:n}, \bA_{1:n+1})\cdot \varphi_{\rm PR}(y_{n+1})\,{\rm d}y_{n+1}\,{\rm d}\bx \\
&\stackrel{\mathclap{\mbox{\footnotesize(a)}}}{=}\int_{\mathbb C^d}p(\bx\mid \bY_{1:n}, \bA_{1:n})\int_{\mathbb R}p_{\rm PR}(y_{n+1}\mid \bx, \bY_{1:n}, \bA_{1:n+1})\cdot y_{n+1}\,{\rm d}y_{n+1}\,{\rm d}\bx + E_1 \\
&\stackrel{\mathclap{\mbox{\footnotesize(b)}}}{=} \int_{\mathbb C^d}p(\bx\mid \bY_{1:n}, \bA_{1:n})\cdot |\langle \bA_{n+1}, \bx \rangle|^2\,{\rm d}\bx+E_1\\
&\stackrel{\mathclap{\mbox{\footnotesize(c)}}}{=} \langle \bA_{n+1}, \left(\int_{\mathbb C^d}p(\bx\mid \bY_{1:n}, \bA_{1:n})\cdot \bx \bx^*\,{\rm d}\bx\right) \bA_{n+1}\rangle +E_1\\
&= \langle \bA_{n+1}, {\mathbb E}\left\{\bX \bX^*\mid \bY_{1:n}, \bA_{1:n}\right\} \bA_{n+1}\rangle +E_1,
\end{split}
\end{equation}
where in (a) we set
\begin{equation*}
E_1 = \int_{\mathbb C^d}p(\bx\mid \bY_{1:n}, \bA_{1:n})\int_{\mathbb R}p_{\rm PR}(y_{n+1}\mid \bx, \bY_{1:n}, \bA_{1:n+1})\cdot (\varphi_{\rm PR}(y_{n+1})-y_{n+1})\,{\rm d}y_{n+1}\,{\rm d}\bx,
\end{equation*}
in (b) we use the definition \eqref{eq:defypr}, and in (c) we use that  $|\langle \bA_{n+1}, \bx\rangle|^2=\langle \bA_{n+1}, \bx \bx^* \bA_{n+1}\rangle$. Similarly, we have that
\begin{equation}\label{eq:integ2}
\int_{\mathbb C^d}p(\bx)\int_{\mathbb R}p_{\rm PR}(y_{n+1}\mid \bx, \bA_{n+1})\varphi_{\rm PR}(y_{n+1})\,{\rm d}y_{n+1}\,{\rm d}\bx = \langle \bA_{n+1}, {\mathbb E}\left\{\bX \bX^*\right\} \bA_{n+1}\rangle +E_2,
\end{equation}
with
\begin{equation*}
E_2 = \int_{\mathbb C^d}p(\bx)\int_{\mathbb R}p_{\rm PR}(y_{n+1}\mid \bx, \bA_{n+1})\cdot (\varphi_{\rm PR}(y_{n+1})-y_{n+1})\,{\rm d}y_{n+1}\,{\rm d}\bx.
\end{equation*}

By applying \eqref{eq:integ1} and \eqref{eq:integ2}, we can rewrite the RHS of \eqref{eq:distancesnew} as
\begin{equation}\label{eq:cor1}
\begin{split}
\frac{1}{2K^2}&\cdot{\mathbb E}_{\bY_{1:n}, \bA_{1:n}}{\mathbb E}_{\bA_{n+1}}\left\{\left\lvert\langle \bA_{n+1}, \left({\mathbb E}\left\{\bX \bX^*\mid \bY_{1:n}, \bA_{1:n}\right\}-{\mathbb E}\left\{\bX \bX^*\right\}\right) \bA_{n+1}\rangle +E_1-E_2\right\rvert^2\right\}\\
&\ge \frac{1}{2K^2}\cdot{\mathbb E}_{\bY_{1:n}, \bA_{1:n}}\left({\mathbb E}_{\bA_{n+1}}\left\{\left\lvert\langle \bA_{n+1}, \bM \bA_{n+1}\rangle\right\rvert^2\right\}-{\mathbb E}_{\bA_{n+1}}\left\{|E_1|^2\right\}-{\mathbb E}_{\bA_{n+1}}\left\{|E_2|^2\right\}\right),
\end{split}
\end{equation}
where we define $\bM = {\mathbb E}\left\{\bX \bX^*\mid \bY_{1:n}, \bA_{1:n}\right\}-{\mathbb E}\left\{\bX \bX^*\right\}$. As $K$ goes large, ${\mathbb E}_{\bA_{n+1}}\left\{|E_i|^2\right\}$ tends to $0$, for $i\in \{1, 2\}$. Furthermore, we have that 
\begin{equation}
\begin{split}
{\mathbb E}_{\bA_{n+1}}\left\{\left\lvert\langle \bA_{n+1}, \bM \bA_{n+1}\rangle\right\rvert^2\right\} &\stackrel{\mathclap{\mbox{\footnotesize(a)}}}{=} \sum_{i, j, k, l=1}^d M_{ij} M^*_{kl}\cdot \frac{1}{d^2}(\delta_{ij}\cdot\delta_{kl}+\delta_{il}\cdot\delta_{jk}) \\
&= \frac{1}{d^2}\left(\lvert{\rm trace}(\bM)\rvert^2+\norm{\bM}_F^2\right)\\
&\stackrel{\mathclap{\mbox{\footnotesize(b)}}}{=} \frac{1}{d^2}\norm{\bM}_F^2,
\end{split}
\end{equation}
where in (a) we use the following definition of the Kronecker delta:
\begin{equation}
\delta_{ab} = \left\{\begin{array}{ll}
1, & \mbox{ if }a=b,\\
0, & \mbox{ otherwise},
\end{array}\right.
\end{equation} 
and in (b) we use that 
\begin{equation}
\begin{split}
{\rm trace}(\bM) &= \sum_{i=1}^d \left({\mathbb E}\left\{|X_i|^2 \mid \bY_{1:n}, \bA_{1:n}\right\}-{\mathbb E}\left\{|X_i|^2\right\}\right)\\
&= {\mathbb E}\left\{\sum_{i=1}^d|X_i|^2 \mid \bY_{1:n}, \bA_{1:n}\right\}-{\mathbb E}\left\{\sum_{i=1}^d|X_i|^2\right\}=0.
\end{split}
\end{equation} 
As a result, we conclude that \eqref{eq:2stepnew} holds. 
\end{proof}

%----------------------------------------------------
\section{Proof of Corollary \ref{cor:upper}}\label{app:noiseless2}
%----------------------------------------------------

First, we evaluate the RHS of \eqref{eq:defdeltau}, as well as the scaling between $\delta_{\rm u}$ and $\sigma^2$ when $\sigma^2\to 0$. Then, we give the proof of Corollary \ref{cor:upper}.

\begin{lemma}[Computation of $\delta_{\rm u}$ for Phase Retrieval]\label{lemma:deltaulnoiseless}
Let $\delta_{\rm u}(\sigma^2)$ be defined as in \eqref{eq:defdeltau} and assume that the distribution $p(\cdot\mid |g|)$ is given by \eqref{eq:defypr}. Then,
\begin{equation}\label{eq:deltaunoiseless}
\delta_{\rm u}(\sigma^2)=1+\sigma^2 + o(\sigma^2).
\end{equation}
\end{lemma}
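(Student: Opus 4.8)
The plan is to compute the integral in the denominator of \eqref{eq:defdeltau} explicitly enough to read off its expansion to order $\sigma^2$. Write $q_\sigma(y) := {\mathbb E}_G\{p_{\rm PR}(y\mid|G|)\}$ and $n_\sigma(y) := {\mathbb E}_G\{p_{\rm PR}(y\mid|G|)(|G|^2-1)\}$, so that $1/\delta_{\rm u}(\sigma^2) = \int_{\mathbb R} n_\sigma(y)^2/q_\sigma(y)\,\de y$. First I would record that for $G\sim\cnormal(0,1)$ the variable $|G|^2$ is $\mathrm{Exp}(1)$-distributed, whence $q_\sigma=\rho_\sigma * h$ and $n_\sigma=\rho_\sigma*[(\,\cdot\,-1)h]$, where $\rho_\sigma(y)=(\sigma\sqrt{2\pi})^{-1}e^{-y^2/2\sigma^2}$ is the heat kernel and $h(t)=e^{-t}\mathbf 1_{\{t>0\}}$. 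Completing the square gives the closed form $q_\sigma(y)=e^{-y+\sigma^2/2}\,\Phi\big((y-\sigma^2)/\sigma\big)$, and using $(t-y)\rho_\sigma(y-t)=\sigma^2\partial_y\rho_\sigma(y-t)$ together with $e^{-y+\sigma^2/2}\phi\big((y-\sigma^2)/\sigma\big)=\phi(y/\sigma)$ one obtains the identity $n_\sigma(y)=\sigma^2 q_\sigma'(y)+(y-1)q_\sigma(y)=(y-1-\sigma^2)\,q_\sigma(y)+\sigma\,\phi(y/\sigma)$, where $\phi,\Phi$ denote the standard Gaussian density and cdf. (Letting $\sigma\to0$ recovers $q_0(y)=e^{-y}\mathbf 1_{\{y>0\}}$, $n_0(y)=(y-1)e^{-y}\mathbf 1_{\{y>0\}}$, so $1/\delta_{\rm u}(0)={\mathbb E}[(Z-1)^2]=1$ for $Z\sim\mathrm{Exp}(1)$, consistent with Corollary~\ref{cor:upper}.)

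Next I would substitute this formula for $n_\sigma$ into the integrand and expand the square,
\[
\frac{n_\sigma(y)^2}{q_\sigma(y)}=(y-1-\sigma^2)^2\,q_\sigma(y)+2(y-1-\sigma^2)\,\sigma\,\phi(y/\sigma)+\frac{\sigma^2\,\phi(y/\sigma)^2}{q_\sigma(y)},
\]
and integrate the three terms separately. For the first term the convolution structure $q_\sigma=\rho_\sigma*h$ makes the integral exact: $\int_{\mathbb R}(y-1-\sigma^2)^2 q_\sigma(y)\,\de y=\int_0^\infty e^{-t}\,{\mathbb E}[(t+\sigma N-1-\sigma^2)^2]\,\de t=\int_0^\infty e^{-t}\big((t-1-\sigma^2)^2+\sigma^2\big)\,\de t=1+\sigma^2+\sigma^4$, with $N\sim\normal(0,1)$. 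The second term is also exact: $\sigma\phi(y/\sigma)$ is an even kernel with total mass $\sigma^2$ and vanishing first moment, so the term equals $-2(1+\sigma^2)\sigma^2=-2\sigma^2-2\sigma^4$. For the third term, rescaling $y=\sigma v$ gives $\sigma^3\int_{\mathbb R}\phi(v)^2/q_\sigma(\sigma v)\,\de v$; since $q_\sigma(\sigma v)\to\Phi(v)$ pointwise and the integrand admits a $\sigma$-uniform integrable dominant (e.g.\ $q_\sigma(\sigma v)\ge\Phi(v-1)$ for $v<0$ and $q_\sigma(\sigma v)\ge\tfrac12 e^{-\sigma v}$ for $v\ge0$, $\sigma\le1$), the integral stays bounded, so this term is $O(\sigma^3)=o(\sigma^2)$. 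Summing, $1/\delta_{\rm u}(\sigma^2)=1-\sigma^2+o(\sigma^2)$, and inverting yields \eqref{eq:deltaunoiseless}.

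The only genuinely asymptotic step — and hence the main point needing care — is the control of the third term: one must verify $\int_{\mathbb R}\phi(v)^2/\Phi(v)\,\de v<\infty$ (which follows from the Mills-ratio tail $\Phi(v)\sim\phi(v)/|v|$ as $v\to-\infty$ and $\Phi(v)\to1$ as $v\to+\infty$) and supply the uniform domination above so that dominated convergence applies; all the other pieces reduce to elementary moment computations for $\mathrm{Exp}(1)$ and Gaussian kernels. An alternative to the $\mathrm{Exp}(1)$ observation would be to start from the explicit Bessel-integral representation of Lemma~\ref{lemma:expl} specialized to phase retrieval, but the convolution viewpoint is cleaner and makes the first two terms exact.
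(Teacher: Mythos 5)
Your proposal is correct and arrives at exactly the same three-term decomposition the paper uses: the identity $n_\sigma(y)=(y-1-\sigma^2)\,q_\sigma(y)+\sigma\phi(y/\sigma)$ that you derive via the heat-kernel/convolution view is precisely the closed form the paper obtains through the erfc formulas (after rewriting $\mathrm{erfc}(x/\sqrt2)=2\Phi(-x)$), and expanding $n_\sigma^2/q_\sigma$ produces the same three integrals that the paper evaluates. The paper evaluates the two finite terms by direct integration against the explicit erfc expressions; your route --- reducing them to moments of $\mathrm{Exp}(1)$ and $\normal(0,\sigma^2)$ via $q_\sigma=\rho_\sigma*h$ --- is cleaner and makes the exact cancellations more transparent. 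For the remainder you rescale $y=\sigma v$, while the paper substitutes $t=-y/\sigma+\sigma$; these are the same change of variables up to sign and a shift by $\sigma$, and both reduce the term to $\sigma^3$ times a bounded integral. The only flaw is the specific dominant you quote for $v\ge0$: the claim $q_\sigma(\sigma v)\ge\tfrac12 e^{-\sigma v}$ for all $v\ge0$, $\sigma\le1$ requires $\Phi(v-\sigma)\ge\tfrac12 e^{-\sigma^2/2}$, which fails for $v$ near $0$ and $\sigma$ small (e.g.\ $v=0$, $\sigma=0.1$ gives $\Phi(-0.1)\approx0.46<\tfrac12 e^{-0.005}\approx0.498$). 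A correct dominant is easy to supply, e.g.\ $q_\sigma(\sigma v)\ge e^{-\sigma v}\Phi(v-\sigma)\ge\Phi(-1)\,e^{-v}$ for $v\ge0$, $\sigma\le1$, after which $\phi(v)^2 e^{v}/\Phi(-1)$ is an integrable $\sigma$-uniform dominant; the structural point of the argument is unaffected.
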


\begin{proof}
The proof boils down to computing expected values and integrals. By using \eqref{eq:expA} and \eqref{eq:deltal1}, we immediately obtain that
\begin{equation*}
{\mathbb E}_{G}\left\{ p_{\rm PR}(y\mid |G|)\right\} = \int_0^{+\infty}2r \cdot p_{\rm PR}(y\mid r) \cdot  \exp\left(-r^2\right){\rm d}r= \exp(-y)\,{\mathbb E}_{X}\left\{\exp(-\sigma X)H(y+\sigma X)\right\},
\end{equation*}
where $X\sim \normal(0, 1)$. By computing explicitly the expectation, we deduce that 
\begin{equation}\label{eq:intexpcom1}
{\mathbb E}_{G}\left\{ p_{\rm PR}(y\mid |G|)\right\} =\frac{1}{2}\exp\left(-y+\frac{\sigma^2}{2}\right){\rm erfc}\left(\frac{1}{\sqrt{2}}\left(-\frac{y}{\sigma}+\sigma\right)\right),
\end{equation}
where $\rm erfc(\cdot)$ is the complimentary error function. Similarly, we have that
\begin{equation}\label{eq:intexpcom2}
\begin{split}
{\mathbb E}_{G}&\left\{ p_{\rm PR}(y\mid |G|)(|G|^2-1)\right\} = \int_0^{+\infty}2(r^3-r)  p_{\rm PR}(y\mid r) \cdot  \exp\left(-r^2\right){\rm d}r\\
&=\exp(-y)\,{\mathbb E}_{X}\left\{\exp(-\sigma X)H(y+\sigma X)(y-1+\sigma X)\right\}\\
&= \frac{\sigma}{\sqrt{2\pi}} \exp\left(-\frac{y^2}{2\sigma^2}\right)+\frac{1}{2}(y-1-\sigma^2)\exp\left(-y+\frac{\sigma^2}{2}\right){\rm erfc}\left(\frac{1}{\sqrt{2}}\left(-\frac{y}{\sigma}+\sigma\right)\right).
\end{split}
\end{equation}
Thus, by using \eqref{eq:intexpcom1} and \eqref{eq:intexpcom2}, after some manipulations, we obtain that 
\begin{equation}\label{eq:RHSint1}
\begin{split}
\frac{1}{\delta_{\rm u}}=\displaystyle\bigintssss_{\mathbb R}\frac{\left({\mathbb E}_{G}\left\{p_{\rm PR}(y\mid |G|)(|G|^2-1)\right\}\right)^2}{{\mathbb E}_{G}\left\{p(y_{\rm PR}\mid |G|)\right\}}& \,{\rm d}y = \int_{\mathbb R} \frac{\sigma^2}{2\pi} \exp\left(y-\frac{\sigma^2}{2}-\frac{y^2}{\sigma^2}\right)\frac{2}{{\rm erfc}\left(\frac{1}{\sqrt{2}}\left(-\frac{y}{\sigma}+\sigma\right)\right)}{\rm d}y\\
&\hspace{-3em}+\int_{\mathbb R} \frac{2\sigma}{\sqrt{2\pi}} \exp\left(-\frac{y^2}{2\sigma^2}\right)(y-1-\sigma^2){\rm d}y\\
&\hspace{-3em}+ \int_{\mathbb R} \frac{1}{2}\exp\left(-y+\frac{\sigma^2}{2}\right)(y-1-\sigma^2)^2{\rm erfc}\left(\frac{1}{\sqrt{2}}\left(-\frac{y}{\sigma}+\sigma\right)\right){\rm d}y.
\end{split}
\end{equation}
By performing the change of variables $t=-y/\sigma+\sigma$, we simplify the first integral in the RHS of \eqref{eq:RHSint1} as 
\begin{equation}\label{eq:RHSint2}
\int_{\mathbb R} \frac{\sigma^2}{2\pi} \exp\left(y-\frac{\sigma^2}{2}-\frac{y^2}{\sigma^2}\right)\frac{2}{{\rm erfc}\left(\frac{1}{\sqrt{2}}\left(-\frac{y}{\sigma}+\sigma\right)\right)}{\rm d}y = \frac{2\sigma^3}{2\pi}\int_{\mathbb R} \frac{\exp\left(-t^2\right)}{{\rm erfc}\left(\frac{t}{\sqrt{2}}\right)}\exp\left(\sigma t-\frac{\sigma^2}{2}\right)\,{\rm d}t = o(\sigma^2),
\end{equation}
where in the last equality we use that the integral 
\begin{equation*}
\int_{\mathbb R} \frac{\exp\left(-t^2\right)}{{\rm erfc}\left(\frac{t}{\sqrt{2}}\right)}\,{\rm d}t
\end{equation*}
is finite. The other two integrals in the RHS of \eqref{eq:RHSint1} can be expressed in closed form as
\begin{equation}\label{eq:RHSint3}
\int_{\mathbb R} \frac{2\sigma}{\sqrt{2\pi}} \exp\left(-\frac{y^2}{2\sigma^2}\right)(y-1-\sigma^2){\rm d}y = -2\sigma^2(1+\sigma^2),
\end{equation}
\begin{equation}\label{eq:RHSint4}
\begin{split}
\int_{\mathbb R} \frac{1}{2}\exp\left(-y+\frac{\sigma^2}{2}\right)&(y-1-\sigma^2)^2{\rm erfc}\left(\frac{1}{\sqrt{2}}\left(-\frac{y}{\sigma}+\sigma\right)\right){\rm d}y\\
&=\frac{\sigma}{2}\exp\left(-\frac{\sigma^2}{2}\right)\int_{\mathbb R} \exp\left(\sigma t\right)(\sigma t+1)^2{\rm erfc}\left(\frac{t}{\sqrt{2}}\right){\rm d}t=1+\sigma^2+\sigma^4.
\end{split}
\end{equation}
By combining \eqref{eq:RHSint1}, \eqref{eq:RHSint2}, \eqref{eq:RHSint3} and \eqref{eq:RHSint4}, the result follows. 
\end{proof}

\begin{proof}[Proof of Corollary \ref{cor:upper}]
Pick $\sigma$ sufficiently small. Let $G\sim \cnormal(0, 1)$, $Y\sim p_{\rm PR}(\cdot \mid |G|)$ and  $Z= \mathcal T(Y)$, where $p_{\rm PR}$ is defined in \eqref{eq:defypr} and $\mathcal T$ is a pre-processing function 
(possibly dependent on $\sigma$) that we will choose later on. Assume that
\begin{enumerate}[(1)]
\item $\mathcal T(y)$ is upper and lower bounded by constants independent of $\sigma$;
\item $\mathbb P (Z=0)\le c_1 <1$ and $c_1$ is independent of $\sigma$;
\item the condition \eqref{eq:hplemmaub1} holds.
\end{enumerate}
Then, by Lemma \ref{lemma:condub}, we have that, as $n\to \infty$,  
\begin{equation}\label{eq:scalprodfull}
\frac{|\langle\hat{\bx}, \bx\rangle|^2}{\norm{\hat{\bx}}_2^2 \norm{\bx}^2} \stackrel{\mathclap{\mbox{\footnotesize a.s.}}}{\longrightarrow}\rho = \left\{\begin{array}{ll}
\vspace{1em}
0, & \mbox{ if }\psi_{\delta}'(\lambda_{\delta}^*)\le 0,\\
\displaystyle\frac{\psi_{\delta}'(\lambda_{\delta}^*)}{\psi_{\delta}'(\lambda_{\delta}^*)-\phi'(\lambda_{\delta}^*)}, & \mbox{ if }\psi_{\delta}'(\lambda_{\delta}^*)> 0,\\
\end{array}\right.
\end{equation}
where $\lambda_{\delta}^*$ is the unique solution of the equation $\zeta_\delta(\lambda)=\phi(\lambda)$, and $\phi$, $\psi_\delta$ and $\zeta_\delta$ are defined in \eqref{eq:defphi2}, \eqref{eq:defpsi} and \eqref{eq:defzeta}, respectively. 

Let $\tau$ be the supremum of the support of $Z$. Assume also that, for $\bar{\lambda}_\delta< \lambda_{\delta}^*$,
\begin{enumerate}[(4)]
\item $\tau \ge c_2 > 0$ and $c_2$ is independent of $\sigma$;
\item[(5)] $\phi'(\lambda_{\delta}^*)$ is lower bounded by a constant independent of $\sigma$;
\item[(6)] $\displaystyle\min_{\lambda\in (\min(\bar{\lambda}_\delta, \lambda_{\delta}^*))}\psi_{\delta}''(\lambda)$ is lower bounded by a strictly positive constant independent of $\sigma$.
\end{enumerate}

Let $\bar{\lambda}_\delta$ be the point at which $\psi_{\delta}$ attains its minimum. Then,
\begin{equation}\label{eq:scalprodfull2}
\begin{split}
\phi(\bar{\lambda}_\delta) -\psi_\delta(\bar{\lambda}_\delta)& \stackrel{\mathclap{\mbox{\footnotesize (a)}}}{=} \phi(\bar{\lambda}_\delta) - \phi(\lambda_{\delta}^*) + \zeta_\delta(\lambda_{\delta}^*) - \psi_\delta(\bar{\lambda}_\delta)\\
&\stackrel{\mathclap{\mbox{\footnotesize (b)}}}{=} \phi(\bar{\lambda}_\delta) - \phi(\lambda_{\delta}^*) + \zeta_\delta(\lambda_{\delta}^*) - \zeta_\delta(\bar{\lambda}_\delta)\\
&\stackrel{\mathclap{\mbox{\footnotesize (c)}}}{=}\bigl(\zeta_\delta'(x_1)-\phi'(x_1)\bigr)\cdot(\lambda_{\delta}^* - \bar{\lambda}_\delta)\\
&\stackrel{\mathclap{\mbox{\footnotesize (d)}}}{=}\frac{\bigl(\zeta_\delta'(x_1)-\phi'(x_1)\bigr)}{\psi_{\delta}''(x_2)}\cdot\bigl(\psi_{\delta}'(\lambda_{\delta}^*) - \psi_{\delta}'(\bar{\lambda}_\delta)\bigr)\\
&\stackrel{\mathclap{\mbox{\footnotesize (e)}}}{=}\frac{\bigl(\zeta_\delta'(x_1)-\phi'(x_1)\bigr)}{\psi_{\delta}''(x_2)}\cdot\psi_{\delta}'(\lambda_{\delta}^*)\\
&\stackrel{\mathclap{\mbox{\footnotesize (f)}}}{\le} c_3\cdot\psi_{\delta}'(\lambda_{\delta}^*),
\end{split}
\end{equation}
where in (a) we use that $\zeta_\delta(\lambda_{\delta}^*)=\phi(\lambda_{\delta}^*)$, in (b) we use that $\zeta_\delta(\bar{\lambda}_\delta)=\psi_\delta(\bar{\lambda}_\delta)$, (c) holds for some $x_1 \in (\bar{\lambda}_\delta, \lambda_{\delta}^*)$ by the mean value theorem, (d) holds for some $x_2 \in (\bar{\lambda}_\delta, \lambda_{\delta}^*)$ by the mean value theorem, and in (e) we use that $\psi_{\delta}'(\bar{\lambda}_\delta)=0$. Note that (f) holds for some constant $c_3$ independent of $\sigma$, as $\zeta_\delta'(x_1)\ge 0$, $\psi_{\delta}''(x_2)$ is bounded, and $\phi'(x_2)<0$ since $\mathbb P (Z=0) <1$. 

As $\phi'(\lambda_{\delta}^*)$ is bounded, from \eqref{eq:scalprodfull} and \eqref{eq:scalprodfull2} we deduce that
\begin{equation}\label{eq:rho1}
\rho\ge c_4 \cdot\bigl(\phi(\bar{\lambda}_\delta) -\psi_\delta(\bar{\lambda}_\delta)\bigr),
\end{equation}
for some constant $c_4$ independent of $\sigma$. Notice that, if $\lambda_{\delta}^*\le \bar{\lambda}_{\delta}$, then the right hand side is non-positive and hence the lower bound still holds.

As $\tau>0$, we also have that $\bar{\lambda}_\delta>0$. Consider now the matrix $\bD_n'=\bD_n/\alpha$ for some $\alpha>0$. Then, the principal eigenvector of $\bD_n'$ is equal to the principal eigenvector of $\bD_n$. Hence, we can assume without loss of generality that $\bar{\lambda}_\delta=1$. This condition can be rewritten as
\begin{equation}\label{eq:condubnew1cor}
{\mathbb E}\left\{\frac{Z^2}{(1-Z)^2}\right\}=\frac{1}{\delta},
\end{equation}
and \eqref{eq:rho1} can be rewritten as
\begin{equation}\label{eq:condubnew2cor}
\rho \ge c_4 \cdot \left({\mathbb E}\left\{\frac{Z(|G|^2-1)}{1-Z}\right\} -\frac{1}{\delta}\right).
\end{equation}

We set 
\begin{equation}\label{eq:defsigma}
\mathcal T(y)= \mathcal T_\delta^*(y, \sigma) \triangleq \frac{y_+-1}{y_++\sqrt{\delta\, c(\sigma)}-1},
\end{equation}
where $y_+=\max(y,0)$ and $c(\sigma)$ is a function of $\sigma$ to be set as to satisfy Eq. ~\eqref{eq:condubnew1cor}.
By substituting \eqref{eq:defsigma} into \eqref{eq:condubnew1cor}, we get 
\begin{align}
\E \left\{\frac{Z^2}{(1-Z)^2}\right\}= \frac{1}{\delta c(\sigma)}\E\big\{(Y_+-1)^2\big\}\, .
\end{align}
Hence, Eq. ~\eqref{eq:condubnew1cor} is satisfied by
\begin{align}
c(\sigma) = \E\big\{(Y_+-1)^2\big\} = \E\Big\{\big((|G|^2+\sigma W)_+-1\big)^2\Big\}\, ,
\end{align}
where $W\sim\normal(0,1)$ is independent of $G$.
Therefore, $c(\sigma)$ is always well defined and, by dominated convergence, $c(\sigma)\to c(0) = 1$, as $\sigma\to 0$.
Furthermore,
\begin{align}
\E\left\{\frac{Z(|G|^2-1)}{1-Z}\right\} &= \frac{1}{\sqrt{\delta c(\sigma)} } \E\big\{(Y_+-1)(|G|^2-1)\big\}.
\end{align}
By applying again dominated convergence, we get 
\begin{align}
\lim_{\sigma\to 0}\E\left\{\frac{Z(|G|^2-1)}{1-Z}\right\} &= \frac{1}{\sqrt{\delta } } \E\big\{(|G|^2-1)^2\big\} = \frac{1}{\sqrt{\delta}}\, .
\end{align}
Hence, by using \eqref{eq:rho1}, we get that, for $\delta>1$ and $\sigma\le \sigma_1(\delta)$,
\begin{equation}
\liminf_{n\to\infty}\frac{|\langle\hat{\bx}_{\sigma}, \bx\rangle|^2}{\norm{\hat{\bx}_\sigma}_2^2 \norm{\bx}_2^2} \ge c_5\Big(\frac{1}{\sqrt{\delta}}-\frac{1}{\delta}\Big)>0\, ,
\end{equation}
where $\hat{\bx}_{\sigma}$ denotes the spectral estimator corresponding to the pre-processing function (\ref{eq:defsigma}).
Let us now verify that, by setting $\mathcal T=\mathcal T_\delta^*$, the requirements stated above are fulfilled. As $\delta>1$, the function $\mathcal T$ is bounded by constants independent of $\sigma$. It is also clear that the conditions (2) and (4) hold. Furthermore, the conditions (5) and (6) follow by showing that $\phi(\lambda)$, $\psi_{\delta}(\lambda)$ have well defined uniform limits as $\sigma\to 0$ that satisfy those conditions:
this can be proved by one more application of dominated convergence.

In order to show that the condition (3) holds, we follow the argument presented at the end of the proof of Lemma \ref{lemma:condub}. First, we add a point mass with associated probability at most $\epsilon_1$, which immediately implies that \eqref{eq:hplemmaub1} is satisfied. Then, by applying the Davis-Kahan theorem \cite{davis1970rotation}, we show that we can take $\epsilon_1 = 0$.

This proves the claim of the corollary for the pre-processing function $\cT_\delta^*(y, \sigma)$, defined in \eqref{eq:defsigma}. Let us now prove that the same conclusion holds for
$\cT^*_{\delta}(y)$ defined in \eqref{eq:Tstardeltapr}. 
Let
\begin{equation}
f_a(x) = \frac{x+1}{x+a}.
\end{equation}
Then, for any $x, a \in \reals_{\ge  0}$,
\begin{equation}
|f_a'(x)|= \frac{|a-1|}{(x+a)^2}\le \max\left(1,\frac{1}{x^2}\right).
\end{equation}
Therefore, since $\cT_\delta^*(y,\sigma) = 1-f_{y_+}(\sqrt{\delta c(\sigma)}-1)$, we have that
\begin{align}
\sup_{y\in\reals}\big|\cT_\delta^*(y, \sigma)-\cT^*_\delta(y)\big|\le \frac{1}{\bigl(\min(\sqrt{\delta c(\sigma)}-1, \sqrt{\delta}-1, 1)\bigr)^2}\,\sqrt{\delta}\cdot \big|\sqrt{c(\sigma)}-1\big|\, .
\end{align}
Denote by $\bD_n(\sigma)$ and $\bD_n$ the matrices constructed with the pre-processing functions $\cT_\delta^*(y, \sigma)$ and $\cT^*_\delta(y)$, respectively. It follows that, for any $\delta>1$, 
there exists a function $\Delta(\sigma)$ with $\Delta(\sigma)\to 0$ as $\sigma\to 0$ such that
\begin{align}
\norm{\bD_n(\sigma)-\bD_n}_{\rm op} \le \Delta(\sigma)\, .
\end{align}
Hence, by applying again the Davis-Kahan theorem,  we conclude that, for all $\delta>1$ and $\sigma\le \sigma_2(\delta)$,
\begin{equation}
\liminf_{n\to\infty}\frac{|\langle\hat{\bx}, \bx\rangle|^2}{\norm{\hat{\bx}}_2^2 \norm{\bx}_2^2} \ge c_5\Big(\frac{1}{\sqrt{\delta}}-\frac{1}{\delta}\Big)>0\, ,
\end{equation}
where $\hat{\bx}$ is the estimator corresponding to the pre-processing function $\cT^*_{\delta}(y)$.
\end{proof}

%----------------------------------------------------
\section{Auxiliary Lemmas}\label{app:distribution}
%----------------------------------------------------

\begin{lemma}[Distribution of Scalar Product of Two Unit Complex Vectors]\label{lemma:beta}
Let $\bx_1, \bx_2 \sim_{i.i.d.} \Unif(\Sphere_{\mathbb C}^{d-1})$ and define $M = |\langle \bx_1, \bx_2 \rangle|^2$. Then,
\begin{equation}
M\sim {\rm Beta}(1, d-1).
\end{equation}
\end{lemma}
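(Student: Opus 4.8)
\textbf{Proof proposal for Lemma \ref{lemma:beta}.}

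The plan is to reduce the distribution of $M=|\langle\bx_1,\bx_2\rangle|^2$ to a statement about one coordinate of a uniform random point on the complex sphere, and then identify that coordinate's squared modulus as a Beta random variable via the representation of the uniform distribution through complex Gaussians. First I would use rotational invariance: since $\bx_1,\bx_2$ are i.i.d.\ uniform on $\Sphere_{\mathbb C}^{d-1}$ and independent, I may condition on $\bx_1$ and apply a unitary transformation $\bU$ with $\bU\bx_1=\be_1$. The distribution of $\bx_2$ is unitarily invariant, so $\bU\bx_2$ is again uniform on the sphere, and $\langle\bx_1,\bx_2\rangle = \langle\be_1,\bU\bx_2\rangle = (\bU\bx_2)_1$. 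Hence $M\ed |V_1|^2$ where $\bV=(V_1,\dots,V_d)\sim\Unif(\Sphere_{\mathbb C}^{d-1})$.

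Next I would realize $\bV$ as $\bg/\norm{\bg}_2$ with $\bg=(g_1,\dots,g_d)\sim_{i.i.d.}\cnormal(0,1)$; this is the standard construction of the uniform measure on the complex sphere and is rotationally invariant by construction. Writing $g_j = g_j^{(\mathrm R)}+ig_j^{(\mathrm I)}$ with all $2d$ real components i.i.d.\ $\normal(0,1/2)$, we get
\[
M\ed \frac{|g_1|^2}{\sum_{j=1}^d |g_j|^2} = \frac{S_1}{S_1+S_2},
\]
where $S_1 = |g_1|^2$ and $S_2 = \sum_{j=2}^d |g_j|^2$ are independent. Each $|g_j|^2$ is a sum of two independent $\normal(0,1/2)^2$ terms, i.e.\ an exponential random variable, which is $\mathrm{Gamma}(1,1)$; so $S_1\sim\mathrm{Gamma}(1,1)$ and $S_2\sim\mathrm{Gamma}(d-1,1)$, with a common scale. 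The classical fact that $X/(X+Y)\sim\mathrm{Beta}(\alpha,\beta)$ when $X\sim\mathrm{Gamma}(\alpha,1)$ and $Y\sim\mathrm{Gamma}(\beta,1)$ are independent then yields $M\sim\mathrm{Beta}(1,d-1)$, as claimed.

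There is no serious obstacle here; the only points requiring a line of care are justifying the Gaussian representation of the uniform measure on $\Sphere_{\mathbb C}^{d-1}$ (or equivalently citing it) and correctly bookkeeping the chi-square degrees of freedom — a complex Gaussian's squared modulus has $2$ real degrees of freedom, so $S_1$ has shape $1$ (not $1/2$) and $S_2$ has shape $d-1$. Everything else is the standard Gamma/Beta calculus, so I would state it briefly rather than expanding it.
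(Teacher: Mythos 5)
Your proof is correct and follows essentially the same route as the paper's: reduce $M$ to the squared modulus of one coordinate of a uniform point on the sphere via rotational invariance, represent that point as a normalized Gaussian vector, and invoke the Gamma/Beta relation. The only cosmetic difference is that the paper works with $2d$ real $\normal(0,1)$ coordinates (giving $\Gamma(1,2)$ and $\Gamma(d-1,2)$), whereas you work with $d$ complex Gaussians and exponential marginals (giving $\Gamma(1,1)$ and $\Gamma(d-1,1)$); since the ratio is scale-invariant, both yield $\mathrm{Beta}(1,d-1)$.
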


\begin{proof}
Without loss of generality, we can pick $\bx_2$ to be the first element of the canonical base of $\mathbb C^d$. Thus, $M$ is equal to the squared modulus of the first component of $\bx_1$. Furthermore, we can think to $\bx_1$ as being chosen uniformly at random on the $2d$-dimensional real sphere with radius $1$. Note that, by taking a vector of i.i.d. standard Gaussian random variables and dividing it by its norm, we obtain a vector uniformly random on the sphere of radius $1$. Hence,
\begin{equation*}
M = \frac{U_1^2+U_2^2}{\sum_{i=1}^{2d} U_i^2}, \quad \mbox{with } \{U_i\}_{1\le i \le 2d} \sim_{i.i.d.} \normal(0, 1).
\end{equation*}
Set $A = U_1^2+U_2^2$ and $B = \sum_{i=3}^{2d} U_i^2$. Then, $A$ and $B$ are independent, $A$ follows a Gamma distribution with shape $1$ and scale $2$, i.e., $A\sim \Gamma(1, 2)$, and $B$ follows a Gamma distribution with shape $d-1$ and scale $2$, i.e., $B\sim \Gamma(d-1, 2)$. Thus, we conclude that
\begin{equation*}
M = \frac{A}{A+B}\sim{\rm Beta}(1, d-1),
\end{equation*} 
which proves the claim.
\end{proof}

\begin{lemma}[Distribution of Scalar Product of Two Unit Real Vectors]\label{lemma:betar}
Let $\bx_1, \bx_2 \sim_{i.i.d.} \Unif(\Sphere_{\mathbb R}^{d-1})$ and define $M = \langle \bx_1, \bx_2 \rangle$. Then, the distribution of $M$ is given by 
\begin{equation}
p(m) = \frac{\Gamma(\frac{d}{2})}{\sqrt{\pi}\Gamma(\frac{d-1}{2})}(1-m^2)^{\frac{d-3}{2}}, \qquad\qquad m\in [-1, 1].
\end{equation}
\end{lemma}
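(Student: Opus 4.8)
The statement to prove is Lemma~\ref{lemma:betar}: if $\bx_1,\bx_2\sim_{i.i.d.}\Unif(\Sphere_{\mathbb R}^{d-1})$ and $M=\langle\bx_1,\bx_2\rangle$, then $M$ has density $p(m)=\frac{\Gamma(d/2)}{\sqrt\pi\,\Gamma((d-1)/2)}(1-m^2)^{(d-3)/2}$ on $[-1,1]$. The plan is to reduce the problem to a single vector by rotational invariance, then realize the uniform distribution on the sphere via a normalized Gaussian vector, and finally identify the resulting density using a Beta-distribution computation.

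First I would condition on $\bx_2$. Since the distribution of $\bx_1$ is invariant under the orthogonal group and $M=\langle\bx_1,\bx_2\rangle$ depends only on $\bx_1$ and the fixed unit vector $\bx_2$, we may assume without loss of generality that $\bx_2=\be_1$, the first standard basis vector. Then $M=(\bx_1)_1$ is simply the first coordinate of a uniformly random point on $\Sphere_{\mathbb R}^{d-1}$. Next, I would represent $\bx_1=\bu/\norm{\bu}_2$ with $\bu=(U_1,\dots,U_d)\sim_{i.i.d.}\normal(0,1)$, which is uniform on the sphere. Hence $M^2=U_1^2/(U_1^2+\cdots+U_d^2)$. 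Writing $A=U_1^2$ and $B=U_2^2+\cdots+U_d^2$, these are independent with $A\sim\Gamma(1/2,2)$ and $B\sim\Gamma((d-1)/2,2)$ (chi-squared with $1$ and $d-1$ degrees of freedom), so $M^2=A/(A+B)\sim{\rm Beta}(1/2,(d-1)/2)$ — exactly as in the proof of Lemma~\ref{lemma:beta}, but with the real parameters.

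From the density of $M^2$, namely $g(t)=\frac{\Gamma(d/2)}{\Gamma(1/2)\Gamma((d-1)/2)}t^{-1/2}(1-t)^{(d-3)/2}$ on $(0,1)$, I would recover the density of $M$ by the change of variables $t=m^2$. By symmetry $M$ and $-M$ have the same law, so the density of $M$ at $m$ is $p(m)=|m|^{-1}\,g(m^2)\cdot\tfrac12\cdot 2 = g(m^2)/|m|$ — more carefully, $p(m)=g(m^2)\,\bigl|\tfrac{dt}{dm}\bigr|\cdot\tfrac12$ summed over the two preimages $\pm m$ of each $t$; since both branches contribute equally, $p(m)=g(m^2)\cdot 2|m|\cdot\tfrac12=|m|\,g(m^2)$. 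Substituting gives $p(m)=\frac{\Gamma(d/2)}{\sqrt\pi\,\Gamma((d-1)/2)}(1-m^2)^{(d-3)/2}$, using $\Gamma(1/2)=\sqrt\pi$, which is the claimed formula.

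The computation here is essentially routine; the only mild subtlety — and the one place to be careful — is the change-of-variables step, where one must correctly account for the two-to-one nature of $m\mapsto m^2$ and for the symmetry of $M$, so that the Jacobian factors do not get mismatched by a factor of $2$. A clean alternative that sidesteps this is to verify directly that $p$ integrates to $1$ and that, for the substitution $t=m^2$, $\int_{-1}^1 h(m^2)p(m)\,dm=\int_0^1 h(t)g(t)\,dt$ for all bounded measurable $h$, which pins down $p$ as the density of $M$. Either way the argument is short, and no result beyond the Beta/Gamma facts already used in Lemma~\ref{lemma:beta} is needed.
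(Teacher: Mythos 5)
Your proposal is correct and follows essentially the same path as the paper: reduce to $\bx_2=\be_1$ by rotational invariance, realize $\bx_1$ as a normalized Gaussian so that $M^2\sim{\rm Beta}(1/2,(d-1)/2)$, and pass from the density of $M^2$ to that of $M$ using symmetry. The only cosmetic difference is at the last step — you carry the Beta constant through the Jacobian (arriving correctly at $p(m)=|m|\,g(m^2)$ after a brief false start that you catch yourself), whereas the paper just notes that symmetry gives $p(m)=c\,(1-m^2)^{(d-3)/2}$ and fixes $c$ by direct normalization.
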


\begin{proof}
Without loss of generality, we can pick $\bx_2$ to be the first element of the canonical base of $\mathbb R^d$. Thus, $M$ is equal to the first component of $\bx_1$. Note that, by taking a vector of i.i.d. standard Gaussian random variables and dividing it by its norm, we obtain a vector uniformly random on the sphere of radius $1$. Hence,
\begin{equation*}
M^2 = \frac{U_1^2}{\sum_{i=1}^{d} U_i^2}, \quad \mbox{with } \{U_i\}_{1\le i \le d} \sim_{i.i.d.} \normal(0, 1).
\end{equation*}
Set $A = U_1^2$ and $B = \sum_{i=2}^{d} U_i^2$. Then, $A$ and $B$ are independent, $A$ follows a Gamma distribution with shape $1/2$ and scale $2$, i.e., $A\sim \Gamma(1/2, 2)$, and $B$ follows a Gamma distribution with shape $(d-1)/2$ and scale $2$, i.e., $B\sim \Gamma((d-1)/2, 2)$. Thus, we obtain that
\begin{equation*}
M^2 = \frac{A}{A+B}\sim{\rm Beta}(1/2, (d-1)/2).
\end{equation*} 
A change of variable and the observation that the distribution of $M$ is symmetric around $0$ immediately let us conclude that, for $m\in [-1, 1]$,
\begin{equation}
p(m) = c \cdot (1-m^2)^{\frac{d-3}{2}},
\end{equation}
where the normalization constant $c$ is given by
\begin{equation*}
c=\left(\int_{-1}^1(1-m^2)^{\frac{d-3}{2}}{\rm d}m\right)^{-1} = \frac{\Gamma(\frac{d}{2})}{\sqrt{\pi}\Gamma(\frac{d-1}{2})}.
\end{equation*}
\end{proof}

\begin{lemma}[Laplace's Method]\label{lemma:laplace}
Let $F:[0, 1]\to \mathbb R$ be such that
\begin{itemize}
	\item $F$ is continuous;
	\item $F(x)<0$ for $x\in (0, 1]$;
	\item $F(0)=0$.
\end{itemize}
Then,
\begin{equation}
\lim_{n\to +\infty}\int_0^1 \exp\left(n\cdot F(x)\right)\,{\rm d}x=0.
\end{equation}
\end{lemma}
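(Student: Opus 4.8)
The plan is to prove the statement by the elementary splitting argument that underlies Laplace's method, since the integrand is uniformly bounded and decays pointwise. First I would observe that the hypotheses $F(0)=0$ and $F(x)<0$ for $x\in(0,1]$ together imply $F(x)\le 0$ for every $x\in[0,1]$, so that $\exp\bigl(n F(x)\bigr)\le 1$ on $[0,1]$ for all $n$. This already yields the crude bound $\int_0^{\eta}\exp\bigl(n F(x)\bigr)\,{\rm d}x\le \eta$ for any $\eta\in(0,1]$.

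Next I would control the remaining piece $[\eta,1]$. Since $F$ is continuous on the compact set $[\eta,1]$, it attains its maximum there; call it $-c_\eta$. Because $F<0$ on $(0,1]\supseteq[\eta,1]$, we have $c_\eta>0$, and hence $\int_{\eta}^{1}\exp\bigl(n F(x)\bigr)\,{\rm d}x\le (1-\eta)\,\exp(-n c_\eta)$, which tends to $0$ as $n\to\infty$. Combining the two bounds gives, for every fixed $\eta\in(0,1]$, that $\limsup_{n\to\infty}\int_0^1\exp\bigl(n F(x)\bigr)\,{\rm d}x\le \eta$; letting $\eta\downarrow 0$ finishes the proof.

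I do not expect any genuine obstacle here. As an alternative one could invoke the dominated convergence theorem directly: $\exp\bigl(n F(x)\bigr)$ is dominated by the constant $1$ and converges pointwise to $0$ for every $x\in(0,1]$, hence Lebesgue-almost everywhere on $[0,1]$, so the integral converges to $0$. The only point requiring a moment's care is that the maximum of $F$ over $[\eta,1]$ is \emph{strictly} negative, which is precisely where continuity (and compactness of $[\eta,1]$) enters.
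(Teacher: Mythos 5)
Your proof is correct and follows essentially the same splitting argument as the paper, which also divides $[0,1]$ at a small parameter $\epsilon$, bounds the first piece by $\epsilon$ using $F\le 0$, and lets the second piece vanish using $F<0$ on $[\epsilon,1]$. You merely make explicit (via compactness and continuity) the step that the paper leaves implicit, namely that $\sup_{[\eta,1]}F$ is strictly negative, and you also note the dominated-convergence alternative.
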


\begin{proof}
Pick $\epsilon>0$ and separate the integral into two parts:
\begin{equation*}
\int_0^1\exp\left(n\cdot F(x)\right)\,{\rm d}x= \int_0^\epsilon\exp\left(n\cdot F(x)\right)\,{\rm d}x+\int_\epsilon^1\exp\left(n\cdot F(x)\right)\,{\rm d}x.
\end{equation*}
Now, the first integral is at most $\epsilon$ since $F(x)\le 0$ for any $x\in[0, 1]$, and the second integral tends to $0$ as $n\to +\infty$ since $F(x)<0$ for $x\in(0, 1]$. Thus, the claim immediately follows.
\end{proof}

\begin{lemma}[Second Moment of Uniform Vector on Complex Sphere]\label{lemma:unit}
Let $\bx\sim \Unif(\sqrt{d}\Sphere_{\mathbb C}^{d-1})$. Then,
\begin{equation}
\mathbb E \left\{\bX \bX^*\right\}=\bI_d.
\end{equation}
\end{lemma}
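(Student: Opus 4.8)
The plan is to compute $\E\{\bX\bX^*\}$ directly by exploiting the invariance of the distribution $\Unif(\sqrt{d}\,\Sphere_{\mathbb C}^{d-1})$ under unitary transformations. First I would observe that $\bX$ has the same law as $\bU\bX$ for any fixed unitary $\bU$, hence $\E\{\bX\bX^*\} = \bU\,\E\{\bX\bX^*\}\,\bU^*$ for all $\bU$. By Schur's lemma (or simply by choosing diagonal unitaries and permutation matrices), the only Hermitian matrix commuting with all unitaries is a scalar multiple of the identity, so $\E\{\bX\bX^*\} = c\,\bI_d$ for some constant $c\ge 0$.

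Next I would pin down the constant $c$ by taking traces. Since $\norm{\bX}_2^2 = d$ almost surely by definition of the sphere of radius $\sqrt{d}$, we have $\mathrm{trace}(\E\{\bX\bX^*\}) = \E\{\mathrm{trace}(\bX\bX^*)\} = \E\{\norm{\bX}_2^2\} = d$. On the other hand $\mathrm{trace}(c\,\bI_d) = c\,d$, so $c = 1$, which gives $\E\{\bX\bX^*\} = \bI_d$ as claimed.

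Alternatively, if one prefers a concrete representation, I would write $\bX = \sqrt{d}\,\bg/\norm{\bg}_2$ with $\bg \sim \cnormal(\b0_d, \bI_d)$, so that $\E\{\bX\bX^*\} = d\,\E\{\bg\bg^*/\norm{\bg}_2^2\}$; the same unitary-invariance argument applies to the unit vector $\bg/\norm{\bg}_2$, reducing again to computing a trace. Either route is routine. There is no real obstacle here: the only thing to be slightly careful about is justifying that the unique Hermitian fixed point of conjugation by the full unitary group is a scalar matrix, but this follows from elementary arguments (diagonal phase unitaries kill off-diagonal entries, permutation matrices equalize diagonal entries), so the proof is short.
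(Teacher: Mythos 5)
Your proof is correct, but it takes a genuinely different route from the paper's. The paper uses the concrete representation $\bX = \sqrt{d}\,\bZ/\norm{\bZ}_2$ with $\bZ\sim\cnormal(\b0_d,\bI_d)$ and computes the entries of $\E\{\bX\bX^*\}$ one at a time: the off-diagonal entries vanish by a sign-symmetry argument, and the diagonal entries equal $1$ because $|Z_i|^2/\norm{\bZ}_2^2 \sim \mathrm{Beta}(1, d-1)$ (which the paper has already established in its Lemma \ref{lemma:beta}), so $\E\{|Z_i|^2/\norm{\bZ}_2^2\} = 1/d$. You instead argue abstractly: unitary invariance of the law of $\bX$ forces $\E\{\bX\bX^*\}$ to commute with all unitaries, hence to be a scalar multiple of the identity, and then the constant is determined by the trace identity $\mathrm{trace}\,\E\{\bX\bX^*\} = \E\{\norm{\bX}_2^2\} = d$. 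Your argument is cleaner in that it avoids the Beta computation entirely and never needs to touch individual coordinates; the paper's argument has the advantage of being elementary and self-contained given that the Beta fact was already proved nearby. Both are sound, and your step asserting that the Hermitian fixed points of conjugation by the full unitary group are scalar matrices is correctly justified by your parenthetical remark about diagonal phase unitaries and permutations.
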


\begin{proof}
Let $\bz \sim \cnormal({\bm 0}_d, \bI_d)$ and note that, by taking a vector of i.i.d. standard complex normal random variables and dividing it by its norm, we obtain a vector uniformly random on the complex sphere of radius $1$. Then, $\bx = \sqrt{d}\bz/\norm{\bz}$.

For $i\in [d]$, denote by $x_i$ and by $z_i$ the $i$-th component of $\bx$ and $\bz$, respectively. Then, for $i\neq j$,
\begin{equation*}
\mathbb E \left\{X_i X_j^*\right\} =d\cdot \mathbb E \left\{\frac{Z_i Z_j^*}{\norm{Z}^2}\right\}=0,
\end{equation*}
where the last equality holds by symmetry. Furthermore,
\begin{equation*}
\mathbb E \left\{|X_i|^2\right\} =d\cdot \mathbb E \left\{\frac{|Z_i|^2}{\norm{Z}^2}\right\}=1,
\end{equation*}
as $|Z_i|^2/\norm{Z}^2 \sim{\rm Beta}(1, d-1)$ by the argument of Lemma \ref{lemma:beta}. As a result, the thesis is readily proved. 
\end{proof}

%----------------------------------------------------
\section{Proof of Lemma \ref{lemma:baiyaonopsd}}\label{app:baiyaonopsd}
%----------------------------------------------------

Before presenting the proof of the lemma, let us introduce some basic definitions and well-known results. Let $H$ be a probability measure on $[0, +\infty)$.  Denote by $\Gamma_H$ the support of $H$ and by $\tau$ the supremum of $\Gamma_H$. Let $s_H(g)$ denote the Stieltjes transform of $H$, which is defined as
\begin{equation}
s_H(g) = \int\frac{1}{t-g}\,{\rm d}H(t),
\end{equation}
and let $g_H(s)$ denote its inverse.

Consider a matrix
\begin{equation}\label{eq:diffnorm}
\bS_n = \frac{1}{d}\bU \bM_n \bU^*,
\end{equation}
and assume that
\begin{enumerate}[(1)]
\item $\bM_n$ is PSD for all $n\in \mathbb N$;
\item $\bU\in \mathbb C^{d\times n}$ is a random matrix whose entries $\{u_{i, j}\}_{1\le i \le d, 1\le j \le n}$ are i.i.d. such that ${\mathbb E}\left\{U_{i, j}\right\}=0$, ${\mathbb E}\left\{|U_{i, j}|^2\right\}=1$, and ${\mathbb E}\left\{|U_{i, j}|^4\right\}<\infty$ (this includes the cases in which the entries are $\sim_{i.i.d.}\cnormal(0, 1)$ or are $\sim_{i.i.d.}\normal(0, 1)$);
\item The sequence of empirical spectral distributions of $\bM_n\in \mathbb C^{n\times n}$ converges weakly to a probability distribution $H$, as $n\to +\infty$;
\item $n/d\to \delta\in (0, +\infty)$, as $n\to \infty$;
\item The sequence of spectral norms of $\bM_n$ is bounded.
\end{enumerate}
Note that the normalization of \eqref{eq:diffnorm} differs from the normalization of \eqref{eq:deflemma} by a factor of $\delta$. However, since the form \eqref{eq:diffnorm} is more common in the literature, we will stick to it for the rest of this section. In order to obtain the desired result for the matrix \eqref{eq:deflemma}, it suffices to incorporate a factor $1/\delta$ in the definition of the function $\psi_{\delta}$. 

Let $F_{\delta, H}$ be the probability measure on $[0, +\infty)$ such that the inverse $g_{F_{\delta, H}}$ of its Stieltjes transform $s_{F_{\delta, H}}$ is given by 
\begin{equation}\label{eq:invst}
g_{F_{\delta, H}}(s) = -\frac{1}{s}+\delta\int \frac{t}{1+ts}\,{\rm d}H(t), \quad s\in \{z\in \mathbb C:\Im{(z)}>0\}.
\end{equation}
Then, the sequence of empirical spectral distributions of $\bS_n$ converges weakly to $F_{\delta, H}$ \cite{Marchenko}, \cite[Chapter 4]{BaiSilverstein}.

For $\alpha\not \in \Gamma_H$ and $\alpha\neq 0$, let us also define
\begin{equation}
\psi_{F_{\delta, H}}(\alpha) = g_{F_{\delta, H}}\left(-\frac{1}{\alpha}\right).
\end{equation}
The function $\psi_{F_{\delta, H}}$ links the support of $F_{\delta, H}$ with the support of the generating measure $H$ (see \cite[Section 4]{silsang1995} and \cite[Lemma 3.1]{baiyaospike2012}). In particular, if $\lambda\not \in \Gamma_{F_{\delta, H}}$, then $s_{F_{\delta, H}}(\lambda)\neq 0$ and $\alpha = -1/s_{F_{\delta, H}}(\lambda)$ satisfies
\begin{enumerate}[(1)]
\item $\alpha\not\in \Gamma_H$ and $\alpha\neq 0$ (so that $\psi_{F_{\delta, H}}(\alpha)$ is well-defined);
\item $\psi_{F_{\delta, H}}'(\alpha)>0$.
\end{enumerate}
Conversely, if $\alpha$ satisfies (1) and (2), then $\lambda = \psi_{F_{\delta, H}}(\alpha)\not\in \Gamma_{F_{\delta, H}}$.

Let $\lambda_1^{\bM_n}$ denote the largest eigenvalue of $\bM_n$ and assume that, as $n\to \infty$,
\begin{equation}
\begin{split}
\lambda_1^{\bM_n} &\stackrel{\mathclap{\mbox{\footnotesize a.s.}}}{\longrightarrow} \alpha_* \not \in \Gamma_H.\\
\end{split}
\end{equation}
Denote by $\lambda_1^{\bS_n}$ the largest eigenvalue of $\bS_n$. Then, the results in \cite{baiyaospike2012} prove that
\begin{equation}\label{eq:equivnew}
\begin{split}
&\lambda_1^{\bS_n} \stackrel{\mathclap{\mbox{\footnotesize a.s.}}}{\longrightarrow}\lambda_*=\psi_{F_{\delta, H}}(\alpha_*), \quad  \mbox{if } \psi_{F_{\delta, H}}'(\alpha_*) >0,\\
&\lambda_1^{\bS_n} \stackrel{\mathclap{\mbox{\footnotesize a.s.}}}{\longrightarrow}\min_{\alpha>\tau}\psi_{F_{\delta, H}}(\alpha), \quad \quad\mbox{if } \psi_{F_{\delta, H}}'(\alpha_*) \le 0.
\end{split}
\end{equation}
Informally, the eigenvalue $\lambda_1^{\bM_n}$ is mapped into the point $\psi_{F_{\delta, H}}(\alpha_*)$, where $\alpha_* = -1/s_{F_{\delta, H}}(\lambda_*)$. This point emerges from the support of $F_{\delta, H}$ if and only if $\psi'_{F_{\delta, H}}(\alpha_*)>0$.

In what follows, we relax the first hypothesis, i.e., we consider the case in which the matrix $\bM_n$ is not PSD. We will show that \eqref{eq:equivnew} still holds, which implies the claim of Lemma \ref{lemma:baiyaonopsd}.

\begin{proof}[Proof of Lemma \ref{lemma:baiyaonopsd}]
As $\bU$ is drawn from a rotationally invariant distribution, we can assume without loss of generality that $\bM_n$ is diagonal. Then, we have that 
\begin{equation}
\begin{split}
\bS_n &= \left(\bU_+, \bU_-\right)\left(\begin{array}{cc}
\bM_n^+ & \b0_k\\
\b0_{n-k} & -\bM_n^ -\\
\end{array}\right) \left(\begin{array}{l}
\bU_+^*\\
\bU_-^*\\
\end{array}\right)\\
&= \frac{1}{d}\bU_+ \bM_n^+ \bU_+^* - \frac{1}{d}\bU_- \bM_n^- \bU_-^*,\\
\end{split}
\end{equation}
where $\bM_n^+\in \mathbb R^{k \times k}$ is the diagonal matrix containing the positive eigenvalues of $\bM_n$, $\bM_n^-\in \mathbb R^{(n-k) \times (n-k)}$ is the diagonal matrix containing the negative eigenvalues of $\bM_n$ with the sign changed, $\bU_+$ contains the first $k$ columns of $\bU$, and $\bU_-$ contains the remaining $n-k$ columns of $\bU$.

Note that $\bU_+$ and $\bU_-$ are independent. Furthermore, if $\bH$ is a unitary matrix, then $\bU_-$ and $\bH\bU_-$ have the same distribution. Hence, we can rewrite the matrix $\bS_n$ as
\begin{equation}\label{eq:decomp}
\bS_n= \frac{1}{d}\bU_1 \bM_n^+ \bU_1^* - \frac{1}{d}\bH\bU_2 \bM_n^- \bU_2^*\bH^*,
\end{equation}
where $\bU_1$ and $\bU_2$ are independent with entries $\sim_{i.i.d.}\cnormal(0, 1)$, and $\bH$ is a random unitary matrix distributed according to the Haar measure.

Recall that, by hypothesis, the sequence of empirical spectral distributions of $\bM_n$ converges weakly to the probability distribution $H$, where $H$ is the law of the random variable $Z$. Then, the sequence of empirical spectral distributions of $\bM_n^+$ converges weakly to the probability distribution $H^+$, where $H^+$ is the law of $Z^+ = \max(Z, 0)$. Let $F_{\delta, H^+}$ be the probability measure on $[0, +\infty)$ such that the inverse $g_{F_{\delta, H^+}}$ of its Stieltjes transform $s_{F_{\delta, H^+}}$ is given by 
\begin{equation}\label{eq:gint1}
g_{F_{\delta, H^+}}(s) = -\frac{1}{s}+\delta\int \frac{t}{1+ts}\,{\rm d}H^+(t).
\end{equation} 
Define $\bS_n^+=\frac{1}{d}\bU_1 \bM_n^+ \bU_1^*$. Then, as $\bM_n^+$ is PSD, the sequence of empirical spectral distributions of $\bS_n^+$ converges weakly to $F_{\delta, H^+}$ \cite{Marchenko}, \cite[Chapter 4]{BaiSilverstein}.

Similarly, the sequence of empirical spectral distributions of $\bM_n^-$ converges weakly to the probability distribution $H^-$, where $H^-$ is the law of $Z^- = -\min(Z, 0)$. Let $F_{\delta, H^-}$ be the probability measure on $[0, +\infty)$ such that the inverse $g_{F_{\delta, H^-}}$ of its Stieltjes transform $s_{F_{\delta, H^-}}$ is given by
\begin{equation}\label{eq:gint2}
g_{F_{\delta, H^-}}(s) = -\frac{1}{s}+\delta\int \frac{t}{1+ts}\,{\rm d}H^-(t).
\end{equation} 
Define $\bS_n^-=\frac{1}{d}\bU_2 \bM_n^- \bU_2^*$. Then, as $\bM_n^-$ is PSD, the sequence of empirical spectral distributions of $\bS_n^-$ converges weakly to $F_{\delta, H^-}$ \cite{Marchenko}, \cite[Chapter 4]{BaiSilverstein}. Furthermore, the sequence of empirical spectral distributions of $-\bS_n^-$ converges weakly to the probability measure $F_{\delta, H^-_{\rm inv}}$ such that
\begin{equation}\label{eq:gint3}
g_{F_{\delta, H^-_{\rm inv}}}(s)=-g_{F_{\delta, H^-}}(-s),
\end{equation}
where $g_{F_{\delta, H^-_{\rm inv}}}$ denotes the inverse of the Stieltjes transform $s_{F_{\delta, H^-_{\rm inv}}}$ of $F_{\delta, H^-_{\rm inv}}$.

Define 
\begin{equation}\label{eq:freeconv}
F_{\delta, H} = F_{\delta, H^+} \boxplus F_{\delta, H^-_{\rm inv}},
\end{equation}
where $\boxplus$ denotes the free additive convolution. Recall the decomposition \eqref{eq:decomp}. Then, the sequence of empirical spectral distributions of $\bS_n$ converges weakly to $F_{\delta, H}$ \cite{voi91, spe93}. Consequently, the inverse $g_{F_{\delta, H}}$ of the Stieltjes transform $s_{F_{\delta, H}}$ of $F_{\delta, H}$ can be computed as
\begin{equation}\label{eq:compgF}
\begin{split}
g_{F_{\delta, H}}(s) &\stackrel{\mathclap{\mbox{\footnotesize(a)}}}{=} g_{F_{\delta, H^+} \boxplus F_{\delta, H^-_{\rm inv}}}(s) \\
&\stackrel{\mathclap{\mbox{\footnotesize(b)}}}{=} g_{F_{\delta, H^+}}(s) + g_{F_{\delta, H^-_{\rm inv}}}(s) + \frac{1}{s} \\
&\stackrel{\mathclap{\mbox{\footnotesize(c)}}}{=} -\frac{1}{s}+\delta\int \frac{t}{1+ts}\,{\rm d}H^+(t)-\delta\int \frac{t}{1-ts}\,{\rm d}H^-(t) \\
&\stackrel{\mathclap{\mbox{\footnotesize(d)}}}{=} -\frac{1}{s}+\delta\int \frac{t}{1+ts}\,{\rm d}H^+(t)+\delta\int \frac{t}{1+ts}\,{\rm d}H^-(-t) \\
&\stackrel{\mathclap{\mbox{\footnotesize(e)}}}{=} -\frac{1}{s}+\delta\int \frac{t}{1+ts}\,{\rm d}H(t), \\
\end{split}
\end{equation}
where in (a) we use \eqref{eq:freeconv}, in (b) we use that the $\mathcal R$-transform of the free convolution is the sum of the $\mathcal R$-transforms of the addends, in (c) we use \eqref{eq:gint1}, \eqref{eq:gint2}, and \eqref{eq:gint3}, in (d) we perform the change of variable $t\to -t$ in the second integral; and in (e) we use the fact that $H^+(t)$ is the law of $\max(Z, 0)$, $H^-(-t)$ is the law of $\min(Z, 0)$, and that $t/(1+ts)=0$ for $t=0$.

By hypothesis, $\lambda_1^{\bM_n} \stackrel{\mathclap{\mbox{\footnotesize a.s.}}}{\longrightarrow} \alpha_* \not \in \Gamma_H$. First, we establish under what condition the largest eigenvalue of $\bS_n^+$, call it $\lambda_1^{\bS_n^+}$, converges to a point outside the support of $F_{\delta, H^+}$. To do so, define $\psi_{F_{\delta, H^+}}(\alpha) = g_{F_{\delta, H^+}}(-1/\alpha)$. Then, $\lambda_1^{\bS_n^+}\stackrel{\mathclap{\mbox{\footnotesize a.s.}}}{\longrightarrow} \psi_{F_{\delta, H^+}}(\alpha_*)$, if $\psi'_{F_{\delta, H^+}}(\alpha_*)>0$; and $\lambda_1^{\bS_n^+}$ converges almost surely to a point inside the support of $F_{\delta, H^+}$, otherwise \cite{baiyaospike2012}.

For the moment, assume that $\psi'_{F_{\delta, H^+}}(\alpha_*)>0$. We now establish under what condition the largest eigenvalue of $\bS_n$, call it $\lambda_1^{\bS_n}$, converges to a point outside the support of $F_{\delta, H}$. To do so, let $\omega_1$ and $\omega_2$ denote the subordination functions corresponding to the free convolution $F_{\delta, H^+} \boxplus F_{\delta, H^-_{\rm inv}}$. These functions satisfy the following analytic subordination property:
\begin{equation}\label{eq:subfun}
s_{F_{\delta, H^+} \boxplus F_{\delta, H^-_{\rm inv}}}(z) = s_{F_{\delta, H^+}}(\omega_1(z)) =s_{F_{\delta, H^-_{\rm inv}}}(\omega_2(z)). 
\end{equation} 
Then, by Theorem 2.1 of \cite{belinschi2015}, we have that the spike $\psi_{F_{\delta, H^+}}(\alpha_*)$ is mapped into $\omega_1^{-1}(\psi_{F_{\delta, H^+}}(\alpha_*))$. The Stieltjes transform at this point is given by
\begin{equation*}
\begin{split}
s_{F_{\delta, H^+} \boxplus F_{\delta, H^-_{\rm inv}}}(\omega_1^{-1}(\psi_{F_{\delta, H^+}}(\alpha_*))) &\stackrel{\mathclap{\mbox{\footnotesize(a)}}}{=}  s_{F_{\delta, H^+}}(\psi_{F_{\delta, H^+}}(\alpha_*)) \\
&\stackrel{\mathclap{\mbox{\footnotesize(b)}}}{=}  s_{F_{\delta, H^+}}(g_{F_{\delta, H^+}}(-1/\alpha_*)) \\
&\stackrel{\mathclap{\mbox{\footnotesize(c)}}}{=}-1/\alpha_*,
\end{split}
\end{equation*}
where in (a) we use \eqref{eq:subfun}, in (b) we use the definition of $\psi_{F_{\delta, H^+}}$, and in (c) we use that $g_{F_{\delta, H^+}}$ is the functional inverse of the Stieltjes transform $s_{F_{\delta, H^+}}$. As a result, by \cite[Section 4]{silsang1995}, we conclude that $\omega_1^{-1}(\psi_{F_{\delta, H^+}}(\alpha_*))\not \in \Gamma_{F_{\delta, H}}$ if and only if $\psi'_{F_{\delta, H}}(\alpha_*)>0$. Furthermore, the condition $\psi'_{F_{\delta, H}}(\alpha_*)>0$ is more restrictive than the condition $\psi'_{F_{\delta, H^+}}(\alpha_*)>0$ since 
\begin{equation*}
\psi'_{F_{\delta, H^+}}(\alpha_*) = 1-\delta\int\left(\frac{t}{\alpha_*-t}\right)^2\,{\rm d}H^+ \ge 1-\delta\int\left(\frac{t}{\alpha_*-t}\right)^2\,{\rm d}H = \psi'_{F_{\delta, H}}(\alpha_*).
\end{equation*}
Hence, $\lambda_1^{\bS_n}$ converges to a point outside the support of $F_{\delta, H}$ if and only if $\psi'_{F_{\delta, H}}(\alpha_*)>0$ and the proof is complete. 
\end{proof}

\begin{remark}[Lemma \ref{lemma:baiyaonopsd} for the Real Case]\label{rmk:baiyaonopsd}
Consider the random matrix $\frac{1}{n}\bU \bM_n\bU^{\sT}$, where $\bU\in \mathbb R^{(d-1)\times n}$ is a random matrix whose entries are $\sim_{i.i.d.}\normal(0, 1)$ and $\bM_n \in \mathbb R^{n\times n}$. Then, the claim of Lemma \ref{lemma:baiyaonopsd} still holds. Let us briefly explain why this is the case. 

If $\bM_n$ is PSD, then the results of \cite{baiyaospike2012} allow us to conclude. If $\bM_n$ is not PSD, we can write an expression analogous to \eqref{eq:decomp}:
\begin{equation}
\frac{1}{d}\bU \bM_n \bU^{\sT}=\frac{1}{d}\bU_1 \bM_n^+ \bU_1^{\sT} - \frac{1}{d}\bH\bU_2 \bM_n^- \bU_2^{\sT}\bH^*,
\end{equation} 
where $\bM_n^+$ is the diagonal matrix containing the positive eigenvalues of $\bM_n$, $\bM_n^-$ is the diagonal matrix containing the negative eigenvalues of $\bM_n$ with the sign changed, $\bU_1$ and $\bU_2$ are independent with entries $\sim_{i.i.d.}\normal(0, 1)$, $\bH$ is a random unitary matrix distributed according to the Haar measure, and we have used the fact that the eigenvalues of $\bU_2 \bM_n^- \bU_2^{\sT}$ are the same as the eigenvalues of $\bH\bU_2 \bM_n^- \bU_2^{\sT}\bH^*$ since $\bH$ is unitary. Hence, the proof follows from the same argument of Lemma \ref{lemma:baiyaonopsd}.
\end{remark}

%----------------------------------------------------
\section{Proof of Lemma \ref{lemma:SE}  and Theorem \ref{th:spectralAMPfail}} \label{app:AMPfail}
%----------------------------------------------------

We start by proving a result similar to Lemma \ref{lemma:SE} for a general AMP iteration, where the function $f_t(\hat{z}; y)$ is generic.

\begin{lemma}[State Evolution for General AMP Iteration]\label{lemma:SEgen}
Let $\bx\in \mathbb R^d$ denote the unknown signal such that $\norm{\bx}_2 = \sqrt{d}$, $\bA = (\ba_1, \ldots, \ba_n)^{\sT}\in \mathbb R^{n\times d}$ with $\{\ba_i\}_{1\le i\le n}\sim_{i.i.d.}\normal(\bzero_d,\id_d/d)$, 
and $\by = (y_1, \ldots, y_n)$ with $y_i\sim p(\cdot \mid \langle \bx, \ba_i \rangle)$. 
Consider the AMP iterates $\bz^t, \hbz^t$ defined in \eqref{eq:GAMP} for some function $f_t(\hat{z}; y)$, with $\sb_t$ given by 
\begin{equation}
\sb_t = \delta \cdot \E\{ f'_t(\mu_t G_0+\tau_t G_1;Y)\}\, ,\label{eq:OnsDefnew}
\end{equation}
where the expectation is with respect to $G_0,G_1\sim_{i.i.d.}\normal(0,1)$ and $Y\sim p(\,\cdot\,\mid G_0)$.
Assume that the initialization
$\bz^0$ is independent of $\bA$ and that, almost surely,
\begin{align}
\lim_{n\to\infty}\frac{1}{d}\<\bx,\bz^0\> = \mu_0\, , \;\;\; \lim_{n\to\infty}\frac{1}{d}\|\bz^0\|^2 = \mu_0^2+\tau_0^2\, .
\end{align}
Let the state evolution recursion $\tau_t,\mu_t$ be defined as
\begin{equation}\label{eq:SEnew}
\begin{split}
\mu_{t+1} & =\delta \int_{\mathbb R} \E\big\{\partial_g p(y\mid X_0) f_t(\mu_t X_0+\tau_t G;y)\big\} \, \de y\, ,\\
 \tau_{t+1}^2& = \delta \cdot \E\Big\{\big(f_t(\mu_t X_0+\tau_t G;Y)\big)^2\Big\} \, ,
\end{split}
\end{equation}
with initialization $\mu_0$ and $\tau_0$, where the expectation is taken with respect to $X_0,G\sim_{i.i.d.}\normal(0,1)$. Then, for any $t$, and for any function
$\psi:\reals^2\to\reals$ such that $|\psi(\bu)-\psi(\bv)|\le L(1+\|\bu\|_2+\|\bv\|_2)\|\bu-\bv\|_2$ for some $L\in \mathbb R$, we have that, almost surely,
\begin{align}
\lim_{n\to\infty}\frac{1}{n}\sum_{i=1}^n\psi(x_i,z^t_i) = \E\left\{\psi(X_0, \mu_t X_0+ \tau_t G)\right\}\, .
\end{align}
\end{lemma}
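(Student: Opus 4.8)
\textbf{Proof plan for Lemma \ref{lemma:SEgen}.}
The plan is to reduce the iteration \eqref{eq:GAMP} to the standard abstract AMP recursion for which the state evolution theorem of \cite{BM-MPCS-2011} (in the non-symmetric, rectangular form; see also \cite{javanmard2013state}) applies verbatim, and then to identify the resulting deterministic parameters with the recursion \eqref{eq:SEnew}. First I would set up notation: write $\bA\in\mathbb R^{n\times d}$ with i.i.d.\ $\normal(0,1/d)$ entries, and recall that the general AMP theory covers iterations of the form $\bv^{t} = \bA g_t(\bv^{t-1},\dots)- (\text{Onsager})$, $\bu^{t}=\bA^\sT h_t(\bu^{t-1},\dots)-(\text{Onsager})$, where the nonlinearities are (coordinate-wise) Lipschitz and may depend on auxiliary vectors (here $\bx$ on the $d$-side and $\by$ on the $n$-side) that are ``revealed'' at the start. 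The iteration \eqref{eq:GAMP} with $f_t(\hz;y)$ Lipschitz in $\hz$ — which holds under the standing assumption that $p(y\mid g)$ is bounded and twice differentiable in $g$, so that $\sF(x,y;\baq)$ and hence $f_t$ have bounded derivative — is exactly of this form, with the Onsager term $\sb_t\bz^t$ and the term $f_{t-1}(\hbz^{t-1};\by)$ playing the role of the memory/Onsager corrections. The only subtlety relative to the textbook statement is that $\by$ is not independent of $\bA$: $y_i\sim p(\cdot\mid\langle\bx,\ba_i\rangle)$. This is handled by the usual conditioning argument: condition on $\bx$ and on the scalars $g_i=\langle\bx,\ba_i\rangle$, under which $\by$ becomes a fixed vector and $\bA$ is still Gaussian conditionally on the linear statistics $\bA\bx$; this is precisely the setting in which \cite{BM-MPCS-2011,javanmard2013state} prove state evolution with an observation vector depending on $\bA\bx$.

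Next I would run the state evolution theorem and read off the recursion. The abstract theorem gives, for each fixed $t$, that $(x_i, z^t_i)$ is asymptotically distributed (in the Wasserstein-2 / pseudo-Lipschitz test-function sense) as $(X_0,\mu_t X_0+\tau_t G)$ for deterministic $\mu_t,\tau_t$ satisfying a two-dimensional recursion, where $\mu_t$ comes from the ``signal'' component of the Gaussian update and $\tau_t^2$ from the ``noise'' component. Concretely, $\hbz^t=\bA\bz^t-f_{t-1}(\hbz^{t-1};\by)$ is asymptotically $\mu_t\bx^{(a)}+\tau_t\bg$ in the appropriate sense — here $\hbz^t_i$ tracks $\mu_t\langle\bx,\ba_i\rangle+\tau_t G_i$ — and then $\bz^{t+1}=\bA^\sT f_t(\hbz^t;\by)-\sb_t\bz^t$ has signal part obtained by differentiating through the Gaussian, giving $\mu_{t+1}=\delta\,\E\{\partial_g[\,\cdot\,]\}$, i.e.\ the first line of \eqref{eq:SEnew} via Stein's lemma / Gaussian integration by parts applied to $y_i\sim p(\cdot\mid g_i)$, and noise part $\tau_{t+1}^2=\delta\,\E\{f_t(\mu_t X_0+\tau_t G;Y)^2\}$, the second line. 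The Onsager coefficient \eqref{eq:OnsDefnew} is exactly the divergence term that the general theory requires in order for these formulas to be correct; I would check that the expectation defining $\sb_t$ matches $\delta\,\E\{\partial_{\hz}f_t\}$ evaluated along the SE trajectory, which it does by construction. Deriving Lemma \ref{lemma:SE} is then the special case $f_t(\hz;y)=\sF(\hz,y;1-q_t)$, $\tau_t=\sqrt{\mu_t}$, for which the update $\tau_{t+1}^2$ reduces (using the fixed-point relation $q_t=\mu_t/(1+\mu_t)$ and the identity $\E\{\partial_g p\cdot f_t\}=\E\{f_t^2\}$ that holds for the Bayes-optimal $\sF$) to $\mu_{t+1}=\delta\,h(q_t)$ with $h$ as in \eqref{eq:defhfun}.

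The main obstacle is verifying the hypotheses needed to invoke the abstract AMP theorem rigorously, rather than the algebra of the recursion. Specifically: (i) Lipschitz continuity of $f_t(\cdot;y)$ uniformly enough in $y$ — this requires controlling $\partial_{\hz}\sF(\hz,y;1-q_t)$, which in turn uses that $p(y\mid g)$ is bounded and $C^2$ in $g$ with integrable derivatives, so that the ratio in \eqref{eq:FGdef} is smooth with bounded first derivative; one must be slightly careful that the denominator $\E_G\{p(y\mid \baq x+\sqrt{\baq}G)\}$ is bounded away from $0$ on the relevant range, which follows from Gaussian smoothing since $\baq=1-q_t>0$ along the trajectory. (ii) The conditioning-on-$\bA\bx$ step: one must check that the observation vector $\by$ enters only through $\bA\bx$ and independent extra randomness, which is immediate from the model $y_i\sim p(\cdot\mid\langle\bx,\ba_i\rangle)$, and that the initialization $\bz^0$ is independent of $\bA$ with the stated almost-sure limits for $\frac1d\langle\bx,\bz^0\rangle$ and $\frac1d\|\bz^0\|^2$, which is assumed. (iii) Matching the pseudo-Lipschitz test-function class: the theorem as usually stated gives convergence for $\psi$ pseudo-Lipschitz of order $2$, which is exactly the condition $|\psi(\bu)-\psi(\bv)|\le L(1+\|\bu\|_2+\|\bv\|_2)\|\bu-\bv\|_2$ in the statement, so no extra work is needed there. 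Once these technical checks are in place, the conclusion is a direct instantiation of \cite{BM-MPCS-2011,javanmard2013state}, and I would state it as such, deferring the detailed continuity estimates to a short paragraph.
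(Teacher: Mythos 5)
Your plan takes the same overall route as the paper: reduce the iteration \eqref{eq:GAMP} to the abstract matrix-AMP form and invoke Proposition 5 of \cite{javanmard2013state}, then read off the two-dimensional recursion. The one place where your description is imprecise — and where the paper's proof has its single substantive idea — is the handling of the dependence of $\by$ on $\bA$. You propose to ``condition on $\bx$ and on the scalars $g_i=\langle\bx,\ba_i\rangle$, under which $\by$ becomes a fixed vector and $\bA$ is still Gaussian conditionally''; but conditioning on $\bA\bx$ destroys the i.i.d.\ Gaussian structure of $\bA$ that the abstract theorem presupposes, so the theorem cannot be applied directly after that conditioning. What the paper does instead is \emph{augment} the state: it appends the coordinates $\br^t=\bzero_d$ and $\hbr^t=\bA\bx$ to form $\bs^t=[\bz^t\mid\br^t]$, $\hbs^t=[\hbz^t\mid\hbr^t]$, and it represents $y_i=\cH(w_i;\hat s_{i,2})$, where $\cH(\cdot\,;g)$ is the generalized inverse of the conditional CDF $\cF(y\mid g)$ and the $w_i\sim\Unif([0,1])$ are genuinely independent of $\bA$ and $\bx$. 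The augmented nonlinearities $h_t$, $\hh_t$ then depend only on the running state and on side vectors ($\bx$, $\bw$) independent of $\bA$, which is exactly the setting of \cite{javanmard2013state}[Proposition 5]. You do identify the correct structural fact — that $\by$ depends on $\bA$ only through $\bA\bx$ together with independent extra randomness — but the mechanism that makes the theorem applicable is state augmentation, not conditioning. With that correction, the remaining steps you list (Lipschitz continuity of $f_t$, the pseudo-Lipschitz test-function class, and reading the recursion \eqref{eq:SEnew} off the SE map) match the paper's argument.
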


\begin{proof}
For $g\in \reals$, let $\cH(\,\cdot\, ;g):[0,1]\to \reals\cup\{+\infty,-\infty\}$ be the generalized inverse of $$\cF(y\mid g)\equiv \int_{-\infty}^y p(y'\mid g)\,\de y',$$ namely, 
\begin{align}
\cH(w;g) \equiv \inf\big\{y\in\reals :\;\; \cF(y\mid g)\ge w\big\}\, .
\end{align}
With this definition, the model $y_i\sim p(\,\cdot\,\mid \<\ba_i,\bx\>)$ is equivalent to $y_i= \cH(w_i;\<\ba_i,\bx\>)$ for  $\{w_i\}_{1\le i\le n}\sim_{i.i.d.} \Unif([0,1])$
independent of $\bA$ and $\bx$. Let $\bw = (w_1,\dots,w_n)\in\reals^n$ and denote by $[\bv_1\mid \cdots\mid \bv_k]\in\reals^{m\times k}$ the matrix obtained by stacking column vectors $\bv_1,\dots,\bv_k\in\reals^m$.

For $t\ge 0$, define $\br^t = \bzero_d$, $\hbr^t = \bA\bx$, and
introduce the extended state variables $\bs^t \in\reals^{d\times 2}$ and   $\hbs^t \in\reals^{n\times 2}$, defined as
\begin{equation}
\begin{split}
\bs^t &= [\bz^t\mid \br^t],\\
\hbs^t &= [\hbz^t\mid \hbr^t].
\end{split}
\end{equation}
We further define the functions $h_t =[h_{t, 1}\mid h_{t, 2}]:\reals^{2}\times \reals\to \reals^{2}$ and $\hh_t =[\hh_{t, 1}\mid \hh_{t, 2}]:\reals^{2}\times \reals\to \reals^{2}$ by
setting
\begin{equation}
\begin{split}
h_t(s_1, s_2 ;x) &\equiv [s_1 \mid x\,]\, ,\\
\hh_t(\hat{s}_1, \hat{s}_2;w) & \equiv [f_t(\hat{s}_1;\cH(w;\hat{s}_2)) \mid  \, 0 \,]\, .
\end{split}
\end{equation}
With these notations, the iteration (\ref{eq:GAMP}) is equivalent to
\begin{equation}\label{eq:ExtendedAMP1}
\begin{split}
\bs^{t+1} & = \bA^{\sT} \hh_t(\hbs^t;\bw)- h_t(\bs^t;\bx)\hsB_t\, ,\\
\hbs^t & =\bA h_t(\bs^t;\bx)- \hh_{t-1}(\hbs^{t-1};\bw)\sB_{t-1}\, ,
\end{split}
\end{equation}
where the functions $h_t(\bs^t;\bx)$ and $\hh_t(\hbs^t;\bw)$ are understood to be applied component-wise to their arguments and $\sB_t,\hsB_t\in\reals^{2\times 2}$ are defined by
\begin{equation}
\begin{split}
(\hsB_t)_{j,k} &= \delta \cdot \E\left\{ \frac{\partial\hh_{t, k}}{\partial \hat{s}_j}(\mu_t X_0+\tau_t G, X_0;W)\right\}\, ,\\
(\sB_t)_{j,k} &= \delta \cdot \E\left\{ \frac{\partial h_{t, k}}{\partial s_j}(\mu_t X_0+\tau_t G, 0;X_0)\right\}\, .
\end{split}
\end{equation}
The iteration (\ref{eq:ExtendedAMP1}) satisfies the assumptions of \cite{javanmard2013state}[Proposition 5]. By applying that result, the claim follows.
\end{proof}

At this point, first we present the proof of Lemma \ref{lemma:SE} and then of Theorem  \ref{th:spectralAMPfail}.

\begin{proof}[Proof of Lemma  \ref{lemma:SE}]
Consider the state evolution recursion defined in \eqref{eq:SE} with initialization $\mu_0$. Let $f_t$ be defined as in \eqref{eq:GAMP_Fdef} with $\sF$ given by \eqref{eq:FGdef}. Suppose that, for any $t$, \eqref{eq:SEnew} holds with $\mu_t = \tau_t^2$. Then, by Lemma \ref{lemma:SEgen}, the claim immediately follows.

The remaining part of the proof is devoted to show that \eqref{eq:SEnew} holds with $\mu_t = \tau_t^2$, for $t\ge 0$. First, we prove by induction that $\mu_t = \tau_t^2$, for $t\ge 0$. The basis of the induction, i.e., $\mu_0 =\tau_0^2$, is true by the hypothesis of the Lemma. Now, we assume that $\mu_t = \tau_t^2$ and we show that $\mu_{t+1} = \tau_{t+1}^2$. Set
\begin{equation}\label{eq:defZ}
Z = \mu_t X_0 + \tau_t G,
\end{equation}
and note that $Z\sim\normal(0, \mu_t^2+\tau_t^2)$. Then,  we can  re-write $X_0$ as
\begin{equation*}
X_0 = a Z + b \widetilde{G},
\end{equation*}
for some $a, b\in \mathbb R$, where $\widetilde{G}\sim \normal(0, 1)$ and independent from $Z$. In order to compute the coefficients $a$ and $b$, we evaluate $\mathbb E\{X_0^2\}$ and $\mathbb E\{X_0\cdot Z\}$, thus obtaining the equations
\begin{equation*}
\begin{split}
a^2(\mu_t^2+\tau_t^2) +b^2 &= 1,\\
a(\mu_t^2+\tau_t^2) &=\mu_t, 
\end{split}
\end{equation*}
which can be simplified as
\begin{equation*}
\begin{split}
a &= \frac{\mu_t}{\mu_t^2+\tau_t^2},\\
b &= \frac{\tau_t}{\sqrt{\mu_t^2+\tau_t^2}}.\\
\end{split}
\end{equation*} 
Furthermore, by using the inductive hypothesis $\mu_{t}=\tau_t^2$ and that $q_t = \mu_t/(1+\mu_t)$, we obtain that
\begin{equation}\label{eq:defX0}
X_0 = (1-q_t) \, Z + \sqrt{1-q_t}\, \widetilde{G}.
\end{equation}
Hence, the following chain of equalities holds:
\begin{equation}\label{eq:taucomp}
\begin{split}
 \tau_{t+1}^2 &\stackrel{\mathclap{\mbox{\footnotesize(a)}}}{=} \delta \int_{\mathbb R} \E\Big\{ p(y\mid X_0)\cdot \big(f_t(\mu_t X_0+\tau_t G;y)\big)^2\Big\} \, \de y\, \\
&\stackrel{\mathclap{\mbox{\footnotesize(b)}}}{=}\delta \int_{\mathbb R} \E\Big\{ p\big(y \,  | \, (1-q_t) \, Z + \sqrt{1-q_t}\, \widetilde{G}\big) \cdot \big(f_t(Z;y)\big)^2\Big\} \, \de y\, \\
&\stackrel{\mathclap{\mbox{\footnotesize(c)}}}{=}\delta \int_{\mathbb R} \E\Big\{ \big(f_t(Z;y)\big)^2 \cdot  \E\big \{ p\big(y \,  | \, (1-q_t) \, Z + \sqrt{1-q_t}\, \widetilde{G}\big)\,\big | \, Z  \big\} \Big\} \, \de y\, \\
&\stackrel{\mathclap{\mbox{\footnotesize(d)}}}{=}\delta \bigintsss_{\mathbb R} \E\Bigg\{ \bigg(\frac{\mathbb E\{\partial_{g}p(y\mid (1-q_t)\, Z+\sqrt{1-q_t}\, \widetilde{G} )\mid Z\}}{\mathbb E\{p(y\mid (1-q_t)\, Z+\sqrt{1-q_t}\, \widetilde{G} )\mid Z\}} \bigg)^2\cdot  \E\big \{ p\big(y \,  | \, (1-q_t) \, Z + \sqrt{1-q_t}\, \widetilde{G}\big)\,\big | \, Z  \big\} \Bigg\} \, \de y\, \\
&=\delta \bigintsss_{\mathbb R} \E\Bigg\{ \frac{\mathbb E\{\partial_{g}p(y\mid (1-q_t)\, Z+\sqrt{1-q_t}\, \widetilde{G} )\mid Z\}}{\mathbb E\{p(y\mid (1-q_t)\, Z+\sqrt{1-q_t}\, \widetilde{G} )\mid Z\}} \cdot  \E\big \{ \partial_g p\big(y \,  | \, (1-q_t) \, Z + \sqrt{1-q_t}\, \widetilde{G}\big)\,\big | \, Z  \big\} \Bigg\} \, \de y\, \\
&\stackrel{\mathclap{\mbox{\footnotesize(e)}}}{=}\delta \int_{\mathbb R} \E\Big\{ f_t(Z;y) \cdot  \E\big \{ \partial_g p\big(y \,  | \, (1-q_t) \, Z + \sqrt{1-q_t}\, \widetilde{G}\big)\,\big | \, Z  \big\} \Big\} \, \de y\, \\
& \stackrel{\mathclap{\mbox{\footnotesize(f)}}}{=}\delta \int_{\mathbb R} \E\Big\{ f_t(\mu_t X_0+\tau_t G;y) \cdot\partial_g p(y\mid X_0) \Big\} \, \de y = \mu_{t+1},
\end{split}
\end{equation}
where in (a) we use that $Y\sim p(\cdot\mid X_0)$, in (b) we use \eqref{eq:defZ} and \eqref{eq:defX0}, in (c)
 we condition with respect to $Z$, in (d) we use the definition \eqref{eq:GAMP_Fdef} of $f_t$, in (e) we use again the definition \eqref{eq:GAMP_Fdef} of $f_t$, and in (f) we use again \eqref{eq:defZ} and \eqref{eq:defX0}. 
 
Finally, we prove that $\mu_{t+1}$ satisfies \eqref{eq:SEnew}. Indeed, the following chain of equalities holds: 
\begin{equation}
\begin{split}
\mu_{t+1} &\stackrel{\mathclap{\mbox{\footnotesize(a)}}}{=}\delta \bigintsss_{\mathbb R} \E\Bigg\{ \frac{\big(\mathbb E\{\partial_{g}p(y\mid (1-q_t)\, Z+\sqrt{1-q_t}\, \widetilde{G} )\mid Z\}\big)^2}{\mathbb E\{p(y\mid (1-q_t)\, Z+\sqrt{1-q_t}\, \widetilde{G} )\mid Z\}} \Bigg\} \, \de y\, \\
&\stackrel{\mathclap{\mbox{\footnotesize(b)}}}{=}\delta \bigintsss_{\mathbb R} \E\Bigg\{ \frac{\big(\mathbb E\{\partial_{g}p(y\mid (1-q_t)\, \sqrt{\mu_t^2+\tau_t^2}\,G_0+\sqrt{1-q_t}\, G_1 )\mid G_0\}\big)^2}{\mathbb E\{p(y\mid (1-q_t)\, \sqrt{\mu_t^2+\tau_t^2}\,G_0+\sqrt{1-q_t}\, G_1 )\mid G_0\}} \Bigg\} \, \de y\, \\
&\stackrel{\mathclap{\mbox{\footnotesize(c)}}}{=}\delta \bigintsss_{\mathbb R} \E\Bigg\{ \frac{\big(\mathbb E\{\partial_{g}p(y\mid \sqrt{q_t}\,G_0+\sqrt{1-q_t}\, G_1 )\mid G_0\}\big)^2}{\mathbb E\{p(y\mid \sqrt{q_t}\,G_0+\sqrt{1-q_t}\, G_1 )\mid G_0\}} \Bigg\} \, \de y\, =\delta\cdot h(q_t),\\
\end{split}
\end{equation}
where in (a) we use \eqref{eq:taucomp}, in (b) we set $G_1 = \widetilde{G}$ and $G_0=Z/\sqrt{\mu_t^2+\tau_t^2}$, and in (c) we use that $\mu_{t}=\tau_t^2$ and that $q_t = \mu_t/(1+\mu_t)$.
\end{proof}

\begin{proof}[Proof of Theorem  \ref{th:spectralAMPfail}]

In view of Lemma \ref{lemma:SE}, it is sufficient to show that $(q, \mu) = (0, 0)$ is an attractive fixed point of the recursion \eqref{eq:SE}. 

First of all, let us check that $(q, \mu) = (0, 0)$ is a fixed point. This happens if and only if
\begin{equation}
h(0) = \int_{\mathbb R} \frac{\big(\mathbb E_{G_1} \{\partial_{g} p(y\mid G_1)\}\big)^2}{\mathbb E_{G_1}\{p(y\mid G_1)\}}\, \de y = 0,
\end{equation}
which holds because of the condition \eqref{eq:condfixedpt}. 

Let us now prove that this fixed point is stable. We start by re-writing the function $h(q)$ defined in \eqref{eq:defhfun} as
\begin{equation}
h(q) = \int_{\mathbb R} \mathbb E_{G_0}\left\{\frac{\left(h_{\rm num}(\sqrt{q}, y)\right)^2}{h_{\rm den}(\sqrt{q}, y)}\right\}\, \de y\, ,
\end{equation} 
where
\begin{equation}
\begin{split}
h_{\rm num}(x, y) &= \mathbb E_{G_1} \left\{\partial_{g} p(y\mid x\cdot G_0+\sqrt{1-x^2}\, G_1)\right\}, \\
h_{\rm den}(x, y) &=\mathbb E_{G_1}\left\{p(y\mid x\cdot G_0+\sqrt{1-x^2}\, G_1)\right\}.
\end{split}
\end{equation}
Note that $h_{\rm num}(0, y)=0$ by assumption \eqref{eq:condfixedpt}. Then,
\begin{equation}\label{eq:extra1}
\begin{split}
h_{\rm num}(\sqrt{q}, y) &= \sqrt{q}\frac{\partial h_{\rm num}(x, y)}{\partial x}\bigg \rvert_{x=0} + \frac{q}{2}\frac{\partial^2 h_{\rm num}(x, y)}{\partial^2 x}\bigg \rvert_{x=x_1}, \\
h_{\rm den}(x, y) &=h_{\rm den}(0, y) + \sqrt{q}\frac{\partial h_{\rm den}(x, y)}{\partial x}\bigg \rvert_{x=x_2},
\end{split}
\end{equation}
for some $x_1, x_2 \in [0, \sqrt{q}]$. Furthermore, by applying Stein's lemma, we have that
\begin{equation}\label{eq:extra2}
h_{\rm num}(x, y) = \frac{1}{\sqrt{1-x^2}}\mathbb E_{G_1} \left\{G_1\cdot  p(y\mid x\cdot G_0+\sqrt{1-x^2}\, G_1)\right\}.
\end{equation}
By using \eqref{eq:extra2}, we can re-write \eqref{eq:extra1} as
\begin{equation*}
\begin{split}
h_{\rm num}(\sqrt{q}, y) &= \sqrt{q}\, G_0 \cdot \mathbb E_{G_1} \{G_1\cdot \partial_g p(y\mid G_1)\} + \frac{q}{2}\frac{1}{(1-x_1^2)^{5/2}} \mathbb E_{G_1}\left\{f_{\rm num}(G_0, G_1, x_1)\right\}, \\
h_{\rm den}(x, y) &=\mathbb E_{G_1}\{p(y\mid G_1)\}+\sqrt{q} \,\mathbb E_{G_1} \left\{f_{\rm den}(G_0, G_1, x_2)\right\},
\end{split}
\end{equation*}
where
\begin{equation}
\begin{split}
f_{\rm num} & (G_0, G_1, x_1) = G_1 \Biggl( (1+2x_1^2)\, p(y\mid x_1\cdot G_0+\sqrt{1-x_1^2}\, G_1)\\
&-\big(2G_0 \cdot x_1 (x_1^2-1)+G_1\sqrt{1-x_1^2}\,(1+2x_1^2)\big) \partial_g p(y\mid x_1\cdot G_0+\sqrt{1-x_1^2}\, G_1) \\
&+ (x_1^2-1)(G_0^2\,(x_1^2-1)-G_1^2 \,x_1^2+2G_0 G_1 x_1\sqrt{1-x_1^2})\partial^2_g p(y\mid x_1\cdot G_0+\sqrt{1-x_1^2}\, G_1)\Biggr),\\
f_{\rm den} & (G_0, G_1, x_2) =\left(G_0-\frac{x_2}{\sqrt{1-x_2^2}}G_1\right)\cdot \partial_{g} p(y\mid x_2\cdot G_0+\sqrt{1-x_2^2}\, G_1).
\end{split}
\end{equation}
By applying again Stein's lemma and by using that the conditional density $p(y\mid g)$ is bounded, we note that $\mathbb E_{G_1}\left\{f_{\rm num}(G_0, G_1, x_1)\right\}$ and $\mathbb E_{G_1} \left\{f_{\rm den}(G_0, G_1, x_2)\right\}$ are bounded. Hence, by dominated convergence, we obtain that
\begin{equation*}
h(q) = q\cdot \int_{\mathbb R} \frac{\big(\mathbb E_{G_1} \{\partial_g^2 p(y\mid G_1)\}\big)^2}{\mathbb E_{G_1}\{p(y\mid G_1)\}}\, \de y + o(q).
\end{equation*}
Therefore, in a neighborhood of the fixed point we have
\begin{equation}\label{eq:hcomp1}
\begin{split}
q_t & = \mu_t + o(\mu_t)\, ,\\
\mu_{t+1} & = \delta \cdot q_t\cdot \int_{\mathbb R} \frac{\big(\mathbb E_{G_1} \{\partial_g^2 p(y\mid G_1)\}\big)^2}{\mathbb E_{G_1}\{p(y\mid G_1)\}}\, \de y + o(q_t)\,.
\end{split}
\end{equation}
Furthermore, by applying twice Stein's lemma, we also have that
\begin{equation}\label{eq:hcomp2}
\mathbb E_{G_1} \{\partial_g^2 p(y\mid G_1)\} = \mathbb E_{G_1} \{p(y\mid G_1)(G_1^2-1)\}.
\end{equation}
By using \eqref{eq:hcomp1}, \eqref{eq:hcomp2} and by recalling the definition \eqref{eq:defdeltaur} of $\delta_{\rm u}$, we conclude that
\begin{equation}
\begin{split}
q_t & = \mu_t + o(\mu_t)\, ,\\
\mu_{t+1} &= \frac{\delta}{\delta_{\rm u}}\, q_t + o(q_t).
\end{split}
\end{equation}
As $\delta < \delta_{\rm u}$, the fixed point is stable. 
\end{proof}

%----------------------------------------------------
\section{Proof of Lemma \ref{lemma:linGAMP} and Theorem \ref{th:spectralAMPsucc}} \label{app:AMPsucc}
%----------------------------------------------------

For the proofs in this section, it is convenient to introduce the function
\begin{equation}\label{eq:Gdef}
\sG(x,y;\baq) = \frac{\mathbb E_G\{\partial_g^2 p(y\mid \baq\, x+\sqrt{\baq}G )\}}{\mathbb E_G\{ p(y\mid  \baq\, x+\sqrt{\baq}G )\}}
-\left(\frac{\mathbb E_G\{\partial_{g}p(y\mid \baq\, x+\sqrt{\baq}G )\}}{\mathbb E_G\{ p(y\mid  \baq\, x+\sqrt{\baq}G )\}}\right)^2
\, .
\end{equation}
First, we present the proof of Lemma \ref{lemma:linGAMP} and then of Theorem \ref{th:spectralAMPsucc}. 

\begin{proof}[Proof of Lemma \ref{lemma:linGAMP}]

The condition \eqref{eq:condfixedpt} implies that
\begin{equation}\label{eq:compF0}
\sF(0, y;1)=0.
\end{equation}
Furthermore, we have that
\begin{equation}
\begin{split}
\mathbb E_Y \{\sG(0, Y;1)\}&\stackrel{\mathclap{\mbox{\footnotesize(a)}}}{=} \mathbb E_Y \left\{\frac{\mathbb E_G\{\partial_g^2 p(Y\mid G)\}}{\mathbb E_G\{ p(Y\mid G)\}}\right\}\\
& \stackrel{\mathclap{\mbox{\footnotesize(b)}}}{=} \int_{\mathbb R} \mathbb E_G\{\partial_g^2 p(y\mid G)\}\, \de y\, \\
& = \mathbb E_G\left\{\partial_g^2\int_{\mathbb R} p(y\mid G)\, \de y\right\}=0,
\end{split}
\end{equation}
where in (a) we use \eqref{eq:compF0} and the definition \eqref{eq:Gdef} of $\sG(0, y;1)$, and in (b) we use the fact that $y$ has density $\mathbb E_G\{ p(y\mid G)\}$. 

Denote by $\sF'(x,y;\baq)$ the derivative of $\sF$ with respect to its first argument. Then, we have
\begin{align}
\sF'(x,y;\baq) = \baq \sG(x,y;\baq) \, . \label{eq:linformula1}
\end{align}
Hence, 
\begin{equation}\label{eq:linformula2}
\sb_t = \delta \cdot  (1-q_t)\cdot \E\{ \sG(\mu_t G_0+\sqrt{\mu_t} G_1, Y; 1-q_t)\} = \delta\cdot \E\{ \sG(0;Y;1)\}  + o_{q_t}(1) = o_{q_t}(1)\, .
\end{equation}
By using \eqref{eq:linformula1} and \eqref{eq:linformula2}, we linearize the recursion \eqref{eq:GAMP} around the fixed point  $\bz^t=\b0_d$ and  $\hbz^{t-1}=\b0_n$ as 
\begin{align}
\bz^{t+1} &= \bA^{\sT} \bJ \hbz^t +o_{q_t}(1) (\|\bz^t\|_2+\|\hbz^t\|_2) +o (\|\hbz^t\|_2)\, ,\label{eq:GAMPlin1}\\
\hbz^t  &= \bA\bz^t- \bJ \hbz^{t-1} +o_{q_t}(1) \, \|\hbz^{t-1}\|_2+o (\|\hbz^{t-1}\|_2)\, , \label{eq:GAMPlin2}
\end{align}
where $\bJ\in \mathbb R^{n\times n}$ is a diagonal matrix with entries $j_{i} =\sF'(0,y_i;1)$ for $i\in [n]$. By substituting the expression \eqref{eq:GAMPlin2} for $\hbz^t$ into the RHS of \eqref{eq:GAMPlin1}, the result
 follows. 

\end{proof}

\begin{proof}[Proof of Theorem \ref{th:spectralAMPsucc}]
By definition, $\alpha$ is an eigenvalue of $\bL_n$ if and only if
\begin{equation}\label{eq:deteq}
{\rm det}(\bL_n -\alpha \bI_{n+d})=0.
\end{equation}
Recall that, when $\bD$ is invertible, 
\begin{equation}
{\rm det}\left(\begin{array}{cc}
\bA & \bB \\
\bC & \bD \\
\end{array}\right) = {\rm det}(\bD) \cdot {\rm det}(\bA - \bB \bD^{-1}\bC).
\end{equation}
Then, after some calculations, we obtain that \eqref{eq:deteq} is equivalent to 
\begin{equation}\label{eq:deteq3}
\alpha^d \cdot {\rm det}(-\bJ-\alpha \bI_n)\cdot {\rm det}(\bI_d-\bA^{\sT} ( \bI_n +\alpha \bJ^{-1})^{-1} \bA)=0.
\end{equation}

From \eqref{eq:deteq3}, we immediately deduce that the eigenvalues of $\bL_n$ are real if and only if all the solutions to 
\begin{equation}\label{eq:deteq2}
{\rm det}(\bI_d-\bA^{\sT} ( \bI_n +\alpha \bJ^{-1})^{-1} \bA)=0
\end{equation}
are real. We will prove that in fact this equation does not have any solution for $\alpha\in\complex\setminus \reals$.

 Let $\bU \bSigma \bV^{\sT}$ be the SVD of $\bA$. Then, \eqref{eq:deteq2} is equivalent to 
\begin{equation*}
{\rm det}(\bU\bSigma^{-2}-( \bI_n +\alpha \bJ^{-1})^{-1} \bU)=0.
\end{equation*}
Using the  fact that $\det(\bSigma)\neq 0$, and $\det( \bI_n +\alpha \bJ^{-1})\neq 0$ for  $\alpha\in\complex\setminus \reals$,
Eq.~\eqref{eq:deteq2} is equivalent to 
\begin{equation*}
\det( (\bI_n +\alpha \bJ^{-1})\bU- \bU\bSigma^2)=0\, ,
\end{equation*}
or equivalently
\begin{equation*}
\det( \bI_n +\alpha \bJ^{-1}- \bA\bA^{\sT})=\det( \bI_n +\alpha \bJ^{-1}- \bU\bSigma^2\bU^{\sT})=0\, .
\end{equation*}
Given that the solutions of this equations are generalized eigenvalues for
the pairs of symmetric matrices $\bA\bA^{\sT}-\bI_n$ and $\bJ^{-1}$, they must be real.
We conclude that the eigenvalues of $\bL_n$ are real. 

Note that 
\begin{equation}\label{eq:JGrel}
\begin{split}
\sG(0,y;1)&\stackrel{\mathclap{\mbox{\footnotesize(a)}}}{=} \frac{\mathbb E_G\{\partial_g^2 p(y\mid G)\}}{\mathbb E_G\{ p(y\mid G)\}}\\
& \stackrel{\mathclap{\mbox{\footnotesize(b)}}}{=} \frac{\mathbb E_G\{p(y\mid G)(G^2-1)\}}{\mathbb E_G\{ p(y\mid G)\}} \\
& \stackrel{\mathclap{\mbox{\footnotesize(c)}}}{=} \frac{\mathcal T^*(y)}{1-\mathcal T^*(y)},
\end{split}
\end{equation}
where in (a) we use that $\sF(0, 1; y)=0$ as \eqref{eq:condfixedpt} holds, in (b) we apply twice Stein's lemma, and in (c) we use the definition \eqref{eq:deftyr} of $\mathcal T^*$. Then, \eqref{eq:deteq2} can be re-written as 
\begin{equation}\label{eq:detequiv}
{\rm det}\left(\bI_d - \sum_{i=1}^n \frac{\mathcal T^*(y_i)}{\mathcal T^*(y_i) + \alpha (1-\mathcal T^*(y_i))} \ba_i \ba_i^{\sT}\right)=0.
\end{equation}

Let $\lambda_1^{\bD^*_n}(\alpha)$ be the largest eigenvalue of the matrix $\bD^*_n(\alpha)$ defined as
\begin{equation}
\bD^*_n(\alpha) = \sum_{i=1}^n \frac{\mathcal T^*(y_i)}{\mathcal T^*(y_i) + \alpha (1-\mathcal T^*(y_i))} \ba_i \ba_i^{\sT}.
\end{equation}
Note that, as $\alpha\to +\infty$, the entries of $\bD^*_n(\alpha)$ tend to $0$ with high probability. Since the eigenvalues of a matrix are continuous functions of the elements of the matrix, we also obtain that 
\begin{equation*}
\lim_{\alpha\to+\infty}\lambda_1^{\bD^*_n}(\alpha) = 0.
\end{equation*}
Hence, if there exists $\bar{\alpha}>1$ such that $\lambda_1^{\bD^*_n}(\bar{\alpha})>1$, 
then there exists also $\bar{\alpha}_0> \bar{\alpha} > 1$ such that $\lambda_1^{\bD^*_n}(\bar{\alpha}_0)=1$. Consequently, there exists $\alpha > 1$ that satisfies \eqref{eq:detequiv}, which implies the result of the theorem.

The rest of the proof consists in showing that $\bar{\alpha}= \sqrt{\delta/\delta_{\rm u}}$ satisfies the desired requirements. First of all, note that $\sqrt{\delta/\delta_{\rm u}}>1$, as $\delta>\delta_{\rm u}$. Furthermore, we have that 
\begin{equation}
\bD^*_n(\bar{\alpha}) = \sum_{i=1}^n \mathcal T_{\delta}^*(y_i) \ba_i \ba_i^{\sT},
\end{equation}
where $\mathcal T_{\delta}^*$ is defined in \eqref{eq:deftydeltar}. Recall that, by hypothesis, $\bx$ is such that $\norm{\bx}_2 = \sqrt{d}$ and $\{\ba_i\}_{1\le i\le n}\sim_{i.i.d.}\normal({\bm 0}_d,\id_d/d)$. Let $\tilde{\bx} = \bx/\sqrt{d}$ and $\tilde{\ba_i} =\sqrt{d}\cdot \ba_i$. Then, $\langle \bx, \ba_i \rangle = \langle \tilde{\bx}, \tilde{\ba}_i \rangle$. Let $\lambda_1^{\widetilde{\bD}_n}$ be the largest eigenvalue of the matrix $\widetilde{\bD}_n$ defined as
\begin{equation}
\widetilde{\bD}_n = \frac{1}{n}\sum_{i=1}^n \mathcal T_\delta^*(y_i) \tilde{\ba}_i \tilde{\ba}_i^{\sT}.
\end{equation} 
Since $\widetilde{\bD}_n = \bD^*_n(\bar{\alpha})/\delta$, it remains to prove that 
\begin{equation}\label{eq:limiteig}
\lambda_1^{\widetilde{\bD}_n} \stackrel{\mathclap{\mbox{\footnotesize a.s.}}}{\longrightarrow}\tilde{\lambda} > \frac{1}{\delta}.
\end{equation}

To do so, we apply a result analogous to that of Lemma \ref{lemma:condub} for the real case with $\mathcal T=\mathcal T_\delta^*$. For the moment, assume that $\mathcal T_\delta^*$ fulfills the hypotheses of Lemma \ref{lemma:condub} (we will prove later that this is the case). Then, $\lambda_1^{\widetilde{\bD}_n}$ converges almost surely to $\zeta_{\delta}(\lambda_{\delta}^*)$.

Recall that
\begin{equation*}
\zeta_{\delta}(\lambda) = \psi_{\delta}(\max(\lambda, \bar{\lambda}_\delta)),
\end{equation*}
where $\bar{\lambda}_\delta$ is the point of minimum of the convex function $\psi_{\delta}(\lambda)$ defined as
\begin{equation*}
\psi_{\delta}(\lambda) = \lambda\left(\frac{1}{\delta}+{\mathbb E}\left\{\frac{\mathcal T_\delta^*(Y)}{\lambda-\mathcal T_\delta^*(Y)}\right\}\right).
\end{equation*}
Notice also that this minimum is the unique local minimizer since $\psi_{\delta}$ is convex and analytic.

Furthermore, $\lambda_{\delta}^*$ is the unique solution to the equation $\zeta_{\delta}(\lambda_{\delta}^*) = \phi(\lambda_{\delta}^*)$, where $\phi(\lambda)$ is defined as
\begin{equation*}
\phi(\lambda) = \lambda \cdot {\mathbb E}\left\{\frac{\mathcal T_\delta^*(Y)\cdot G^2}{\lambda-\mathcal T_\delta^*(Y)}\right\}.
\end{equation*}

By setting the derivative of $\psi_{\delta}(\lambda)$ to $0$, we have that
\begin{equation*}
{\mathbb E}\left\{\frac{(\mathcal T_\delta^*(Y))^2}{(\bar{\lambda}_\delta-\mathcal T_\delta^*(Y))^2}\right\}=\frac{1}{\delta}.
\end{equation*}
By using the definition \eqref{eq:deftydeltar} of $\mathcal T^*_\delta$ and the definition \eqref{eq:deftyr} of $\mathcal T^*$, we verify that
\begin{equation}\label{eq:tratio}
\frac{\mathcal T_\delta^*(Y)}{1-\mathcal T_\delta^*(Y)} =\sqrt{ \frac{\delta_{\rm u}}{\delta}}\, \frac{\mathbb E_G\{p(y\mid G)(G^2-1)\}}{\mathbb E_G\{ p(y\mid G)\}}.
\end{equation}
Hence, by using the definition \eqref{eq:defdeltaur} of $\delta_{\rm u}$, we obtain that
\begin{equation*}
{\mathbb E}\left\{\frac{(\mathcal T_\delta^*(Y))^2}{(1-\mathcal T_\delta^*(Y))^2}\right\}= \frac{\delta_{\rm u}}{\delta}\int_{\mathbb R} \frac{\left(\mathbb E_G\{p(y\mid G)(G^2-1)\}\right)^2}{\mathbb E_G\{ p(y\mid G)\}}\,{\rm d}y = \frac{1}{\delta},
\end{equation*}
which immediately implies that
\begin{equation}\label{eq:calcTstardelta1}
\bar{\lambda}_\delta = 1.
\end{equation}
By using \eqref{eq:tratio}, one also obtains that
\begin{equation*}
{\mathbb E}\left\{\frac{\mathcal T_\delta^*(Y)}{1-\mathcal T_\delta^*(Y)}\right\}=\sqrt{\frac{\delta_{\rm u}}{\delta}}\int_{\mathbb R}\mathbb E_G\{p(y\mid G)(G^2-1)\}\,{\rm d}y=\sqrt{\frac{\delta_{\rm u}}{\delta}}\,\, \mathbb E_G\{G^2-1\}=0,
\end{equation*}
which implies that
\begin{equation}\label{eq:calcTstardelta2}
\psi_{\delta}(1) = \frac{1}{\delta}.
\end{equation}
Furthermore, we have that
\begin{equation*}
{\mathbb E}\left\{\frac{\mathcal T_\delta^*(Y)(G^2-1)}{1-\mathcal T_\delta^*(Y)}\right\} = \sqrt{\frac{\delta_{\rm u}}{\delta}}\int_{\mathbb R} \frac{\left(\mathbb E_G\{p(y\mid G)(G^2-1)\}\right)^2}{\mathbb E_G\{ p(y\mid G)\}}\,{\rm d}y = \frac{1}{\sqrt{\delta\cdot \delta_{\rm u}}} >\frac{1}{\delta},
\end{equation*}
which implies that
\begin{equation}\label{eq:calcTstardelta3}
\phi(1) = \frac{1}{\sqrt{\delta\cdot \delta_{\rm u}}} > \frac{1}{\delta},
\end{equation}
as $\delta > \delta_{\rm u}$. By putting \eqref{eq:calcTstardelta1}, \eqref{eq:calcTstardelta2}, and \eqref{eq:calcTstardelta3} together, we obtain that
\begin{equation}\label{eq:compphizeta}
\phi(\bar{\lambda}_\delta)>\zeta_\delta(\bar{\lambda}_\delta).
\end{equation} 
Recall that $\zeta_\delta(\lambda)$ is monotone non-decreasing and $\phi(\lambda)$ is monotone non-increasing. Consequently, \eqref{eq:compphizeta} implies that $\lambda_{\delta}^* > \bar{\lambda}_\delta$. Thus, we conclude that
\begin{equation}\label{eq:philambdadelta}
\begin{split}
\lim_{n\to\infty} \lambda_1^{\widetilde{\bD}_n} & = \zeta_{\delta}(\lambda^*_{\delta}) =\psi_{\delta}(\lambda_{\delta}^*) \\
&>\psi_{\delta}(\bar{\lambda}_{\delta}) =\psi_{\delta}(1)=\frac{1}{\delta}\, .
\end{split}
\end{equation}

Now, we show that $\mathcal T_\delta^*$ fulfills the hypotheses of Lemma \ref{lemma:condub} by using arguments similar to those at the end of the proof of Theorem \ref{th:upper}. First of all, since $\mathcal T^*(y)\le 1$, we have that $\mathcal T_\delta^*(y)$ is bounded. Furthermore, if $\mathcal T_\delta^*(y)$ is equal to the constant value $0$, then $\delta_{\rm u}=\infty$ and the claim of Theorem \ref{th:spectralAMPsucc} trivially holds. Hence, we can assume that $\mathbb P(\mathcal T_\delta^*(Y)=0)<1$. Let $\tau$ be the supremum of the support of $\mathcal T_\delta^*(Y)$. If $\mathbb P(\mathcal T_\delta^*(Y)=\tau)>0$, then the condition \eqref{eq:hplemmaub1} is satisfied and the proof is complete. Otherwise, for any $\epsilon_1 >0$, there exists $\Delta_1(\epsilon_1)$ such that Eq.~\eqref{eq:condDelta1} holds. Define $\mathcal T_{\delta}^*(y, \epsilon_1)$ as in \eqref{eq:defTyfin}. Clearly, the random variable $\mathcal T_{\delta}^*(Y, \epsilon_1)$ has a point mass at $\delta$, hence the condition \eqref{eq:hplemmaub1} is satisfied. As a final step, we show that we can take $\epsilon_1 \downarrow 0$. Define
\begin{equation*}
\widetilde{\bD}_n(\epsilon_1) = \frac{1}{n}\sum_{i=1}^n \mathcal T_{\delta}^*(y_i, \epsilon_1) \tilde{\ba}_i \tilde{\ba}_i^*.
\end{equation*}
Then,
\begin{equation}\label{eq:boundopnorm}
\norm{\widetilde{\bD}_n(\epsilon_1)-\widetilde{\bD}_n}_{\rm op} \le C_1 \cdot \Delta_1(\epsilon_1),
\end{equation}
where the constant $C_1$ depends only on $n/d$. Consequently, by using \eqref{eq:boundopnorm} and Weyl's inequality, we conclude that 
\begin{equation}
|\lambda_1^{\widetilde{\bD}_n(\epsilon_1)}-\lambda_1^{\widetilde{\bD}_n}|\le C_1  \cdot \Delta_1(\epsilon_1).
\end{equation}
Hence, for any $n$, as $\epsilon_1$ tends to $0$, the largest eigenvalue of $\widetilde{\bD}_n(\epsilon_1)$ tends to the largest eigenvalue of $\widetilde{\bD}_n$, which concludes the proof.

\end{proof}

\bibliographystyle{amsalpha}
\bibliography{all-bibliography}

\end{document}